\documentclass{article}
\usepackage{graphicx} %
\usepackage{amsmath, amsthm, mathtools, dsfont,amssymb}
\usepackage{hyperref}
\usepackage{tcolorbox}
\usepackage{amsfonts}
\usepackage{csquotes}
\usepackage[margin=1in]{geometry}
\usepackage[linesnumbered, ruled, vlined, noend]{algorithm2e}
\usepackage[parfill]{parskip}
\usepackage{nicefrac}
\usepackage{cleveref}
\usepackage{booktabs}

\newcommand{\wtilde}{\widetilde}

\newcommand{\norm}[1]{\left\lVert #1 \right\rVert}

\newcommand{\EXP}{\mathbb{E}}
\newcommand{\PROB}{\textnormal{Pr}}
\newcommand{\IND}{\mathds{1}}
\renewcommand{\Pr}[1]{{\PROB \sbr{#1}}}
\newcommand{\Pru}[2]{\ensuremath{\PROB_{#1}\left[#2\right]}}
\newcommand{\Ex}[1]{{\EXP \sbr{#1}}}
\newcommand{\Exu}[2]{\ensuremath{\mathbb{E}_{#1}\left[#2\right]}}
\newcommand{\ind}[1]{{\IND \left\{ #1 \right\}}}
\newcommand{\eps}{\varepsilon}

\newcommand{\R}{\mathbb{R}}

\newcommand{\cA}{\mathcal{A}}
\newcommand{\cB}{\mathcal{B}}
\newcommand{\cC}{\mathcal{C}}
\newcommand{\cD}{\mathcal{D}}
\newcommand{\cE}{\mathcal{E}}
\newcommand{\cF}{\mathcal{F}}
\newcommand{\cG}{\mathcal{G}}
\newcommand{\cH}{\mathcal{H}}

\newcommand{\cO}{\mathcal{O}}

\newcommand{\cT}{\mathcal{T}}
\newcommand{\cP}{\mathcal{P}}
\newcommand{\cW}{\mathcal{W}}
\newcommand{\cX}{\mathcal{X}}
\newcommand{\cY}{\mathcal{Y}}
\newcommand{\cZ}{\mathcal{Z}}

\newcommand{\cHhat}{\widehat{\cH}}

\newcommand{\poly}{\operatorname{poly}} 
\newcommand{\polylog}{\operatorname{polylog}}

\newcommand{\rbr}[1]{\left(#1\right)}
\newcommand{\abs}[1]{\left|#1\right|}
\newcommand{\sbr}[1]{\left[#1\right]}
\newcommand{\cbr}[1]{\left\{#1\right\}}

\newtheorem{theorem}{Theorem}
\newtheorem{corollary}{Corollary}
\newtheorem{lemma}{Lemma}

\newtheorem{claim}{Claim}

\newtheorem{definition}{Definition}
\newtheorem{remark}{Remark}

\crefname{lemma}{Lemma}{Lemmas}
\Crefname{lemma}{Lemma}{Lemmas}
\crefname{corollary}{Corollary}{Corollaries}
\Crefname{corollary}{Corollary}{Corollaries}
\crefname{fact}{Fact}{Facts}
\Crefname{fact}{Fact}{Facts}
\crefname{observation}{Observation}{Observations}
\Crefname{observation}{Observation}{Observations}
\crefname{claim}{Claim}{Claims}
\Crefname{claim}{Claim}{Claims}
\crefname{proposition}{Proposition}{Propositions}
\Crefname{proposition}{Proposition}{Propositions}
\crefname{definition}{Definition}{Definitions}
\Crefname{definition}{Definition}{Definitions}
\crefname{remark}{Remark}{Remarks}
\Crefname{remark}{Remark}{Remarks}

\newcommand{\EqComment}[1]{\text{\emph{(#1)}}}
\newcommand{\term}{\texttt}

\newcommand{\cTcomp}{\ensuremath{\cT^*}}
\newcommand{\cAcomp}{\ensuremath{\cA^*}}
\newcommand{\cBcomp}{\ensuremath{\cB^*}}
\newcommand{\cYcomp}{\ensuremath{\cY^*}}
\newcommand{\cDcomp}{\ensuremath{\cD^*}}

\newcommand{\dcomp}{\ensuremath{d'}}
\newcommand{\Sim}{\term{sim}}

\newcommand{\err}{\term{err}}

\title{Replicable Composition}
\author{
Kiarash Banihashem\thanks{University of Maryland, College Park, MD, USA. \texttt{kiarash@umd.edu}}
\and
MohammadHossein Bateni\thanks{Google Research, New York City, NY, USA. \texttt{bateni@google.com}}
\and
Hossein Esfandiari\thanks{Google Research, London, UK. \texttt{esfandiari@google.com}}
\and
Samira Goudarzi\thanks{University of Maryland, College Park, MD, USA. \texttt{samirag@umd.edu}}
\and
MohammadTaghi Hajiaghayi\thanks{University of Maryland, College Park, MD, USA. \texttt{hajiagha@umd.edu}}
}
\date{}

\begin{document}\sloppy

\maketitle

\begin{abstract}
Replicability, the requirement that algorithmic conclusions remain consistent when run on independently drawn stochastic data, is a cornerstone of reliable data analysis and has become a central topic in theoretical computer science. A key structural question in this context is \emph{composition}. Specifically, given $k$ problems each admitting a $\rho$-replicable algorithm with sample complexity $n$, how many samples are needed to solve all of them jointly while maintaining replicability? The naive composition analysis yields $\widetilde{O}(nk^2)$ samples, and Bun et al.~(STOC’23) further observed that reductions through differential privacy imply an alternative $\widetilde{O}(n^2k)$ bound, leaving open whether the optimal $\widetilde{O}(nk)$ scaling could be achieved. In this work, we resolve this open problem and, more generally, show that problems with sample complexities $n_1,\ldots,n_k$ can be jointly solved with $\widetilde{O}(\sum_i n_i)$ samples while preserving constant replicability.

Our approach converts each replicable algorithm into a perfectly generalizing one, composes them in that space using a privacy-style analysis, and then maps the result back to a replicable algorithm via correlated sampling. The choice of perfectly generalizing algorithms and suitable parameters is crucial for obtaining a tight result. This yields the first \emph{advanced composition theorem for replicability}, achieving a nearly linear dependence on $n$ and $k$, which is approximately tight. As a direct application, we show that existing ad hoc multi-problem analyses from prior work can be unified under a single general framework. En route, we obtain new bounds for the composition of perfectly generalizing algorithms: the composition of $k$ such algorithms with parameters $(\varepsilon_i,\delta_i)$ is perfectly generalizing with parameters $(\varepsilon^*,\delta^*)$, where $\varepsilon^* \approx \sqrt{\sum_i \varepsilon_i^2}$ and $\delta^*$ can be made arbitrarily small at only a polylogarithmic cost in sample complexity.

As part of our study, we provide a new boosting theorem for increasing the success probability of a replicable algorithm which may be of independent interest. For a broad class of problems, we show that a low failure probability can be achieved by splitting the dataset into a replicable and a non-replicable part, with only the latter requiring low failure probability.
This makes the failure probability appear in a separate \emph{additive} term independent of $\rho$, immediately yielding improved bounds for several problems studied in the literature, including statistical query estimation, various PAC learning settings (such as finite classes, thresholds, and bounded Littlestone dimension), and the heavy-hitters problem.

Finally, we ask whether the linear scaling of our composition theorem extends to adaptive adversaries who can base the problems they pose, as well as the distribution of those problems, on previous answers.
We show that it does not: any algorithm solving $k$ problems in the adaptive setting requires $\Omega(nk^2)$ samples, establishing a quadratic separation between the adaptive and non-adaptive settings.
We use the key technique behind our lower bound, which we refer to as the
\emph{phantom run}, 
to show structural results for replicable algorithms which are of independent interest
\end{abstract}

\newpage
\section{Introduction}

\emph{Replicability} aims to ensure that algorithmic conclusions are stable under dataset resampling, which is central to the credibility of empirical science. Informally, an algorithm is replicable if, when rerun on a fresh i.i.d. sample from the same distribution but with the same internal random bits, it produces the same output with high probability (see \Cref{def:replic}).
The notion was introduced by Impagliazzo, Lei, Pitassi, and Sorrell~\cite{impagliazzo2022reproducibility} 
and has since motivated a growing line of research on stability in statistical estimation, learning theory, and data analysis.
Subsequent papers have explored its relationships with other notions of stability and applied the notion to different statistical problems~\cite{
bun2023stability,
ReplicFocs0001MY23,
LocalCMY24,
hopkins2024replicability,
banihashem2026replicablemedian,
aamand2025structure,
larsen2026sample,
blondal2025borsuk}.
Together, these results position replicability as a unifying stability concept that connects statistical reliability with algorithmic design.

Differential Privacy (DP) is a framework for protecting individual privacy, but it also implies a strong form of algorithmic stability. In DP, stability arises as a consequence of its indistinguishability guarantee—outputs remain nearly unchanged under small perturbations of the input dataset. In replicability, by contrast, stability is captured through invariance of outputs under resampling from the same distribution. This parallel motivates an analogous question for replicability that mirrors a central theme in DP: \emph{composition}. Suppose there are $k$ statistical problems, each solvable by a $\rho$-replicable algorithm using $n$ samples. What is the sample complexity required to solve all $k$ problems jointly while preserving replicability of the combined output?

\begin{center}
\begin{tcolorbox}[colback=gray!5,colframe=black!20]
\textbf{Open Question (Bun et
al., STOC'23~\cite{bun2023stability}).}
Given $k$ problems, each solvable with $n$ samples by a $\rho$-replicable algorithm, can one design a $\rho$-replicable algorithm that solves all $k$ problems simultaneously with sample complexity $\wtilde{O}(nk)$?
\end{tcolorbox}
\end{center}

The authors asked this question for both adaptive and non-adaptive composition.
We answer this open question in the affirmative for non-adaptive composition, and show that the picture changes fundamentally when adaptivity is allowed.
Specifically, for heterogeneous sample complexities $n_1,\ldots,n_k$, we give a non-adaptive composition scheme that is $\rho$-replicable and uses $\wtilde{O}\big(\sum_{i=1}^k n_i\big)$ samples, matching the natural linear scaling in $k$.
We believe this result is of fundamental importance: composition is a core structural question for any stability notion, and our theorem provides a simple, general principle that can be applied across problems.
Indeed, we show that existing problem-specific analyses from prior work can be replaced by our general framework, turning ad hoc multi-problem arguments into a standard, unified approach.

For adaptive composition, the situation is fundamentally different: we prove an $\Omega(nk^2)$ lower bound, showing that one cannot improve on naive composition.
The key distinction is how replicability errors --- the events where two runs on fresh samples diverge --- interact across algorithms.
In the adaptive setting, each algorithm runs without knowledge of future algorithms and must commit to its output before subsequent ones run, forcing the replicability errors of individual algorithms to behave independently; independent errors compound.
In the non-adaptive setting, algorithms run in coordination, allowing them to correlate their replicability errors rather than accumulate them independently --- which is precisely what enables the $\wtilde{O}(\sum_i n_i)$ bound.

\paragraph{Connection to differential privacy.}
In differential privacy, composition quantifies how privacy loss accumulates
when multiple analyses are performed using private algorithms. The naive
composition bound scales linearly in the number of analyses, but the
\emph{advanced composition theorem}~\cite{dwork2010boosting} (FOCS'10) shows a
sublinear $\sqrt{k}$ dependence—a property that is widely used in DP. This
shift serves as the guiding analogy for our work: we seek the replicability
counterpart of advanced composition. In replicability, if one simply runs the
$k$ algorithms together, the replicability parameter degrades linearly as
$O(\rho k)$, and boosting this parameter to maintain constant replicability
leads to a total sample complexity of $\wtilde{O}(nk^2)$ (see
\Cref{app:naive_is_tight}). Bun et al.~\cite{bun2023stability} further pointed
out that through reductions to DP, one can obtain an alternative bound of
$\wtilde{O}(n^2k)$, and left open whether the optimal $\wtilde{O}(nk)$ scaling
could be achieved. Our non-adaptive composition theorem resolves this question.
In this sense, our theorem can be viewed
as an \emph{advanced composition theorem for replicability} that also uncovers
a fundamental structural distinction from differential privacy.
Notably, however, the adaptive lower bound reveals a separation with no
counterpart in DP: while advanced composition in DP applies regardless of
adaptivity, in replicability the $\wtilde{O}(nk)$ bound is provably
unachievable in the adaptive setting.

\paragraph{Structural properties of replicable algorithms.}
The proof of our adaptive lower bound relies on a technique we call the \emph{phantom run}: a third independent execution of the algorithm used as an intermediate between the two standard runs.
Beyond the lower bound, this technique has broader applicability: any replicable algorithm can be converted into one whose output depends on the input only through a sufficient statistic of the distributional family (\Cref{sec:suff_stat}), with replicability preserved for distributions in $\cF$; this implies in particular that any replicable algorithm can be made \emph{order-invariant} (its output is invariant to the ordering of the input sample).
Extending our techniques, we further show that for symmetric problems, any replicable algorithm can be made \emph{label-invariant}, meaning its output distribution is invariant to relabelings of the domain.

These properties first became of interest to the community through the lower
bound of \cite{liu2024replicable}, who studied uniformity testing, proved lower bounds for label-invariant
testers, and asked whether the restriction to this class was
necessary.
\cite{aamand2025structure} established that order-invariance and label invariance are without loss of generality in more restricted settings, confined to binary hypothesis testing; they additionally showed that for hypothesis testing, any replicable algorithm can be converted to one that depends on a sufficient statistic, with replicability guaranteed for distributions within the parametric family.
Our approach establishes label invariance for any symmetric problem; our sufficient statistic result holds likewise, for any problem.

\paragraph{Boosting success probability.}
To complement our composition theorem, we further develop a framework for
boosting the success probability of replicable algorithms, which may be of
independent interest (see the discussion surrounding \Cref{thm:main} for the
motivation behind studying this question).
For a broad class of algorithms, we show that the success probability can be boosted at a very small cost: in most cases the dependence on the failure probability is no worse than the corresponding non-replicable version of the problem and appears as a separate additive term independent of $\rho$. Our framework immediately improves the sample complexity of several replicable algorithms obtained in prior work.

To summarize, our contributions are as follows:
\begin{itemize}
    \item We develop an \emph{advanced composition} theorem for replicability,
        showing the sample complexity of solving $k$ statistical problems with
        sample complexities $n_1, \dots, n_k$ simultaneously is at most $\wtilde{O}(\sum_{i} n_i)$.
        As part of our approach, we obtain new bounds for the adaptive composition of perfectly
        generalizing algorithms with heterogeneous parameters.
        As an immediate consequence, our framework gives a simpler and unified analysis
        for several results in the literature previously obtained via ad hoc techniques.
    \item We prove a lower bound showing that \emph{adaptive} composition of 
        replicable algorithms requires $\Omega(nk^2)$ samples, establishing a
        quadratic separation from the $\wtilde{O}(nk)$ upper bound of non-adaptive composition.
        The \emph{phantom run} technique underlying the proof is of independent interest:
        we use it to show that any replicable algorithm can be converted, with at most a factor-of-$2$ loss in
        replicability and no change in sample complexity or accuracy, to one that depends on the input only through a
        sufficient statistic of the distributional family
        (with replicability guaranteed for distributions within the family).
        When the problem is \emph{symmetric} (see \Cref{def:symmetric_problem}), we further
        show that the algorithm can be made label-invariant (see \Cref{def:label_inv}).
    \item We obtain the first boosting theorem for increasing the success probability of replicable algorithms, directly improving the sample complexity of several problems from prior work including
        statistical query estimation, various PAC learning settings (such as finite classes, thresholds, and bounded Littlestone dimension), and the heavy-hitters problem.
\end{itemize}

\section{Our Results}
\subsection{Replicable Composition}
\label{sec:intro_compose}

Our main result is the following theorem which bounds the sample complexity of the composition of $k$ statistical problems; i.e., solving all problems simultaneously (see \Cref{sec:prelim} for a formal definition of statistical problem and composition). 
\begin{theorem}[Main Theorem] \label{thm:main}
     Suppose algorithms $\cA_1, \dots, \cA_k$ are each $0.0001$-replicable and solve statistical problems $\cT_1, \dots, \cT_k$, respectively, where $\cA_i$ has sample complexity $n_i$, failure probability at most $\beta_0$, and its output is supported over a finite space. Then, for any $\rho > 0$, there exists a $\rho$-replicable algorithm that solves the composed problem $\cTcomp = (\cT_1, \dots, \cT_k)$ with failure probability at most $\beta$ and sample complexity $n$, where
     \begin{align}
         n =
        O\left(
        \frac{\sum_{j} n_j}{\rho^2} \polylog\left(\frac{k}{\rho \beta_0}\right)
        \right),
        \qquad
        \beta =
        O\left(k\sqrt{\beta_0}\log\frac{k}{\rho \beta_0}\right).
        \label{eq:bound_n_main_thm}
     \end{align}
\end{theorem}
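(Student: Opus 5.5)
The plan follows the route sketched in the abstract: convert each replicable algorithm into a perfectly generalizing one, compose in that space with a privacy-style analysis, and convert back via correlated sampling.

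\textbf{Step 1: replicable to perfectly generalizing.} Fix each $\cA_i$, which is $0.0001$-replicable with $n_i$ samples. Since its output space is finite, we can define the ``canonical'' output distribution $P_i(\cD)$ as the law of $\cA_i(S;r)$ over $S\sim\cD^{n_i}$ and $r$. We then build an algorithm $\cB_i$ that, on a sample of size $m_i = n_i\cdot\polylog(k/(\rho\beta_0))$, splits the sample into $L$ chunks. It runs $\cA_i$ on each chunk with fresh randomness, and then outputs an element sampled from a smoothed/exponential-mechanism-style distribution based on the empirical histogram of outputs. The goal is for $\cB_i$ to be $(\eps_i,\delta_i)$-perfectly generalizing, meaning its output law on $S$ is $(\eps_i,\delta_i)$-indistinguishable from a fixed simulator $\Sim_i(\cD)$, with $\eps_i$ a small constant and $\delta_i$ polynomially small in $\rho\beta_0/k$.

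The $0.0001$-replicability is used to show there is a ``heavy'' output of mass bounded away from zero. This heavy output makes the empirical histogram concentrate around $P_i(\cD)$. The sampling step is run with failure parameter $\sqrt{\beta_0}$: we only select outputs whose empirical frequency is at least about $\sqrt{\beta_0}$. Incorrect outputs have total mass at most $\beta_0$, so by Markov a selected output is correct except with probability $O(\sqrt{\beta_0})$ per problem, up to log factors. Summed over the $k$ problems, this gives the stated $\beta$.

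\textbf{Step 2: compose.} Run $\cB_1,\dots,\cB_k$ on disjoint sample blocks of sizes $m_i$. We then invoke the composition bound for perfectly generalizing algorithms stated in the abstract: the product is $(\eps^*,\delta^*)$-perfectly generalizing with $\eps^*\approx\sqrt{\sum_i\eps_i^2}$ and $\delta^*$ arbitrarily small at polylog cost.

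We need $\eps^*=O(\rho)$. The natural allocation sets $\eps_i\approx\rho\sqrt{n_i/\sum_j n_j}$, so that $\sum_i\eps_i^2=\rho^2$. Reaching privacy-style parameter $\eps_i$ costs a factor $1/\eps_i^2$ in samples, so the block for problem $i$ has size
\[
m_i\approx \frac{n_i}{\eps_i^2}\polylog = \frac{\sum_j n_j}{\rho^2}\polylog.
\]
This is too large: it would cost $k\sum_j n_j$ in total.

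The fix is to use disjoint blocks of a single common sample of size $N=\frac{\sum_j n_j}{\rho^2}\polylog$, which is exactly the bound in \eqref{eq:bound_n_main_thm}. Each $\cB_i$ sees a block of size proportional to $n_i/\rho^2$. Since that block contains about $1/\rho^2$ chunks of size $n_i$, the per-problem parameter is then $\eps_i\approx\rho\sqrt{n_i/\sum_j n_j}$, and the square-sum is $\rho^2$. The precise trade-off between block size and $\eps_i$ must be verified through the Step 1 construction.

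\textbf{Step 3: back to replicability.} Perfect generalization says the joint output on $S$ is close to $\Sim^*(\cD)$ in a $\max$-divergence sense up to $\delta^*$. Two independent runs therefore each produce distributions within total variation $O(\eps^*+\delta^*)$ of $\Sim^*(\cD)$. We implement the final sampling via correlated sampling with shared randomness, so the two runs coincide with probability $1-O(\eps^*+\delta^*)\ge 1-\rho$. Correctness is unaffected, because the output marginal is unchanged.

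\textbf{Main obstacle.} The hardest step is Step 1 together with the heterogeneous accounting. We must show that a replicable algorithm with only constant replicability yields perfect generalization with $\eps$ scaling as $1/\sqrt{\#\text{chunks}}$, not $1/\#\text{chunks}$, while keeping $\delta_i$ tiny. The natural idea is to bound the sensitivity of the empirical histogram and apply concentration. The delicate part is handling outputs with small mass near the selection threshold, where the divergence can blow up. I would first try randomizing the threshold (random rounding), absorbing the boundary effects into $\delta_i$ and into the $\log$ factor appearing in $\beta$.
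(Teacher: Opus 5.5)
\paragraph{Gap: disjoint blocks.} The gap is in Step 2, and it breaks the sample bound. You run the $\cB_i$ on disjoint blocks, and your ``fix'' gives $\cB_i$ a block of size about $n_i/\rho^2$, i.e.\ about $1/\rho^2$ chunks of size $n_i$. By your own scaling ($\eps\sim 1/\sqrt{\#\text{chunks}}$, equivalently a cost of $n_i/\eps_i^2$), that block yields $\eps_i\approx\rho$, not $\rho\sqrt{n_i/\sum_j n_j}$. Then $\sum_i\eps_i^2\approx k\rho^2$ and $\eps^*\approx\rho\sqrt{k}$.

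No reallocation of disjoint blocks can repair this. If block $i$ has size $m_i$, then $\eps_i^2\approx n_i/m_i$, and the constraint $\sum_i n_i/m_i\lesssim\rho^2$ forces, by Cauchy--Schwarz, $\sum_i m_i\gtrsim(\sum_i\sqrt{n_i})^2/\rho^2$. When all $n_i=n$ this is $k^2n/\rho^2$, which is exactly the naive quadratic bound you are trying to beat.

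\paragraph{The missing idea: a shared sample.} Every $\cB_i$ must run on the \emph{same} full sample. You had the key fact already when you noted that each $\cB_i$ needs about $\sum_j n_j/\rho^2$ samples. With a shared sample of size $N\approx\frac{\sum_j n_j}{\rho^2}\polylog$, each $\cB_i$ sees $N/n_i$ chunks of its own size. It therefore attains $\eps_i\approx\sqrt{n_i/N}\approx\rho\sqrt{n_i/\sum_j n_j}$. The sample complexity of $\cBcomp$ is then the \emph{maximum} of the individual requirements, which the allocation equalizes, rather than their sum.

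This is precisely the setting advanced composition is built for. It is why the paper needs a PG composition theorem for algorithms reading one common, correlated dataset (\Cref{lm:pg_compose}, proved via \Cref{lm:pg_compose_het}), giving $\eps^*=O\bigl(\sqrt{\log(1/\delta')\sum_i\eps_i^2}+\sum_i\eps_i^2\bigr)$.

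\paragraph{Smaller points once this is fixed.}
\begin{itemize}
\item \emph{Parameter choice.} Because of the $\sqrt{\log(1/\delta')}$ factor and the $\log(1/\eps_i)\polylog(1/\delta_i)$ overhead in the conversion, the paper takes $\eps_i=c\rho\sqrt{(n_i/\sum_j n_j+1/k)/\log(k/\rho\beta_0)}$ and $\delta_i=\beta_0\eps_i^{10}$. The $+1/k$ floor keeps every logarithm at $O(\log(k/\rho\beta_0))$ even when some $n_i\ll\sum_j n_j$. Your allocation lacks this, so extra $\log(\sum_j n_j/n_i)$ factors would appear.
\item \emph{Step 1 need not be re-derived.} The conversion of \Cref{lm:replic_to_pg} already gives $(\delta,\eps,\delta)$-PG at cost $\frac{n\log(1/\eps)}{\eps^2}\polylog(1/\delta)$, with failure probability $O(\delta^2)+\sqrt{\beta}\log(8/\delta^2)$. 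That is where the $\sqrt{\beta_0}$ comes from, not from a Markov threshold on empirical frequencies.
\end{itemize}
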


This theorem matches the natural linear target in $k$ and resolves the open question of Bun et al.~\cite{bun2023stability}.
We note that linear dependence on $n$ and $k$ is essentially necessary and one cannot, e.g., obtain $O(n^{0.99} k)$ or $O(nk^{0.99})$ dependence; see \Cref{sec:tight} for a full proof.
We note that the assumption that the output of $\cA_i$ is supported over a finite space is mild as, in most cases, one can discretize the space and round the output to the closest answer to ensure the guarantee (see \Cref{sec:prelim} for a more detailed discussion).

The increase in failure probability is relatively mild and broadly consistent with what occurs in the context of differential privacy: when composing $k$ differentially private algorithms—each with failure probability $\beta_0$—the overall failure probability increases to $O(k\beta_0)$ under independence. In our setting, the bound in \Cref{thm:main} is slightly worse, scaling as $O(k\sqrt{\beta_0}\log(\cdot))$, but this is still a gentle growth in most regimes.
The increase motivates a natural question however: 
how does the sample complexity scale with the success probability. 
We study this question in \Cref{sec:intro_boost}, providing a general framework for boosting the success probability that applies for a broad class of problems. In particular, for many problems we show a logarithmic dependence on $1/\beta_0$.
This in turn means that boosting $\beta_0$ by replacing it with $(\beta_0/k)^2$ causes only a logarithmic increase in sample complexity. 

\paragraph{Why naive composition fails.}
A natural starting point is to simply run all $k$ algorithms in parallel with shared randomness. 
Since each component is $\rho$-replicable, a naive union bound implies a joint replicability parameter grows linearly as $O(\rho k)$.
In general, this bound is tight up to constants.
Intuitively, if the algorithms are solving \enquote{independent} problems, then their replicability errors are also independent and as such, the probability that at least one algorithm changes its output under resampling scales with $\Omega(k\rho)$ (see \Cref{app:naive_is_tight} for a formal proof).
This behavior is in sharp contrast to differential privacy, where advanced composition follows from a refined \emph{analysis} of the same combined algorithm.  
For replicability, no such analysis improvement is possible—the naive construction is inherently limited—so we must instead design a new composition scheme.

\paragraph{Our approach.}
Our construction proceeds in three steps.  
First, we convert each replicable algorithm $\cA_i$ into a \emph{perfectly generalizing} (PG) algorithm $\cB_i$ with carefully chosen parameters $(\eps_i,\delta_i)$.  
Second, we compose the PG algorithms.  
Finally, we transform the result back into a replicable algorithm using correlated sampling.  
An algorithm $\cA$ is $(\gamma, \eps,\delta)$-PG if for every distribution $D$ there exists a \enquote{simulated} distribution $\Sim_D$ such that,
with probability $1-\gamma$ 
over the sample $S\sim D^n$,
the distribution of $\cA(S)$ satisfies
the following
for all measurable $O$:
\begin{align*}
    e^{-\eps}(\Pru{\Sim_D}{O}-\delta)
    \le \Pr{\cA(S)\in O}
    \le e^{\eps}\Pru{\Sim_D}{O}+\delta.
\end{align*}
This notion parallels differential privacy: both constrain multiplicative deviations in output probabilities with a possibility for a small additive error.
However, while differential privacy guarantees stability under pointwise perturbations of the dataset, perfect generalization ensures stability under resampling from the underlying distribution.

\paragraph{Why perfect generalization is the right bridge.}
While the idea of transforming to an intermediate stability notion is crucial, the choice of this notion is even more so
as other natural alternatives are not suitable.
Differential privacy, though powerful, is too strong: converting between DP and replicability introduces a quadratic blow-up in sample complexity ($n\mapsto n^2$).  
Intuitively, DP constrains log-probability ratios for \emph{neighboring datasets}, while replicability concerns resampling under shared randomness—bridging the two loses a quadratic factor.  
Moreover, in general there is no reverse improvement from $n$ to $\sqrt{n}$ when going from replicability to DP, so DP cannot yield an optimal composition result.

A seemingly closer notion is \emph{TV-indistinguishability}.  
In a sense, it captures replicability without requiring the outputs to be synchronized: it demands that for two independent samples $S^{(1)},S^{(2)}$, the expected total-variation distance between $\cA(S^{(1)})$ and $\cA(S^{(2)})$ be at most $\rho$.  
This notion can be thought of as a special case of perfect generalization with $\eps = 0$ and is very close to replicability; indeed, one can \enquote{sync} its outputs through correlated sampling. 
However, it is unclear how to bound the accumulation of errors of composed TV-indistinguishable algorithms beyond the naive analysis and as such we need to use a different approach. 

Perfect generalization occupies a well-defined position precisely between these two extremes.
Like differential privacy, it constrains the \emph{log-probability ratio} of
output events, thereby permitting a concentration-based analysis akin to that
used in the advanced composition theorem for DP: instead of a standard linear
scaling of $\sum_{i} \eps_i$ we obtain an improved scaling of $\sqrt{\sum_{i}
\eps_i^2}$ for the multiplicative factor (see \Cref{thm:pg_compose_intro}). Importantly, the existing
transformation from replicability to perfect generalization allows us to choose
the parameter $\delta_i$ to be very small at only a polylogarithmic cost in the
sample complexity.
Since the parameters $\delta_i$ add up, this choice ensures that even cumulatively, they do not have a large effect.
Concretely, setting $\eps_i \approx \rho\sqrt{n_i/\sum_j n_j}$ (see \Cref{eq:def_eps_i_delta_i}) ensures $\sum_i \eps_i^2 \approx \rho^2$, and by \Cref{lm:replic_to_pg}, the cost of converting $\cA_i$ to a PG algorithm scales as $\wtilde{O}(n_i/\eps_i^2) \approx \wtilde{O}(\sum_j n_j / \rho^2)$, yielding the linear dependence in \Cref{thm:main}.

\paragraph{Composition of perfectly generalizing algorithms.}
As part of our approach, we generalize existing results for the composition of perfectly generalizing algorithms.
We state an informal version of our result below and refer to \Cref{lm:pg_compose} for more details. On a high level, ignoring $\delta_i$, our result shows that the composition of $k$ algorithms with parameters $\eps_i$ is also perfectly generalizing with parameter $\sqrt{\sum_{i} \eps_i^2}$.

\begin{theorem}[Informal composition theorem]\label{thm:pg_compose_intro}
    Assume each algorithm $\cA_i$ is $(\delta_i,\eps_i,\delta_i)$-PG with small enough $\eps_i$ and $\delta_i \ll \eps_i$.   
    Then for any $\delta' \in (0, 1/2)$, their composition is $(\delta^*,\eps^*,\delta^*)$-PG, where
    \begin{align*}
        \eps^* = O\Big(\sqrt{\log(1/\delta') \sum_i \eps_i^2} + \sum_i \eps_i^2\Big), \quad
        \delta^* = O\Big(\sqrt{\frac{1}{\eps^*}\Big(k\delta' + \sum_i \frac{\delta_i}{\eps_i}\Big)}\Big).
    \end{align*}
\end{theorem}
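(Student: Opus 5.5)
The plan is to mimic the advanced composition theorem for differential privacy, with each algorithm $\cA_i$ compared to its simulated distribution $\Sim_{D,i}$ in place of a neighboring dataset. The simulator for the composition is the natural one: draw $y_1 \sim \Sim_{D,1}$, then $y_2 \sim \Sim_{D,2}(\cdot \mid y_1)$, and so on. In the adaptive case $\Sim_{D,i}$ may depend on the earlier outputs; in the non-adaptive case it is a product. Since all outputs are finitely supported, the argument can work with point masses.

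\textbf{Step 1: pointwise decomposition of each PG guarantee.} For each $i$, with probability $1-\delta_i$ over $S$, the pair $\cA_i(S)$ and $\Sim_{D,i}$ is $(\eps_i,\delta_i)$-indistinguishable. By the standard DP lemma, this pair is then $\eps_i$-close, up to total-variation distance $O(\delta_i/\eps_i)$, to a pair $(P_i',Q_i')$ satisfying pure $\eps_i$-indistinguishability pointwise. The $\delta_i/\eps_i$ scaling here is exactly what appears in $\delta^*$. The condition $\delta_i \ll \eps_i$ keeps the mass we move small.

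\textbf{Step 2: concentration of the privacy loss.} Define the loss $L = \sum_i \log\bigl(P_i'(y_i)/Q_i'(y_i)\bigr)$. Each summand is bounded by $\eps_i$ in absolute value and has conditional mean $O(\eps_i^2)$, by the KL bound for $\eps$-indistinguishable pairs. Azuma's inequality then gives
\[
|L| \le \sqrt{2\log(1/\delta')\textstyle\sum_i\eps_i^2} + O\bigl(\textstyle\sum_i\eps_i^2\bigr)
\]
except with probability $O(\delta')$. This must hold under both $P$ and $Q$, with a small union over per-step failure events accounting for the $k\delta'$ term. This yields an $(\eps^*, \eta)$ guarantee with $\eta = O(k\delta' + \sum_i \delta_i/\eps_i)$. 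This holds conditionally on every $S$ being "good" for its PG event, and at this stage without yet separating the sample-failure probability from the additive slack.

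\textbf{Step 3: splitting the slack $\eta$ into the two roles of $\delta^*$.} This is the step I expect to be the main obstacle. The definition requires a single $\delta^*$ to serve both as the probability over $S$ and as the additive error. But the per-$i$ approximation errors in Step 1 depend on $S$, so $\eta$ is really an expectation over $S$ of the additive slack. We therefore need to convert an averaged guarantee into a high-probability one.

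I would apply Markov's inequality with threshold $\sqrt{\eta/\eps^*}$ to the $S$-dependent slack. The probability over $S$ is then at most $\sqrt{\eta\eps^*}\le\sqrt{\eta/\eps^*}$ (for $\eps^*\le 1$), and on the complement the additive error is at most $\sqrt{\eta/\eps^*}$. One factor of $1/\eps^*$ may be needed when we re-express a multiplicative-to-additive conversion of the bad-tail mass. This balancing produces
\[
\delta^* = O\Bigl(\sqrt{(k\delta' + \textstyle\sum_i\delta_i/\eps_i)/\eps^*}\Bigr).
\]

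Care is needed in the adaptive case: the good events for later $\cA_i$ depend on earlier outputs. I would handle this by bounding, for each prefix, the probability that the next stage is bad via the per-stage $\delta_i$, and summing inside the expectation before applying Markov's inequality.
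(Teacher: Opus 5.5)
For the statement as posed (plain, non-adaptive composition), your Steps 1--2 already suffice, and they take a different and simpler route than the paper. All $\cA_i$ run on the same $S$ with independent coins. So with probability at least $1-\sum_i\delta_i$ every PG guarantee holds simultaneously. For each such $S$, the composed output is $\prod_i\cA_i(S)$ and the simulator is $\prod_i\Sim_{D,i}$. You are therefore composing $k$ \emph{fixed} pairs, and ordinary heterogeneous advanced composition applies pointwise in $S$. This gives $\gamma^*\le\sum_i\delta_i$ and additive slack $\delta'+O(\sum_i\delta_i/\eps_i)$, which lies within the claimed $\delta^*$ whenever $\eps^*\le 1$.

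Consequently Step 3 is unnecessary. On the good event your slack is a uniform bound, not an average over $S$. Your remark that a factor $1/\eps^*$ ``may be needed'' is reverse-engineered from the target rather than derived.

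The paper instead proves the adaptive version. It establishes joint closeness $(X,Z^k)\approx_{\eps^{(k)},\delta^{(k)}}(X,\wtilde{Z}^k)$, where the oracle transcript is built from oracle prefixes so that it is independent of $X$ (your sequential simulator is the right one). It then conditions on $X$ via the Kasiviswanathan--Smith lemma, which is where the $\sqrt{\cdot/\eps^*}$ comes from. Your route buys simplicity and a better $\delta^*$; the paper itself remarks that such a simpler non-adaptive proof exists. The paper's route buys adaptivity.

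Your adaptive extension, however, has a genuine gap, and it is precisely the one the paper identifies in \cite{bassily2016typical}. The PG hypothesis gives $\Pr{(X,z^{i-1})\notin\cC_i}\le\gamma_i$ only for each \emph{fixed} prefix $z^{i-1}$. ``Summing inside the expectation'' requires this for the real prefix $Z^{i-1}$, which depends on $X$, and no such transfer holds in general. This is the same phenomenon that breaks composition of max-information under product distributions. The per-prefix bound transfers only to the oracle prefix $\wtilde{Z}^{i-1}$, because that prefix is independent of $X$.

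The missing idea is an induction. On the event $\cG_{i-1}$ that $F_{i-1}(X,Z^{i-1})\le\eps^{(i-1)}$, the law of $Z^{i-1}$ given $X=x$ is pointwise at most $e^{\eps^{(i-1)}}$ times that of $\wtilde{Z}^{i-1}$. Hence $\Pr{(X,Z^{i-1})\notin\cC_i,\ (X,Z^{i-1})\in\cG_{i-1}}\le e^{\eps^{(i-1)}}\gamma_i$. This forces the Azuma bound to be invoked after every round, interleaved with the PG and log-ratio failures in a stopping-time argument. That is the source of the $k\delta'$ and $e^{\eps^{(\ell-1)}}\gamma_\ell$ terms in $\delta^{(k)}$.

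For the same reason, Step 2 cannot be run for a fixed good $S$ in the adaptive case, since goodness depends on $(S,z^{i-1})$. One must compare the joint laws of $(X,Z^k)$ and $(X,\wtilde{Z}^k)$, truncating the log-ratio off the good set, and condition on $X$ only at the end. At that last step your Markov idea is sound. Since $X$ has the same marginal in both processes, the per-$x$ one-sided slacks average to at most the joint $\delta^{(k)}$. This even avoids the factor $3$ and the $1/\eps^{(k)}$ incurred by the paper's conditioning lemma.
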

 
The above theorem also holds in the adaptive setting; see \Cref{sec:compose} for a formal definition of the setting and \Cref{lm:pg_compose_het} for the precise statement.
Our proof builds on the existing result of \cite{bassily2016typical} who study the more general concept of typical stability but focus on the special case of homogeneous parameters.\footnote{
    The proof of \cite{bassily2016typical} has a subtle issue in the adaptive setting---specifically, certain objects are named in a way that is inconsistent with what the proof requires, making the argument difficult to verify as written.
    We identify and correct this issue in \Cref{sec:pg_error_explain} and prove our extension to heterogeneous parameters in \Cref{lm:pg_compose} and \Cref{lm:pg_compose_het}.

    We note that the existence of an adaptive result is perhaps surprising: the closely related concept of max-information under product distributions does not compose adaptively (see Theorem~4.1 in \cite{RRST16}), and conditioning on the output of a previous algorithm means the input distribution is no longer i.i.d., making it a priori unclear whether PG composition can handle adaptivity.
    A high-level sketch of how this conditioning issue is sidestepped appears after \Cref{lm:pg_compose_het}.

    We emphasize that the non-adaptive version of \Cref{thm:pg_compose_intro}
    already suffices for \Cref{thm:main}; the adaptive result is included
    because it is a natural question one may ask about PG composition, and---as
    the above discussion suggests---the answer is not obvious.
    While we do not pursue this here, one may obtain a simpler proof for the non-adaptive case.
    We also note that the adaptive PG composition theorem does not make \Cref{thm:main} adaptive in any way: even though the PG algorithms themselves can be adaptively composed, the transformation back to a replicable algorithm requires knowledge of all algorithms up-front.
}
On a high level, the main intuition behind the proof is similar to that of advanced composition in DP.
Specifically, when composing algorithms with parameters $(\eps_i,\delta_i)$, we look at the logarithm of the probability ratio of a value under the algorithm compared to the simulator. 
The expected contribution of each algorithm to the overall instability scales as $\eps_i^2$, rather than linearly in $\eps_i$. While the \enquote{worst-case} contribution is still $\eps_i$, using concentration one can show that
the cumulative $\eps$ parameter grows as $\sqrt{\sum_i \eps_i^2}$---exactly paralleling the behavior established by advanced composition in differential privacy.
Two additional complications arise. First, the PG condition only guarantees closeness with probability $1-\gamma_i$, so the bad events where the bound fails must be tracked carefully as they accumulate across rounds. Second, in the adaptive case, conditioning on previous outputs means the distribution of samples is no longer i.i.d., introducing dependencies that require a more involved inductive argument.
Concretely, we first show that (with high probability) the values generated by the simulator satisfy the desired probability ratio bounds; since the simulator is independent of the input, it does not suffer from the same i.i.d.\ issue.
We then use the fact that the simulator and the algorithm are close to each
other---a fact that follows inductively from the argument applied to the first
$i-1$ rounds---to show that the values of the algorithm satisfy the probability ratio bounds as well.
We refer to the discussion following \Cref{lm:pg_compose_het} for a more detailed overview with the key definitions and notation.

\paragraph{Applications.}
Beyond the general composition theorem, our framework yields immediate applications to canonical problems studied in prior work. Specifically, we can re-obtain existing results from the literature using a simplified and unified analysis.
We briefly discuss these applications here and refer to \Cref{sec:app_sq} and \Cref{sec:app_best_arm} for a more formal treatment.

Our first application concerns the problem of estimating \emph{multiple statistical queries} on the same distribution.
Given $k$ functions $\phi_1, \dots, \phi_k : \cX \to [0,1]$, the goal is to (simultaneously) estimate the mean values $\mu_i = \Exu{X\sim D}{\phi_i(X)}$ with additive error $\alpha$ while ensuring replicability.
A canonical example is the $k$-coin problem, where we aim to estimate the mean of $k$ Bernoulli random variables simultaneously.\footnote{Prior work consider $N$ coins and use the term \emph{$N$-coin problem} instead. For consistency with \Cref{thm:main}, we use $k$ instead.} A naive solution estimates each query separately with a smaller replicability parameter $\rho/k$, but this leads to quadratic dependence on $k$.
Prior works, such as those of \cite{karbasi2023replicability, hopkins2024replicability}, explicitly mention this limitation and overcame this using problem-specific arguments.
Specifically, \cite{karbasi2023replicability} study multiple statistical queries and use ideas inspired from the Gaussian mechanism in DP to obtain a TV-indistinguishable algorithm. They then transform this to a replicable algorithm using correlated sampling.
\cite{hopkins2024replicability} further study the $k$-coin problem, and more generally high dimensional mean estimation,  and establish equivalence results between replicability and isoperimetric tilings in this context. They then use the existence of such a tiling to obtain a mean estimation algorithm with $\ell_{\infty}$ error.

In contrast to these works, our composition theorem provides a black-box reduction.
Specifically, we set $\cT_i$ to be the single-query estimation problem, which has previously been studied by \cite{impagliazzo2022reproducibility}. Invoking \Cref{thm:main} directly yields an algorithm with (nearly) linear dependence on $k$. We refer to \Cref{thm:multi_dim_mean} and its proof for more details.

Our second application is the \emph{parallel best-arm problem}, introduced in the context of replicable reinforcement learning by \cite{hopkins2025generative}.
In this setting, we are given $k$ independent instances of the best-arm identification problem, each with $|A|$ reward distributions (arms), and the goal is to find—replicably—an approximately optimal arm in each instance.
In the single-instance case, \cite{hopkins2025generative} solve the best-arm problem by combining correlated sampling with an approach inspired by the exponential mechanism from differential privacy. 
However, extending this to the parallel case poses a key challenge: \cite{hopkins2025generative} explicitly note that a straightforward composition of $k$ replicable algorithms would fail, leading to quadratic scaling in $k$, and thus design a more elaborate joint exponential mechanism that samples directly over vectors of actions $(a_1, \dots, a_k)$.
Using our composition theorem, this difficulty can be bypassed entirely in a black-box manner.

By setting each $\cT_i$ to be a single-instance best-arm problem, \Cref{thm:main} immediately provides a $\rho$-replicable algorithm for all $k$ instances with nearly linear dependence on $k$.
We refer to \Cref{sec:app_best_arm} and its proof for more details.

\subsection{Lower Bound for Adaptive Composition}
\label{sec:intro_lb}

While \Cref{thm:main} shows that non-adaptive composition can be achieved with nearly linear sample complexity,
a natural follow-up question is whether this guarantee extends to the \emph{adaptive} setting,
where the problems posed in later rounds may depend on the outputs of earlier ones and may be based on new distributions.
We show that the answer is no: adaptive composition necessarily incurs a quadratic cost in $k$.
Formally, we consider an adaptive game (\Cref{def:adaptive_game})
similar to that of \cite{dwork2010boosting} (see Section III.A in their paper)
in which a
random adversary poses $k$ statistical problems sequentially, each potentially
depending on previous outputs and potentially on a new distribution.
The algorithm must solve each problem replicably using fresh samples at every round.

\begin{theorem}[Informal; see \Cref{thm:adaptive_lb}]
    Any algorithm that is $\rho$-replicable in the adaptive $k$-round game and answers all rounds correctly with constant probability requires sample complexity $m = \Omega(nk^2)$, where $n$ is the single-round sample complexity.
\end{theorem}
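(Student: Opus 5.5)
The plan is to reduce the $k$-round adaptive game to $k$ independent copies of a hard single-round problem, and to use the phantom-run technique so that the rounds are forced to fail replicability independently.

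\textbf{Base problem and adversary.} I fix a base problem with a sharp replicability/sample tradeoff: the single-coin bias problem. There, deciding whether $p \ge 1/2+\alpha$ or $p\le 1/2-\alpha$ (with any answer allowed in between) is known to need $n=\Theta(1/(\alpha^2\rho^2))$ samples for $\rho$-replicability. I choose $\alpha$ so that the single-round, constant-replicability cost is $n$. The adversary picks, in every round, a fresh coin whose bias is uniform on $[1/2-\alpha,1/2+\alpha]$, independently of previous rounds and of previous outputs. The only adaptive feature is that the round-$i$ problem is revealed only after the algorithm has committed to outputs for rounds $1,\dots,i-1$.

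\textbf{Per-round failure probability.} Suppose the algorithm uses $m_i$ samples in round $i$, with $\sum_i m_i=m$. Because the bias is drawn from a continuous prior, the standard lower-bound argument (Impagliazzo et al.\ / Bun et al.) shows the following. Conditioned on the shared randomness and on the transcript so far, two runs on fresh round-$i$ samples disagree with probability at least $c/(\alpha\sqrt{m_i})$ in expectation over the bias. The reason is that the acceptance probability, as a function of $p$, must cross from near $0$ to near $1$ over an interval of length $2\alpha$, and its slope is at most $O(\sqrt{m_i})$.

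\textbf{Main obstacle: combining the rounds.} The two runs share randomness and may disagree in an earlier round, after which their later problems differ. I will therefore analyse a third phantom run that uses the same internal randomness $r$ and independent samples, as an intermediate. I condition on the event that both real runs agree with the phantom run on rounds $1,\dots,i-1$. On that event, the two real runs face the same round-$i$ coin with the same history, and their round-$i$ samples are independent given everything before. So the conditional probability of a disagreement at round $i$ is at least the single-round bound $\rho_i:=\min(c',\,c/(\alpha\sqrt{m_i}))$.

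I want to chain these bounds to get
\[
\Pr{\text{the runs agree on all } k \text{ rounds}} \le \prod_i(1-\rho_i)\le e^{-\sum_i\rho_i}.
\]
The subtlety is that the per-round bound holds in expectation over $(r,\text{history})$, not pointwise, so the factors cannot simply be multiplied. I will handle this by a martingale/induction argument using convexity: define $q_i$ as the conditional agreement probability given $(r,\text{history})$. Agreement on all rounds has probability $\Exu{}{\prod_i q_i}$, and the factor for round $i$ is bounded using the fresh, independent bias of round $i$, which makes each factor's expected deficit at least $\rho_i$ regardless of the past. This is where I expect most of the effort to go.

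\textbf{Conclusion.} $\rho$-replicability with constant $\rho$ forces $\sum_i 1/\sqrt{m_i}=O(\alpha)=O(1/\sqrt n)$. By convexity of $x\mapsto 1/\sqrt{x}$ (Jensen), for fixed $\sum_i m_i = m$ the sum $\sum_i 1/\sqrt{m_i}$ is minimised at equal $m_i=m/k$. It then equals $k\sqrt{k/m}$, which gives $k^{3/2}/\sqrt m\lesssim 1/\sqrt n$, i.e.\ $m=\Omega(nk^2)$. The correctness requirement ensures the algorithm cannot trivially output constant answers, since coins near $1/2\pm\alpha$ must be answered correctly with constant probability in each round. A union/averaging argument over rounds shows that enough rounds are hard to make the per-round bound apply.
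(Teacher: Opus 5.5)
Your skeleton matches the paper's: a fresh, history-independent bias each round, a per-round $\Omega(1/(\alpha\sqrt{m}))$ disagreement bound from the slope argument, a phantom run, and compounding. The way you use the phantom, however, does not work.

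In the adaptive game the round-$i$ output is $\cA(\cT_i,S_1,\dots,S_i,y_1,\dots,y_{i-1};r)$, so the history contains all past samples. Two runs that agree on every earlier output, with each other or with a third run, still have different histories $H_{i-1}\neq H'_{i-1}$. They therefore execute different round-$i$ maps $S\mapsto\cA(\cT,S,H_{i-1};r)$ and $S\mapsto\cA(\cT,S,H'_{i-1};r)$. Your claim that on the conditioning event the two real runs ``face the same round-$i$ coin with the same history'' is false. The single-round bound concerns two independent inputs to one fixed map, so it says nothing directly about $\Pr{y_i\neq y'_i}$. (If the algorithm saw only past outputs, agreement would give identical histories and no phantom would be needed; the phantom exists precisely because of the dependence on past samples.)

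In the paper the phantom is a per-round device: draw a fresh $S''_i$ and let $y''_i=\cA(\cT,S''_i,H'_{i-1};r)$. The single-round lemma applies to $(y'_i,y''_i)$, which share run $2$'s map. Since $(y_i,y'_i)$ and $(y_i,y''_i)$ are identically distributed given $\cF_{i-1}$ and $\theta_i$, the triangle inequality gives $\Pr{y_i\neq y'_i\mid\cF_{i-1}}\ge\tfrac{1}{2}\Pr{y'_i\neq y''_i\mid\cF_{i-1}}\ge\tfrac{p}{2}X'_i$. Here $X'_i$ indicates that run $2$'s round-$i$ map is accurate. This holds pointwise in the history, since the only averaging is over the fresh $\theta_i$. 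Once you have it, the chaining you flag as the main effort is routine; the paper uses additive telescoping only because $X'_i$ is random.

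The second gap is the accuracy hypothesis behind the slope argument. It needs the particular map for that fixed $r$ and history to be accurate at the endpoints, whereas correctness of the whole algorithm gives this only on average. With your prior (bias uniform on $[1/2-\alpha,1/2+\alpha]$) it fails even on average. The endpoints have probability zero, so every answer is valid almost surely; coins merely near $1/2\pm\alpha$ impose no constraint. The constant algorithm is then always correct and $0$-replicable against your adversary, so no lower bound can follow.

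The paper's prior puts mass $1/3$ on each endpoint, so that correctness against this one adversary forces endpoint accuracy. Then, for each fixed $r$, it splits on the expected number $\Ex{Y}$ of accurate rounds:
\begin{itemize}
\item if $\Ex{Y}\ge k/2$, the replicability deficits compound to $\rho(r)\ge\frac{pk/4}{1+pk/2}$;
\item otherwise, a separate telescoping argument shows the failure probability is at least $\frac{k/32}{1+k/16}$.
\end{itemize}
This lower bound on failure needs its own compounding argument; the union bound you allude to only upper-bounds failure. Averaging over $r$ then forces $pk=O(1)$.

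Finally, a small arithmetic slip: $k^{3/2}/\sqrt{m}\lesssim 1/\sqrt{n}$ gives total $m=\Omega(nk^3)$, i.e.\ $\Omega(nk^2)$ per round. That per-round bound is what the paper's statement asserts, with $m$ the per-round sample count.
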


This establishes a quadratic separation between adaptive and non-adaptive composition:
the upper bound of \Cref{thm:main} achieves $\wtilde{O}(nk)$, while the lower bound shows $\Omega(nk^2)$ is necessary for adaptive adversaries.
Note that the lower bound is tight: boosting the replicability parameter of each round from $\rho$ to $\rho/k$ and applying naive composition yields an $O(nk^2)$ algorithm.

\paragraph{Proof sketch.}
Our starting point is the hard instance of replicability posed by \cite{impagliazzo2022reproducibility}
(see Lemma 7.2 in their paper): 
any algorithm that takes $m$ samples from $\operatorname{Bernoulli}(\theta)$ and must decide whether $\theta$ is above or below $1/2$ (outputting $+$ when $\theta = 1/2 + \tau$, outputting $-$ when $\theta = 1/2 - \tau$, and either when $|\theta - 1/2| < \tau$) has replicability parameter $\ge \Omega(1/(\tau\sqrt{m}))$.
On a high level, the goal is to repeatedly present instances of this problem to the algorithm and
show that the replicability errors are \emph{independent} and therefore compound:
two different runs with different sample sets should diverge at \emph{some point} with probability at least
$\Omega(k/\tau\sqrt{m})$, implying the desired bound.
It is not a priori clear how to show this though given that the algorithm's randomness can cause the errors to correlate;
indeed, correlating errors is precisely how our non-adaptive algorithm is able to obtain an improved sample complexity.

To overcome this issue, we modify the hard instance of \cite{impagliazzo2022reproducibility} slightly
and show the following. There exists a single hard \emph{distribution} over threshold problem instances
such that for any algorithm and \emph{any fixed random bits $r$}, one of the following should hold:
either the probability (over the randomness of the problem instance and the samples) of the algorithm outputting a wrong solution
is large or the probability (again over the randomness of the problem instance and the samples)
of two runs with independent samples disagreeing is high (see \Cref{lem:threshold_estimation} and its preceding discussion for more details).
Our hard instance for the $k$-round lower bound simply draws $k$ independent problem instances from this distribution, one per round.
In expectation over $k$ rounds, this means that at least $k/2$ rounds have either high error or high replicability parameter.
Since the random bits of the algorithm $r$ is fixed, 
the algorithm's output is a deterministic function of the current samples and history. 
This ensures that as long as the adversary chooses independent problem instances,
the errors (either correctness or replicability) are independent and therefore compound, implying the desired $\Omega(nk^2)$ sample complexity (see \Cref{clm:fixed_r_lb}).

We note a subtlety arises when making the compounding argument rigorous;
when we compare two independent runs with different samples on round $i$ they do not
share the same \emph{history}, even if they have the same output up to round $i-1$ and (by assumption) use the same random bits.
Concretely, the information from the samples of the previous rounds means that in principle, the algorithms
used in round $i$ may be different in the two runs and therefore we cannot apply the hardness of \Cref{lem:threshold_estimation} directly.
We resolve this by introducing a third \emph{phantom} run that shares run $2$'s history but uses fresh independent samples.
\Cref{lem:threshold_estimation} ensures run $2$ and $3$ must disagree with lower bounded probability
which implies at least one of them must disagree with run $1$. Since the two runs have the same distribution,
this implies a lower bound on the probability of run $1$ and $2$ disagreeing (see the proof of \Cref{clm:fixed_r_lb} for more details).

We refer to this technique---introducing the third independent run---as the \emph{phantom run}.
As we discuss next, this technique has applications beyond the lower bound itself.

\subsubsection{Structural Properties of Replicable Algorithms}
The phantom run used in the lower bound above has a broader conceptual message.
In that argument, a fresh resample from the conditional distribution given the sufficient statistic
(here, the empirical distribution) was used to decouple the history from the current round.
The same resampling idea yields a general structural principle: a replicable algorithm never
needs to look at information in the data that is statistically irrelevant to its task.

Precisely, recall that a \emph{sufficient statistic} for a parametric family $\cF = \{D_\theta : \theta \in \Theta\}$
is a function $f(S)$ of the sample that captures all information about $\theta$.
We show that any replicable algorithm can be converted to one that operates
only on $f(S)$ rather than the full sample, with replicability preserved for distributions in $\cF$:

\begin{theorem}\label{thm:intro_stuff_stat}[Informal; see \Cref{thm:suff_stat}]
    Let $\cA$ be a $\rho$-replicable algorithm for a statistical problem over a family $\cF$,
    and let $f$ be a sufficient statistic for $\cF$.
    Then there exists an algorithm $\cB$ with the same accuracy and sample complexity
    such that $\cB(S)$ depends on $S$ only through $f(S)$,
    and $\cB$ is $2\rho$-replicable for any distribution $D \in \cF$.
\end{theorem}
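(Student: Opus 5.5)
The construction is the phantom-run resampling itself. Given a sample $S$ and the random string $r$, the algorithm $\cB$ computes $t=f(S)$ and draws a fresh sample $S'$ from the conditional distribution of a sample given $f(S')=t$. It then outputs $\cA(S';r)$. Sufficiency says this conditional law, call it $P_t$, does not depend on $\theta$. So $\cB$ can sample from $P_t$ without knowing $D$, using auxiliary randomness that is part of $\cB$'s internal randomness, and $\cB(S)$ depends on $S$ only through $f(S)$. We separate the random bits into $r$, shared between runs as replicability demands, and the resampling randomness $u$, which we also take as part of $\cB$'s shared internal randomness.

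\textbf{Accuracy and sample complexity.} The sample complexity is unchanged, since $|S'|=|S|$. For accuracy, if $S\sim D_\theta^n$ with $D_\theta\in\cF$, then the pair $(f(S),S')$ has the same joint law as $(f(S),S)$. In particular $S'\sim D_\theta^n$ marginally, so $\cB(S)$ has exactly the output distribution of $\cA$ on a genuine sample. Hence $\cB$ has the same failure probability as $\cA$ whenever $D\in\cF$.

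\textbf{Replicability.} Take two independent samples $S_1,S_2\sim D_\theta^n$ and share both $r$ and $u$. A coupling subtlety arises here. If the shared $u$ is used to sample $S_1'\sim P_{f(S_1)}$ and $S_2'\sim P_{f(S_2)}$, the two phantom samples are still marginally i.i.d. from $D_\theta^n$. However, they may be dependent through $u$, so we cannot directly cite $\rho$-replicability of $\cA$, which requires independent samples.

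The phantom-run triangle argument handles this. Introduce a third, genuinely independent sample $S_3\sim D_\theta^n$, independent of everything including $u$. For each $i\in\{1,2\}$, the pair $(S_i',S_3)$ consists of independent samples with law $D_\theta^n$. This holds because $S_i'$ is a function of $(S_i,u)$, which is independent of $S_3$, and $S_i'$ has marginal $D_\theta^n$. Therefore
\[
\Pr{\cA(S_i';r)\neq\cA(S_3;r)}\le\rho \quad\text{for } i=1,2 .
\]
A union bound then gives
\[
\Pr{\cB(S_1)\neq\cB(S_2)}\le \Pr{\cA(S_1';r)\neq \cA(S_3;r)}+\Pr{\cA(S_2';r)\neq \cA(S_3;r)}\le 2\rho,
\]
which is exactly the factor-$2$ loss in the statement.

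\textbf{Main obstacle.} The expected main obstacle is measure-theoretic: making the conditional sampling from $P_t$ rigorous. This requires a regular conditional distribution given $f$ that is independent of $\theta$, which is the definition of sufficiency. It also requires checking that $S_i'$ is a measurable function of $(S_i,u)$, so that the independence from $S_3$ claimed above is valid. For finite or discrete domains this is immediate. In general I would assume standard Borel spaces and invoke the existence of regular conditional probabilities, noting that replicability is only guaranteed for $D\in\cF$ because the equality in law $S_i'\sim D^n$ uses sufficiency.
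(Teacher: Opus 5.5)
Your proposal is correct and follows the paper's proof essentially step for step: the same construction $\cB(S; r, r') = \cA(\mathrm{Resample}(f(S), r'); r)$, the same correctness argument via $(f(S), S') \overset{d}{=} (f(S), S)$ (the paper's \Cref{lm:suff_marginal}), and the same phantom sample independent of $(S_1, S_2, r, r')$, combined with a union bound to obtain $2\rho$. One detail worth making explicit is that $r$ must be independent of the resampling bits $u$, so that each pair $(S_i', S_3)$ is jointly independent of $r$ as the definition of replicability requires; the paper states this assumption explicitly.
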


In general, the factor-of-$2$ loss in the replicability parameter is a mild drawback: one can start with a $(\rho/2)$-replicable algorithm and apply \Cref{thm:intro_stuff_stat} to obtain a $\rho$-replicable one with no asymptotic cost.
Concretely, given the replicability parameter boosting result of
\cite{impagliazzo2022reproducibility}, the dependence of sample complexity on
$\rho$ is always $1/\rho^2$ in the worst case, so the loss is fairly mild.

A natural special case of the above result is \emph{order invariance}, obtained by taking $f(S)$ to
be the unordered multiset $\{S_1,\ldots,S_n\}$: the algorithm sees only the set
of samples, not their order.
\Cref{thm:intro_stuff_stat} then implies that any $\rho$-replicable algorithm can be converted into a $2\rho$-replicable, order-invariant one with no change in accuracy or sample complexity (see \Cref{cor:order_inv}).

\paragraph{Label invariance for symmetric problems.}
The same resampling idea also yields \emph{label invariance} when the problem is \emph{symmetric}---that is,
its correctness criterion is invariant under relabeling of the domain elements
(uniformity testing, closeness testing, and entropy estimation are canonical examples).
Here, instead of resampling the datapoints, one resamples the domain labels:
apply a uniformly random permutation $\pi$ of the domain $\cX$
(drawn from $\mathfrak{S}_{|\cX|}$, the symmetric group on $|\cX|$ elements, using shared randomness $r'$)
to the input before running $\cA$.

\begin{corollary}[Informal; see \Cref{cor:label_inv}]
    Let $\cA$ be a $\rho$-replicable algorithm for a symmetric statistical problem over a finite domain $\cX$.
    Then there exists a $\rho$-replicable, label-invariant algorithm $\cB$ with the same accuracy and sample complexity.
\end{corollary}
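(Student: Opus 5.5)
The construction is the random-relabeling one already described before the corollary. Let $\mathfrak{S}$ denote the set of permutations of $\cX$, and for a sample $S=(x_1,\dots,x_n)$ write $\pi(S)=(\pi(x_1),\dots,\pi(x_n))$. The algorithm $\cB$ uses shared randomness $(r,r')$. From $r'$ it draws a uniform $\pi\in\mathfrak{S}$, and it outputs
\[
\cB(S;r,r')=\pi^{-1}\bigl(\cA(\pi(S);r)\bigr),
\]
where $\pi^{-1}$ acts on the output space through the same action that makes the problem symmetric (\Cref{def:symmetric_problem}). For the purposes of this plan, I assume the symmetry definition supplies this action, together with the fact that a correct answer $y$ for $D$ maps to a correct answer $\pi(y)$ for $\pi(D)$. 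There are then three properties to check, in this order: accuracy, replicability, and label invariance.

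\emph{Accuracy.} If $S\sim D^n$, then $\pi(S)\sim \pi(D)^n$. Fix $\pi$. With probability at least $1-\beta$, $\cA$ returns a correct answer for $\pi(D)$. By symmetry, applying $\pi^{-1}$ turns this into a correct answer for $D$. Averaging over $\pi$ keeps the failure probability at most $\beta$, and the sample complexity is unchanged.

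\emph{Replicability.} Take two runs that share $(r,r')$, hence share $\pi$, and use independent samples $S,S'\sim D^n$. Since $\pi^{-1}$ is a bijection, the outputs of $\cB$ differ exactly when $\cA(\pi(S);r)\neq \cA(\pi(S');r)$. For each fixed $\pi$, the samples $\pi(S),\pi(S')$ are independent draws from $\pi(D)^n$, so this disagreement probability is at most $\rho$ by the replicability of $\cA$ on $\pi(D)$. Averaging over $\pi$ gives $\rho$, with no factor-2 loss. The key point is that the relabeling uses shared randomness, so no phantom run is needed here, unlike in \Cref{thm:suff_stat}.

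\emph{Label invariance.} Fix any $\sigma\in\mathfrak{S}$ and compute
\[
\cB(\sigma(S);r,r')=\sigma\,(\pi\sigma)^{-1}\bigl(\cA(\pi\sigma(S);r)\bigr).
\]
Because $\pi$ is uniform on $\mathfrak{S}$, so is $\pi\sigma$. Therefore the output distribution of $\cB(\sigma(S))$, over $(r,r')$, equals the distribution of $\sigma(\cB(S))$. This is exactly the equivariance notion of \Cref{def:label_inv}.

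The main obstacle I expect is matching \Cref{def:label_inv} precisely. If label invariance means invariance of the output distribution, this is the computation above. If it instead means the output depends only on label-free statistics, such as collision counts or histograms of multiplicities, this requires the outputs themselves to be label-free (e.g.\ accept/reject or an entropy estimate). In that case $\pi^{-1}$ acts trivially on outputs, and the identity above states directly that the output distribution of $\cB(S)$ is unchanged under relabeling of $S$.

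A second subtlety concerns which distributions the guarantees cover. The replicability of $\cA$ must hold for $\pi(D)$, not only for $D$. Replicability is required for all $D$, or at least for a class closed under relabeling, so this is automatic, and it is exactly where symmetry of the problem is used.
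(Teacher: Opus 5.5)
Your proposal is correct and follows essentially the same route as the paper. It uses a uniformly random domain permutation $\pi$ drawn from shared randomness, gets replicability because $\pi(S_1),\pi(S_2)$ are i.i.d.\ from $\pi_*(D)$ for each fixed $\pi$, and gets invariance because $\pi\circ\sigma$ is again uniform. The one difference is your output map $\pi^{-1}$, which is unnecessary here: \Cref{def:symmetric_problem} makes validity invariant under relabeling, so the action on outputs is trivial. Your construction therefore reduces to the paper's $\cB(S;r,r')=\cA(\pi(S);r)$, and your equivariance identity becomes exactly the distributional invariance of \Cref{def:label_inv}, which is invariance rather than equivariance.
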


The construction is $\cB(S; r, r') = \cA(\pi(S); r)$, where $\pi \sim \mathrm{Uniform}(\mathfrak{S}_{|\cX|})$ is drawn from $r'$.
Since $\pi$ is shared, the relabeled samples $\pi(S_1)$ and $\pi(S_2)$ are i.i.d.\ from $\pi_*(D)$,
and replicability applies directly.
The label invariance guaranteed here is \emph{distributional}: for any fixed sample $S$,
the output distribution of $\cB(S)$ is the same as that of $\cB(\sigma(S))$ for every domain permutation $\sigma$.
A stronger \emph{pointwise} guarantee---where $\cB(S; r) = \cB(\sigma(S); r)$ holds for every fixed seed $r$---follows
for any finite output space by a further step: compute the output distribution
of $\cB(S)$ and resample from it using correlated sampling, at the cost of a
factor of $2$ in $\rho$ (see \Cref{cor:pointwise_label_inv}).

\paragraph{Relation to prior work.}
Whether replicable algorithms can always be assumed to have canonical structural properties
has been an active question in the literature.
\cite{liu2024replicable} studied replicable uniformity testing and proved lower bounds only for the
restricted class of \emph{label-invariant} algorithms (their Definition~1.5), explicitly leaving open
whether this restriction is without loss of generality.
Their notion of label invariance is \emph{pointwise}: for every fixed seed $r$, the output is unchanged
under domain relabeling.

\cite{aamand2025structure} resolved this open question for binary hypothesis testing of symmetric properties
(their Theorem~1.2): any $\rho$-replicable binary tester can be converted, with no loss in $\rho$,
to one that is simultaneously order-invariant and label-invariant.
Interestingly, their \emph{definition} of label invariance (Definition~4.2) is distributional,
but their construction---which averages the test statistic $f$ over all domain permutations to form a
deterministic symmetric function $h$---achieves the stronger pointwise notion as a byproduct.
Their proof represents $\cA$ via a scalar acceptance probability $f(S) \in [0,1]$ and
constructs the symmetrized algorithm by considering all possible permutations of $f$.
This is fundamentally specific to binary output: only for binary algorithms does a single scalar
completely characterize the output distribution, so the approach has no counterpart for general output spaces.
Our \Cref{cor:order_inv} and \Cref{cor:label_inv} together give the same two structural properties
for \emph{any} output space and \emph{any} symmetric statistical problem, via simple black-box resampling.

Regarding the sufficient statistic result specifically: \cite{aamand2025structure} prove a related reduction in their lower bound proof for Gaussian mean testing, using the fact that the empirical mean is a sufficient statistic for the Gaussian family.
Their reduction, like \Cref{thm:intro_stuff_stat}, guarantees replicability when both datasets are drawn from the same member of the parametric family.
\Cref{thm:intro_stuff_stat} is stated as a general standalone transformation that applies to any statistical problem and any sufficient statistic, rather than being specific to hypothesis testing.
We note that their approach preserves the replicability parameter exactly, whereas \Cref{thm:intro_stuff_stat} incurs a factor-of-$2$ loss; as discussed above, this loss is negligible.

\subsection{Boosting Success Probability}
\label{sec:intro_boost}
Following \Cref{thm:main}, a natural question is whether the dependence on the failure probability~$\beta$ can be improved.
Our next result shows that, for a broad class of problems, this is indeed possible, providing a black-box boosting of the success probability.
The question of when boosting is possible is a central theme studied in learning theory, with the celebrated AdaBoost result of \cite{freund1997decision} being perhaps the most famous example.\footnote{Indeed, the primary motivation of the work of \cite{dwork2010boosting} who introduce advanced composition is boosting as indicated by the title.} Previous works in replicability have studied boosting the replicability parameter and accuracy (see \Cref{sec:related} for more details). Our work provides the first boosting results for success probability, directly improving several existing bounds from the literature.

To illustrate our approach we begin with a very simple idea. Suppose that we are given a $\rho$-replicable algorithm with failure probability $\rho$, and we want to reduce the failure probability to some value $\beta \ll \rho$.
Assume that we can efficiently test whether or not a given solution is valid for a statistical problem.
In this case, to boost the success probability, we can first run a replicable algorithm to find a candidate solution and then, if the solution is invalid, fall back on a non-replicable algorithm and use that instead.
It is easy to see that this approach (approximately) preserves replicability: if we take two separate runs of the algorithm, with probability $1-\rho$ they agree and with probability $1-\rho$, the output is correct. Assuming the tester performs correctly in both cases, with probability at least $1-2\rho$ the algorithms will both have the same output; indeed, they will both have the same output as the original algorithm. It is also clear that, since the original replicable algorithm's output is always checked, the failure probability is determined by the non-replicable algorithm which, in most cases, is much more efficient. 

The problem with the above idea is that sometimes it can be very difficult to test whether or not a proposed value is a valid solution. As an example, take the problem of estimating the mean of a Bernoulli random variable with additive error $\epsilon$. 
Let $\mu$ and $x$ denote the true mean and proposed value respectively.
A natural approach is to estimate the mean empirically and compare its distance from $x$ with $\epsilon$. If $\abs{\mu - x}$ is exactly $\epsilon$ however, this distance can be either higher or lower than $\epsilon$ with roughly equal probability.

To address the issue, we slightly adjust our approach by employing a stronger algorithm to obtain the initial solution, which in turn allows us to use an approximate tester later on.
To formalize this,
we consider problems parameterized by an \emph{accuracy parameter}~$\alpha$, where smaller values of $\alpha$ correspond to more accurate solutions,
and require the following two properties for the problem:
\begin{enumerate}
    \item The replicable sample complexity depends polynomially on $(\rho^{-1},\alpha^{-1}, \beta^{-1})$.
    More precisely, changing the parameters by constant factors should not dramatically change the sample complexity.
    \item There is a small-sample \emph{tester} that, given a candidate solution, accepts whenever it is valid for~$\cT_\alpha$ and rejects whenever it is invalid for~$\cT_{2\alpha}$.
\end{enumerate}
Below, we briefly mention how a tester can be implemented for two example problems.
\begin{itemize}
    \item \textbf{Mean estimation.}
    Consider the problem of estimating the mean with additive error $\alpha$. To implement a tester, estimate the mean non-replicably with additive error $\alpha/4$;
    a standard analysis shows that this can be achieved using at most $O(\log(1/\beta)/\alpha^2)$ samples.
    We then test whether a proposed value $x$ is within error $3\alpha/2$ of the estimated mean. By the triangle inequality, if the proposed value is at most $\alpha$ away from the actual mean, then it is at most $3\alpha/2$ away from the estimated mean and, conversely, if it is at least $2\alpha$ away from the actual mean then it is at least $3\alpha/2$ away from the estimated mean. Therefore, comparing the distance from the estimated mean with the threshold $3\alpha/2$ gives us a tester.
    \item \textbf{PAC learning.}
    Consider the problem of finding some hypothesis $h$ from a class of $\cH$ with error at most $\alpha$.
    To test a hypothesis, non-replicably estimate its error using fresh samples to test whether it is below $\alpha$ or above $2\alpha$. Using standard multiplicative Chernoff bounds, it can be shown that this can be achieved with sample complexity $O(\log(1/\beta)/\alpha)$.
\end{itemize}

We briefly state our result here and refer to \Cref{sec:boost} for more details (see \Cref{thm:boost_beta} and \Cref{rem:boost}).
\begin{theorem}[Informal]\label{thm:boost_intro}
    Let $\cT_{\alpha}$ be a statistical problem parameterized by an accuracy parameter $\alpha$ satisfying the two properties mentioned above.
    Let $n_{\textnormal{replic}}(\rho, \alpha, \beta)$ denote the sample complexity of a $\rho$-replicable algorithm solving the problem with failure probability $\beta$.
    Let $n_{\textnormal{tester}}(\alpha,\beta)$ denote the sample complexity of an approximate tester and 
    $n_{\textnormal{non-replic}}(\alpha,\beta)$ denote the sample complexity of a non-replicable algorithm for the problem.

    For sufficiently small $\alpha, \rho, \beta > 0$, there exists a replicable algorithm for the problem with sample complexity
    \begin{align*}
        n_{\textnormal{new}}(\rho,\alpha,\beta)
        =
        O\rbr{
            n_{\textnormal{replic}}(\rho,\alpha,\rho)
            +
            n_{\textnormal{tester}}(\alpha,\beta)
            +
            n_{\textnormal{non-replic}}(\alpha,\beta)
        }
    \end{align*}
\end{theorem}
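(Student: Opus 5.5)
We prove the informal boosting statement (\Cref{thm:boost_intro}) with a three-stage algorithm on three disjoint parts of the sample. Throughout, the accuracy parameters are shrunk by constant factors so that the tester's gap lines up with the final target $\alpha$.

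\textbf{Construction.}
\begin{enumerate}
\item \emph{Replicable candidate.} Run the given replicable algorithm for $\cT_{\alpha/2}$ with replicability parameter $\rho/4$ and failure probability $\rho/4$. Call its output $x$. By the first property (polynomial dependence on the parameters), this costs $O(n_{\textnormal{replic}}(\rho,\alpha,\rho))$ samples.
\item \emph{Test.} Run the tester at accuracy level $\alpha/2$ on a fresh part of the sample, with failure probability $\min(\beta,\rho)/4$. By the second property, it accepts whenever $x$ is valid for $\cT_{\alpha/2}$ and rejects whenever $x$ is invalid for $\cT_{\alpha}$.
\item \emph{Output.} If the tester accepts, output $x$. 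Otherwise, run a non-replicable algorithm for $\cT_\alpha$ with failure probability $\beta/4$ on the third part of the sample, and output its answer.
\end{enumerate}
The total cost is $O(n_{\textnormal{replic}}(\rho,\alpha,\rho)+n_{\textnormal{tester}}(\alpha,\min(\beta,\rho))+n_{\textnormal{non-replic}}(\alpha,\beta))$. The $\rho$ inside the tester term is absorbed: for $\rho\ge\beta$ it is dominated by the $\beta$ term, and otherwise the tester cost is at most the replicable cost. Making this rigorous is where "sufficiently small $\alpha,\rho,\beta$" and the first property enter.

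\textbf{Correctness.} Fix the random string and suppose the output is wrong, i.e.\ invalid for $\cT_\alpha$.
\begin{itemize}
\item If the output is $x$, the tester accepted an $x$ that is invalid for $\cT_\alpha$. This is a tester failure, with probability at most $\beta/4$.
\item Otherwise, the non-replicable fallback failed, with probability at most $\beta/4$.
\end{itemize}
So the failure probability is at most $\beta/2$. The key point is that the replicable stage's failure probability $\rho$ never appears in the final failure probability, because its output is always checked.

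\textbf{Replicability.} Take two runs with shared randomness and independent samples. Consider the event $E$ on which all of the following hold:
\begin{itemize}
\item both first stages output the same $x$ (probability at least $1-\rho/4$);
\item that $x$ is valid for $\cT_{\alpha/2}$ in the first run (probability at least $1-\rho/4$);
\item both testers behave correctly (probability at least $1-\rho/2$ by a union bound).
\end{itemize}
On $E$, both testers accept the common $x$, since it is valid for $\cT_{\alpha/2}$, so both runs output $x$. Hence the outputs agree with probability at least $1-\rho$.

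\textbf{Main obstacle.} The delicate step is the interplay in the replicability argument. The tester only has guarantees at the two levels $\alpha/2$ and $\alpha$, and on an intermediate $x$ two runs could disagree. This is exactly why the first stage must target the stronger accuracy $\alpha/2$: it keeps $x$ out of the ambiguous zone except with probability $O(\rho)$. The other subtle point is that the tester's failure probability must be at most $O(\rho)$, not just $O(\beta)$, when $\beta>\rho$; this is handled by the $\min(\beta,\rho)$ choice and the absorption argument above. The non-replicable fallback never harms replicability, since it runs only off the event $E$.
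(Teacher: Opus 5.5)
Your construction is the paper's (\Cref{thm:boost_beta}). You use the same three stages: the replicable stage at $(\rho/4,\alpha/2,\rho/4)$, the tester at level $\alpha/2$ with failure probability $\min\{\rho,\beta\}/4$, and the non-replicable fallback. Your replicability argument is the same union bound, and your single union bound for correctness is a slightly tidier version of the paper's two-case analysis conditioned on $u$.

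\emph{The gap: absorbing $\min(\beta,\rho)$ in the tester term.} In the case $\beta>\rho$ you claim $n_{\textnormal{tester}}(\alpha/2,\rho/4)$ is at most the replicable cost. Nothing in the two assumed properties compares the tester's cost with the replicable algorithm's cost, so this can fail in general. For example, take a tester whose cost grows steeply in $1/\beta$. The fix is the one in \Cref{rem:boost}:
\begin{itemize}
\item If $\beta\ge\rho$, skip the boosting entirely and run the replicable algorithm with parameters $(\rho,\alpha,\rho)$. Its failure probability $\rho\le\beta$ already meets the target, at cost $n_{\textnormal{replic}}(\rho,\alpha,\rho)$.
\item Hence you may assume $\beta<\rho$. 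Then $\min\{\rho,\beta\}=\beta$, and the tester term is $O(n_{\textnormal{tester}}(\alpha,\beta))$ by the constant-factor (polynomial-dependence) assumption.
\end{itemize}

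\emph{A wording slip in the correctness argument.} Do not ``fix the random string.'' The tester's and the fallback's failure guarantees are over their coins as well as their samples, so they need not hold for a fixed string. Instead condition on the candidate $x$, which is independent of the tester's fresh samples and coins, as the paper does. Your event-level union bound then goes through unchanged.
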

The above result shows that the cost of driving the failure probability of a replicable algorithm to a very small value $\beta$ appears as a separate  \emph{additive} term independent of $\rho$.
One can view the input to the algorithm as consisting of two parts: a \enquote{replicable part}, which produces a stable candidate, and a small \enquote{non-replicable part}, which certifies or corrects its validity.

\paragraph{Applications.}
Our boosting theorem has immediate consequences for several core problems studied in prior work on replicability. 
In particular, many existing results fall into one of two categories:
(1) algorithms whose guarantees only hold for failure probability $\beta = \rho$, and 
(2) algorithms that extend to arbitrary $\beta$ but incur a \emph{multiplicative} $\polylog(1/\beta)$ penalty in the sample complexity. 
Applying our framework transforms both types into algorithms with strictly improved upper bounds where $\beta$ only appears in a separate additive term independent of $\rho$. 
We highlight a few representative examples in \Cref{table:only}.
Each new bound follows directly by plugging the corresponding replicable algorithm into our black-box boosting theorem. 
(See \Cref{sec:boost} for details and proofs.) We emphasize that the list is by no means exhaustive.

\begin{table}[h]
    \label{table:only}
    \centering
    \caption{
    Examples of improved upper bounds obtained via \Cref{thm:boost_intro}.
    The list includes Statistical Query estimation, PAC learning with bounded Littlestone dimension $d$, PAC learning for a finite class $\cH$, PAC learning of thresholds over $\{0, 1, \dots, d\}$, the $(\nu, \epsilon)$-approximate heavy-hitters problem, and the best arm problem with additive error $\alpha$ over a finite set $A$.
    We emphasize that the list is by no means exhaustive. For most existing work the dependence on $\beta$ appears as a multiplicative $\polylog(1/\beta)$ factor. \Cref{thm:boost_intro} (see \Cref{thm:boost_beta} for a formal statement) shows that in many cases $\beta$ appears in a separate additive term independent of $\rho$. 
    }
    \renewcommand{\arraystretch}{1.25}
    \begin{tabular}{lll}
        \toprule
        \textbf{Problem} & \textbf{Previous Bound} & \textbf{New Bound (This work)} \\
        \midrule
        Statistical Query &
        $O\left(\frac{\log(1/\beta)}{\alpha^2 \rho^2}\right)$ for $\beta \ge \rho$~\cite{impagliazzo2022reproducibility} &
        $O\left(\frac{\log(1/\rho)}{\alpha^2 \rho^2} + \frac{\log(1/\beta)}{\alpha^2}\right)$ \\

        PAC, Littlestone &
        $\wtilde{O}\left(\frac{d^{12}\log^3(1/\beta)}{\alpha^2 \rho^2}\right)$~\cite{bun2023stability} &
        $\wtilde{O}\left(\frac{d^{12}}{\alpha^2 \rho^2} + \frac{\log(1/\beta)}{\alpha}\right)$ \\

        PAC, finite classes &
        $O\left(\frac{(\log^2|\cH|+\log(1/(\rho\beta)))\log^3(1/\rho)}{\alpha\rho^2}\right)$~\cite{bun2023stability} &
        $O\left(\frac{(\log^2|\cH|+\log(1/\rho))\log^3(1/\rho)}{\alpha\rho^2} + \frac{\log(1/\beta)}{\alpha}\right)$ \\

        PAC, thresholds  &
        $\wtilde{O}\left(\frac{(\log^*d)^3\log^2(1/\beta)}{\alpha^2\rho^2}\right)$~\cite{bun2023stability} &
        $\wtilde{O}\left(\frac{(\log^*d)^3}{\alpha^2\rho^2} + \frac{\log(1/\beta)}{\alpha}\right)$ \\
        Heavy-hitters &
        $\wtilde{O}\left(\frac{\log(1/\beta)}{\nu\epsilon^2\rho^2}\right)$
        ~\cite{esfandiari2024replicable}
        &
        $\wtilde{O}\left(\frac{1}{\nu\epsilon^2\rho^2} + \frac{\log(1/\beta)}{\epsilon^2}\right)$ \\
        Best arm &
        $O(\frac{\log^3(|A|/\beta)}{\rho^2 \alpha^2})$
        ~\cite{hopkins2025generative}
        &
        $O\rbr{
         \frac{\log^3(|A|/\rho)}{\rho^2 \alpha^2} 
         + 
         \frac{\log(|A|/\beta)}{\alpha^2}
         }$ \\
        \bottomrule

    \end{tabular}
\end{table}

It's worth noting that the approximate heavy-hitter problem which appears in the table is parameterized by two parameters $\nu, \epsilon$ (where $\epsilon < \nu$) instead of a single parameter $\alpha$. Given a distribution $D$, the goal is to output a list $L$ that contains all elements satisfying $\Pru{D}{x} \ge \nu$ but does not contain any elements such that $\Pru{D}{x} < \nu - \epsilon$. While \Cref{thm:boost_intro} considers only a single parameter, the same idea applies with minor modification. Specifically, we need to first (replicably) solve a \enquote{stronger} version of the problem and then implement a tester that accepts all solutions to the stronger version and rejects all solutions to the original version. In this context, we can achieve this by setting $\nu' = \nu - \epsilon/4$ and $\epsilon' = \epsilon/2$. This ensures that
$\nu' < \nu$ and $\nu' - \epsilon' > \nu - \epsilon$; as such, any solution to $\cT_{\nu', \epsilon'}$ is also a solution for $\cT_{\nu ,\epsilon}$. Implementing a tester and non-replicable algorithm is slightly more involved than statistical query estimation and PAC learning; we refer to \Cref{sec:heavy} for more details.

\paragraph{Comparison with DP.} 
For the special case of PAC learning, similar results our known in the context of differential privacy. Specifically, \cite{sivakumar2021multiclass} show that failure probability can be boosted at the cost of a \emph{multiplicative} $\log(1/\beta)$ term in the sample complexity. 
This is done by taking $O(\log(1/\beta))$ copies of the algorithm and choosing a \enquote{good} output using the exponential mechanism where the scores of the outputs are their error with respect to a fresh set of samples. 
In contrast, our approach for replicability causes only an additive increase of $\log(1/\beta)/\alpha$.

\subsection*{Organization}
\Cref{sec:related} discusses the related work.
\Cref{sec:prelim} introduces notation and preliminaries.
\Cref{sec:main_thing} presents our replicable composition theorem;
\Cref{sec:app_sq,sec:app_best_arm}
present applications to multiple statistical queries and parallel best-arm
identification
and
\Cref{sec:compose} develops the composition theory for perfectly generalizing
algorithms that underlies the proof.
\Cref{sec:adaptive_lb} proves the $\Omega(nk^2)$ lower bound for adaptive composition; the structural consequences via the phantom run technique---the sufficient statistic theorem, order invariance, and label invariance---are developed in \Cref{sec:suff_stat,sec:suff_order_inv,sec:label_inv,sec:pointwise_label_inv}.
\Cref{sec:boost} presents our boosting framework; \Cref{sec:app_sq_boost,sec:app_boot_pac,sec:heavy,sec:app_best_arm_boost} present applications to statistical queries, PAC learning, heavy-hitters, and best-arm identification.
Omitted proofs appear in \Cref{sec:omitted}.
\Cref{sec:pg_error_explain} contains a detailed discussion of the challenges that arise in the adaptive composition of perfectly generalizing algorithms.

\section{Related Work}
\label{sec:related}
\paragraph{Replication.}
The formal study of replicability was initiated by Impagliazzo, Lei, Pitassi, and Sorrell~\cite{impagliazzo2022reproducibility}, who examined algorithmic guarantees ensuring that outputs remain consistent under independent resampling. The work introduced replicability as a rigorous framework for reasoning about stability in statistical learning and provided initial examples of replicable algorithms for basic estimation and learning tasks. Subsequent research explored its conceptual links to other stability notions. Bun, Gaboardi, Hopkins, Impagliazzo, Lei, Pitassi, Sivakumar, and Sorrell~\cite{bun2023stability} investigated the relationships among replicability, approximate differential privacy, and generalization, providing black-box reductions with polynomial overhead in the sample complexity. Kalavasis, Karbasi, Moran, and Velegkas~\cite{kalavasis2023statistical} studied similar connections in the PAC learning setting, introducing the related notion of TV-indistinguishability and studying its relation with privacy and replicability. Together, these works clarified how replicability fits within the broader landscape of stability concepts in learning theory.

Replicability has been examined in a broad range of algorithmic settings. This includes 
statistical query estimation~\cite{ghazi2021userlevel,impagliazzo2022reproducibility},
clustering~\cite{esfandiari2024replicable}, bandits and online learning~\cite{esfandiarireplicable, ahmadi2025replicable}, reinforcement learning~\cite{karbasi2023replicability,eaton2024replicable,hopkins2025generative,eaton2025replicable}, convex optimization~\cite{zhang2024optimal}, uniformity testing~\cite{liu2024replicable},
prophet inequalities~\cite{banihashem2025replicable},
median estimation~\cite{banihashem2026replicablemedian},
high-dimensional estimation and hypothesis testing~\cite{hopkins2024replicability, aamand2025structure,diakonikolas2025replicable}, and learning half-spaces~\cite{kalavasis2024replicable}. In addition, several papers have used topological and geometric techniques to identify fundamental limitations of replicability, establishing lower bounds and separation results for a variety of statistical learning problems~\cite{ReplicFocs0001MY23,DBLP:conf/nips/0002PWV23,LocalCMY24,DBLP:conf/nips/Woude0PRV24}. These developments have collectively expanded the theoretical understanding of replicable algorithms and their role in statistical and learning theory.

\paragraph{Composition.}
Given $k$ differentially private algorithms, each with parameter $(\eps, \delta)$, it is easy to see that their composition is $(\eps^*, \delta^*)$-DP where $(\eps^*, \delta^*) = (k\eps, k\delta)$. As shown by the advanced composition theorem however, this bound can be improved significantly; in particular, one can bound $\eps^*$ with approximately $\eps\sqrt{k}$~\cite{dwork2010boosting}. The theorem has numerous applications in privacy, and the underlying ideas of its proof are frequently used in different problems~\cite{bassily2016typical,bun2019heavy,dong2020optimal,dong2022gaussian,bun2023stability}. In particular, the transformation from differential privacy to replicability obtained by~\cite{bun2023stability} relies on the fact that differentially private algorithms are also perfectly generalizing; the proof of this fact draws heavily on techniques from the proof of advanced composition.

In the context of replicability, a similar naive analysis shows that the composition of $k$ $\rho$-replicable algorithms is $(k\rho)$-replicable. In this case however, the analysis is essentially tight, as such one cannot hope for an improved analysis (see \Cref{app:naive_is_tight}).
As we show in this paper however, if we change our perspective and look at the sample complexity of statistical problems, one can obtain a nearly linear scaling, as opposed to the nearly quadratic scaling implied by the naive composition, and this is essentially tight (see \Cref{sec:tight}).
Furthermore, we show in \Cref{thm:adaptive_lb} that once the adversary is allowed to choose problems adaptively based on previous outputs, no algorithm can achieve better than $\Omega(nk^2)$ sample complexity, establishing a sharp separation between the adaptive and non-adaptive settings.

Beyond naive composition, prior work has achieved better sample-complexity bounds under the relaxed paradigm of \emph{coordinate sampling}.
In this model, we are given $k$ statistical problems $\cT_1, \dots, \cT_k$, possibly over different domains.
As a simple example, consider estimating the means of $k$ distinct coins, which we also study in our work.
In the coordinate-sampling model, the algorithm may allocate samples adaptively across coordinates—deciding, based on previous observations, how many samples to take from each coin—so as to minimize the \emph{total number} of samples.
In particular, for the $k$-coin problem, the number of samples may vary across different coins.
\cite{hopkins2024replicability} analyze this model and show that it yields non-trivial improvements, effectively shaving a factor of $k$ from the sample complexity implied by naive composition (see Section 1.1.3 of their paper where they introduce this relaxation).
The key insight is that while a $\rho$-replicable algorithm requires $O(1/\rho^2)$ samples in the worst case, it needs only $O(1/\rho)$ samples in expectation.
Hence, one can boost the replicability parameter of each subroutine to $\rho/k$ without incurring a quadratic blowup in total sample complexity.
Notably, this improvement does not extend to the standard \emph{vector-sampling} model, where each coin must be sampled the same number of times.

\paragraph{Parameter boosting.}
As part of our results, we present a general reduction that amplifies the success probability of any replicable algorithm.
Parameter boosting is a well-studied theme in learning theory—appearing both in the contexts of replicability and differential privacy, and more broadly in PAC learning (see, e.g., the AdaBoost algorithm~\cite{freund1997decision}).

In the context of replicability, prior works have studied boosting the replicability parameter of the algorithm and, for replicable PAC learning, the accuracy of the underlying learner.
For the replicability parameter, this was studied by the original work of \cite{impagliazzo2022reproducibility}, who showed that one can boost replicability from a constant value $0.0001$ to a much smaller value $\rho$ at (nearly) a $\rho^{-2}$ cost in sample complexity, which is essentially tight (see the lower bound for the coin problem in the same paper).
For boosting accuracy, the problem was also studied originally by \cite{impagliazzo2022reproducibility}, and the recent work of \cite{larsen2025improved} improves on their bounds, showing that one can obtain an $\alpha$-accurate learner at the cost of $\alpha^{-1}$ in sample complexity (see Theorem 1.2 in their paper for full results).

In contrast to these works, our focus is on boosting the correctness probability.
For differential privacy, 
prior work has proposed approaches for this using multiple runs of the base algorithm~\cite{DBLP:conf/soda/GuptaLMRT10,DBLP:conf/stoc/0001T19,sivakumar2021multiclass}.
Most closely to our work, \cite{sivakumar2021multiclass} use multiple parallel
runs to show that one can boost the correctness of a PAC learning algorithm at
the cost of a $\log(1/\beta)$ multiplicative factor in the sample complexity.
We study boosting the correctness parameter for replicability and obtain a general result that applies to a broad class of problems (see \Cref{thm:boost_beta} and the surrounding discussion for details).
To our knowledge, no such result was previously known in the context of replicability, and as we discuss in \Cref{sec:intro_boost}, our boosting result directly yields improved bounds for a large class of problems studied in the literature. Interestingly, for the special case of PAC learning, our framework shows that failure probability can be boosted to $\beta$ at the cost of an \emph{additive} $\alpha^{-1}\log(\beta^{-1})$ term.

\section{Preliminaries}
\label{sec:prelim}

\paragraph{Statistical problem.}
We study randomized algorithms that solve statistical problems under replicability constraints and analyze how such guarantees compose across multiple problems. Formally, we define the concept of a statistical problem as follows.

\begin{definition}[Statistical Problem]
  A \emph{statistical problem} over a domain $\cX$ and output space $\cY$ is a set of pairs $\cT = \{(D, G_D)\}$, where $D$ is a distribution over $\cX$ and $G_D \subseteq \cY$ specifies the set of acceptable outputs for $D$.
  A randomized algorithm $\cA$ solves the problem $\cT$ with failure probability $\beta$ and sample complexity $n$ if, for all $(D, G_D) \in \cT$, the algorithm $\cA$, given $n$ i.i.d. samples from $D$, outputs a value in $G_D$ with probability at least $1 - \beta$.
\end{definition}

For example, in the problem of mean estimation up to additive error $\eps$, the output set is $G_D = [\mu_D - \eps, \mu_D + \eps]$ where $\mu_D$ is the mean of $D$. In a binary hypothesis testing problem distinguishing between distributions with mean above $1$ versus below $0$, the output space is $\{0,1\}$ and $G_D$ depends on which case $D$ falls into. Note that $\cY$ may differ from $\cX$.

We assume throughout the paper that $\cT$ contains a pair $(D, G_D)$ for every distribution $D$ over $X$. This assumption is without loss of generality, as any missing pair can be added by defining $G_D = \cY$. From an algorithmic standpoint, this can be achieved by terminating $\cA_i$ when a predefined time limit is exceeded and returning an arbitrary output. Under this assumption, we will often write $\cT(D)$ to refer to the set $G_D$.
We further note that this does not affect replicability; the standard definition of replicability considered in the literature requires \Cref{eq:def_replic} to hold for \emph{all} distributions over $\cX$.

In this work, we focus on algorithms $\cA$ for which the output space $\cY$ is finite. This assumption is needed because the existing transformation between replicability and perfect generalization, which we use in our paper, requires finite $\cY$ (see \Cref{lm:pg_to_replic}). 
In most cases, this assumption is not restrictive; when $\cY$ is continuous one can generally discretize the space and round the output to the nearest point in the discretization scheme. 
For instance, for mean estimation up to additive error $\epsilon$, we can first run a replicable algorithm with additive error $\epsilon/2$, then round the output to the nearest integer multiple of $(\eps/2)$. Note that the rounding procedure is deterministic and as such does not affect the replicability guarantee.
An alternative approach is to assume that the input domain $\cX$ is countable and, in order to transform a perfectly generalizing algorithm to a replicable one, apply Corollary 1.4 from \cite{bun2023stability} instead of Theorem 3.17.

\paragraph{Replicability and Perfect Generalization.}
Our main focus in this paper is designing \emph{replicable} algorithms for statistical problems, which we formally define below.

\begin{definition}[Replicability \cite{impagliazzo2022reproducibility}]
    \label{def:replic}
    A randomized algorithm $\cA : \cX^n \to \cY$ is said to be $\rho$-replicable if for every distribution $D$ over $\cX$,
    \begin{align}
        \Pru{S_1, S_2, r}{\cA(S_1; r) = \cA(S_2; r)}\ge 1 - \rho,
        \label{eq:def_replic}
    \end{align}
    where $S_1, S_2$ are independent samples of size $n$ drawn from $D$ and $r$ is the internal randomness of $\cA$.
\end{definition}

As part of our analysis, we will use the concept of perfect generalization as introduced by \cite{CummingsLNRW16}. 
\begin{definition}[Perfect Generalization (PG) {\cite{CummingsLNRW16}}]
    \label{def:pg}
    An algorithm $\cA : \cX^n \to \cY$ is said to be $(\gamma, \eps, \delta)$-\emph{perfectly generalizing} (abbreviated as \emph{PG}) if, for every distribution $D$ over $\cX$, there exists a distribution $\Sim_D$ over $\cY$ such that, with probability at least $1 - \gamma$ over the sample $S \sim D^n$, the output distribution of $\cA(S)$ satisfies, for every set of outcomes $O \subseteq \cY$,
    \begin{align*}
        e^{-\eps} \rbr{\Pr{\Sim_D \in O} - \delta}
        \le \Pr{\cA(S) \in O} \le e^\eps \Pr{\Sim_D \in O} + \delta.
    \end{align*}
\end{definition}

Intuitively, perfect generalization requires that if we independently sample two datasets, the output distribution of the algorithm is similar on these two datasets. In contrast, replicability requires that their output is almost always equal when using the same random bits. 
The two concepts are closely related and, as formally shown by \cite{bun2023stability}, replicable algorithms can be transformed into perfectly generalizable algorithms and vice versa.

\paragraph{Composition of statistical problems.}
Let $\cT_1, \dots, \cT_k$ be $k$ statistical problems defined over the same input space $\cX$, with possibly different output spaces $\cY_1, \dots, \cY_k$. For each $i \in [k]$, let $\cA_i$ be a replicable algorithm for $\cT_i$. Given access to these algorithms, we are interested in designing a new algorithm that solves all of these problems simultaneously and remains replicable.
Formally, we define the \emph{composed problem}  $\cTcomp$ over the input space $\cX$ and output space $\cYcomp = \cY_1 \times \dots \times \cY_k$ as
\[
\cTcomp(D) = \cT_1(D) \times \dots \times \cT_k(D) \subseteq \cYcomp.
\]
Our goal is to construct a replicable algorithm $\cAcomp$ that solves $\cTcomp$ while minimizing sample complexity.

\paragraph{Comparison with composition in privacy.}
In differential privacy, the composition is generally defined over algorithms $\cA_1, \dots, \cA_k$
directly, rather than the statistical problems. Specifically, the main question is what the privacy parameters
of the algorithm $\cA$ defined as $\cA(S) = (\cA_1(S), \dots, \cA_k(S))$ are.
For replicability, we focus on the composition of statistical problems because, perhaps surprisingly, it allows
us to obtain a better sample complexity.
Indeed, if we have $k$ algorithms
$\cA_1, \dots, \cA_k$ where $\cA_i$ is $\rho$-replicable,
their composed algorithm $\cA$ is $(k\rho)$-replicable and this is essentially tight  (see \Cref{app:naive_is_tight}).
It follows that boosting the replicability parameters of each $\cA_i$ to $\rho/k$
would incur a quadratic dependence on $k$.
In contrast, \Cref{thm:main} shows that 
one can obtain a nearly linear dependence which is essentially tight.

\section{Replicable Composition}
\label{sec:main_thing}

In this section, we prove \Cref{thm:main}.
Our proof leverages the connection between replicability and perfect generalization (PG) as formalized by \cite{bun2023stability}. We begin by summarizing existing tools from the literature that we will use in our analysis.
We start with a transformation from replicability to PG.
\begin{lemma}\label{lm:replic_to_pg}(Theorem 3.19 in \cite{bun2023stability})
  Let $\delta, \beta > 0$ be sufficiently small, and let $\eps \in (0, 1]$.
  Any $(10^{-4})$-replicable algorithm $\cA$ with sample complexity $n$ and failure probability $\beta$
  can be converted into a $(\delta, \eps, \delta)$-perfectly generalizing algorithm $\cB$ with failure probability
  \begin{align*}
    O(\delta^2) + \sqrt{\beta} \log(8/\delta^2)
  \end{align*}
  and sample complexity
  \begin{align*}
    \frac{n \log(1/\eps)}{\eps^2} \cdot \polylog(1/\delta).
  \end{align*}
\end{lemma}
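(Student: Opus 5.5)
This is Theorem 3.19 of \cite{bun2023stability}, so formally we could simply invoke it. To make the dependence on parameters transparent, we sketch how the reduction goes; the plan has three steps.

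\emph{Step 1: boost replicability.} Starting from the $10^{-4}$-replicable $\cA$, we first drive the replicability parameter down to $\rho' \approx \eps/\polylog(1/\delta)$ using the replicability-boosting procedure of \cite{impagliazzo2022reproducibility}. This runs $\cA$ on $O(\log(1/\rho'))$ fresh batches for each of several shared random strings. It then replicably selects a heavy output, for example via a replicable heavy-hitters step with a randomized threshold. The cost is $n\log(1/\rho')/\rho'^2$ samples, which accounts for the $n\log(1/\eps)/\eps^2\cdot\polylog(1/\delta)$ term. For correctness, the selected output is (with high probability) one that $\cA$ produces with constant probability. We will need a union bound over the candidate outputs, and this is where the $\sqrt{\beta}\log(8/\delta^2)$ failure term enters: any output appearing with probability at least $\sqrt{\beta}$ under $\cA$ can be wrong only with conditional probability at most $\sqrt{\beta}$, and we pay a logarithmic number of such events.

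\emph{Step 2: randomize the shared seed to obtain PG.} Define $\cB(S)$ to draw a fresh random string $r$ and output $\cA'(S;r)$, where $\cA'$ is the boosted algorithm from Step 1. Take the simulator $\Sim_D$ to be the law of $\cA'(S';r)$ for an independent $S'\sim D^n$. Replicability gives that $\cB(S)$ and $\Sim_D$ are close in total variation on average over $S$. However, PG with a tiny additive $\delta$ requires a multiplicative bound in which only a $\delta$-mass of outcomes is exceptional. To get this, we structure the boosted algorithm so that, for each fixed $r$, the probability over $S$ of the canonical output is either at least $1-\eps$ or at most $\eps$, except on a set of $r$ of mass $\delta$. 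The randomized threshold in the heavy-hitters step is exactly what produces this dichotomy.

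\emph{Step 3: concentration over seeds.} With the dichotomy in hand, for each outcome $y$ we have $\Pr{\cB(S)=y}=\Pru{r}{\cA'(S;r)=y}$. This is an average of indicators that agree with their $S'$-expectations up to factor $e^{\pm\eps}$ outside an exceptional set of $r$. Apply a Chernoff/Markov argument over $S$ to show that, except with probability $\delta$ over $S$, the exceptional mass is at most $\delta$. This yields the $(\delta,\eps,\delta)$-PG guarantee.

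\emph{Main obstacle.} I expect Step 2 to be the main difficulty: converting an average TV bound into a multiplicative $e^{\eps}$ guarantee with only polylogarithmic dependence on $1/\delta$. The natural thing to try is to set the number of repetitions to $\log(1/\delta)$ and the threshold granularity to $\eps$, so that every non-boundary seed is $\eps$-stable. For the full details we defer to \cite{bun2023stability}, whose statement we use as given.
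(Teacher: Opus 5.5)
The main gap is your claim that the lemma is a direct invocation of Theorem~3.19 of \cite{bun2023stability}, despite the attribution in its header. As the paper uses it, that theorem takes a $(0.01,0.01)$-replicable algorithm with failure probability $\beta^2$. It produces a $(2\delta,\eps,2\delta)$-\emph{sample} perfectly generalizing algorithm, with failure probability $O(\delta)+\beta\log(1/\delta)$. Sample PG means the outputs on two independent samples are close; it does not give closeness to a fixed simulator $\Sim_D$.

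The paper's proof adds three steps your proposal lacks:
\begin{itemize}
\item A $10^{-4}$-replicable algorithm is $(0.01,0.01)$-replicable \cite{impagliazzo2022reproducibility}.
\item Lemma~3.6 of \cite{bun2023stability} converts $(\gamma,\eps,\delta)$-sample PG into $(\sqrt{\gamma},\eps,\delta+\sqrt{\gamma})$-PG.
\item To absorb that square-root loss and the $\beta^2$ hypothesis, Theorem~3.19 is applied with $\delta'=\delta^2/8$ and $\beta'=\sqrt{\beta}$.
\end{itemize}
This yields $(\delta^2/4,\eps,\delta^2/4)$-sample PG, hence $(\delta/2,\eps,\delta^2/4+\delta/2)$-PG, which is $(\delta,\eps,\delta)$-PG. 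The failure bound becomes $O(\delta')+\beta'\log(1/\delta')=O(\delta^2)+\sqrt{\beta}\log(8/\delta^2)$. That is where the $\sqrt{\beta}$ and the $\delta^2$ inside the logarithm come from. Your heuristic union bound over heavy candidate outputs does not account for either.

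Your self-contained sketch does not close the gap, because Step~3 fails. Under your dichotomy, a good seed still disagrees with its canonical output with probability up to $\eps$ over $S$. So the exceptional mass $\Pru{r}{\cA'(S;r)\neq y_r}$ has expectation of order $\eps$ (or of order the boosted parameter $\rho'$), not $\delta$. Markov cannot push it below $\delta\ll\eps$, and Chernoff is unavailable because every seed sees the same $S$.

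Concretely, with a randomized-threshold (random-offset rounding) step, this mass is proportional to how far an empirical statistic of $S$ lies from its mean, so it fluctuates by constant factors across typical $S$. The argument therefore only gives additive closeness of order $\rho'$. Making that at most $\delta$ forces $\rho'$ to be polynomially small in $\delta$, costing $\poly(1/\delta)$ samples rather than $\polylog(1/\delta)$.

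The multiplicative $e^{\eps}$ guarantee in \cite{bun2023stability} comes instead from the exponential mechanism. It selects among the outputs of many runs of $\cA$ with different seeds and sample batches. Its selection probabilities change by at most an $e^{\pm\eps}$ factor when the empirical counts fluctuate under resampling. This naturally yields a two-sample guarantee, which is exactly why the detour through sample PG and Lemma~3.6 is needed.
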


This lemma follows by combining two results from \cite{bun2023stability}; see \Cref{app:replic_to_pg} for details.
Note that the output set of the corresponding perfectly generalizing algorithm $\cB$ is the same as $\cA$. This holds because their approach (Algorithm 3 in their paper) designs $\cB$ by running the exponential mechanism on the outputs of multiple runs of $\cA$ with different random bits and sample sets. Since all such values are in the output space of $\cA$, the output of $\cB$ lies in the same space as well.

We next state a lemma that bounds the generalization parameters of
the composition of $k$ PG algorithms.
The result actually holds in the stronger \emph{adaptive} model;
while we do not need this, we state it here as it may be of independent interest.
We refer to \Cref{sec:compose} for the definition of the adaptive composition model.
\begin{theorem}\label{lm:pg_compose}
  There exist absolute constants $c_1, c_2, c_3, c_4$ such that the following holds.
  Let $\cA_1, \dots, \cA_k$ be algorithms that are $(\delta_i, \eps_i, \delta_i)$-PG.
    Let $\delta' \in (0, 1/2)$.
  Suppose $\eps_i, \frac{\delta_i}{\eps_i^2} \le c_1$, and define
  \begin{align}
      \eps^* = c_2 \rbr{ \sqrt{\log(1/\delta')\rbr{\sum_{i} \eps_i^2}} + \sum_i \eps_i^2 }.
      \label{eq:def_eps_star}
  \end{align}
  If $\eps^* \le c_3$, then the adaptive composition of $\cA_1, \dots, \cA_k$ is $(\delta^*, \eps^*, \delta^*)$-PG, where
  \begin{align}
      \delta^* = c_4\sqrt{\frac{1}{\eps^{*}}\rbr{ k\delta' + \sum_{i}\frac{\delta_i}{\eps_i} } }
      .
      \label{eq:def_delta_star}
  \end{align}
\end{theorem}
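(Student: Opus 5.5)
We follow the advanced-composition template from differential privacy, adapted to perfect generalization as in \cite{bassily2016typical}. Fix a distribution $D$. For each round $i$ and each history $y_{<i}=(y_1,\dots,y_{i-1})$, the $i$-th algorithm is a PG mechanism, so it has a simulator $\Sim^{(i)}_{D,y_{<i}}$. We define the composed simulator $\Sim^*_D$ by sampling $y_1\sim\Sim^{(1)}_D$, then $y_2\sim\Sim^{(2)}_{D,y_1}$, and so on. The goal is to show that, with probability $1-\delta^*$ over $S\sim D^n$, the output of the composed algorithm on $S$ and $\Sim^*_D$ are $(\eps^*,\delta^*)$-close. The central quantity is the privacy-loss-style random variable
\[
L(y)=\sum_i \ell_i(y_i\mid y_{<i}),\qquad \ell_i=\log\frac{\Pr{\cA_i(S;y_{<i})=y_i}}{\Pr{\Sim^{(i)}_{D,y_{<i}}=y_i}}.
\]
It suffices to show $|L(y)|\le\eps^*$ except on a set of $y$ of mass $O(\delta^*)$ under both the algorithm's output distribution and $\Sim^*_D$.

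\textbf{Step 1: one-round lemma.} We first prove a single-round statement. Suppose $(P,Q)$ satisfy the $(\eps_i,\delta_i)$ inequalities of \Cref{def:pg} in both directions. Then there are distributions $P',Q'$ with $\mathrm{TV}(P,P'),\mathrm{TV}(Q,Q')=O(\delta_i/\eps_i)$ such that $P'$ and $Q'$ satisfy pure $\eps_i$-indistinguishability, i.e.\ $|\ell|\le \eps_i$ pointwise. The standard DP argument (cf.\ Dwork--Rothblum--Vadhan) then gives $\EXP_{P'}[\ell]\le O(\eps_i^2)$. This is where the $\delta_i/\eps_i$ terms come from, and we use the hypothesis $\delta_i/\eps_i^2\le c_1$ to keep the modified distributions valid.

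\textbf{Step 2: concentration.} We track $L$ as a martingale under the simulator path. Conditioned on $y_{<i}$, each increment is bounded by $\eps_i$ and has mean $O(\eps_i^2)$ after discarding the $\delta_i/\eps_i$-mass bad set. Azuma--Hoeffding then gives $|L|\le c_2(\sqrt{\log(1/\delta')\sum_i\eps_i^2}+\sum_i\eps_i^2)$ except with probability $O(k\delta')$, plus the accumulated bad mass $\sum_i\delta_i/\eps_i$. Two kinds of bad event have to be charged here. The first is that the one-round PG guarantee itself fails for the sample $S$ given the history, which has probability $\delta_i$ over $S$. We convert this to a statement "with high probability over $S$" by Markov's inequality. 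That conversion is what produces the square root in $\delta^*$: we split the total expected bad mass $\eta=\frac{1}{\eps^*}(k\delta'+\sum_i\delta_i/\eps_i)$ (normalized appropriately) into $\sqrt{\eta}$ failure probability over $S$ and $\sqrt{\eta}$ additive error. Once $|L|\le\eps^*$ holds off a small set, the PG inequalities with $\delta^*$ follow by the usual argument: bound $\Pr{\cA(S)\in O}$ by $e^{\eps^*}\Pr{\Sim^*_D\in O}$ plus the bad mass, and do the same in the reverse direction. The condition $\eps^*\le c_3$ lets us linearize $e^{\pm\eps^*}$.

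\textbf{Main obstacle: adaptivity.} Conditioning on $y_{<i}$, which was produced from $S$, makes $S$ no longer i.i.d. from $D$ given the history. The round-$i$ PG guarantee, which is "w.p.\ $1-\delta_i$ over fresh $S$", therefore cannot be applied directly under the algorithm's own path. The plan is to handle this in two stages:
\begin{enumerate}
\item Perform the concentration argument along the \emph{simulator} path. Since $y_{<i}$ drawn from $\Sim^*$ is independent of $S$, the event "round $i$'s guarantee holds for $(S,y_{<i})$" has probability $\ge1-\delta_i$ over $S$ for each fixed history. Averaging over simulator histories and applying Markov then controls the bad mass.
\item Transfer the conclusion to the algorithm's path by induction on $i$. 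The inductive hypothesis is that the first $i-1$ outputs of the algorithm are $(\eps^*_{i-1},\delta^*_{i-1})$-close to the simulator's, so any low-probability bad set of histories under $\Sim^*$ also has low probability under the algorithm.
\end{enumerate}
Keeping the constants uniform in $i$ through this induction, without letting the $e^{\eps}$ factors compound, is the delicate part of the argument.
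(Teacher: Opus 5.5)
\textbf{Gap: the adaptive transfer step.} Your framework is the right one: a simulator transcript built from its own prefix (so it is independent of $S$), the loss $L$, truncated increments with $O(\eps_i^2)$ conditional mean, Azuma, and one Markov step at the end. A single Markov step applied to the final bad mass would even give a bound without the $1/\eps^*$ factor, which is only stronger since $\eps^*\le c_3$.

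The gap is the transfer to the algorithm's path, which is the heart of the proof and which you leave unresolved. Write $Z^{<i}$ and $\wtilde Z^{<i}$ for the real and simulator transcripts of the first $i-1$ rounds. With an inductive hypothesis of the form ``the first $i-1$ outputs are $(\eps^*_{i-1},\delta^*_{i-1})$-close to the simulator's,'' the transfer only gives $\Pr{(S,Z^{<i})\notin\cC_i}\le e^{\eps^*_{i-1}}\delta_i+\delta^*_{i-1}$. But $\delta^*_{i-1}$ is itself the accumulated failure mass of rounds before $i$. Feeding it into the round-$i$ failure mass and then into $\delta^*_i$ double-counts it, so the bad mass roughly doubles every round. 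If the hypothesis is in the PG (conditional-on-$S$) form, each round also pays a Markov square root, giving nested roots. Neither reaches $\delta^*=O\bigl(\sqrt{(k\delta'+\sum_i\delta_i/\eps_i)/\eps^*}\bigr)$. The $e^{\eps}$ factors you flag are harmless, since all are at most $e^{c_3}$; it is the additive terms that compound.

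\textbf{Missing idea: a pointwise bound and a first-failure decomposition.} Carry a pointwise likelihood-ratio bound along the real path instead of approximate closeness. For fixed $S$, the chain rule gives exactly $\Pr{Z^{<i}=y\mid S}=e^{L_{i-1}(S,y)}\Pr{\wtilde Z^{<i}=y}$, where $L_{i-1}=\sum_{j<i}\ell_j$. So on the event $L_{i-1}\le\eps^*_{i-1}$, and using $S\perp\wtilde Z^{<i}$,
\[
\Pr{(S,Z^{<i})\notin\cC_i,\ L_{i-1}(S,Z^{<i})\le\eps^*_{i-1}}\le e^{\eps^*_{i-1}}\delta_i ,
\]
with no additive term. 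Combine this with a first-failure (stopping-time) decomposition. Stop at the first round $j$ at which one of three things happens: the PG guarantee fails for the real history, $|\ell_j|>2\eps_j$, or the Azuma bound $L_j\le\eps^*_j$ fails although all earlier checks passed. Each round is then charged once, and the total bad mass is $\sum_j\bigl(e^{\eps^*_{j-1}}\delta_j+O(\delta_j/\eps_j)+\delta'\bigr)$, which is linear in $k$. Only after this do you convert to a conditional-on-$S$ statement.

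\textbf{Two smaller points.} First, the upper tail $L\le\eps^*$ needs its own Azuma argument under the algorithm's measure, bounding the conditional mean of the truncated $\ell_j$ under $P$ for histories in $\cC_j$. Your simulator-path martingale handles only the lower tail, and a lower-tail bound under the simulator gives no control of $\{L>\eps^*\}$ under the algorithm. Second, your Step 1 bounds the mean log-ratio of the modified pair $(P',Q')$, while your martingale uses the original $\ell_i$. You need either the direct truncated-mean bound (the conditional mean of $\ell_i$ on $\{|\ell_i|\le 2\eps_i\}$ is $O(\eps_i^2+\delta_i)$), or to run everything on the modified processes with a hybrid TV argument.
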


We refer to \Cref{sec:compose} for a proof sketch and full proof.

Finally, we use a transformation from perfect generalization to replicability. This transformation applies to all algorithms that are one-way perfectly generalizing, a weaker notion than standard perfect generalization.

\begin{theorem}[Theorem 3.17 in \cite{bun2023stability}]
\label{lm:pg_to_replic}
    Fix $n \in \mathbb{N}$ and $\gamma, \eps, \delta \in (0,1)$. 
    Let $\cA : \cX^n \to \cY$ be a $\gamma, \eps, \delta$ one-way perfectly generalizing algorithm with finite output space.
    Then there exists a $\rho$-replicable algorithm with $\rho = 4(\gamma + 2\eps + \delta)$ such that, for any dataset $S$, its output distribution when run on $S$ with oracle access to $\cA$ matches that of $\cA(S)$.
\end{theorem}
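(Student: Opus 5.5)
The plan is to make the algorithm output an exact sample from the distribution $P_S$ of $\cA(S)$, drawn via \emph{correlated sampling} with shared randomness. Two runs on independent datasets then disagree with probability controlled by the expected total-variation distance between $P_{S_1}$ and $P_{S_2}$. One-way perfect generalization bounds that distance through the simulator $\Sim_D$. I take ``one-way'' $(\gamma,\eps,\delta)$-PG to mean the following: with probability $1-\gamma$ over $S\sim D^n$, $\Pr{\cA(S)\in O}\le e^{\eps}\Pr{\Sim_D\in O}+\delta$ for all $O\subseteq\cY$.

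\textbf{Construction.} Since $\cY$ is finite, oracle access to $\cA$ lets us obtain the probability vector $P_S=(\Pr{\cA(S)=y})_{y\in\cY}$. The new algorithm $\cB(S;r)$ uses its random string $r$ to generate an infinite i.i.d.\ sequence $(y_t,u_t)_{t\ge1}$, with $y_t$ uniform on $\cY$ and $u_t$ uniform on $[0,1]$. It outputs $y_{t^*}$, where $t^*$ is the first index with $u_t\le P_S(y_t)$. This is standard rejection sampling, so the output of $\cB(S)$ is distributed exactly as $P_S$, which gives the ``matches $\cA(S)$'' clause. The standard correlated-sampling analysis also applies: for two distributions $P,Q$ run with the same $r$, the outputs differ with probability at most $2\,\mathrm{TV}(P,Q)/(1+\mathrm{TV}(P,Q))\le 2\,\mathrm{TV}(P,Q)$. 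To see this, look at the first index accepted by either run; the two runs disagree only if exactly one of them accepts there. Given that at least one accepts, this happens with probability $\sum_y|P(y)-Q(y)|/\sum_y\max(P(y),Q(y))$.

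\textbf{Bounding the TV distance.} Fix $D$ and a dataset $S$ in the good event of one-way PG. For every $O$,
\begin{align*}
P_S(O)-\Pr{\Sim_D\in O}\le (e^{\eps}-1)\Pr{\Sim_D\in O}+\delta\le 2\eps+\delta,
\end{align*}
using $e^{\eps}-1\le 2\eps$ for $\eps\in(0,1)$. Since $\mathrm{TV}(P,Q)=\sup_O(P(O)-Q(O))$, this one-sided bound already yields $\mathrm{TV}(P_S,\Sim_D)\le 2\eps+\delta$; only the one-way direction is needed. For independent $S_1,S_2\sim D^n$, both lie in the good event with probability at least $1-2\gamma$. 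On that event the triangle inequality gives $\mathrm{TV}(P_{S_1},P_{S_2})\le 2(2\eps+\delta)$, and off it the distance is at most $1$. Hence $\mathbb{E}[\mathrm{TV}(P_{S_1},P_{S_2})]\le 2\gamma+4\eps+2\delta$.

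\textbf{Conclusion and main obstacle.} Averaging the correlated-sampling bound over $S_1,S_2$ gives
\begin{align*}
\Pr{\cB(S_1;r)\neq\cB(S_2;r)}\le 2\,\mathbb{E}\sbr{\mathrm{TV}(P_{S_1},P_{S_2})}\le 4(\gamma+2\eps+\delta),
\end{align*}
which is the claimed $\rho$. The only delicate step is the correlated-sampling disagreement bound. It needs a careful coupling argument on the first index at which either run accepts, and this is where finiteness of $\cY$ matters: it makes the uniform proposal distribution well defined and guarantees that rejection sampling terminates almost surely. Everything else is bookkeeping of the good events.
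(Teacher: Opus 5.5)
Your proof is correct. The paper does not reprove this theorem; it imports it from Bun et al. Your argument is the correlated-sampling proof behind that result, and it is the same reasoning the paper itself uses in \Cref{cor:pointwise_label_inv}: apply correlated sampling to the exact distribution $P_S$ with a shared seed, use the one-way PG event to get $\mathrm{TV}(P_S,\Sim_D)\le 2\eps+\delta$, and average $2\,\mathrm{TV}(P_{S_1},P_{S_2})$ to obtain $4(\gamma+2\eps+\delta)$. As in that corollary, computing $P_S$ exactly is an existential rather than efficient step, which is all the statement requires.
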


We next describe the approach used in the proof.
Let $(\eps_i, \delta_i)$ be parameters to be specified later.
We first use \Cref{lm:replic_to_pg} to transform each $\cA_i$
to a PG algorithm $\cB_i$ with parameters $(\delta_i, \eps_i, \delta_i)$.
Let $\cBcomp = (\cB_1, \dots, \cB_k)$ denote the composition of these algorithms.
\Cref{lm:pg_compose} implies that $\cB$ is PG with the parameters specified in the lemma.
We then transform this into a replicable algorithm again using \Cref{lm:pg_to_replic}.
As mentioned earlier, the output set of each $\cB_i$ is the same as $\cA_i$ and as such it is finite. It follows that the output set of $\cBcomp$ is also finite and as such we can invoke \Cref{lm:pg_to_replic}.
A formal pseudocode is provided in \Cref{alg:main}.

The value of $\eps_i, \delta_i$ is provided in \Cref{eq:def_eps_i_delta_i}. Intuitively, the values are set such that the sample complexity of solving $\cA_i$ with the desired parameters, as implied by \Cref{lm:replic_to_pg}, is the same for all $i$. 
In particular, this means that for smaller $n_i$ we choose a smaller $\eps_i$. 

\begin{algorithm}[H]
    \caption{Construction of $\cAcomp$}
    \label{alg:main}
    \KwIn{$\{\cA_i\}_{i=1}^k$, where $\cA_i : \cX^{n_i} \to \cY_i$ is $10^{-4}$-replicable with failure probability $\beta_0$; parameters $\rho, \beta$}
    \KwOut{$\cAcomp : \cX^n \to \cY_1 \times \dots \times \cY_k$, where $n$ is defined as in \Cref{eq:bound_n_main_thm}}
    
    Set 
    $\eps_i, \delta_i$ as in \Cref{eq:def_eps_i_delta_i}
    \tcp*{see proof of \Cref{thm:main}}
    
    \For{$i \gets 1$ \KwTo $k$}{
      $\cB_i \leftarrow \textsc{ReplicabilityToPG}(\cA_i, \eps_i, \delta_i)$ \tcp*{$\cB_i : \cX^n \to \cY_i$ (Lemma~\ref{lm:replic_to_pg})}
    }
    
    $\cBcomp \leftarrow (\cB_1, \dots, \cB_k)$ \tcp*{$\cBcomp : \cX^n \to \cY_1 \times \dots \times \cY_k$}
    
    $\cAcomp \leftarrow \textsc{PGToReplic}(\cBcomp)$ \tcp*{Lemma~\ref{lm:pg_to_replic}}
\end{algorithm}

\begin{proof}[Proof of \Cref{thm:main}]
    Throughout the proof, we assume $\beta_0$ is sufficiently small to satisfy the preconditions of \Cref{lm:replic_to_pg}; otherwise
    the lemma holds trivially by picking large enough constants under $O(.)$ since one can always
    obtain a $1$-replicable algorithm with failure probability $1$ by outputting a fixed value such as $0$. We similarly assume that $\beta_0 \le 1/4$.
    
    For $i \in [k]$, set $\eps_i, \delta_i$ as
    \begin{align}
        \eps_i = 
        c\rho\sqrt{\frac{1}{\log(\frac{k}{\rho \beta_0})}
        \rbr{\frac{n_i}{(\sum_{j=1}^k n_j)} + \frac{1}{k}}
        }
        \quad\text{ and }\quad
        \delta_i = \beta_0\eps_i^{10}
        ,
        \label{eq:def_eps_i_delta_i}
    \end{align}
    where $c < 1$ is a parameter to be chosen later;
    we will set $c$ to be a small absolute constant but for now we explicitly track the dependence on $c$ in $O(.)$ notation.
    Set $\delta' = \frac{1}{k}\rbr{\beta_0\sum_{i=1}^k \eps_i^{10}}$.
    Set $\eps^*, \delta^*$ as in \Cref{lm:pg_compose} and
    let $c_1, c_2, c_3, c_4$ be the constants from the same lemma.

    We start by bounding $\log(1/\delta')$.
    Observe that
    for any $i \in [k]$,
    \begin{align}
        \eps_i \ge c\rho\sqrt{\frac{1}{k \log(k/\rho \beta_0)}}
        .
        \label{eq:lower_bound_eps_i_1_over_k}
    \end{align}
    By definition of $\delta'$,
    it follows that
    \begin{align*}
        k \delta' \ge \beta_0\eps_i^{10} 
        \ge 
        \beta_0
        \rbr{\frac{c^2\rho^2}{k \log(k/\rho\beta_0)}}^5,
    \end{align*}
    which implies
    \begin{align}
        \notag
        \log(1/\delta') &\le 
        O\rbr{
            \log(1/\beta_0) + 
            \log(1/c) + \log(1/\rho) + \log(k) + \log \log(k/\rho \beta_0)
        }
        \\&\le
        O\rbr{
            \log\rbr{1/c} + \log\rbr{\frac{k}{\rho\beta_0}}
        }
        \label{eq:bound_delta_prime}
    \end{align}

    We claim that if $c$ is small enough, the preconditions of \Cref{lm:pg_compose} hold.
    \begin{claim}\label{cl:c_small_pg_compose_precond_hold}
        If $c$ is smaller than some absolute constant,
        then $\eps_i, \delta_i/\eps_i^2 \le c_1$, $\eps^* \le \min\cbr{\rho/40, c_3}$
        and
        $\delta^* \le \rho^4/40$, and $\delta' \le \frac{1}{2k}$.
    \end{claim}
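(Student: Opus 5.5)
The claim is a bookkeeping statement: each quantity is bounded by an explicit expression in $c$, $\rho$, $k$, $\beta_0$, and taking $c$ below a suitable absolute constant makes every bound hold. Throughout I write $L = \log(k/(\rho\beta_0))$; since $\rho \le 1$ and $\beta_0 \le 1/4$, we have $L \ge \log 4$. I will handle the conditions in the order $\eps_i$, $\delta_i/\eps_i^2$, $\delta'$, $\eps^*$, $\delta^*$, since each later bound uses the earlier ones.

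\textbf{Individual parameters and $\delta'$.} Because $n_i/\sum_j n_j + 1/k \le 2$, we get $\eps_i \le c\rho\sqrt{2/L} = O(c)$, which is at most $c_1$ for small $c$. Next, $\delta_i/\eps_i^2 = \beta_0 \eps_i^8 \le \eps_i^8 \le c_1$. For $\delta'$, the bound $\beta_0\eps_i^{10} \le \eps_i^{10} \le 1/2$ gives $\delta' = \frac1k\sum_i \beta_0\eps_i^{10} \le \frac{1}{2k}$.

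\textbf{Bounding $\eps^*$.} The central computation is
\[
\sum_i \eps_i^2 = \frac{c^2\rho^2}{L}\sum_i\Bigl(\frac{n_i}{\sum_j n_j}+\frac1k\Bigr) = \frac{2c^2\rho^2}{L}.
\]
Substituting \Cref{eq:bound_delta_prime}, $\log(1/\delta') \le O(\log(1/c) + L)$, into \Cref{eq:def_eps_star} gives
\[
\eps^* \le c_2\Bigl(\sqrt{O(\log(1/c)+L)\cdot 2c^2\rho^2/L} + 2c^2\rho^2/L\Bigr) = O\bigl(c\rho\sqrt{1+\log(1/c)} + c^2\rho\bigr),
\]
where I used $L = \Omega(1)$ and $\rho \le 1$. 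The factor $c\sqrt{\log(1/c)}$ tends to $0$ as $c \to 0$, so for small $c$ this is at most $\min\{\rho/40, c_3\}$.

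\textbf{Bounding $\delta^*$ (the main obstacle).} The difficulty is that $1/\eps^*$ appears inside the square root in \Cref{eq:def_delta_star}, so I need a lower bound on $\eps^*$. Since $\log(1/\delta') \ge \log 2$, we have $\eps^* \ge c_2\sqrt{\log 2\cdot 2c^2\rho^2/L} = \Omega(c\rho/\sqrt{L})$.

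For the numerator, $\delta_i/\eps_i = \beta_0\eps_i^9$ and $k\delta' = \beta_0\sum_i\eps_i^{10}$. Each $\eps_i \le 1$, so the numerator is at most $2\beta_0\sum_i \eps_i^9$. Using $\eps_i^9 \le \eps_i^2 \cdot \max_j\eps_j^7$, this is at most
\[
2\beta_0 \cdot \frac{2c^2\rho^2}{L}\cdot O\Bigl(\bigl(c\rho/\sqrt{L}\bigr)^7\Bigr).
\]
Dividing by the lower bound on $\eps^*$ gives
\[
\frac{1}{\eps^*}\Bigl(k\delta' + \sum_i\frac{\delta_i}{\eps_i}\Bigr) \le O\Bigl(\beta_0\frac{c^8\rho^8}{L^4}\Bigr) \le O(c^8\rho^8).
\]
Hence $\delta^* \le c_4\cdot O(c^4\rho^4)$, which is at most $\rho^4/40$ for small $c$. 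The tenth power in $\delta_i = \beta_0\eps_i^{10}$ is what makes this step close: it supplies enough powers of $\rho$ to offset the $1/\eps^*$ factor and still leave $\rho^4$ after taking the square root.

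Finally, I take $c$ below the minimum of the finitely many thresholds above. Each threshold depends only on $c_1, \dots, c_4$ and the hidden constant in \Cref{eq:bound_delta_prime}, so the resulting bound on $c$ is an absolute constant, as the claim requires.
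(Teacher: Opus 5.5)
Your approach is the paper's. You bound $\eps_i$ directly and plug $\sum_i \eps_i^2 = 2c^2\rho^2/L$, together with the upper bound on $\log(1/\delta')$, into $\eps^*$. For $\delta^*$ you absorb the $1/\eps^*$ factor via $\eps^* = \Omega(\sqrt{\sum_i \eps_i^2})$. The paper instead cancels $1/\eps^*$ against $1/\eps_i$ termwise and then uses $\sum_i \eps_i^8 \le (\sum_i \eps_i^2)^4$; your $\eps_i^9 \le \eps_i^2 \max_j \eps_j^7$ is equivalent bookkeeping. The $\eps_i$, $\delta_i/\eps_i^2$, $\eps^*$ and $\delta^*$ parts are fine.

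The step for $\delta'$ does not give what you claim. From $\beta_0\eps_i^{10} \le 1/2$ for every $i$ you only get $\delta' = \frac{1}{k}\sum_{i=1}^k \beta_0\eps_i^{10} \le \frac{1}{k}\cdot\frac{k}{2} = \frac{1}{2}$. A per-term bound controls the average, whereas $\delta' \le \frac{1}{2k}$ needs the \emph{sum} $\beta_0\sum_i \eps_i^{10}$ to be at most $1/2$. The repair uses quantities you already have. One option is $\sum_i \eps_i^{10} \le (\sum_i \eps_i^2)^5 = (2c^2\rho^2/L)^5$, which is below $1/2$ for small $c$. Alternatively, as the paper does (``the same proof implies''), read it off your $\delta^*$ numerator bound: $k\delta' \le 2\beta_0 \sum_i \eps_i^9 = O(c^9\rho^9)$. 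Nothing downstream breaks, since you only use $\delta'$ later through $\log(1/\delta') \ge \log 2$ in the lower bound on $\eps^*$, which needs just $\delta' \le 1/2$. As written, though, the stated conclusion $\delta' \le 1/(2k)$ is not established.
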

    \begin{proof}
        Since $\rho \le 1$, and $\log(k / \rho \beta_0) \ge \log(2) =1$,
        we have
        $\eps_i \le c\sqrt{2}$
        which implies
        $\eps_i \le c_1$ holds for $c\le c_1/\sqrt{2}$.
        The bound $\delta_i/\eps_i^2\le c_1$ holds because
        $\delta_i/\eps_i^2 = \beta_0\eps_i^8$ which is
        at most $\eps_i$ assuming $c \le 1/\sqrt{2}$ since $\eps_i\le c\sqrt{2}$ and $\beta_0 \le 1$.

        We next bound $\eps^*$.    
        We first note that
        \begin{align}
            \notag
            \sum_{i} \eps_i^2
            &= 
            \sum_{i=1}^k \frac{c^2 \rho^2}{\log(\frac{k}{\rho \beta_0})}
            \rbr{ \frac{n_i}{(\sum_{j=1}^{k} n_j)} + 1/k}
            &\EqComment{\Cref{eq:def_eps_i_delta_i}}
            \\&=
            \frac{2c^2 \rho^2}{\log(\frac{k}{\rho \beta_0})}
            .
            \label{eq:sum_eps_i_squared}
        \end{align}
        Plugging this together with \Cref{eq:bound_delta_prime} in the definition of $\eps^*$ we have
        \begin{align*}
            \eps^* 
            &= 
            O\rbr{ \sqrt{\log(1/\delta')\rbr{\sum_{i} \eps_i^2}} + \sum_i \eps_i^2 }.
            &\EqComment{\Cref{eq:def_eps_star}}
            \\&\le 
            O\rbr{\sqrt{c^2 \rho^2\frac{\log(1/c) + \log(k/\rho \beta_0)}{\log(k/\rho \beta_0)}} + \frac{c^2 \rho^2}{\log(k/\rho \beta_0)}}
            &\EqComment{\Cref{eq:bound_delta_prime} and \Cref{eq:sum_eps_i_squared}}
            \\&\le 
            O\rbr{
                c\rho\rbr{1 + \sqrt{\frac{\log(1/c)}{\log(k/\rho \beta_0)} }}
            }
            &\EqComment{Since $c, \rho \le 1$}
            \\&\le
            O\rbr{
                c\rho(1 + \sqrt{\log(1/c)})
            }
            .
        \end{align*}
        For some constant $c' > 0$, this is at most
        \begin{align*}
            \frac{\rho}{40} c'c (1 + \sqrt{\log(1/c)}) 
            .
        \end{align*}
        For small enough (but still constant) $c$, we have
        $c(1 + \sqrt{\log(1/c)}) \le \frac{\min\{1, 40c_3/\rho\}}{c'}$, which means
        $\eps^* \le \min\cbr{\rho/40, c_3}$ as claimed.

        We next bound $\delta^*$.
        First observe that
        \begin{align}
            \eps^* \ge
            \Omega\rbr{
                \sqrt{
                    \sum_{i=1}^k \eps_i^2
                }
            }
            \ge
            \Omega
            \rbr{
                \max_{i} \eps_i
            },
            \label{eq:lower_bound_eps_star}
        \end{align}
        which implies
        \begin{align*}
            \frac{1}{\eps^*}
            \rbr{
                k\delta' + \sum_i \frac{\delta_i}{\eps_i}
            }
            &\le 
            \sum_i 
            \rbr{
                \frac{1}{\eps^*}
                (\beta_0 \eps_i^{10} + \frac{\delta_i}{\eps_i} )
            }
            &\EqComment{Definition of $\delta'$}
            \\&\le
            O\rbr{
                \sum_i \rbr{
                    \frac{1}{\eps_i} (\beta_0 \eps_i^{10} + \frac{\delta_i}{\eps_i} )
                }
            }
            &\EqComment{\Cref{eq:lower_bound_eps_star}}
            \\&\le 
            O\rbr{
                \sum_{i} \beta_0 \eps_i^8
            }
            &\EqComment{Definition of $\delta_i$ and $\eps_i \le c_1$}
            \\&\le
            O\rbr{
                (\sum_{i} \eps_i^2)^4
            }
            &\EqComment{Since $\beta_0 \le 1$}
            \\&\le 
            O\rbr{
                c^8 \rho^8
            }
            &\EqComment{\Cref{eq:sum_eps_i_squared}}
            .
        \end{align*}
        It follows that
        $\delta^* \le O(c^4 \rho^4)$ which is at most $\rho^4/40$ for small enough $c$.
        The same proof implies that $\delta'$ is at most $1/(2k)$.
    \end{proof}
    Fix $c$ to be small enough as required by \Cref{cl:c_small_pg_compose_precond_hold}.
    We proceed to prove the replicability of the algorithm and bound its sample complexity and failure probability.

    \paragraph{Replicability.}
    By the above claim, if $c$ is a small enough constant, the preconditions of \Cref{lm:pg_compose} hold and
    therefore $\cBcomp$ is $(\delta^*, \eps^*, \delta^*)$-PG.
    By \Cref{lm:pg_to_replic}, this implies that
    $\cAcomp$ is replicable with parameter
    \begin{align*}
        4(\delta^* + 2\eps^* + \delta^*) &\le
        4(\rho^4/40 + \rho/20 + \rho^4/40)
        &\EqComment{\Cref{cl:c_small_pg_compose_precond_hold}}
        \\&\le
        \rho
        &\EqComment{Since $\rho \le 1$}
    \end{align*}

    \paragraph{Failure probability.}
    By \Cref{lm:replic_to_pg}, the failure probability of each $\cB_i$ is at most  
    \begin{align*}
        \sqrt{\beta_0} \log(8/\delta_i^2) + O(\delta_i^2)
        .
    \end{align*}
    Since $\delta_i = \beta_0\eps_i^{10}$, we have
    $O(\delta_i^2) \le O(\beta_0^2)$.
    Additionally, using \Cref{eq:lower_bound_eps_i_1_over_k} we obtain
    \begin{align}
        \log(1/\eps_i)
        \le 
        O\rbr{
        \log(1/c) +\log(1/\rho) + \log(k)
        + \log\log(k/\rho \beta_0)
        }
        \le
        O\rbr{\log(k/\rho \beta_0)}
        ,
        \label{eq:bound_log_eps_i}
    \end{align}
    where for the final inequality we have used the fact that $k \ge 2$
    to absorb $\log(1/c)$ under $O(.)$.
    Since $\delta_i = \beta_0 \eps_i^{10}$,
    this implies
    \begin{align}
        \log(8/\delta_i^2) 
        &= O(\log(1/\eps_i) + \log(1/\beta_0) + \log(8))
        \le 
        O\rbr{
        \log(k/\rho\beta_0)
        }
        .
        \label{eq:bound_log_delta_i}
    \end{align}
    It follows that each 
    algorithm $\cB_i$ has failure probability $O(\sqrt{\beta_0}\log(k/\rho \beta_0))$. 
    Taking a union bound over all $i$, we conclude that the failure probability of $\cBcomp$ is at most
    $O(k\sqrt{\beta_0}\log(k/\rho \beta_0))$. Since $\cAcomp$ has the same output distribution as $\cBcomp$ for any input, the failure probability bound holds for $\cAcomp$ as well.

    \paragraph{Sample complexity.}
    Observe that $\cAcomp$ uses the same number of samples as $\cBcomp$, which in turn uses the same number of samples as the largest $\cB_i$.
    By \Cref{lm:replic_to_pg}, we can bound this sample complexity as
    \begin{math}
        \frac{n_i \log(1/\eps_i)}{\eps_i^2}\polylog(1/\delta_i).
    \end{math}
    By \Cref{eq:bound_log_eps_i} and \Cref{eq:bound_log_delta_i},
    both $\log(1/\eps_i)$ and $\log(1/\delta_i)$ are at most
    $\polylog(k/\rho\beta_0)$.
    Additionally, 
    \begin{align*}
        \eps_i \ge \Omega\rbr{
        \rho 
        \sqrt{\frac{n_i}{\log(k/\rho \beta_0)(\sum_{j} n_j)}}
        }
        ,
    \end{align*}
    which implies
    \begin{align*}
        \frac{n_i}{\eps_i^2}
        \le O
        \rbr{
        \frac{n_i}{\rho^2\frac{n_i}{\log(k/\rho \beta_0)(\sum_{j} n_j)}}
        }
        &=
        O
        \rbr{
        \rbr{\frac{\sum_{j} n_j}{\rho^2}} \log(k/\rho \beta_0)
        }
        .
    \end{align*}
    It follows that
    the sample complexity of each $\cB_i$ is
    at most
    \begin{math}
        O\rbr{
        \frac{\sum_{j} n_j}{\rho^2} \polylog(k/\rho \beta_0)
        }
    \end{math}
    as claimed.
\end{proof}

\subsection{Application: Multiple Statistical Queries}
\label{sec:app_sq}
For our first application, we study the sample complexity of estimating multiple statistical queries on the same distribution. 
Formally, we are given $k$ functions $\phi_1, \dots, \phi_k$ where
each $\phi_i$ maps an input $x \in \cX$ from the ground set to some value in the range $[0, 1]$. Defining $\mu_i = \Exu{X \sim D}{\phi_i(X)}$, the goal is to estimate all values $\mu_i$ at the same time with additive error $\alpha$. 
The problem has numerous applications in statistics; in its simplest form, it captures the problem of estimating the mean of $k$ Bernoulli random variables with $\ell_{\infty}$ error $\alpha$, also known as the \emph{$k$-coin} problem.

When $k=1$, the problem can be solved with sample complexity $\wtilde{O}(\frac{\log(1/\beta)}{\rho^2\alpha^2})$ as shown by \cite{impagliazzo2022reproducibility}, where $\beta$ denotes the failure probability.
A naive generalization to $k$ queries is to estimate each query with $\mu_i$ with replicability parameter $\rho/k$, but this leads to sample complexity quadratic in $k$ which is sub-optimal.
Previously, this problem was studied by \cite{karbasi2023replicability} who also noted the sub-optimality of the naive approach and proposed a new algorithm with (nearly) linear dependence on $k$ through connections to TV-indistinguishability.
Concretely, they first (non-replicably) estimate all values $\mu_i$ with error $\eps \rho / \sqrt{k}$. Next, inspired by the Gaussian mechanism
from DP, they add Gaussian noise to the estimate in order to ensure that the output distribution has the TV-indistinguishability property.
Finally, they use correlated sampling to transform this back to a replicable algorithm.

Subsequent work of \cite{hopkins2024replicability} also studied the $k$-coin problem. They showed that this problem is part of a broader class for which replicable estimation is equivalent to isoperimetric tiling. They then used the existence of such a tiling to obtain an algorithm with nearly linear scaling in $k$ (see Theorem 1.9 in their paper).

\Cref{thm:main} provides an alternative, and in our view much simpler, approach for the problem. 
Specifically, we set $\cT_i$ to be the problem of estimating $\phi_i$ with additive error $\alpha$. The sample complexity of this problem is bounded by the aforementioned work of \cite{impagliazzo2022reproducibility}. 
Applying \Cref{thm:main}, we immediately obtain an algorithm where the dependence on $k$ is (nearly) linear as opposed to quadratic.
This is formalized in the following theorem.

\begin{theorem}\label{thm:multi_dim_mean}
    Let $\phi_1, \dots, \phi_k$ be $k$ statistical queries where $\phi_i: \cX \to [0, 1]$.
    For an input distribution $D$, define
    $\mu_{D} \in [0, 1]^k$ as
    $\mu_{D, i} = \Exu{X \sim D}{\phi_i(X)}$.
    For any $\rho, \alpha <1/2$ and $\beta < \rho$, there exists a $\rho$-replicable algorithm for finding a vector $v$ satisfying $\norm{v - \mu_{D}}_{\infty} \le \alpha$ with failure probability $\beta$ and sample complexity 
    $\wtilde{O}(\frac{k\polylog(1/\beta)}{\rho^2\alpha^2})$.
\end{theorem}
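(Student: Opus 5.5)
We plan to derive \Cref{thm:multi_dim_mean} directly from \Cref{thm:main}, taking $\cT_i$ to be estimation of $\mu_{D,i}$ to within additive error $\alpha$. \Cref{thm:main} requires finite output spaces, so each base algorithm $\cA_i$ will be built as follows. First run the single-query replicable estimator of \cite{impagliazzo2022reproducibility} at accuracy $\alpha/2$, replicability $10^{-4}$, and failure probability $\beta_0$. This uses $n_i = O(\log(1/\beta_0)/\alpha^2)$ samples, applying $\phi_i$ to each sample. Then round the output deterministically to the nearest multiple of $\alpha/2$ in $[0,1]$. Rounding is deterministic, so replicability is preserved. The output space has size $O(1/\alpha)$, and the rounded value is within $\alpha$ of $\mu_{D,i}$ whenever the unrounded one is within $\alpha/2$. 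If the cited estimator only guarantees failure probability comparable to its replicability parameter, we instead run it at a smaller replicability parameter $\min(10^{-4},\beta_0)$; the cost of this is discussed below.

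Next we choose $\beta_0$ so that the failure bound of \Cref{thm:main} is at most $\beta$. The target is $C k\sqrt{\beta_0}\log\frac{k}{\rho\beta_0}\le\beta$. Setting $\beta_0 = \bigl(\beta/(C' k \log(k/(\rho\beta)))\bigr)^2$ achieves this for a large enough constant $C'$. To verify it, note that $\log(k/(\rho\beta_0)) = O(\log(k/(\rho\beta)))$, using $\beta<\rho$ so that $\log(1/\rho)\le\log(1/\beta)$. Consequently $\log(1/\beta_0) = O(\log(k/\beta))$, and $\polylog(k/(\rho\beta_0)) = \polylog(k/\beta)$.

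Finally, \Cref{thm:main} gives a $\rho$-replicable algorithm for $\cTcomp$ with sample complexity
\[
O\Bigl(\frac{\sum_i n_i}{\rho^2}\polylog\frac{k}{\rho\beta_0}\Bigr)=O\Bigl(\frac{k\log(1/\beta_0)}{\alpha^2\rho^2}\polylog\frac{k}{\beta}\Bigr)=\wtilde{O}\Bigl(\frac{k\polylog(1/\beta)}{\rho^2\alpha^2}\Bigr).
\]
In the last step, the $\polylog k$ factors are absorbed into $\wtilde{O}$. A vector in $\cTcomp(D)$ is exactly one with $\norm{v-\mu_D}_\infty\le\alpha$, so the conclusion follows.

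The main obstacle is the bookkeeping in the base algorithm. We must confirm that the single-query algorithm reaches constant replicability with failure probability $\beta_0\ll 10^{-4}$ using only $\polylog(1/\beta_0)/\alpha^2$ samples. If it only supports $\beta\ge\rho$, we run it with $\rho_0=\beta_0$. The per-query cost then becomes $\wtilde{O}(1/(\beta_0^2\alpha^2))$, which is polynomial rather than polylogarithmic in $1/\beta_0$ and would break the claimed bound. In that case we would use the multiplicative $\log(1/\beta)$ version of the single-query bound, $O(\log(1/\beta)/(\alpha^2\rho^2))$ at constant $\rho$, as stated for SQ estimation. If needed, we would instead route through the boosting theorem (\Cref{thm:boost_intro}) to reach failure probability $\beta_0$ at additive cost $O(\log(1/\beta_0)/\alpha^2)$.
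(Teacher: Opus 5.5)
Your proposal is correct and follows essentially the same route as the paper: the single-query estimator of \cite{impagliazzo2022reproducibility} at constant replicability with deterministic rounding to a grid of width $\alpha/2$, a squared choice of $\beta_0$ to absorb the $k\sqrt{\beta_0}\log\frac{k}{\rho\beta_0}$ failure term of \Cref{thm:main}, and the same sample-complexity bookkeeping. Your closing hedge is unnecessary, since the cited single-query bound already has a multiplicative $\log(1/\beta_0)$ dependence at constant replicability, which is exactly what the paper invokes. Also, your $O(\cdot)$ bound on $\log\frac{k}{\rho\beta_0}$ hides only an additive $\log C'$; this is harmless because $(\log C')/C'\to 0$, and the paper tracks the same term as $c\log(1/c)$.
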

\begin{proof}
    We first observe that for the special case of $k=1$,
    the problem can be solved with a sample complexity of
    $\wtilde{O}(\frac{\log(1/\beta)}{\rho^2\alpha^2})$ as
    shown by \cite{impagliazzo2022reproducibility} (see Theorem 2.3 in their paper).
    Note that the output space can be made finite; one can first estimate the value $\Ex{\phi_i}$ with error $\alpha/2$ and then round the answer to the nearest integer multiple of $\alpha / 2$. 
    This does not affect replicability as the rounding procedure is deterministic.
    The sample complexity also remains the same up to constant factors. 

    We next solve the general case by reducing to the $k=1$ case via
    \Cref{thm:main}.
    Set $\beta_0$ as
    \begin{align*}
        \beta_0 = \rbr{\rbr{\frac{c\beta}{k\log(k/\rho)\log(1/\beta)}}^2}
    \end{align*}
    for some value of $c < 1/2$ to be chosen later. 
    We observe that
    \begin{align}
        \log(1/\beta_0) \le 
        \Theta\rbr{\log\rbr{\frac{k\log(k/\rho)\log(1/\beta)}{\beta}} + \log\rbr{\frac{1}{c}}}
        \le
        \Theta\rbr{\log(k/\rho) +\log(1/\beta) + \log(1/c)}
        .
        \label{eq:1ycnw8}
    \end{align}
    For simplicity, we use $\mu_i$ instead of $\mu_{D, i}$ throughout the rest of the proof.
    Let $\cT_i$ denote the problem of estimating $\mu_i$ with error $\alpha$ (i.e., finding some $v_i$ satisfying $\abs{\mu_i - v_i} \le \alpha$).
    Let $n_0$ denote the sample complexity of this problem for a $\rho'$-replicable algorithm with
    failure probability $\beta_0$ where $\rho'=0.0001$.
    Since each $\cT_i$ corresponds to an instance of the problem with $d=1$ and $\rho'$ is constant,
    we have 
    $n_0 \le \wtilde{O}(\frac{\log(1/\beta_0)}{\alpha^2})$.
    It is clear that finding a vector $\nu$ satisfying
    $\norm{\nu - \mu_{D}}_{\infty} \le \alpha$
    corresponds to the composition of all $\cT_i$.
    \Cref{thm:main} gives an algorithm for solving this problem.
    The failure probability of the algorithm is bounded with
    \begin{align*}
        &\Theta\rbr{k\sqrt{\beta_0}\log(k/\rho\beta_0)}
        \\&=
        \Theta\rbr{k\sqrt{\beta_0}\rbr{\log(k/\rho) + \log(1/\beta_0)}}
        \\&\le
        \Theta\rbr{k\sqrt{\beta_0}\rbr{\log(k/\rho) + \log(1/\beta) + \log(1/c)}}
        &\EqComment{\Cref{eq:1ycnw8}}
        \\&\le 
        \Theta\rbr{k\sqrt{\beta_0}\log(k/\rho)\log(1/\beta)\log(1/c)}
        &\EqComment{Since $\log(k/\rho), \log(1/\beta), \log(1/c) \ge \Omega(1)$}
        \\&=
        \Theta\rbr{k\frac{c\beta}{k\log(k/\rho)\log(1/\beta)}\log(k/\rho)\log(1/\beta)\log(1/c)}
        &\EqComment{Definition of $\beta_0$}
        \\&=
        \Theta\rbr{c\beta \log(1/c)}
        &\EqComment{Simplifying}
    \end{align*}
    Letting $c$ be a small enough absolute constant, the above quantity is at most $\beta$ as required.
    It follows that
    \begin{align*}
        n_0 \le 
        \wtilde{O}\rbr{\frac{\log(1/\beta_0)}{\alpha^2}}
        \le 
        \wtilde{O}\rbr{\frac{\log(k/\rho\beta)}{\alpha^2}},
    \end{align*}
    which means
    the sample complexity of the final algorithm is at most
    \begin{align*}
        \frac{n_0k}{\rho^2} \polylog(k\rho/\beta_0)
        = \wtilde{O}\rbr{\frac{k\polylog(1/\beta)}{\rho^2\alpha^2}}
        ,
    \end{align*}
    as claimed.
\end{proof}
\subsection{Application: Parallel Best Arm}
\label{sec:app_best_arm}
The recent work of \cite{hopkins2025generative} studies
replicable reinforcement learning. As part of their analysis, they focus on solving parallel instances of the best arm problem. In the best arm problem, we are given i.i.d samples from $|A|$ reward distributions (i.e., arms), and the goal is to (replicably) output an arm with (approximately) maximum reward. 
Formally, each $a \in A$ has a corresponding distribution $r_{a}$ over $[0, 1]$. Denoting $\mu_{a} = \Ex{r_a}$, the goal is to choose some $a$ satisfying
$\mu_a \ge \max_{a' \in A} \mu_{a'} - \alpha$.
They show that this problem can be solved with $O(\frac{1}{\rho^2\eps^2}\log^3(|A|/\beta))$ samples from each arm where $\beta$ denotes the failure probability.
Their approach relies on using correlated sampling together with the exponential mechanism from differential privacy.
They then discuss the generalization of this to solving $|S|$ parallel instances of the same problem.
They specifically note that the naive composition approach leads to a scaling of $|S|^2$ and to avoid this, they use a more nuanced approach by directly sampling from the joint exponential mechanism on vectors of actions $(a_1, \dots, a_S) \in [A]^S$.

In this section, we show that the generalization can actually be done in a black-box manner using \Cref{thm:main}.
\begin{theorem}
    For any parameters $\beta \le \rho \le 1/2$ and $\alpha > 0$, there exists a $\rho$-replicable algorithm that solves $k$ parallel instances of the best-arm problem over a set $A$ with error at most $\alpha$ and failure probability at most $\beta$, using $\wtilde{O}(k\alpha^{-2}\rho^{-2}\polylog(|A|/\beta))$ samples per instance-arm pair.
\end{theorem}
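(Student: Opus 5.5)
The plan mirrors the proof of \Cref{thm:multi_dim_mean}: reduce to the single-instance best-arm problem and invoke \Cref{thm:main} as a black box.

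\textbf{Step 1: the single-instance problem.} Let $\cT_i$ be the best-arm problem for the $i$-th instance. Its input domain is taken to be the joint sample space, so that one ``sample'' is a tuple containing one reward draw per instance-arm pair. With this convention all $\cT_i$ share the input space $\cX$, as the definition of composition requires. Correctness means outputting $a_i \in A$ with $\mu_{i,a_i} \ge \max_{a'} \mu_{i,a'} - \alpha$, and $\cT_i(D)$ depends only on the marginals of instance $i$. The output space $A$ is finite, so no discretization is needed. By the result of \cite{hopkins2025generative}, run with constant replicability $\rho' = 10^{-4}$ and failure probability $\beta_0$, there is an algorithm $\cA_i$ using $n_0 = O(\alpha^{-2}\log^3(|A|/\beta_0))$ samples per arm. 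That algorithm reads only the coordinates of instance $i$, but it is still a valid algorithm on $\cX^{n_0}$.

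\textbf{Step 2: choosing $\beta_0$.} As in \Cref{thm:multi_dim_mean}, set
\[
\beta_0 = \rbr{\frac{c\beta}{k\log(k/\rho)\log(1/\beta)}}^2
\]
for a small absolute constant $c$. This gives $\log(1/\beta_0) = O(\log(k/\rho) + \log(1/\beta))$. The same chain of inequalities then shows that the failure probability bound $O(k\sqrt{\beta_0}\log(k/\rho\beta_0))$ from \Cref{thm:main} is $O(c\beta\log(1/c)) \le \beta$. Since this step is literally identical to the earlier proof, I would cite it rather than redo it.

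\textbf{Step 3: applying \Cref{thm:main}.} Apply \Cref{thm:main} with $n_i = n_0$ for all $i$. This yields a $\rho$-replicable algorithm for $\cTcomp$, which is exactly the parallel best-arm problem. Its sample complexity is $O\big(\frac{k n_0}{\rho^2}\polylog(k/\rho\beta_0)\big)$ joint samples. Because each joint sample contains one draw from every instance-arm pair, this is also the number of samples per instance-arm pair. Substituting $n_0$ and $\log(1/\beta_0)$ gives $\wtilde{O}(k\alpha^{-2}\rho^{-2}\polylog(|A|/\beta))$, where the $\polylog(k/\rho)$ factors are absorbed into $\wtilde{O}$.

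\textbf{Main obstacle.} The only delicate point is the modelling issue in Steps 1 and 3: the instances have different distributions, while \Cref{thm:main} assumes a single input distribution. The joint-sample encoding is what makes the theorem apply and also lets total sample complexity be read as per-pair sample complexity. Under that encoding, replicability over all joint distributions $D$ implies replicability for every product distribution, which is the case of interest. The remaining work is bookkeeping of logarithms, where I expect no difficulty.
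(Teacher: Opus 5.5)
Your proposal is correct and follows essentially the same route as the paper: set $\beta_0$ exactly as in the proof of \Cref{thm:multi_dim_mean}, take $n_0 = O(\alpha^{-2}\log^3(|A|/\beta_0))$ from the $10^{-4}$-replicable single-instance algorithm of \cite{hopkins2025generative}, and invoke \Cref{thm:main}. Your joint-sample encoding makes explicit how the $k$ instances fit the single-distribution composition framework and why the total count equals the per-instance-arm count, a modelling step the paper leaves implicit.
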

\begin{proof}
    The same proof as \Cref{thm:multi_dim_mean} holds.
    Formally, set $\beta_0$ as
    \begin{align*}
        \rbr{
            \frac{c\beta}{k \log(k/\rho)\log(1/\beta)}
        }^2
    .
    \end{align*}
    Let $n_0$ be the sample complexity of solving a single instance of the problem with replicability parameter $0.0001$ and failure probability $\beta_0$.
    Define $a = |A|$.
    We have
    $n_0 \le O(\frac{1}{\alpha^2}\log^3(a/\beta_0))$.
    As before we have
    $\log(1/\beta_0) \le \Theta(\log(k/\rho\beta) + \log(1/c))$.
    It follows that the failure probability is bounded with
    $\Theta(c\beta \log(1/c))$ as before.
    Setting $c$ to be small enough, this ensures that
    the failure probability is at most $\beta$.
    Therefore,
    $n_0 \le O(\frac{1}{\alpha^2} \log^3(a/\beta_0))$, which is at most $O(\frac{1}{\alpha^2}\log^3(ka/\rho\beta))$.
    It follows that the total sample complexity is bounded as
    \begin{align*}
        \frac{n_0 k}{\rho^2}
         \polylog(k\rho/\beta_0)
         \le \wtilde{O}\rbr{
            \frac{k}{\alpha^2\rho^2}
            \polylog(a/\beta)
         }
         .
    \end{align*}
\end{proof}

\subsection{Heterogeneous Composition for Perfect Generalization}
\label{sec:compose}

In this section, we analyze the \emph{adaptive} composition of $k$ perfectly generalizing algorithms with heterogeneous parameters $(\gamma_i, \eps_i, \delta_i)$. Our main result is \Cref{lm:pg_compose_het} provided below.
In \Cref{sec:proof_compose_easier}, we use this to prove \Cref{lm:pg_compose} which is easier to work with and is used in \Cref{sec:main_thing}.

Throughout, for any two random variables $U, V$, we use the shorthand
$U \approx_{\eps, \delta} V$ to mean
that for all measurable $O$:
\begin{align*}
    e^{-\eps}\rbr{
        \Pr{V \in O}
        -\delta
    }
    \le
    \Pr{U \in O}
    \le
    e^{\eps} \Pr{V \in O} + \delta
    .
\end{align*}

\paragraph{Adaptive composition model.}
We consider a sequence of $k$ adaptively chosen algorithms
$\cA_i : \cX^n \times \cY_1 \times \dots \times \cY_{i-1} \to \cY_i$, $i \in [k]$.
Given a dataset $X \sim D^n$, the composition proceeds iteratively:
at round $i$, we compute $Z_i = \cA_i(X, Z^{i-1})$ where $Z^{i-1} = (Z_1, \dots, Z_{i-1})$ is the transcript of previous outputs (with $Z^0 = \perp$).
We say that each $\cA_i$ is $(\gamma_i, \eps_i, \delta_i)$-PG in the adaptive sense if,
for every distribution $D$ over $\cX$ and every fixed prefix $z^{i-1} \in \cY_1 \times \dots \times \cY_{i-1}$,
there exists a distribution $\cW_i(z^{i-1})$ over $\cY_i$ (the \emph{oracle} for prefix $z^{i-1}$) such that,
with probability at least $1 - \gamma_i$ over $S \sim D^n$:
\begin{align*}
    \cA_i(S, z^{i-1}) \approx_{\eps_i, \delta_i} \cW_i(z^{i-1}).
\end{align*}
Define the \emph{oracle transcript} $\wtilde{Z}^k = (\wtilde{Z}_1, \dots, \wtilde{Z}_k)$
by sampling $\wtilde{Z}_i \sim \cW_i(\wtilde{Z}^{i-1})$ iteratively,
using the oracle prefix $\wtilde{Z}^{i-1}$ (not the real prefix $Z^{i-1}$).
Since $\cW_i(z^{i-1})$ depends only on $D$ and $z^{i-1}$, the oracle transcript $\wtilde{Z}^k$ is independent of $X$.
We say that the adaptive composition $\cAcomp$ is $(\gamma^*, \eps^*, \delta^*)$-PG if, with probability at least $1 - \gamma^*$ over $X \sim D^n$, the real transcript satisfies $\cAcomp(X) \approx_{\eps^*, \delta^*} \wtilde{Z}^k$, where $\wtilde{Z}^k$ serves as the oracle for the composed algorithm.

\begin{lemma}\label{lm:pg_compose_het}
    Let $\cA_1, \dots, \cA_k$ be $k$ adaptively chosen algorithms such that
    $\cA_i(\cdot, z^{i-1})$ is
    $(\gamma_i, \eps_i, \delta_i)$-PG for every prefix $z^{i-1}$.
    Assume that
    $\eps_i \in (0, 1]$,
    $\delta_i \in (0, \eps_i / 50]$ and $\gamma_i \in (0, 1)$.
    For any $\delta' \in (0, 1/2)$, the adaptive composition
    $\cAcomp = (\cA_1, \dots, \cA_k)$ is $(\gamma^*, \eps^*, \delta^*)$-PG with
    \begin{align*}
        \eps^* = 3\eps^{(k)}
        ,
        \quad
        \delta^* = \gamma^*
        = 5\sqrt{
            \frac{\delta^{(k)}}{\min\cbr{\eps^{(k)}, 1}}
        }
    \end{align*}
    where
    \begin{gather*}
        \eps^{(j)} =
        2\sqrt{2\ln(1/\delta') \rbr{\sum_{\ell \le j} \eps_\ell^2}}
        +
        \sum_{\ell \le j}
        \rbr{
        \psi_\ell(\delta_\ell)
        +
        2\eps_\ell\rbr{\frac{e^{2\eps_\ell}}{1-\hat{\delta}_\ell} - 1}
        }
        ,
        \\
        \delta^{(j)} = j\delta' + \sum_{\ell \le j} \widehat{\delta}_\ell + \sum_{\ell \le j} e^{\eps^{(\ell-1)}} \gamma_{\ell}
        ,
        \\
        \hat{\delta}_\ell =
        \frac{2\delta_\ell}{1 - e^{-\eps_\ell}},
        \qquad
        \psi_\ell(\delta_\ell) =
        \delta_\ell\rbr{2e^{\eps_\ell} + 1}
        +
        2\delta_\ell^2\rbr{\frac{2e^{2\eps_\ell}}{e^{\eps_\ell} - 1} + 1}^2
        ,
    \end{gather*}
    with $\eps^{(0)} = 0$.
\end{lemma}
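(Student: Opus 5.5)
Our plan mirrors the privacy-loss proof of advanced composition, with the neighboring dataset replaced by the oracle transcript $\wtilde{Z}^k$. For each round $i$ and prefix $z^{i-1}$, we define the one-step log-ratio $L_i(z_i; z^{i-1}, S) = \ln\frac{\Pr{\cA_i(S,z^{i-1})=z_i}}{\Pr{\cW_i(z^{i-1})=z_i}}$. When $\cA_i(S,z^{i-1})\approx_{\eps_i,\delta_i}\cW_i(z^{i-1})$ holds, the standard decomposition lemma for approximate indistinguishability lets us modify both distributions on a set of mass $\hat\delta_i = 2\delta_i/(1-e^{-\eps_i})$. After this modification the log-ratio is pointwise bounded by $2\eps_i$. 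The conditional mean under the (modified) oracle, which is a KL-type quantity, is at most $\psi_i(\delta_i) + 2\eps_i\bigl(\frac{e^{2\eps_i}}{1-\hat\delta_i}-1\bigr)$. We would derive this bound by the usual estimate $\mathrm{KL}\le \eps(e^{\eps}-1)$, corrected for the renormalisation by $1-\hat\delta_i$ and for the residual $\delta_i$ mass. This accounts for every term in the summand of $\eps^{(j)}$.

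The core is an induction on $j$ with the following hypothesis. With probability at least $1-\gamma^{(j)}$ over $X$, the real $j$-prefix $Z^j$ and the oracle prefix $\wtilde{Z}^j$ satisfy $Z^j\approx_{\eps^{(j)},\delta^{(j)}}\wtilde{Z}^j$. In fact we would prove the stronger pointwise statement that, outside a bad set of oracle-mass $\delta^{(j)}$, the cumulative log-ratio $\sum_{\ell\le j}L_\ell$ is at most $\eps^{(j)}$.

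The concentration step is carried out entirely on the oracle side, which sidesteps the non-i.i.d.\ conditioning issue, since $\wtilde{Z}$ is independent of $X$. Under the oracle, the centered log-ratios form a martingale difference sequence with increments bounded by $2\eps_\ell$. Azuma's inequality then gives deviation $2\sqrt{2\ln(1/\delta')\sum_\ell\eps_\ell^2}$ except with probability $\delta'$ per round. This yields the $j\delta'$ and $\sum\hat\delta_\ell$ contributions to $\delta^{(j)}$.

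The remaining difficulty is that the round-$\ell$ PG guarantee holds only with probability $1-\gamma_\ell$ over $S$, and separately for each prefix. We handle this by averaging over prefixes drawn from the oracle. The expected oracle-mass of prefixes for which the round-$\ell$ guarantee fails is at most $\gamma_\ell$. Transferring this to the real prefix distribution via the inductive hypothesis costs a factor $e^{\eps^{(\ell-1)}}$, which is the source of the $e^{\eps^{(\ell-1)}}\gamma_\ell$ terms.

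We expect this accounting to be the main obstacle. The failure events depend jointly on $X$ and the prefix, so the natural object is the expected (over $X$) bad mass, which should be bounded by $\delta^{(k)}$ rather than by a high-probability statement. The final step converts this expectation into the claimed form by Markov's inequality. If the expected bad mass is at most $\delta^{(k)}$, then with probability $1-\sqrt{\delta^{(k)}/\eps}$ over $X$ the bad mass is at most $\sqrt{\delta^{(k)}\eps}$. To turn the one-sided log-ratio bound into a two-sided $\approx_{3\eps^{(k)},\cdot}$ guarantee, we would use the PG lower tail together with a symmetric argument; this is what loosens $\eps^*$ to $3\eps^{(k)}$. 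Balancing the two terms in the Markov conversion, with the $\min\{\eps^{(k)},1\}$ normalisation and slack constants, produces $\gamma^*=\delta^*=5\sqrt{\delta^{(k)}/\min\{\eps^{(k)},1\}}$.
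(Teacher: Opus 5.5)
There is a genuine gap in your concentration step. Let $P$ and $Q$ denote the joint laws of $(X,Z^j)$ and $(X,\wtilde{Z}^j)$, so that $\sum_{\ell\le j}L_\ell=\ln(P/Q)$ pointwise. Knowing that $B=\{\sum_{\ell}L_\ell>\eps^{(j)}\}$ has small $Q$-mass gives neither half of $\approx_{\eps^{(j)},\delta^{(j)}}$:
\begin{itemize}
\item The bound $P(O)\le e^{\eps^{(j)}}Q(O)+\delta^{(j)}$ needs $B$ to have small $P$-mass. Since $P>e^{\eps^{(j)}}Q$ on $B$, its $Q$-mass can be tiny while its $P$-mass is close to $1$.
\item The reverse bound $Q(O)\le e^{\eps^{(j)}}P(O)+\delta^{(j)}$ needs the \emph{lower} tail $\{\sum_\ell L_\ell<-\eps^{(j)}\}$ to have small $Q$-mass.
\end{itemize}

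Only the second can come from the oracle side, via Azuma applied to $-L_\ell$ under $\wtilde{Z}$, whose conditional mean is the KL-type quantity. The oracle mean of $L_\ell$ itself is essentially $-\mathrm{KL}(Q\,\|\,P)\le 0$, up to the $\psi_\ell$ correction, which suggests the two tails are swapped in your plan.

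The first is an upper-tail bound under the \emph{real} process, and you cannot avoid it. Your own transfer step, which produces the $e^{\eps^{(\ell-1)}}\gamma_\ell$ terms, bounds the real mass of bad prefixes by $e^{\eps^{(\ell-1)}}$ times their oracle mass. But it does so only on the event that the real prefix satisfies $\sum_{m<\ell}L_m\le\eps^{(\ell-1)}$. So it requires exactly that this event have high probability under the real process. An argument carried out entirely on the oracle side therefore cannot close the induction.

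The missing idea is that Azuma can be applied directly to the real transcript once $X$ is included in the filtration. Conditioned on $(X,Z^{i-1})=(x,z^{i-1})\in\cHhat_i$:
\begin{itemize}
\item the next output is $Z_i\sim\cA_i(x,z^{i-1})$;
\item the truncated increment is bounded by $2\eps_i$;
\item the truncation event $\cE_i$ fails with probability at most $\hat\delta_i$;
\item $\Ex{f_i\mid\cE_i}\le\psi_i(\delta_i)+2\eps_i\bigl(e^{2\eps_i}/(1-\hat\delta_i)-1\bigr)$ by \Cref{lm:bound_expect}, part 1. This is where the KL-type term belongs.
\end{itemize}
None of this needs $X$ to be i.i.d.\ given the prefix. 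The non-i.i.d.\ issue enters only through the membership $(X,Z^{i-1})\in\cC_i$, which your transfer argument handles on the real-side event $\cG_{i-1}$.

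A stopping-time union bound over PG failures, $\cE$-failures and Azuma failures then gives $\Pr{(X,Z^k)\notin\cG_k}\le\delta^{(k)}$. The analogous oracle argument with $-f_i$ gives $\Pr{(X,\wtilde{Z}^k)\notin\cG'_k}\le\delta^{(k)}$. Together these yield joint two-sided closeness.

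Finally, the factor $3$ in $\eps^*$ is not the price of symmetrizing a one-sided bound. Both it and $5\sqrt{\delta^{(k)}/\min\{\eps^{(k)},1\}}$ come from converting joint closeness of $(X,Z^k)$ and $(X,\wtilde{Z}^k)$ into closeness conditional on $X$ via \Cref{lm:kasivi}. Your Markov step is a reasonable substitute for that last conversion, with one condition. You must keep the bad events joint in $(X,Z)$ throughout the induction, rather than passing to a per-$X$ high-probability hypothesis at every round, which would compound square-root losses.
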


We define the key quantities used in the proof.
For any $i$ and
$z^i \in \cY_1 \times \dots \times \cY_i$,
define
\begin{align*}
    f_i(x, z^i) = \ln\rbr{\frac{\Pr{\cA_i(x, z^{i-1}) = z_i}}{\Pr{\cW_i(z^{i-1}) = z_i}}}
    .
\end{align*}
Define
the sets
$\cC_i$ as
\begin{align*}
    \cC_i =
    \cbr{
        (x, z^{i-1}):
        \cA_i(x, z^{i-1}) \approx_{\eps_i, \delta_i}
        \cW_i(z^{i-1})
    }.
\end{align*}
Next, we define
$\cE_i$ as
\begin{align*}
    \cE_i =
    \cbr{
        (x, z^i):
        \abs{f_i(x, z^i)} \le 2\eps_i
    }
    .
\end{align*}
Define the
sets $\cH_i, \cHhat_i$ as
\begin{gather*}
    \cH_i =
    \cbr{
        (x, z^i):
            \forall
            \ell \le i:
            (x, z^{\ell - 1}) \in \cC_\ell
            \text{ and }
            (x, z^{\ell})
            \in \cE_\ell
    }
    ,\\
    \cHhat_i =
    \cbr{
        (x, z^{i-1}):
            \forall
            \ell \le i:
            (x, z^{\ell - 1}) \in \cC_\ell
            \text{ and }
            (x, z^{\ell -1})
            \in \cE_{\ell-1}
    }
    ,
\end{gather*}
where $\cE_0 = \cX^n \times \{\perp\}$.
Note that $(x, z^i) \in \cH_i$ if and only if
$(x, z^{i-1}) \in \cHhat_i$ and $(x, z^i) \in \cE_i$.
Define
$F_j(x, z^j) =
\sum_{\ell \le j}
f_\ell(x, z^\ell)
$
and
\begin{gather*}
    \cG_j =
    \cbr{
    (x, z^j):
    F_j(x, z^j)
    \le \eps^{(j)}
    },
    \qquad
    \cG'_j =
    \cbr{
    (x, z^j):
    F_j(x, z^j)
    \ge -\eps^{(j)}
    }
    .
\end{gather*}

\paragraph{High level sketch.}
We next give the high level sketch of the proof.
The main goal will be to show that
\begin{align}
    (X, Z^{k}) \approx_{\eps^{(k)}, \delta^{(k)}} (X, \widetilde{Z}^{k})
    \label{eq:vec_sim_z_and_z_tilde}
\end{align}
This would imply the lemma using a standard conditioning argument (see \Cref{lm:joint_implies_cond}).
To prove the above, we need to show that
$(X, Z^{k}) \in \cG_k$ and
$(X, \widetilde{Z}^{k}) \in \cG'_k$ each hold with probability at least $1 - \delta^{(k)}$;
this implies \Cref{eq:vec_sim_z_and_z_tilde} by standard techniques (\Cref{lm:G_G_prime_implies_joint}).

The main idea is to first show that, with high probability\footnote{For the purposes of the sketch, we use \enquote{with high probability} informally to denote
probabilities that are close to $1$ for small enough values of $\eps_i, \delta_i$; the exact bounds are provided in the relevant lemma statements.}, we have
$(X, Z^{i-1}), (X, \widetilde{Z}^{i-1}) \in \cHhat_i$ and then use
Azuma's inequality (similar to the proof of advanced composition in DP) to show that
with high probability $(X, Z^{i}) \in \cG_i$ and $(X, \widetilde{Z}^{i}) \in \cG'_i$ (\Cref{lm:h_hat_to_h} and \Cref{lm:h_to_g}).
Showing $(X, \widetilde{Z}^{i-1}) \in \cHhat_i$ is not particularly difficult:
since $X$ and $\widetilde{Z}^{i-1}$ are independent, one can show that
for each $i$ we have $(X, \widetilde{Z}^{i-1}) \in \cC_i$ with probability $1-\gamma_i$ over the randomness of $X$ (\Cref{lm:tilde_z_C_i}).
Additionally, if $(X, \widetilde{Z}^{i-1}) \in \cC_i$ and $(X, \widetilde{Z}^{i-1}) \in \cHhat_i$
then with high probability we have $(X, \widetilde{Z}^i) \in \cE_i$ (\Cref{lm:h_hat_to_h}).
Taking a union bound, allows us to obtain that
$(X, \widetilde{Z}^{i-1}) \in \cHhat_i$ for any $i$ with high probability (\Cref{lm:tilde_z_not_in_chhat}).
This in turn implies that
$(X, \widetilde{Z}^{i}) \in \cG'_i$ with high probability as well (\Cref{lm:tilde_z_in_G_i}).

To show that $(X, Z^{i-1}) \in \cHhat_i$ however, we cannot apply the same reasoning.
The key issue is that since $Z^{i-1}$ itself depends on $X$, we cannot directly use
the Perfect Generalization of $\cA_i$ to argue that $(X, Z^{i-1}) \in \cC_i$.
Instead, we argue this inductively by connecting $(X, Z^{i-1})$ to $(X, \widetilde{Z}^{i-1})$:
we use the fact that (with high probability) $(X, Z^{i-1}) \in \cG_{i-1}$
together with the fact that (with high probability) $(X, \widetilde{Z}^{i-1}) \in \cC_i$ to infer that
$(X, Z^{i-1}) \in \cC_i$ as well.
The former condition ensures that probabilities under $(X, Z^{i-1})$ are upper bounded by probabilities under
$(X, \widetilde{Z}^{i-1})$. The condition itself holds because of induction:
since 
$(X, Z^{i-2}) \in \cHhat_{i-1}$ with high probability we can conclude that $(X, Z^{i-1}) \in \cG_{i-1}$ with high probability (\Cref{lm:h_hat_to_h} and \Cref{lm:h_to_g}).

Formally, the informal induction steps are as follows.
\begin{enumerate}
    \item If $(X, Z^{i-1}) \in \cHhat_i$ w.h.p then 
        $(X, Z^{i}) \in \cH_i$ w.h.p (\Cref{lm:h_hat_to_h}).
    \item If $(X, Z^{i}) \in \cH_i$ w.h.p then $(X, Z^{i}) \in \cG_i$ w.h.p (\Cref{lm:h_to_g}).
    \item If $(X, Z^{i}) \in \cG_i$ then $(X, Z^{i}) \in \cC_{i+1}$ w.h.p (\Cref{lm:z_C_i}).
        Here, we are using the fact that $(X, \widetilde{Z}^{i}) \in \cC_{i+1}$ w.h.p (\Cref{lm:tilde_z_C_i}).
    \item If $(X, Z^{i}) \in \cH_i$ w.h.p and $(X, Z^{i}) \in \cC_{i+1}$ w.h.p then
        $(X, Z^{i}) \in \cHhat_{i+1}$ w.h.p. This follows from the definition of $\cHhat_{i+1}$ and
        $\cH_i$.
\end{enumerate}
The above induction allows us to show that
$(X, Z^{i-1}) \in \cHhat_i$ w.h.p (\Cref{lm:z_not_in_chhat}) and, by extension,
$(X, Z^{i}) \in \cG_i$ (\Cref{lm:z_in_G_i}). Since we already have
$(X, \widetilde{Z}^i) \in \cG'_i$, this implies the claim as desired (\Cref{lm:joint_implies_cond}).

We proceed with a formal proof.
We first prove \Cref{lm:G_G_prime_implies_joint} and \Cref{lm:joint_implies_cond}
which show
that \Cref{lm:pg_compose_het} reduces to proving $(X, Z^k) \in \cG_k$ and $(X, \widetilde{Z}^k) \in \cG'_k$ hold with high probability.

\begin{lemma}\label{lm:G_G_prime_implies_joint}
    Assume that
    \begin{align}
        \Pr{(X, Z^k) \notin \cG_k},\;
        \Pr{(X, \widetilde{Z}^k) \notin \cG'_k}
        \le
        \delta^{(k)}
        .
        \label{eq:to_show_lem_gen_pg}
    \end{align}
    Then
    \Cref{eq:vec_sim_z_and_z_tilde} holds.
\end{lemma}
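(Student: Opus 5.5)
We need to show that for every measurable set $O \subseteq \cX^n \times \cY_1\times\dots\times\cY_k$,
\begin{align*}
    \Pr{(X, Z^k) \in O} \le e^{\eps^{(k)}}\Pr{(X,\wtilde{Z}^k)\in O} + \delta^{(k)}
    \quad\text{and}\quad
    \Pr{(X, \wtilde{Z}^k) \in O} \le e^{\eps^{(k)}}\Pr{(X,Z^k)\in O} + \delta^{(k)}.
\end{align*}
The second inequality is the lower bound in the definition of $\approx_{\eps^{(k)},\delta^{(k)}}$, rearranged. The approach is the standard privacy-loss argument: rewrite the joint likelihood ratio of $(X,Z^k)$ against $(X,\wtilde{Z}^k)$ as $e^{F_k}$, then split $O$ into its intersection with the good set $\cG_k$ (respectively $\cG'_k$) and its complement.

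\textbf{Step 1: identify the likelihood ratio.} Since $\wtilde{Z}^k$ is independent of $X$, the joint law of $(X,\wtilde{Z}^k)$ at $(x,z^k)$ is
\[
\Pr{X=x}\prod_{i\le k}\Pr{\cW_i(z^{i-1})=z_i}.
\]
The real transcript is generated by $Z_i=\cA_i(X,Z^{i-1})$ with fresh internal randomness in each round, so its joint law is
\[
\Pr{X=x}\prod_{i\le k}\Pr{\cA_i(x,z^{i-1})=z_i}.
\]
The ratio of the two is exactly $e^{F_k(x,z^k)}$. The outputs are finite, so this holds pointwise. For a general $X$ I would work with densities relative to the law of $X$, i.e.\ condition on $X=x$. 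Points where a $\cW$-probability vanishes but an $\cA$-probability does not have $F_k=+\infty$, so they fall outside $\cG_k$ and are absorbed into the error term.

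\textbf{Step 2: first inequality.} Split $O$ into $O\cap\cG_k$ and $O\setminus\cG_k$. On $O\cap\cG_k$ the density ratio is at most $e^{\eps^{(k)}}$, so summing and integrating over $x$ gives
\[
\Pr{(X,Z^k)\in O\cap\cG_k}\le e^{\eps^{(k)}}\Pr{(X,\wtilde{Z}^k)\in O}.
\]
The remaining part satisfies $\Pr{(X,Z^k)\in O\setminus \cG_k}\le \Pr{(X,Z^k)\notin\cG_k}\le\delta^{(k)}$ by \eqref{eq:to_show_lem_gen_pg}.

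\textbf{Step 3: second inequality.} This is symmetric. On $\cG'_k$ we have $e^{-F_k}\le e^{\eps^{(k)}}$, so
\[
\Pr{(X,\wtilde{Z}^k)\in O\cap\cG'_k}\le e^{\eps^{(k)}}\Pr{(X,Z^k)\in O},
\]
and the complement contributes at most $\Pr{(X,\wtilde{Z}^k)\notin\cG'_k}\le\delta^{(k)}$. Rearranging gives the lower bound. Note that $\cG'_k$ can contain points where the $\cA$-probability is zero only if $F_k=-\infty$, which is excluded by $F_k\ge-\eps^{(k)}$, so the density-ratio bound is valid there.

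\textbf{Main obstacle.} The only delicate point is Step 1: making the product-form likelihood ratio rigorous in the adaptive setting and handling zero-probability or infinite-$F_k$ points. I would handle it by conditioning on $X=x$, using that each round's randomness is independent given the prefix, and adopting the conventions $\ln(0/0)$ irrelevant (measure zero) and $\ln(a/0)=+\infty$. Steps 2 and 3 are then routine.
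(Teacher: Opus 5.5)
Your proposal is correct and follows essentially the same route as the paper. You identify the joint likelihood ratio of $(X,Z^k)$ against $(X,\wtilde{Z}^k)$ as $e^{F_k}$ via the chain rule and the independence of $X$ and $\wtilde{Z}^k$, then split an arbitrary event on $\cG_k$ (resp.\ $\cG'_k$) and charge the complement to $\delta^{(k)}$; your extra bookkeeping for zero-probability points and non-discrete $X$ is a harmless refinement of details the paper glosses over.
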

\begin{proof}
    Observe that
    for any $(x, z^k) \in \cG_k$ we have
    \begin{align*}
        \ln
        \rbr{\frac{\Pr{(X, Z^k) = (x, z^k)}}{\Pr{(X, \wtilde{Z}^k) = (x, z^k)}}}
        &=
        \ln
        \rbr{\frac{\Pr{X\!=\!x} \prod_{\ell=1}^k \Pr{\cA_\ell(x, z^{\ell-1}) = z_\ell}}
             {\Pr{X\!=\!x} \prod_{\ell=1}^k \Pr{\cW_\ell(z^{\ell-1}) = z_\ell}}}
        \\&=
        \sum_{\ell \le k}
        f_\ell(x, z^\ell)
        = F_k(x, z^k)
        \le
        \eps^{(k)}
        ,
    \end{align*}
    where the first equality uses the chain rule for both processes
    (for $Z^k$: $\Pr{Z_\ell = z_\ell \mid X\!=\!x, Z^{\ell-1}\!=\!z^{\ell-1}} = \Pr{\cA_\ell(x, z^{\ell-1}) = z_\ell}$;
    for $\wtilde{Z}^k$: $\Pr{\wtilde{Z}_\ell = z_\ell \mid \wtilde{Z}^{\ell-1}\!=\!z^{\ell-1}} = \Pr{\cW_\ell(z^{\ell-1}) = z_\ell}$, using independence of $X$ and $\wtilde{Z}^k$).
    It follows that for any $(x, z^k) \in \cG_k$:
    \begin{align}
        \Pr{(X, Z^k) = (x, z^k)}
        \le e^{\eps^{(k)}}
        \Pr{(X, \wtilde{Z}^k) = (x, z^k)}
        .
        \label{eq:bound_pr_g_k}
    \end{align}
    Therefore, for any event $E$,
    \begin{align*}
        \Pr{(X, Z^k) \in E}
        &\le
        \Pr{(X, Z^k) \notin \cG_k}
        +
        \sum_{(x, z^k) \in E \cap \cG_k}
        \Pr{(X, Z^k) = (x, z^k)}
        \\&\le
        \delta^{(k)}
        +
        e^{\eps^{(k)}}
        \Pr{(X, \wtilde{Z}^k) \in E}
        .
    \end{align*}
    Similarly, for any $(x, z^k) \in \cG'_k$ we have $F_k(x, z^k) \ge -\eps^{(k)}$, giving
    $\Pr{(X, \wtilde{Z}^k) = (x, z^k)} \le e^{\eps^{(k)}} \Pr{(X, Z^k) = (x, z^k)}$,
    and the same argument yields
    $\Pr{(X, \wtilde{Z}^k) \in E}
    \le \delta^{(k)} + e^{\eps^{(k)}} \Pr{(X, Z^k) \in E}$.
    Therefore,
    $(X, Z^k) \approx_{\eps^{(k)}, \delta^{(k)}} (X, \wtilde{Z}^k)$.
\end{proof}
\begin{lemma}\label{lm:joint_implies_cond}
    Assume that \Cref{eq:vec_sim_z_and_z_tilde} holds.
    Then the adaptive composition $\cAcomp$ is $(\gamma^*, \eps^*, \delta^*)$-PG with
    $\eps^* = 3\eps^{(k)}$ and $\delta^* = \gamma^* = 5\sqrt{\delta^{(k)}/\min\cbr{\eps^{(k)}, 1}}$.
\end{lemma}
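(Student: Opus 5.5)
We have to pass from the joint closeness $(X, Z^k) \approx_{\eps, \delta} (X, \wtilde{Z}^k)$, with $\eps = \eps^{(k)}$ and $\delta = \delta^{(k)}$, to a conditional statement. The conditional statement says that with probability $1-\gamma^*$ over $X$, the distributions satisfy $Z^k \mid X \approx_{3\eps, \delta^*} \wtilde{Z}^k$. Here $Z^k \mid X = x$ is exactly the output distribution of $\cAcomp(x)$, and $\wtilde{Z}^k$ is independent of $X$, so $\wtilde{Z}^k \mid X = x$ is just $\wtilde{Z}^k$. The plan is the standard "joint to conditional" argument from the DP and max-information literature, done via a density-ratio argument.

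Write $P_x(z) = \Pr{\cAcomp(x) = z}$ and $Q(z) = \Pr{\wtilde{Z}^k = z}$. Since $X$ has the same marginal in both pairs, the ratio of the joint probabilities at $(x,z)$ equals $P_x(z)/Q(z)$. First I would define the upper-bad set
\[
B_+ = \cbr{(x,z) : P_x(z) > e^{2\eps} Q(z)}.
\]
Applying the joint inequality to $E = B_+$ gives
\[
\Pr{(X,Z^k) \in B_+} \le e^{\eps}\Pr{(X,\wtilde{Z}^k) \in B_+} + \delta.
\]
On $B_+$ we also have $\Pr{(X,\wtilde{Z}^k)\in B_+} < e^{-2\eps}\Pr{(X,Z^k)\in B_+}$. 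Combining the two,
\[
\Pr{(X,Z^k)\in B_+}\,(1-e^{-\eps}) \le \delta,
\]
so $\Pr{(X,Z^k) \in B_+} \le \delta/(1-e^{-\eps}) = O(\delta/\min\{\eps,1\})$.

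Symmetrically, let $B_- = \cbr{(x,z) : Q(z) > e^{2\eps} P_x(z)}$. The reverse inequality gives $\Pr{(X,\wtilde{Z}^k) \in B_-} = O(\delta/\min\{\eps,1\})$. Call this common bound $\eta$.

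Next I would apply Markov's inequality over $X$. The per-$x$ bad masses are $a(x) = \Pr{Z^k \in B_+(x) \mid X=x}$ and $b(x) = \Pr{\wtilde{Z}^k \in B_-(x)}$, where $B_\pm(x)$ is the slice of $B_\pm$ at $x$. Their expectations are at most $\eta$. Hence, with probability at least $1 - 2\sqrt{\eta}$ over $X$, both $a(X)$ and $b(X)$ are at most $\sqrt{\eta}$.

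For such a good $x$ and any $O$, splitting $O$ into $O\cap B_+(x)$ and $O\setminus B_+(x)$ gives
\[
P_x(O) \le e^{2\eps} Q(O) + \sqrt{\eta}.
\]
Splitting by $B_-(x)$ instead gives
\[
Q(O) \le e^{2\eps} P_x(O) + \sqrt{\eta},
\]
which rearranges to $P_x(O) \ge e^{-2\eps}(Q(O) - \sqrt{\eta})$. This is the PG condition with parameter $2\eps \le 3\eps$ and additive term $\sqrt{\eta}$. Both $\sqrt{\eta}$ and the failure probability $2\sqrt{\eta}$ are at most $5\sqrt{\delta/\min\{\eps,1\}}$ once constants are checked.

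The main obstacle is the constant bookkeeping in the regimes $\eps \ge 1$ and $\eps < 1$, where $1 - e^{-\eps} \ge \min\{\eps,1\}/2$ gives $\eta \le 2\delta/\min\{\eps,1\}$. Then $2\sqrt{\eta} \le 2\sqrt{2}\sqrt{\delta/\min\{\eps,1\}} < 5\sqrt{\cdot}$, which fits. For general, non-discrete output spaces, the sums would be replaced by Radon–Nikodym derivatives, but since the outputs here are finite (or countable), pointwise ratios suffice. The slack of $3\eps$ rather than $2\eps$ leaves room if a threshold slightly above $2\eps$ is needed to handle the case $Q(z)=0$.
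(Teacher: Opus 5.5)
Your proof is correct, but it takes a different route from the paper's. The paper argues in one step by citing the Kasiviswanathan--Smith conditioning lemma (\Cref{lm:kasivi}) with $(U,V)=(Z^k,X)$ and $(U',V')=(\wtilde{Z}^k,X)$. It chooses $\hat{\delta}=2\sqrt{\delta^{(k)}\min\{\eps^{(k)},1\}/5}$, uses $1-e^{-x}\ge\frac{2}{5}\min\{x,1\}$, and adds a case split (if $5\delta^{(k)}>\min\{\eps^{(k)},1\}$ the claim is vacuous) to absorb the lemma's second failure term $2\delta^{(k)}/(1-e^{-\eps^{(k)}})$ under the square root; the $3\eps^{(k)}$ in the statement is just what that general lemma outputs.

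You instead prove the needed special case directly. Both joint laws share the very same $X$-marginal (same $X$, and $\wtilde{Z}^k$ independent of $X$), so the joint likelihood ratio is exactly the conditional ratio $P_x(z)/Q(z)$. Bounding the mass of the $2\eps$-bad sets $B_{\pm}$ (essentially \Cref{lm:help_kasi} lifted to the joint space) and applying Markov over $X$ then finishes. This makes the argument self-contained and gives a slightly stronger conclusion, with multiplicative parameter $2\eps^{(k)}$ and no case analysis. The general lemma does not assume the conditioning variable has the same law in both pairs, and pays for that in its multiplicative parameter; the paper's route is shorter given the citation.

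Your observation can be pushed further. Applying the joint inequality to $\{(x,z): P_x(z)>e^{\eps}Q(z)\}$ shows that the excess mass $\sum_z (P_x(z)-e^{\eps}Q(z))_+$ has expectation at most $\delta$ over $X$, and symmetrically for the reverse direction. Markov then yields $P_X\approx_{\eps,\sqrt{\delta}}Q$ with probability at least $1-2\sqrt{\delta}$, removing the $1/\min\{\eps,1\}$ factor altogether.

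Two cosmetic points. The comparison between the two masses of $B_+$ should be non-strict (it is $0=0$ when $B_+$ has zero mass), which changes nothing. And no slack beyond $2\eps$ is needed to handle $Q(z)=0$, since any such $z$ with $P_x(z)>0$ already lies in $B_+(x)$.
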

To prove this lemma, we use the following result by \cite{kasiviswanathan2014semantics}.
\begin{lemma}[\cite{kasiviswanathan2014semantics}]
    \label{lm:kasivi}
    Assume that
    $(U, V) \approx_{\eps, \delta} (U', V')$.
    Let $U \mid_{V=t}$ denote
    the random variable $U$ conditioned on the event $V=t$.
    For any
    $\hat{\delta} > 0$ we have
    \begin{align*}
        \Pru{t \sim V}{U \mid_{V=t} \approx_{3\eps, \hat{\delta}}
        U'\mid_{V'=t}
        }
        \ge 1- 2\delta/\hat{\delta} - 2\delta/(1-e^{-\eps})
        .
    \end{align*}
\end{lemma}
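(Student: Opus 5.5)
The plan is to pass from the $(\eps,\delta)$-closeness of the joint distributions to a pointwise likelihood-ratio bound that holds outside a small bad set. I would then condition on $V=t$ and use Markov's inequality to show that, for most $t$, the conditional distributions see little of the bad set. Write $p(u,t)=\Pr{(U,V)=(u,t)}$ and $p'(u,t)=\Pr{(U',V')=(u,t)}$. Let $p_V$ and $p'_V$ be the marginals of $V$ and $V'$. As elsewhere in the paper I work with discrete (finite) spaces, so sums replace integrals.

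\textbf{Step 1 (bad sets).} Let $B_+=\{(u,t): p(u,t)>e^{2\eps}p'(u,t)\}$ and $B_-=\{(u,t): p'(u,t)>e^{2\eps}p(u,t)\}$. Applying the hypothesis to the event $B_+$ gives
\[
p(B_+)\le e^{\eps}p'(B_+)+\delta\le e^{-\eps}p(B_+)+\delta,
\]
so $p(B_+)\le \delta/(1-e^{-\eps})$. Symmetrically, $p'(B_-)\le \delta/(1-e^{-\eps})$, and then $p(B_-)\le e^{-2\eps}p'(B_-)$ is small as well.

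\textbf{Step 2 (marginals).} Do the same for $V$ against $V'$. The hypothesis restricted to events of the form $\cY\times T$ gives $p_V\approx_{\eps,\delta}p'_V$. Hence the set $T_{\mathrm{bad}}$ of $t$ where $p_V(t)/p'_V(t)\notin[e^{-2\eps},e^{2\eps}]$ has $p_V$-mass at most $2\delta/(1-e^{-\eps})$; this is exactly the second loss term in the statement. I would aim to get the marginal ratio with $e^{\eps}$ rather than $e^{2\eps}$ so that the total exponent comes to $3\eps$. If that fails, I would instead use $2\eps$ on the joint and $\eps$ on the marginal via a finer split of thresholds. Pinning down this threshold bookkeeping is the one delicate point.

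\textbf{Step 3 (conditioning and Markov).} Define $g(t)=\Pr{(U,V)\in B_+\mid V=t}$. Then $\EXP_{t\sim V}[g(t)]=p(B_+)$, so Markov bounds $\Pr_t[g(t)>\hat\delta]$ by $p(B_+)/\hat\delta$. For the other direction, define $g'(t)=\Pr{(U',V')\in B_-\mid V'=t}$. Its expectation under $t\sim V$ is controlled, via the marginal ratio on good $t$, by roughly $e^{2\eps}p'(B_-)$. Markov then gives a second $\delta$-over-$\hat\delta$ term, and the two Markov terms together make up the $2\delta/\hat\delta$ loss. I would check whether the $(1-e^{-\eps})^{-1}$ factor can be avoided here by applying Markov to the raw $\delta$-excess $\sum(p-e^{\eps}p')_+\le\delta$ instead of to $B_+$.

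\textbf{Step 4 (conclude for a good $t$).} Fix $t$ good for all three conditions. For $u$ with $(u,t)\notin B_+$,
\[
\Pr{U=u\mid V=t}=\frac{p(u,t)}{p_V(t)}\le e^{3\eps}\,\Pr{U'=u\mid V'=t}.
\]
Summing over $u\in O$ and adding the conditional bad mass $g(t)\le\hat\delta$ gives the upper inequality of $\approx_{3\eps,\hat\delta}$; the lower inequality follows symmetrically using $B_-$ and $g'(t)$. A union bound over the three bad events in $t$ yields the stated probability. The main obstacle is the constant bookkeeping: splitting the $3\eps$ budget between joint and marginal ratios, and arranging the thresholds so that the Markov terms come out as $2\delta/\hat\delta$ rather than carrying an extra $1/(1-e^{-\eps})$. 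I would first try the excess-mass formulation for the Markov step, since it gives exactly $\delta/\hat\delta$ per direction.
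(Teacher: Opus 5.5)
\textbf{Verdict: genuine gap, and the statement as written is false.} The paper gives no proof of this lemma; it imports it from \cite{kasiviswanathan2014semantics}. Your argument therefore has to stand on its own, and as written it does not.

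\textbf{The main line does not give $3\eps$ or $\delta/\hat\delta$.} Step~2 is fine and accounts for the $2\delta/(1-e^{-\eps})$ term. The trouble starts with the pointwise set $B_+$.
\begin{itemize}
\item Off $B_+$ you only have $p(u,t)\le e^{2\eps}p'(u,t)$, and off $T_{\mathrm{bad}}$ only $p_V(t)\ge e^{-2\eps}p'_V(t)$. Step~4 therefore gives $e^{4\eps}$, not $e^{3\eps}$.
\item A marginal ratio of $e^{\eps}$ is not available. The set where $p_V/p'_V$ leaves $[e^{-\eps},e^{\eps}]$ can have constant mass while using an arbitrarily small part of $\delta$.
\item Splitting thresholds as $a+c=3\eps$ (marginal $a$, joint $c$) does not help. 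Both $a,c>\eps$ are needed for the bad sets to be controlled at all, so $c<2\eps$. Markov over $B_+$ then yields only $\delta/((1-e^{\eps-c})\hat\delta)\ge\delta/((1-e^{-\eps})\hat\delta)$, a factor of order $1/\eps$ too large.
\end{itemize}

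\textbf{The excess formulation you mention at the end is the right one, and it makes Step~1 unnecessary.} Put $\delta_t=\sum_u(p(u,t)-e^{\eps}p'(u,t))_+$ and $\delta'_t=\sum_u(p'(u,t)-e^{\eps}p(u,t))_+$. Then $\sum_t\delta_t,\sum_t\delta'_t\le\delta$, and $p(O,t)\le e^{\eps}p'(O,t)+\delta_t$ for all $O$. Now the joint contributes $\eps$ and the marginal $2\eps$. For $t\notin T_{\mathrm{bad}}$,
\[
\Pr{U\in O\mid V=t}\le e^{3\eps}\Pr{U'\in O\mid V'=t}+\frac{\delta_t}{p_V(t)} .
\]
Since $\Exu{t\sim V}{\delta_t/p_V(t)}\le\delta$, this direction costs exactly $\delta/\hat\delta$.

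\textbf{The reverse direction cannot be fixed.} There the residual is $\delta'_t/p'_V(t)$, but $t$ is drawn from $V$. Off $T_{\mathrm{bad}}$ you therefore only get $\Pru{t\sim V}{t\notin T_{\mathrm{bad}},\ \delta'_t>\hat\delta\,p'_V(t)}\le e^{2\eps}\delta/\hat\delta$. Your ``roughly $e^{2\eps}p'(B_-)$'' is exactly this factor, and it does not become a second $\delta/\hat\delta$.

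In fact the statement with $2\delta/\hat\delta$ is false. Take $\eps=2$, $\hat\delta=0.1$, $\delta=10^{-3}$, $q=0.101$ and $M=\delta/q$.
\begin{itemize}
\item Let $V'=1$ with probability $M$; then $U'=b$ with probability $q$ and $U'=a$ otherwise. Let $V'=0$ otherwise, with $U'=a$.
\item Let $V=1$ with probability $e^{2}(1-q)M\approx0.066$ and $V=0$ otherwise, with $U=a$ always.
\end{itemize}
Then $p\le e^{\eps}p'$ pointwise. The only excess of $p'$ over $e^{\eps}p$ is $qM=\delta$ at $(b,1)$, so $(U,V)\approx_{\eps,\delta}(U',V')$. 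Yet $\Pr{U'=b\mid V'=1}=q>\hat\delta$ while $\Pr{U=b\mid V=1}=0$, so $\approx_{3\eps,\hat\delta}$ fails at $t=1$. This happens with probability $\Pr{V=1}\approx0.066$, which exceeds $2\delta/\hat\delta+2\delta/(1-e^{-2})\approx0.022$.

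Adding a forward-direction failure of the same kind at another point breaks the bound for every $\eps>0$ once $\hat\delta\ll\eps^2$.

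\textbf{What your method does prove.} With the excess formulation you get failure probability at most $(1+e^{2\eps})\delta/\hat\delta+2\delta/(1-e^{-\eps})$. In the paper's only use of the lemma, \Cref{lm:joint_implies_cond}, one has $V=V'=X$, so $T_{\mathrm{bad}}=\emptyset$. The same excess argument then gives failure probability at most $2\delta/\hat\delta$, even with $\eps$ in place of $3\eps$, so nothing downstream is affected.
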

\begin{proof}[Proof of \Cref{lm:joint_implies_cond}]
    We apply \Cref{lm:kasivi}
    with $(U, V) = (Z^k, X)$ and $(U', V') = (\wtilde{Z}^k, X)$
    and $\eps = \eps^{(k)}$, $\delta = \delta^{(k)}$.
    Since \Cref{eq:vec_sim_z_and_z_tilde} gives
    $(X, Z^k) \approx_{\eps^{(k)}, \delta^{(k)}} (X, \wtilde{Z}^k)$,
    we conclude that for any $\hat{\delta} > 0$,
    \begin{align*}
        \Pru{X}{Z^k \mid_{X} \approx_{3\eps^{(k)}, \hat{\delta}}
        \wtilde{Z}^k}
        \ge
        1 -
        \frac{2\delta^{(k)}}{\hat{\delta}}
         - \frac{2\delta^{(k)}}{1-e^{-\eps^{(k)}}}
         .
    \end{align*}
    We use the bound
    $1-e^{-x} \ge \frac{2}{5}\min(x, 1)$ for all $x > 0$.
    Setting
    $\hat{\delta} = 2\sqrt{\delta^{(k)} \cdot \min\cbr{\eps^{(k)}, 1}/5}$,
    we obtain
    \begin{align*}
        \frac{2\delta^{(k)}}{\hat{\delta}}
        = \sqrt{\frac{5\delta^{(k)}}{\min\cbr{\eps^{(k)}, 1}}}
        ,\qquad
        \frac{2\delta^{(k)}}{1-e^{-\eps^{(k)}}}
        \le
        \frac{5\delta^{(k)}}{\min\cbr{\eps^{(k)}, 1}}
        .
    \end{align*}
    We may assume $5\delta^{(k)} \le \min\cbr{\eps^{(k)}, 1}$,
    since otherwise $\delta^* > 1$ and the conclusion holds trivially.
    Under this assumption,
    the second term is at most $\sqrt{5\delta^{(k)}/\min\cbr{\eps^{(k)},1}}$,
    so the total failure probability is at most
    $2\sqrt{5\delta^{(k)}/\min\cbr{\eps^{(k)},1}} \le 5\sqrt{\delta^{(k)}/\min\cbr{\eps^{(k)},1}}$.
    Since $\hat{\delta} \le 5\sqrt{\delta^{(k)}/\min\cbr{\eps^{(k)},1}}$ as well,
    we set
    $\delta^* = \gamma^* = 5\sqrt{\delta^{(k)}/\min\cbr{\eps^{(k)},1}}$ and $\eps^* = 3\eps^{(k)}$.
\end{proof}

It remains to show 
\Cref{eq:to_show_lem_gen_pg}.
We start with the following lemma from 
\cite{kasiviswanathan2014semantics}.
\begin{lemma}\label{lm:help_kasi}
    If $X \approx_{\eps, \delta} Y$ then,
    defining
    $\cE = 
    \cbr{x: \abs{\ln\rbr{\frac{\Pr{X=x}}{\Pr{Y=x}}}} \le 2\eps},
    $
    we have
    $\Pr{X \in \cE} \ge 1-\frac{2\delta}{1-e^{-\eps}}$.
\end{lemma}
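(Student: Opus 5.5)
We will prove the statement by bounding the probability of the complement $\cE^c$ through a single cleverly chosen event. Split $\cE^c$ into two parts, $B^+ = \cbr{x : \Pr{X=x} > e^{2\eps}\Pr{Y=x}}$ and $B^- = \cbr{x : \Pr{X=x} < e^{-2\eps}\Pr{Y=x}}$; since the output spaces here are discrete (finite), these are well-defined measurable sets, and it suffices to show $\Pr{X\in B^+} + \Pr{X \in B^-} \le \frac{2\delta}{1-e^{-\eps}}$.

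For $B^+$, apply the upper inequality of $X \approx_{\eps,\delta} Y$ to the event $O=B^+$: $\Pr{X\in B^+} \le e^{\eps}\Pr{Y\in B^+} + \delta$. On the other hand, summing the pointwise definition of $B^+$ gives $\Pr{Y\in B^+} < e^{-2\eps}\Pr{X\in B^+}$. Combining, $\Pr{X\in B^+}(1 - e^{-\eps}) \le \delta$, so $\Pr{X \in B^+} \le \delta/(1-e^{-\eps})$.

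For $B^-$, use the lower inequality with $O = B^-$: $\Pr{X \in B^-} \ge e^{-\eps}(\Pr{Y\in B^-} - \delta)$, i.e.\ $\Pr{Y\in B^-} \le e^{\eps}\Pr{X\in B^-} + \delta$. Summing the definition of $B^-$ gives $\Pr{Y\in B^-} > e^{2\eps}\Pr{X\in B^-}$. Hence $e^{2\eps}\Pr{X\in B^-} \le e^{\eps}\Pr{X\in B^-} + \delta$, so $\Pr{X\in B^-} \le \delta/(e^{2\eps}-e^{\eps}) = \delta e^{-\eps}/(e^{\eps}-1)\cdot 1 \le \delta/(1-e^{-\eps})$, using $e^{2\eps}-e^{\eps} = e^{\eps}(e^{\eps}-1) \ge e^{\eps}-1 \ge 1-e^{-\eps}$. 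Adding the two bounds gives $\Pr{X\notin\cE}\le \frac{2\delta}{1-e^{-\eps}}$.

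The only subtle point is the treatment of points where $\Pr{Y=x}=0$ (log-ratio $+\infty$) or $\Pr{X=x}=0$; the former are included in $B^+$ and the argument goes through since the strict inequality $\Pr{Y\in B^+}\le e^{-2\eps}\Pr{X\in B^+}$ still holds, while the latter contribute zero $X$-mass and can be ignored. No other obstacle is expected; this is a direct two-sided counting argument.
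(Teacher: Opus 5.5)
Your proof is correct. You split $\cE^c$ into the two tails $B^+$ and $B^-$, apply the upper inequality of $X \approx_{\eps,\delta} Y$ to $B^+$ and the lower inequality to $B^-$, and combine each with the pointwise ratio bound that defines the tail; this is the standard argument for the lemma. The paper does not prove the lemma itself but imports it from \cite{kasiviswanathan2014semantics}, whose proof proceeds in essentially this way.
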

The following two lemmas are stated for a fixed $(x, z^{i-1}) \in \cHhat_i$
and bound the behavior of both the real process ($Z_i \sim \cA_i(x, z^{i-1})$)
and the oracle process ($Z_i \sim \cW_i(z^{i-1})$).
\begin{lemma}\label{lm:h_hat_to_h}
    For any $(x, z^{i-1}) \in \cHhat_i$:
    \begin{enumerate}
        \item (Real process.) If $Z_i \sim \cA_i(x, z^{i-1})$, then
        $\Pr{(x, (z^{i-1}, Z_i)) \notin \cE_i} \le \hat{\delta}_i$.
        \item (Oracle process.) If $Z_i \sim \cW_i(z^{i-1})$, then
        $\Pr{(x, (z^{i-1}, Z_i)) \notin \cE_i} \le \hat{\delta}_i$.
    \end{enumerate}
    In particular, conditioning on the prefix being $(x, z^{i-1}) \in \cHhat_i$,
    the probability that $(x, z^{i-1}, Z_i) \notin \cH_i$ is at most $\hat{\delta}_i$
    for both the real and oracle processes.
\end{lemma}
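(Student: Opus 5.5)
Fix $(x, z^{i-1}) \in \cHhat_i$. The only property of membership in $\cHhat_i$ we need is the $\ell = i$ clause, namely $(x, z^{i-1}) \in \cC_i$. By the definition of $\cC_i$, this says that $\cA_i(x, z^{i-1}) \approx_{\eps_i, \delta_i} \cW_i(z^{i-1})$. We want to apply \Cref{lm:help_kasi} in both directions, using the random variables $U = \cA_i(x, z^{i-1})$ and $V = \cW_i(z^{i-1})$.

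\textbf{Real process.} Apply \Cref{lm:help_kasi} with $X = U$ and $Y = V$. The set $\cE$ from that lemma is exactly the set of values $z_i$ with $\abs{f_i(x, (z^{i-1}, z_i))} \le 2\eps_i$. This is precisely the condition $(x, (z^{i-1}, z_i)) \in \cE_i$. The lemma therefore gives
\[
\Pr{(x,(z^{i-1},Z_i)) \notin \cE_i} \le \frac{2\delta_i}{1-e^{-\eps_i}} = \hat\delta_i .
\]

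\textbf{Oracle process.} Note that $\approx_{\eps,\delta}$ is symmetric. The two inequalities $e^{-\eps}(\Pr{V\in O}-\delta) \le \Pr{U \in O} \le e^{\eps}\Pr{V\in O}+\delta$ can be rearranged to give $\Pr{V \in O} \le e^{\eps}\Pr{U\in O}+\delta$ and $\Pr{V\in O} \ge e^{-\eps}(\Pr{U\in O}-\delta)$. Hence $V \approx_{\eps_i,\delta_i} U$ as well. Applying \Cref{lm:help_kasi} with the roles swapped, the set $\cE$ becomes $\{z_i : \abs{\ln(\Pr{V=z_i}/\Pr{U=z_i})} \le 2\eps_i\}$. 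Because the absolute value is taken, this is the same set $\cE_i$ as before. We obtain the same bound $\hat\delta_i$ when $Z_i \sim \cW_i(z^{i-1})$.

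\textbf{The ``in particular'' clause.} This follows from the observation already recorded after the definitions: $(x, z^i) \in \cH_i$ if and only if $(x,z^{i-1}) \in \cHhat_i$ and $(x,z^i)\in\cE_i$. Given the prefix lies in $\cHhat_i$, failing to land in $\cH_i$ is exactly failing to land in $\cE_i$. That event has probability at most $\hat\delta_i$ in both processes.

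\textbf{Expected obstacle.} The only delicate point is a degenerate case of the log-ratio. We need $f_i$ to be well defined, or to be $\pm\infty$ and hence outside $\cE_i$, on outcomes where one of the two probabilities is zero. \Cref{lm:help_kasi} already accounts for these outcomes, since they lie outside its set $\cE$. So I expect no real difficulty beyond checking the symmetry of $\approx_{\eps,\delta}$.
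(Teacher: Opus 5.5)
Your proof is correct and follows the paper's argument essentially step for step. You extract $(x,z^{i-1})\in\cC_i$ from membership in $\cHhat_i$, apply \Cref{lm:help_kasi} directly for the real process, use the symmetry of $\approx_{\eps,\delta}$ together with $\abs{\ln(a/b)}=\abs{\ln(b/a)}$ for the oracle process, and read off the ``in particular'' clause from $\cH_i=\cHhat_i\cap\cE_i$; your remark on zero-probability outcomes (log-ratio $\pm\infty$, hence outside $\cE_i$) is a harmless clarification that the paper leaves implicit.
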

\begin{proof}
    By definition of $\cHhat_i$,
    $(x, z^{i-1}) \in \cC_i$, so $\cA_i(x, z^{i-1}) \approx_{\eps_i, \delta_i} \cW_i(z^{i-1})$.
    For part (1), applying \Cref{lm:help_kasi} with $\cA_i(x, z^{i-1})$ and $\cW_i(z^{i-1})$
    gives
    $\Pr{(x, (z^{i-1}, Z_i)) \in \cE_i} \ge 1 - \frac{2\delta_i}{1 - e^{-\eps_i}} = 1 - \hat{\delta}_i$.
    For part (2), note that $\approx_{\eps, \delta}$ is symmetric (the upper bound of one direction gives the lower bound of the other), so $\cW_i(z^{i-1}) \approx_{\eps_i, \delta_i} \cA_i(x, z^{i-1})$.
    Moreover, $\cE_i$ is unchanged under swapping since $\abs{\ln(a/b)} = \abs{\ln(b/a)}$.
    Applying \Cref{lm:help_kasi} with $X = \cW_i(z^{i-1})$ and $Y = \cA_i(x, z^{i-1})$ gives $\Pr{(x, (z^{i-1}, Z_i)) \in \cE_i} \ge 1 - \hat{\delta}_i$ for $Z_i \sim \cW_i(z^{i-1})$.
    The ``in particular'' statement follows since $(x, z^{i-1}, Z_i) \notin \cH_i$
    implies $(x, (z^{i-1}, Z_i)) \notin \cE_i$.
\end{proof}

\begin{lemma}\label{lm:bound_expect}
    Assume $\eps_i \in (0, 1]$ and $\delta_i \in (0, \eps_i/50]$.
    For any $(x, z^{i-1}) \in \cHhat_i$, define
    $\rho_i = 2\eps_i\rbr{e^{2\eps_i}/(1-\hat{\delta}_i) - 1} + \psi_i(\delta_i)$.
    \begin{enumerate}
        \item (Real process.) If $Z_i \sim \cA_i(x, z^{i-1})$, then
        $\Ex{f_i(x, (z^{i-1}, Z_i)) \mid (x, (z^{i-1}, Z_i)) \in \cE_i}
        \le \rho_i$.
        \item (Oracle process.) If $Z_i \sim \cW_i(z^{i-1})$, then
        $\Ex{-f_i(x, (z^{i-1}, Z_i)) \mid (x, (z^{i-1}, Z_i)) \in \cE_i}
        \le \rho_i$.
    \end{enumerate}
\end{lemma}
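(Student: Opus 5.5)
Fix $(x,z^{i-1})\in\cHhat_i$ and abbreviate $P=\cA_i(x,z^{i-1})$, $Q=\cW_i(z^{i-1})$, $L(y)=\ln(P(y)/Q(y))$ and $E=\{y:\abs{L(y)}\le 2\eps_i\}$. Since $(x,z^{i-1})\in\cC_i$, we have $P\approx_{\eps_i,\delta_i}Q$. By \Cref{lm:h_hat_to_h}, $P(E),Q(E)\ge 1-\hat\delta_i$. Part~(1) asks for a bound on $\Exu{P}{L\mid E}$. Part~(2) asks for a bound on $\Exu{Q}{-L\mid E}=\Exu{Q}{\ln(Q/P)\mid E}$, and the hypotheses and the set $E$ are symmetric in $(P,Q)$. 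I will therefore prove~(1) for a generic pair with $P\approx_{\eps_i,\delta_i}Q$ and obtain~(2) by swapping roles, exactly as in the proof of \Cref{lm:h_hat_to_h}.

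For~(1), I would use the DP-style ``KL of the conditioned distributions'' trick. Let $P'=P\mid_E$ and $Q'=Q\mid_E$. For $y\in E$,
\begin{align*}
    L(y)=\ln\frac{P'(y)}{Q'(y)}+\ln\frac{P(E)}{Q(E)},
\end{align*}
so that
\begin{align*}
    \Exu{P}{L\mid E}=\mathrm{KL}(P'\,\|\,Q')+\ln\frac{P(E)}{Q(E)}.
\end{align*}
The two terms match the two pieces of $\rho_i$.

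\emph{The KL term.} On $E$ we have $P'(y)/Q'(y)\le e^{2\eps_i}Q(E)/P(E)\le e^{2\eps_i}/(1-\hat\delta_i)=:R$, and symmetrically $Q'/P'\le R$. Then
\begin{align*}
    \mathrm{KL}(P'\|Q')\le \mathrm{KL}(P'\|Q')+\mathrm{KL}(Q'\|P')=\sum_{y\in E}(P'-Q')\ln\frac{P'}{Q'}.
\end{align*}
Each summand is nonnegative, with $\abs{\ln(P'/Q')}\le\ln R$. This leaves $\ln R\cdot\norm{P'-Q'}_1$. Using $P'\le RQ'$ gives $\norm{P'-Q'}_1=2\sum(P'-Q')_+\le 2(R-1)$. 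Combining this with $\ln R\approx 2\eps_i$ (up to a $\hat\delta_i$ correction, which can be absorbed since $\delta_i\le\eps_i/50$) gives the $2\eps_i(R-1)$ term. If the constant comes out as $4\eps_i(R-1)$, I would instead bound only the positive part $\sum(P'-Q')_+\ln(P'/Q')$, which is what the one-sided KL needs.

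\emph{The normalization term (main obstacle).} We need $\ln(P(E)/Q(E))\le\psi_i(\delta_i)$, and $\psi_i$ is essentially $O(\delta_i)+O((\delta_i/\eps_i)^2)$. I would split $E^c$ into $A=\{L>2\eps_i\}$ and $B=\{L<-2\eps_i\}$.
\begin{itemize}
    \item On $A$: from $P(A)\le e^{\eps_i}Q(A)+\delta_i\le e^{-\eps_i}P(A)+\delta_i$ we get $P(A)\le\delta_i/(1-e^{-\eps_i})$.
    \item On $B$: likewise $Q(B)\le\delta_i/(1-e^{-\eps_i})$.
\end{itemize}
Then $P(E)-Q(E)=Q(A)+Q(B)-P(A)-P(B)$. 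Here $Q(A)\le e^{-2\eps_i}P(A)$, and $Q(B)$ is at most $e^{\eps_i}P(B)+\delta_i$. This last inequality is where the first-order $\delta_i(2e^{\eps_i}+1)$ term should come from, instead of the cruder $\delta_i/\eps_i$: the $P(B)$ contribution cancels against $-P(B)$ up to $(e^{\eps_i}-1)P(B)$. Finally I would apply
\begin{align*}
    \ln\frac{P(E)}{Q(E)}\le\frac{P(E)-Q(E)}{Q(E)}\quad\text{and}\quad \frac{1}{Q(E)}\le 1+2\hat\delta_i,
\end{align*}
and expand. The cross terms of size $\hat\delta_i\cdot\delta_i/(1-e^{-\eps_i})$ are what I expect to produce the quadratic piece $2\delta_i^2\bigl(\tfrac{2e^{2\eps_i}}{e^{\eps_i}-1}+1\bigr)^2$. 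Getting exactly this bookkeeping, keeping the first-order term $O(\delta_i)$ rather than $O(\delta_i/\eps_i)$, is the step I expect to be delicate. If the cancellation is not clean, a fallback is to bound $\sum_{y\in E}P(y)L(y)$ directly via $\ln r\le r-1$ on $E$, and charge the missing mass $\sum_{E^c}(P-Q)$ using the two one-sided PG inequalities applied to $A$ and $B$.
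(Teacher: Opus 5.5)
Your identity $\Exu{P}{L\mid E}=\mathrm{KL}(P'\,\|\,Q')+\ln\frac{P(E)}{Q(E)}$ is correct. The normalization term is also easier than you expect. Using $P(B)\le e^{-2\eps_i}Q(B)\le e^{-2\eps_i}\delta_i/(1-e^{-\eps_i})$ you get $P(E)-Q(E)\le(e^{\eps_i}-1)P(B)+\delta_i\le\delta_i(1+e^{-\eps_i})$. Hence $\ln\frac{P(E)}{Q(E)}\le\frac{\delta_i(1+e^{-\eps_i})}{1-\hat\delta_i}\le\delta_i(2e^{\eps_i}+1)\le\psi_i(\delta_i)$, using $\hat\delta_i\le 4/50$, with no quadratic bookkeeping needed.

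\textbf{The gap is the KL term.} As written you get $\ln R\cdot\norm{P'-Q'}_1\le 2(R-1)\ln R\approx 4\eps_i(R-1)$, which is twice the target. Neither of your repairs works. First, the one-sided KL is \emph{not} bounded by the positive part $\sum_y(P'-Q')_+\ln\frac{P'}{Q'}$. For $P'=(0.9,0.1)$ and $Q'=(0.5,0.5)$ the KL is about $0.368$, while the positive part is $0.4\ln 1.8\approx 0.235$. Second, the correction $\ln R-2\eps_i=\ln\frac{1}{1-\hat\delta_i}$ cannot be absorbed. Even after improving $2(R-1)$ to $R-1$, it costs $(R-1)\ln\frac{1}{1-\hat\delta_i}\ge(e^{2\eps_i}-1)\hat\delta_i=2\delta_i e^{\eps_i}(e^{\eps_i}+1)\ge 4\delta_i$. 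By contrast, $\psi_i(\delta_i)=3\delta_i+o(\delta_i)$ as $\eps_i\to 0$ and $\delta_i/\eps_i^2\to 0$, which is exactly the regime used later. Because $\rho_i$ enters $\eps^{(j)}$ in \Cref{lm:pg_compose_het} with these exact constants, a bound like $4\eps_i(R-1)+\psi_i(\delta_i)$ proves only a weaker variant. That would suffice for the $O(\cdot)$ statement of \Cref{lm:pg_compose}, but not for the lemma as stated.

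\textbf{The missing idea is that the shift cancels in the symmetrized KL.} On $E$ you have $\ln\frac{P'}{Q'}=L-\ln\frac{P(E)}{Q(E)}$, and $\sum_{y\in E}(P'(y)-Q'(y))=0$. Therefore
\[
\mathrm{KL}(P'\|Q')\le\sum_{y\in E}\bigl(P'(y)-Q'(y)\bigr)\ln\frac{P'(y)}{Q'(y)}=\sum_{y\in E}\bigl(P'(y)-Q'(y)\bigr)L(y)\le 2\eps_i\sum_{y\in E}\abs{P'(y)-Q'(y)},
\]
so you should use $\abs{L}\le 2\eps_i$ rather than $\abs{\ln(P'/Q')}\le\ln R$. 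Then the pointwise bound $\abs{P'-Q'}\le(R-1)\min\{P',Q'\}$, the trick from the proof of advanced composition, gives $\norm{P'-Q'}_1\le R-1$. Hence $\mathrm{KL}(P'\|Q')\le 2\eps_i(R-1)$, and with the normalization bound above your route yields exactly $\rho_i$.

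\textbf{Comparison with the paper.} The paper never renormalizes $Q$. It writes $\Exu{P}{L\mid E}=\sum_{E}QL+\sum_{E}(P'-Q)L$. It bounds the first sum by $Q(E)\ln\frac{P(E)}{Q(E)}\le\psi_i(\delta_i)$ via Jensen, which is your dropped $-\mathrm{KL}(Q'\|P')\le 0$ in disguise. It bounds the second sum by $2\eps_i\sum_E Q\,\abs{P'/Q-1}\le 2\eps_i(e^{2\eps_i}/P(E)-1)$. Since the log-ratio there is always the raw $L$, the $\ln R$-versus-$2\eps_i$ and factor-of-two issues never arise.
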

\begin{proof}
    We only prove Part (1).
    Part (2) follows from the same proof as part (1) with the roles of $\cA_i(x, z^{i-1})$ and $\cW_i(z^{i-1})$ swapped.
    Fix $(x, z^{i-1}) \in \cHhat_i$.
    Write $P = \cA_i(x, z^{i-1})$ and $Q = \cW_i(z^{i-1})$ for brevity, so that $P \approx_{\eps_i, \delta_i} Q$.
    We abuse notation and write $\cE_i$ for the
    set $\cbr{z_i: (x, (z^{i-1}, z_i)) \in \cE_i}$.

    \begin{claim}\label{cl:Q_weighted_bound}
    $\sum_{z_i \in \cE_i}
        \Pr{Q = z_i}
        \ln\rbr{
        \frac{\Pr{P = z_i}}{\Pr{Q=z_i}}
        }
        \le
        \psi_i(\delta_i)$.
    \end{claim}
    \begin{proof}[Proof of \Cref{cl:Q_weighted_bound}]
    Observe that
    \begin{align*}
        &\sum_{z_i \in \cE_i}
        \Pr{Q = z_i}
        \ln\rbr{
        \frac{\Pr{P = z_i}}{\Pr{Q=z_i}}
        }
        \\&=
        \Pr{Q \in \cE_i}
        \sum_{z_i \in \cE_i}
        \frac{\Pr{Q = z_i}}{\Pr{Q \in \cE_i}}
        \ln\rbr{
        \frac{\Pr{P = z_i}}{\Pr{Q=z_i}}
        }  
        &\EqComment{Cancelling product}
        \\&\le
        \Pr{Q \in \cE_i}
        \ln(
        \sum_{z_i \in \cE_i}
         \frac{\Pr{Q = z_i}}{\Pr{Q \in \cE_i}}
         \frac{\Pr{P = z_i}}{\Pr{Q=z_i}}
        )
        &\EqComment{Jensen's inequality}
        \\&\le
        \ln\rbr{
        \sum_{z_i \in \cE_i}
         \frac{\Pr{P = z_i}}{\Pr{Q \in \cE_i}}
        }
        &\EqComment{Since $\Pr{Q \in \cE_i} \le 1$; see below}
        \\&=
        \ln\rbr{
         \frac{\Pr{P \in \cE_i}}{\Pr{Q \in \cE_i}}
        }
        &\EqComment{Total probability}
        \\&=
        \ln\rbr{
         \frac{1 - \Pr{P \notin \cE_i}}{1 - \Pr{Q \notin \cE_i}}
        }
        .
    \end{align*}
    (The ``see below'' step uses $\Pr{Q \in \cE_i} \le 1$: if $\Pr{P \in \cE_i} \ge \Pr{Q \in \cE_i}$ then the log is non-negative and the inequality holds; otherwise the post-Jensen expression is already negative, so the claim $\le \psi_i(\delta_i)$ holds trivially.)
    Observe however that,
    since $P \approx_{\eps_i, \delta_i} Q$, we have
    $\Pr{Q \notin \cE_i} \le e^{\eps_i}\Pr{P \notin \cE_i} + \delta_i$.
    Define
    $q := \Pr{P \notin \cE_i}$. 
    We claim that $e^{\eps_i}q + \delta_i \le 1/2$.
    This is because
    \begin{align*}
        q 
         &\le
         \hat{\delta}_i
         &\EqComment{By \Cref{lm:h_hat_to_h}}
         \\&= \frac{2\delta_i}{1 - e^{-\eps_i}}
         &\EqComment{Definition of $\hat{\delta}_i$}
         \\&\le \frac{2\eps_i}{50\rbr{1 - e^{-\eps_i}}}
         &\EqComment{Since $\delta_i \le \eps_i/50$}
         \\&\le 
         \frac{4}{50}
         &\EqComment{Since $1-e^{-x} \ge x/2$ for $x \le 1$ and $\eps_i \le 1$}
         .
    \end{align*}
    Which further implies
    \begin{align*}
        e^{\eps_i} q+ \delta_i \le \frac{4e}{50} + \frac{1}{50} <\frac{1}{4}
        .
    \end{align*}
    It follows that
    \begin{align*}
        \ln\rbr{
         \frac{1 - \Pr{P \notin \cE_i}}{1 - \Pr{Q \notin \cE_i}}
        }
        &\le \ln(\frac{1-q}{1 - e^{\eps_i} q - \delta_i})
        \\&=
        \ln(1-q) - \ln(1 - (e^{\eps_i}q + \delta_i))
        \\&\le 
        -q + e^{\eps_i}q + \delta_i + 2(e^{\eps_i} q+ \delta_i)^2
        \\&=
        \delta_i + 
        (e^{\eps_i} - 1) q
         + 
         2(e^{\eps_i} q+ \delta_i)^2,
    \end{align*}
    where for the second inequality we have used
    $\ln(1-x) \le -x$ which holds for all $x$ and
    $\ln(1-x) \ge -x - 2x^2$ which holds for all $x \le 1/2$.
    Recall however that
    $q \le \hat{\delta_i}= \frac{2\delta_i}{1-e^{-\eps_i}}$.
    Since the above expression is increasing in $q$, it is at most
    \begin{align*}
        \delta_i +
        (e^{\eps_i} - 1) 
        \frac{2\delta_i}{1-e^{-\eps_i}}
        + 
        2(\frac{2\delta_i}{1-e^{-\eps_i}} e^{\eps_i} + \delta_i)^2
        &=
        \delta_i\rbr{1 + \frac{2(e^{\eps_i} - 1)}{1 - e^{-\eps_i}}}
        + 
        2\delta_i^2\rbr{\frac{2e^{2\eps_i}}{e^{\eps_i} - 1} + 1}^2
        \\&=
        \delta_i\rbr{2e^{\eps_i} + 1}
        +
        2\delta_i^2\rbr{\frac{2e^{2\eps_i}}{e^{\eps_i} - 1} + 1}^2
        = \psi_i(\delta_i)
        ,
    \end{align*}
    where for the final equality we have substituted the definition of $\psi_i$
    from the lemma statement.
    Combining the above derivations we obtain
    \begin{align}
        \sum_{z_i \in \cE_i}
        \Pr{Q = z_i}
        \ln\rbr{
        \frac{\Pr{P = z_i}}{\Pr{Q=z_i}}
        }
        \le
        \psi_i(\delta_i)
        \label{eq:bound_with_psi_in_rhs}
    \end{align}
    \end{proof}

    \begin{claim}\label{cl:P_weighted_bound}
    $\sum_{z_i \in \cE_i}
        \Pr{P = z_i \mid P \in \cE_i}
        \ln\rbr{
        \frac{\Pr{P = z_i}}{\Pr{Q = z_i}}
        }
        \le
        \sum_{z_i \in \cE_i}
        \Pr{Q = z_i}
        \ln\rbr{
        \frac{\Pr{P = z_i}}{\Pr{Q = z_i}}
        }+2\eps_i
        \rbr{
        \frac{e^{2\eps_i}}{1 - \hat{\delta_i}} - 1
        }$.
    \end{claim}
    \begin{proof}[Proof of \Cref{cl:P_weighted_bound}]
    We note that,
    \begin{align*}
        &\sum_{z_i \in \cE_i}
        \Pr{P = z_i \mid P \in \cE_i}
        \ln\rbr{
        \frac{\Pr{P = z_i}}{\Pr{Q = z_i}}
        }
        - 
        \sum_{z_i \in \cE_i}
        \Pr{Q = z_i}
        \ln\rbr{
        \frac{\Pr{P = z_i}}{\Pr{Q = z_i}}
        }
        \\&\le 
        \sum_{z_i \in \cE_i}
        \abs{
        \Pr{P = z_i \mid P \in \cE_i}
        - 
        \Pr{Q = z_i}
        }
        \cdot
        \abs{
            \ln\rbr{
            \frac{\Pr{P = z_i}}{\Pr{Q = z_i}}
            }
        }
        \\&\le 
        2\eps_i 
        \sum_{z_i \in \cE_i}
        \abs{
        \Pr{P = z_i \mid P \in \cE_i}
        - 
        \Pr{Q = z_i}
        }
        \\&=
        2\eps_i 
        \sum_{z_i \in \cE_i}
        \Pr{Q = z_i}
        \abs{
        \frac{\Pr{P = z_i \mid P \in \cE_i}}{\Pr{Q = z_i}}
        - 
        1
        }.
    \end{align*}
    where for the second inequality we have used definition of $\cE_i$.
    Substituting
    $\Pr{P = z_i \mid P \in \cE_i} =
    \Pr{P = z_i}/\Pr{P \in \cE_i}
    $
    and using the fact that
    $\abs{\ln\rbr{\frac{\Pr{P = z_i}}{\Pr{Q = z_i}}}} \le 2\eps_i$ for $z_i \in \cE_i$
    we can write this as
    \begin{align*}
        &2\eps_i 
        \sum_{z_i \in \cE_i}
        \Pr{Q = z_i}
        \max \cbr{
        \frac{e^{2\eps_i}}{\Pr{P \in \cE_i}} - 1,
        1 - 
        \frac{e^{-2\eps_i}}{\Pr{P \in \cE_i}}
        }
        \\&\le 
        2\eps_i 
        \rbr{
        \frac{e^{2\eps_i}}{1 - \hat{\delta_i}} - 1
        },
    \end{align*}
    where for the final inequality we have used the facts that
    $\Pr{Q \in \cE_i} \le 1$,
    $\Pr{P \in \cE_i} \ge 1-\hat{\delta}_i$ and
    $\frac{e^{2\eps_i}}{\Pr{P \in \cE_i}} - 1 \ge 
     1 - 
        \frac{e^{-2\eps_i}}{\Pr{P \in \cE_i}}
    $.
    The final inequality is equivalent to
    \begin{align*}
        \frac{e^{-2\eps_i}}{\Pr{P \in \cE_i}}
        + 
        \frac{e^{2\eps_i}}{\Pr{P \in \cE_i}}
        \ge 2,
    \end{align*}
    which holds because of the inequality $x + 1/x \ge 2$ and
    the fact that
    $\Pr{P \in \cE_i} \le 1$.
    It follows that
    \begin{align}
        \notag
        &\sum_{z_i \in \cE_i}
        \Pr{P = z_i \mid P \in \cE_i}
        \ln\rbr{
        \frac{\Pr{P = z_i}}{\Pr{Q = z_i}}
        }
        \\&\le 
        \sum_{z_i \in \cE_i}
        \Pr{Q = z_i}
        \ln\rbr{
        \frac{\Pr{P = z_i}}{\Pr{Q = z_i}}
        }+2\eps_i 
        \rbr{
        \frac{e^{2\eps_i}}{1 - \hat{\delta_i}} - 1
        }
        \label{eq:midway_1021}
    \end{align}
    \end{proof}

    Combining the two claims: since $Z_i \sim P$ conditioned on $(X, Z^{i-1}) = (x, z^{i-1})$, we have
    \begin{align*}
        &\Ex{f_i(X, Z^i) \mid (X, Z^{i-1}) = (x, z^{i-1}), (X, Z^i) \in \cH_i}
        \\&=
        \Ex{f_i(x, (z^{i-1}, Z_i)) \mid (x, (z^{i-1}, Z_i)) \in \cE_i}
        &\EqComment{$Z_i \sim P$; $(x,z^{i-1}) \in \cHhat_i$}
        \\&= 
        \sum_{z_i \in \cE_i}
        \Pr{P = z_i \mid P \in \cE_i}
        \ln\rbr{
        \frac{\Pr{P = z_i}}{\Pr{Q = z_i}}
        }
        &\EqComment{Expanding $\Ex{.}$}
        \\&\le 
        2\eps_i 
        \rbr{
        \frac{e^{2\eps_i}}{1 - \hat{\delta_i}} - 1
        }
        + 
        \sum_{z_i \in \cE_i}
        \Pr{Q = z_i}
        \ln\rbr{
        \frac{\Pr{P = z_i}}{\Pr{Q=z_i}}
        }
        &\EqComment{\Cref{cl:P_weighted_bound}}
        \\&\le
        2\eps_i
        \rbr{
        \frac{e^{2\eps_i}}{1 - \hat{\delta_i}} - 1
        }
        +
        \psi_i(\delta_i)
        &\EqComment{\Cref{cl:Q_weighted_bound}}
        .
    \end{align*}
\end{proof}

We next state the Azuma's inequality which we will use in our proof.
\begin{lemma}\label{lm:azuma}
    Let $X_1, \dots, X_n$ be random variables such 
    that for any $i$ we have
    $\Pr{|X_i| \le c_i} = 1$ and
    for any sequence of values $x^{i-1}$ we have
    $\Ex{X_i \mid X^{i-1} = x^{i-1}} \le \mu_i$. Then
    for any $t > 0$,
    \begin{align*}
        \Pr{\sum_{i=1}^n X_i \ge \sum_{i=1}^n \mu_i + t}
        \le 
        e^{-\frac{t^2}{2\sum_{i=1}^n c_i^2}}
        .
    \end{align*}
\end{lemma}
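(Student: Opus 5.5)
We prove Azuma's inequality (the statement immediately above) with the standard exponential-moment (Chernoff) argument applied to the centered partial sums. Set $Y_i = X_i - \mu_i$ and $S_j = \sum_{i\le j} Y_i$. The hypothesis gives $\Ex{Y_i \mid X^{i-1}=x^{i-1}} \le 0$ for every prefix. We may assume $|\mu_i| \le c_i$: if $\mu_i < -c_i$ the conditional-mean hypothesis is impossible (the event is vacuous), and if $\mu_i > c_i$ we can replace $\mu_i$ by $c_i$, which only makes the target event smaller and keeps the hypothesis valid. For any $\lambda > 0$, Markov's inequality gives
\begin{align*}
\Pr{S_n \ge t} \le e^{-\lambda t}\,\Ex{e^{\lambda S_n}}.
\end{align*}

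The key step is a conditional Hoeffding lemma: conditional on $X^{i-1}=x^{i-1}$, the variable $X_i$ lies in $[-c_i,c_i]$ and has conditional mean $m \le \mu_i$. Hence
\begin{align*}
\Ex{e^{\lambda Y_i}\mid x^{i-1}} = e^{-\lambda\mu_i}\,\Ex{e^{\lambda X_i}\mid x^{i-1}} \le e^{-\lambda \mu_i} e^{\lambda m + \lambda^2 (2c_i)^2/8} \le e^{\lambda^2 c_i^2/2}.
\end{align*}
The middle inequality is Hoeffding's lemma (convexity of $e^{\lambda x}$ on the interval, then the usual bound on the log-MGF of a two-point distribution). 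The last inequality uses $m \le \mu_i$ and $\lambda > 0$. I expect this step to be the only real content, and it is where one-sidedness matters: since only an upper bound on the conditional mean is assumed, it is essential that $\lambda>0$, so that $e^{\lambda(m-\mu_i)}\le 1$.

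Next we iterate via the tower property. Since $S_{n-1}$ is a function of $X^{n-1}$,
\begin{align*}
\Ex{e^{\lambda S_n}} = \Ex{e^{\lambda S_{n-1}}\,\Ex{e^{\lambda Y_n}\mid X^{n-1}}} \le e^{\lambda^2 c_n^2/2}\,\Ex{e^{\lambda S_{n-1}}}.
\end{align*}
By induction, $\Ex{e^{\lambda S_n}} \le \exp\rbr{\lambda^2\sum_i c_i^2/2}$. Finally, optimizing $\lambda = t/\sum_i c_i^2$ yields the bound $e^{-t^2/(2\sum_i c_i^2)}$, as claimed.
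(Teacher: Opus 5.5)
Your proof is correct, but it takes a different route from the paper. The paper does not prove the inequality. It invokes the textbook Azuma inequality for a supermartingale $(Y_i)$ with $\Pr{|Y_i - Y_{i-1}| \le c_i} = 1$, via the change of variables $Y_i = \sum_{j \le i}(X_j - \mu_j)$. You instead re-derive the bound from scratch with the Chernoff method and a conditional Hoeffding lemma, which is of course the standard proof of Azuma.

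Your self-contained route actually buys something. The increments of the paper's $Y_i$ are $X_i - \mu_i$, which are bounded by $c_i + |\mu_i|$ rather than by $c_i$. Taken literally, the symmetric-increment form of Azuma that the paper cites would only give the exponent $t^2/(2\sum_i (c_i + |\mu_i|)^2)$. To get exactly $t^2/(2\sum_i c_i^2)$ one needs the interval form, with increments in $[-c_i - \mu_i,\, c_i - \mu_i]$, an interval of length $2c_i$. Your Hoeffding-lemma step provides precisely this: it applies the range bound to $X_i$ itself and absorbs the mean shift through $\lambda > 0$ and $m \le \mu_i$.

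Two small remarks:
\begin{itemize}
\item Your preliminary reduction to $|\mu_i| \le c_i$ is unnecessary, since the Hoeffding step never uses it.
\item The wording of that reduction is reversed. Lowering $\mu_i$ to $c_i$ \emph{enlarges} the event $\{\sum_i X_i \ge \sum_i \mu_i + t\}$, and that is exactly why a bound for the new event implies one for the original.
\end{itemize}
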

Note that the standard statement for Azuma's inequality considers a supermartingale sequence $(Y_i)$ such that $\Pr{\abs{Y_i - Y_{i-1}} \le c_i} = 1$ and $\Ex{Y_i \mid Y^{i-1}} \le Y_{i-1}$.
Applying the change of variable $Y_i = \sum_{j=1}^i (X_j - \mu_j)$, this directly yields the above lemma.

\begin{lemma}\label{lm:h_to_g}
    For any $i \in [k]$,
    \begin{align*}
        \Pr{(X, Z^i) \notin \cG_i
        \text{ and }
        (X, Z^{i}) \in \cH_i
        }
        \le 
        \delta'
    \end{align*}
\end{lemma}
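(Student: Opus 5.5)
We prove \Cref{lm:h_to_g} via Azuma's inequality (\Cref{lm:azuma}) applied to truncated increments. For $\ell \le i$, set
\[
Y_\ell = f_\ell(X, Z^\ell)\cdot \ind{(X, Z^\ell) \in \cH_\ell}.
\]
Since $\cH_\ell$ is decreasing in $\ell$ ($\cH_\ell \subseteq$ the projection condition for $\cH_{\ell-1}$), on the event $(X,Z^i)\in\cH_i$ every earlier prefix lies in $\cH_\ell$. Hence $\sum_{\ell\le i} Y_\ell = F_i(X,Z^i)$ on that event, and
\[
\Pr{(X,Z^i)\notin\cG_i \text{ and } (X,Z^i)\in\cH_i} \le \Pr{\textstyle\sum_{\ell\le i} Y_\ell > \eps^{(i)}}.
\]
Membership in $\cE_\ell$ gives $|Y_\ell| \le 2\eps_\ell$ almost surely, which supplies the constants $c_\ell = 2\eps_\ell$ needed by Azuma.

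Next we bound the conditional means. It is cleanest to condition on the full history $(X, Z^{\ell-1})$, which determines $Y_1,\dots,Y_{\ell-1}$; Azuma's conditional-mean hypothesis then follows by the tower property. There are two cases. If $(X,Z^{\ell-1}) \notin \cHhat_\ell$, then $(X,Z^\ell)\notin\cH_\ell$ and $Y_\ell = 0$. If $(x,z^{\ell-1}) \in \cHhat_\ell$, then $Z_\ell \sim \cA_\ell(x,z^{\ell-1})$ and $Y_\ell = f_\ell \cdot \ind{\cE_\ell}$, so
\[
\Ex{Y_\ell \mid (X,Z^{\ell-1})=(x,z^{\ell-1})} = \Pr{\cE_\ell}\cdot \Ex{f_\ell \mid \cE_\ell}.
\]
By \Cref{lm:bound_expect}, part 1, the conditional expectation is at most $\rho_\ell$. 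If $\rho_\ell \ge 0$ then $\Pr{\cE_\ell}\le 1$ gives the bound $\mu_\ell = \max\{\rho_\ell,0\}$. Since $\rho_\ell \ge 0$ anyway (because $e^{2\eps}/(1-\hat\delta) \ge 1$ and $\psi \ge 0$), we can take $\mu_\ell = \rho_\ell$ in both cases.

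Now apply \Cref{lm:azuma} with $t = 2\sqrt{2\ln(1/\delta')\sum_{\ell\le i}\eps_\ell^2}$. This gives
\[
\Pr{\textstyle\sum_\ell Y_\ell \ge \sum_\ell \rho_\ell + t} \le \exp\!\left(-\frac{t^2}{2\sum_\ell 4\eps_\ell^2}\right) = \delta'.
\]
Since $\sum_{\ell\le i}\rho_\ell + t$ is exactly $\eps^{(i)}$ by definition, this completes the argument.

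The main subtlety is making Azuma's filtration hypothesis rigorous. The lemma requires bounds conditioned on the past $Y$ values, but we bounded the mean conditioned on the finer history $(X,Z^{\ell-1})$. Because $\sigma(Y^{\ell-1})\subseteq\sigma(X,Z^{\ell-1})$, averaging gives the required bound. The other point to check is the claim $Y_\ell = 0$ off $\cHhat_\ell$, which follows from the observation after the definitions that $(x,z^\ell)\in\cH_\ell$ iff $(x,z^{\ell-1})\in\cHhat_\ell$ and $(x,z^\ell)\in\cE_\ell$.
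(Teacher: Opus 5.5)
Your proof is correct and takes essentially the same approach as the paper. Your $Y_\ell$ is exactly the paper's truncated increment $g_\ell$; you bound its conditional mean by $\rho_\ell$ via \Cref{lm:bound_expect} (part 1) on $\cHhat_\ell$ and by $0$ off it, then apply Azuma with $c_\ell = 2\eps_\ell$ and $t = 2\sqrt{2\ln(1/\delta')\sum_{\ell\le i}\eps_\ell^2}$. You also make explicit three points the paper leaves implicit: $\rho_\ell \ge 0$ (needed for the weighted-average step), $(X,Z^i)\in\cH_i$ forces every prefix into $\cH_\ell$ (so $\sum_{\ell} g_\ell = F_i$ there), and conditioning on the finer history $(X,Z^{\ell-1})$ suffices for Azuma's hypothesis.
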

\begin{proof}
    Define
    $g_i(x, z^i)$ as
    \begin{align*}
        g_i(x, z^i)
        := 
        \begin{cases}
            f_i(x, z^i) &\quad \text{if}\quad (x, z^i) \in \cH_i
            \\
            0 &\quad \text{otherwise}
        \end{cases}
        .
    \end{align*}
    We claim that
    \begin{align}
        &
        \Ex{g_i(X, Z^i) \mid X=x, Z^{i-1}=z^{i-1}}
        \le 
        2\eps_i 
        \rbr{
        \frac{e^{2\eps_i}}{1 - \hat{\delta_i}} - 1
        }
        + 
        \psi_i(\delta_i)
    \end{align}
    To see why, observe first that by total expectation,
    \begin{equation*}
        \Ex{g_i(X, Z^i) \mid X=x, Z^{i-1}=z^{i-1}}
    \end{equation*}
    is a weighted average of
    \begin{equation*}
        \Ex{g_i(X, Z^i) \mid X=x, Z^{i-1}=z^{i-1}, (X, Z^i) \in \cH_i}
    \end{equation*}
    and
    \begin{equation*}
        \Ex{g_i(X, Z^i) \mid X=x, Z^{i-1}=z^{i-1}, (X, Z^i) \notin \cH_i}
        .
    \end{equation*}
    The latter term equals $0$ by definition of $g_i$.
    It therefore suffices to show that
    \begin{align*}
        \Ex{g_i(X, Z^i) \mid X=x, Z^{i-1}=z^{i-1}, (X, Z^i) \in \cH_i}
        \le 
        2\eps_i 
        \rbr{
        \frac{e^{2\eps_i}}{1 - \hat{\delta_i}} - 1
        }
        + 
        \psi_i(\delta_i)
    \end{align*}
    for all $(x, z^{i-1})$ such that
    $\Pr{(X, Z^i) \in \cH_i \mid X=x, Z^{i-1}=z^{i-1}} > 0$.
    To prove this, observe that
    we can assume 
    $(x, z^{i-1}) \in \cHhat_i$ since otherwise,
    the mentioned probability is $0$.
    Since $(x, z^{i-1}) \in \cHhat_i$, invoking \Cref{lm:bound_expect} finishes the proof.

    Define
    \begin{align*}
        \rho_i =
        2\eps_i 
        \rbr{
        \frac{e^{2\eps_i}}{1 - \hat{\delta_i}} - 1
        }
        + 
        \psi_i(\delta_i)
        .
    \end{align*}
    By definition of $\cH_i$, we have
    $\abs{g_i(x, z^i)} \le 2\eps_i$.
    Invoking Azuma's inequality and substituting the definition of $\eps^{(i)}$,
    \begin{align}
        \notag
        \Pr{\sum_{j \le i} g_j(X, Z^j) > \eps^{(i)}}
        &\le
        \Pr{\sum_{j \le i} g_j(X, Z^j) > \sum_{j \le i} \rho_j + 2\sqrt{2\sum_{j} \eps_j^2 \ln(1/\delta')}}
        \notag
        \\&\le
        e^{-\frac{8\sum_{j \le i} \eps_j^2 \ln(1/\delta')}{8\sum_{j \le i} \eps_j^2}}
        \notag
        \\&= 
        \delta'
        \label{eq:rhs_delta_prime}
    \end{align}
    Observe however that,
    \begin{align*}
        \Pr{(X, Z^i) \notin \cG_i \text{ and } (X, Z^i) \in \cH_i}
        &=
        \Pr{\sum_{j\le i} f_j(X, Z^j) > \eps^{(i)} \text{ and } (X, Z^i) \in \cH_i}
        \\&=
        \Pr{\sum_{j\le i} g_j(X, Z^j) > \eps^{(i)} \text{ and } (X, Z^i) \in \cH_i}
        \\&\le
        \Pr{\sum_{j\le i} g_j(X, Z^j) > \eps^{(i)}}
    \end{align*}
    where the first equality follows from the definition of $\cG_i$ and the second equality follows from
    the definition of $g_i$.
    Plugging \Cref{eq:rhs_delta_prime} finishes the proof.
\end{proof}
The same result holds for the oracle transcript $\wtilde{Z}$ with $\cG'_i$ in place of $\cG_i$.
\begin{lemma}\label{lm:h_to_g_tilde}
    For any $i \in [k]$,
    \begin{align*}
        \Pr{(X, \wtilde{Z}^i) \notin \cG'_i
        \text{ and }
        (X, \wtilde{Z}^{i}) \in \cH_i
        }
        \le
        \delta'
    \end{align*}
\end{lemma}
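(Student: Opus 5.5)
The proof mirrors that of \Cref{lm:h_to_g}, with the oracle process $\wtilde{Z}$ in place of the real one and $-f_j$ in place of $f_j$. First, define the truncated increments
\begin{align*}
    \tilde{g}_j(x, z^j) :=
    \begin{cases}
        -f_j(x, z^j) &\quad \text{if}\quad (x, z^j) \in \cH_j\\
        0 &\quad \text{otherwise.}
    \end{cases}
\end{align*}
By definition of $\cH_j$ (which requires membership in $\cE_j$), we have $\abs{\tilde{g}_j} \le 2\eps_j$ almost surely. The only structural point to verify is the conditional law of $\wtilde{Z}_j$ given $(X, \wtilde{Z}^{j-1}) = (x, z^{j-1})$. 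Since $\wtilde{Z}^k$ is generated independently of $X$, with $\wtilde{Z}_j \sim \cW_j(\wtilde{Z}^{j-1})$, conditioning additionally on $X = x$ does not change it. So this conditional law is exactly $\cW_j(z^{j-1})$, which is the ``oracle process'' appearing in \Cref{lm:h_hat_to_h} and \Cref{lm:bound_expect}.

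Next, I would bound the conditional mean $\Ex{\tilde{g}_j(X, \wtilde{Z}^j) \mid X = x, \wtilde{Z}^{j-1} = z^{j-1}} \le \rho_j$, where $\rho_j$ is as in \Cref{lm:bound_expect}. By total expectation, this quantity is a weighted average of the conditional mean on the event $\{(X, \wtilde{Z}^j) \notin \cH_j\}$, which is $0$, and the conditional mean on $\{(X,\wtilde{Z}^j)\in\cH_j\}$. The latter event has positive probability only if $(x, z^{j-1}) \in \cHhat_j$. In that case, membership in $\cH_j$ reduces to $(x,(z^{j-1},\wtilde{Z}_j)) \in \cE_j$, so the conditional mean equals $\Ex{-f_j(x,(z^{j-1},Z_j)) \mid \cE_j}$ with $Z_j \sim \cW_j(z^{j-1})$. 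Part~(2) of \Cref{lm:bound_expect} bounds this by $\rho_j$. The mean bound is the step needing the most care, since one must make sure the conditioning filtration used by Azuma involves only $(X, \wtilde{Z}^{j-1})$. Because $\tilde g_1,\dots,\tilde g_{j-1}$ are functions of $(X,\wtilde Z^{j-1})$, the bound also holds conditional on the previous increments, which is what \Cref{lm:azuma} requires.

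Then I would apply \Cref{lm:azuma} with $c_j = 2\eps_j$, $\mu_j = \rho_j$, and $t = 2\sqrt{2\ln(1/\delta')\sum_{j\le i}\eps_j^2}$. Since $\eps^{(i)} = \sum_{j\le i}\rho_j + t$ by definition, this gives
\begin{align*}
    \Pr{\sum_{j\le i}\tilde g_j(X,\wtilde Z^j) > \eps^{(i)}} \le \exp\rbr{-\frac{t^2}{8\sum_{j\le i}\eps_j^2}} = \delta'.
\end{align*}
Finally, I would transfer this back to $\cG'_i$. On the event $(X,\wtilde Z^i)\in\cH_i$, we also have $(X,\wtilde Z^j)\in\cH_j$ for all $j\le i$, because the sets $\cH_j$ are nested by definition. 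Hence $\sum_{j\le i}\tilde g_j = -F_i$ on this event. Therefore
\begin{align*}
    \Pr{(X,\wtilde Z^i)\notin\cG'_i \text{ and } (X,\wtilde Z^i)\in\cH_i} \le \Pr{\sum_{j\le i}\tilde g_j > \eps^{(i)}} \le \delta',
\end{align*}
which is the claim.
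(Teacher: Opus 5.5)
Your proposal is correct and follows the paper's proof: you truncate $-f_j$ to $\cH_j$, bound the conditional means by Part~(2) of \Cref{lm:bound_expect} under the oracle law $\cW_j(z^{j-1})$, apply \Cref{lm:azuma} with $c_j = 2\eps_j$, and use that $\sum_{j\le i}\tilde g_j = -F_i$ on $\cH_i$. You also spell out two steps the paper leaves implicit, namely the tower-property passage to conditioning on the previous increments and the nestedness of the sets $\cH_j$.
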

\begin{proof}
    The proof is identical to that of \Cref{lm:h_to_g},
    replacing $Z$ with $\wtilde{Z}$ and $f_i$ with $-f_i$ throughout.
    In particular, define $\tilde{g}_i(x, z^i) = -f_i(x, z^i)$ if $(x, z^i) \in \cH_i$
    and $0$ otherwise. By \Cref{lm:bound_expect} (Part (2)),
    $\Ex{\tilde{g}_i(X, \wtilde{Z}^i) \mid X=x, \wtilde{Z}^{i-1}=z^{i-1}} \le \rho_i$
    for any $(x, z^{i-1}) \in \cHhat_i$.
    Since $\abs{\tilde{g}_i} \le 2\eps_i$, Azuma's inequality gives
    $\Pr{\sum_{j \le i} \tilde{g}_j(X, \wtilde{Z}^j) > \eps^{(i)}} \le \delta'$.
    The event $\{(X, \wtilde{Z}^i) \notin \cG'_i \text{ and } (X, \wtilde{Z}^i) \in \cH_i\}$
    implies $\sum_{j \le i} (-f_j(X, \wtilde{Z}^j)) > \eps^{(i)}$
    (by definition of $\cG'_i$), which equals $\sum_{j \le i} \tilde{g}_j(X, \wtilde{Z}^j) > \eps^{(i)}$
    on $\{(X, \wtilde{Z}^i) \in \cH_i\}$.
\end{proof}

We next claim the following.
\begin{lemma}\label{lm:tilde_z_C_i}
        For any $i \in [k]$,
    $\Pr{(X, \wtilde{Z}^{i-1}) \notin \cC_i} \le \gamma_i$.
\end{lemma}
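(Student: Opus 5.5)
The plan is to exploit the independence of the oracle transcript $\wtilde{Z}^{i-1}$ from the dataset $X$, and then condition on the value of the transcript.

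By construction, each $\wtilde{Z}_\ell$ is drawn from $\cW_\ell(\wtilde{Z}^{\ell-1})$. The distribution $\cW_\ell(\cdot)$ depends only on $D$ and the prefix, never on $X$. Hence, by induction on $\ell$, the whole vector $\wtilde{Z}^{i-1}$ is independent of $X \sim D^n$. This independence is the only structural fact needed.

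Next we write the probability by total probability over the oracle prefix:
\begin{align*}
    \Pr{(X, \wtilde{Z}^{i-1}) \notin \cC_i}
    = \sum_{z^{i-1}} \Pr{\wtilde{Z}^{i-1} = z^{i-1}} \cdot \Pr{(X, z^{i-1}) \notin \cC_i \mid \wtilde{Z}^{i-1} = z^{i-1}}.
\end{align*}
For continuous or uncountable output spaces, the sum becomes an integral against the law of $\wtilde{Z}^{i-1}$. In our application the output spaces are finite, so a sum suffices.

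Because of independence, the conditional probability equals $\Pru{X \sim D^n}{(X, z^{i-1}) \notin \cC_i}$. For the fixed prefix $z^{i-1}$, the event $(X, z^{i-1}) \in \cC_i$ is exactly the statement $\cA_i(X, z^{i-1}) \approx_{\eps_i, \delta_i} \cW_i(z^{i-1})$. The adaptive PG assumption says this holds with probability at least $1 - \gamma_i$ over $X$, for every fixed prefix. Each term is therefore at most $\gamma_i$, and averaging over the law of $\wtilde{Z}^{i-1}$ gives the bound $\gamma_i$.

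The main point requiring care is the independence claim. If we replaced $\wtilde{Z}^{i-1}$ by the real prefix $Z^{i-1}$, conditioning on the prefix would change the law of $X$, and the PG guarantee, which is stated for $X \sim D^n$ unconditionally, could not be applied. Independence is what keeps the law of $X$ unchanged, so the per-prefix guarantee applies directly. A minor additional check is that $\cC_i$ is a measurable set, so that the conditioning is well defined; with finite output spaces this is immediate.
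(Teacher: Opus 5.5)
Your proposal is correct and follows the paper's argument. The paper likewise applies the per-prefix PG guarantee $\Pru{X}{(X, z^{i-1}) \notin \cC_i} \le \gamma_i$ for each fixed $z^{i-1}$, then uses the independence of the oracle transcript $\wtilde{Z}^{i-1}$ from $X$ to average over the prefix.
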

\begin{proof}
        By the PG assumption on $\cA_i$, for any fixed prefix $z^{i-1}$,
        $\Pru{X}{(X, z^{i-1}) \notin \cC_i} \le \gamma_i$.
        Since $X \perp \wtilde{Z}^{i-1}$ (the oracle transcript is independent of the dataset),
        marginalizing over $\wtilde{Z}^{i-1}$ preserves this bound.
\end{proof}

\begin{lemma}\label{lm:tilde_z_not_in_chhat}
    For any $i \in [k]$,
    $\Pr{(X, \wtilde{Z}^{i-1}) \notin \cHhat_i}
    \le \sum_{\ell \le i} \gamma_\ell + \sum_{\ell < i} \hat{\delta}_\ell$.
\end{lemma}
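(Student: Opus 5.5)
We argue by induction on $i$, unrolling the definition of $\cHhat_i$ and taking a union bound over its clauses. By definition, $(X, \wtilde{Z}^{i-1}) \notin \cHhat_i$ means that some $\ell \le i$ violates one of two conditions: either $(X, \wtilde{Z}^{\ell-1}) \notin \cC_\ell$, or $(X, \wtilde{Z}^{\ell-1}) \notin \cE_{\ell-1}$. The first kind of failure will contribute the $\gamma_\ell$ terms, and the second kind will contribute the $\hat{\delta}_\ell$ terms.

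To avoid double counting, we decompose the bad event by the first index at which a failure occurs. Let $B_\ell$ be the event that $(X, \wtilde{Z}^{\ell-1}) \in \cHhat_\ell$ but $(X, \wtilde{Z}^\ell) \notin \cE_\ell$. Note that $\cE_0$ is the whole space, so the $\cE$ condition is vacuous for $\ell = 1$. Then
\[
\cbr{(X, \wtilde{Z}^{i-1}) \notin \cHhat_i} \subseteq \bigcup_{\ell \le i} \cbr{(X, \wtilde{Z}^{\ell-1}) \notin \cC_\ell} \,\cup\, \bigcup_{\ell < i} B_\ell .
\]
To check this inclusion, take the smallest $\ell$ at which a clause fails. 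If that clause is a $\cC_\ell$ clause, the event lies in the first union. Otherwise the failing clause is $(X, \wtilde{Z}^{\ell-1}) \notin \cE_{\ell-1}$ with $\ell \ge 2$, while every clause with smaller index holds. In that case $(X, \wtilde{Z}^{\ell-2}) \in \cHhat_{\ell-1}$, so the event is $B_{\ell-1}$ with $\ell - 1 < i$.

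The first union is bounded by $\sum_{\ell \le i} \gamma_\ell$ using \Cref{lm:tilde_z_C_i}, which applies to each $\ell$ directly.

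For $B_\ell$ we condition on $(X, \wtilde{Z}^{\ell-1}) = (x, z^{\ell-1}) \in \cHhat_\ell$. Given this prefix, $\wtilde{Z}_\ell$ is drawn from $\cW_\ell(z^{\ell-1})$. This relies on $X$ being independent of the oracle transcript: the conditional law of $\wtilde{Z}_\ell$ given $(X, \wtilde{Z}^{\ell-1})$ is exactly $\cW_\ell(\wtilde{Z}^{\ell-1})$. Part (2) of \Cref{lm:h_hat_to_h} then gives conditional probability at most $\hat{\delta}_\ell$ of leaving $\cE_\ell$. Averaging over prefixes in $\cHhat_\ell$ yields $\Pr{B_\ell} \le \hat{\delta}_\ell$.

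Summing the two unions gives the stated bound. The only delicate point is justifying the conditional law used for $B_\ell$: we must check that conditioning on $X$ together with the prefix does not alter the law of $\wtilde{Z}_\ell$, and this holds because of the independence of $X$ from the oracle transcript. Everything else is bookkeeping with the union bound.
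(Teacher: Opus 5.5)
Your proof is correct and follows the paper's argument. You split the bad event by its first failing clause into a $\cC_\ell$-failure, bounded by $\gamma_\ell$ via \Cref{lm:tilde_z_C_i}, or an $\cE_\ell$-failure from a prefix in $\cHhat_\ell$, bounded by $\hat{\delta}_\ell$ via part (2) of \Cref{lm:h_hat_to_h}, and then take a union bound. Your explicit check that $\wtilde{Z}_\ell$ given $(X,\wtilde{Z}^{\ell-1})$ has law $\cW_\ell(\wtilde{Z}^{\ell-1})$ fills in a step the paper leaves implicit, and the opening mention of induction is unnecessary, since the argument is a direct union bound.
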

\begin{proof}
    The event $(X, \wtilde{Z}^{i-1}) \notin \cHhat_i$ implies either
    $(X, \wtilde{Z}^{\ell-1}) \notin \cC_\ell$ for some $\ell \le i$,
    or $(X, \wtilde{Z}^\ell) \notin \cE_\ell$ with $(X, \wtilde{Z}^{\ell-1}) \in \cHhat_\ell$ for some $\ell < i$.
    The former has probability $\le \gamma_\ell$ by \Cref{lm:tilde_z_C_i};
    the latter has probability $\le \hat{\delta}_\ell$ by \Cref{lm:h_hat_to_h}.
    A union bound gives the result.
\end{proof}

As outlined in the sketch, using induction, this allows us to obtain the following.
\begin{lemma}\label{lm:z_C_i}
    For any $i \in [k]$,
    \begin{align*}
        \Pr{(X, Z^{i-1}) \notin \cC_i,\; (X, Z^{i-1}) \in \cG_{i-1}}
        \le e^{\eps^{(i-1)}} \gamma_i,
    \end{align*}
    where $\cG_0$ denotes the entire space (so the condition is vacuous for $i=1$).
\end{lemma}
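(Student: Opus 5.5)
The plan is a change of measure from the real transcript to the oracle transcript on the event $\cG_{i-1}$, followed by \Cref{lm:tilde_z_C_i}. Let $B = \{(x, z^{i-1}) : (x, z^{i-1}) \notin \cC_i \text{ and } (x, z^{i-1}) \in \cG_{i-1}\}$. The quantity to bound is $\Pr{(X, Z^{i-1}) \in B}$.

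\textbf{Step 1: pointwise likelihood ratio.} The chain-rule computation from the proof of \Cref{lm:G_G_prime_implies_joint} applies verbatim with $k$ replaced by $i-1$. Both processes draw $X \sim D^n$ in the same way. The real process then draws $Z_\ell \sim \cA_\ell(x, z^{\ell-1})$, while the oracle process draws $\wtilde{Z}_\ell \sim \cW_\ell(z^{\ell-1})$ independently of $X$. This gives, for every $(x, z^{i-1})$,
\begin{align*}
\ln\rbr{\frac{\Pr{(X, Z^{i-1}) = (x, z^{i-1})}}{\Pr{(X, \wtilde{Z}^{i-1}) = (x, z^{i-1})}}} = F_{i-1}(x, z^{i-1}).
\end{align*}
On $\cG_{i-1}$ this is at most $\eps^{(i-1)}$. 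Hence every point of $B$ satisfies
\begin{align*}
\Pr{(X, Z^{i-1}) = (x, z^{i-1})} \le e^{\eps^{(i-1)}} \Pr{(X, \wtilde{Z}^{i-1}) = (x, z^{i-1})}.
\end{align*}

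\textbf{Step 2: sum over $B$.} Summing the pointwise bound over $B$ (the output spaces are finite/discrete, as elsewhere in the paper) gives
\begin{align*}
\Pr{(X, Z^{i-1}) \in B} \le e^{\eps^{(i-1)}} \Pr{(X, \wtilde{Z}^{i-1}) \in B} \le e^{\eps^{(i-1)}} \Pr{(X, \wtilde{Z}^{i-1}) \notin \cC_i}.
\end{align*}
The second inequality drops the $\cG_{i-1}$ constraint. \Cref{lm:tilde_z_C_i} bounds the last probability by $\gamma_i$, which gives the claim.

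\textbf{Case $i=1$.} Here $\cG_0$ is the whole space, $Z^0 = \wtilde{Z}^0 = \perp$ and $\eps^{(0)} = 0$. The statement reduces directly to the PG guarantee for $\cA_1$, which is \Cref{lm:tilde_z_C_i} with $i=1$.

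\textbf{Main obstacle.} The only delicate point is the measure-theoretic justification of Step 1: points with zero oracle probability, and summing over a possibly uncountable $\cX^n$. Points where the oracle probability vanishes but the real probability does not have $F_{i-1} = +\infty$, so they lie outside $\cG_{i-1}$ and outside $B$. If $X$ is not discrete, I would instead condition on $X = x$ and apply Step 1 to the conditional law of $Z^{i-1}$ versus $\wtilde{Z}^{i-1}$, which is a product of discrete kernels. I would then integrate over $x$, using that $X$ has the same marginal under both processes and is independent of $\wtilde{Z}^{i-1}$.
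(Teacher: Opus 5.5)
Your proposal is correct and follows the paper's proof. You change measure from $Z^{i-1}$ to $\wtilde{Z}^{i-1}$ on $\cG_{i-1}$ (likelihood ratio $e^{F_{i-1}} \le e^{\eps^{(i-1)}}$), then drop the $\cG_{i-1}$ constraint and invoke \Cref{lm:tilde_z_C_i}; the only cosmetic difference is that the paper conditions on $X=x$ from the outset and integrates using $X \perp \wtilde{Z}^{i-1}$ (your fallback for non-discrete $\cX$), which avoids needing a joint pmf for $(X, Z^{i-1})$.
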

\begin{proof}
    For fixed $x$, on the event $(x, Z^{i-1}) \in \cG_{i-1}$ we have
    $F_{i-1}(x, Z^{i-1}) \le \eps^{(i-1)}$, i.e., the log-likelihood ratio of
    $Z^{i-1} \mid X\!=\!x$ versus $\wtilde{Z}^{i-1}$ is bounded pointwise. Therefore,
    for any event $E$ over prefixes:
    \begin{align*}
        \Pr{Z^{i-1} \in E,\; (x, Z^{i-1}) \in \cG_{i-1} \mid X=x}
        \le e^{\eps^{(i-1)}} \Pr{\wtilde{Z}^{i-1} \in E}.
    \end{align*}
    Applying this with $E = \cbr{z^{i-1}: (x, z^{i-1}) \notin \cC_i}$:
    \begin{align*}
        \Pr{(x, Z^{i-1}) \notin \cC_i,\; (x, Z^{i-1}) \in \cG_{i-1} \mid X=x}
        \le e^{\eps^{(i-1)}} \Pr{(x, \wtilde{Z}^{i-1}) \notin \cC_i}.
    \end{align*}
    Taking expectation over $X$ and using $X \perp \wtilde{Z}^{i-1}$:
    \begin{align*}
        \Pr{(X, Z^{i-1}) \notin \cC_i,\; (X, Z^{i-1}) \in \cG_{i-1}}
        \le e^{\eps^{(i-1)}} \Pr{(X, \wtilde{Z}^{i-1}) \notin \cC_i}
        \le e^{\eps^{(i-1)}} \gamma_i,
    \end{align*}
    where the last step uses \Cref{lm:tilde_z_C_i}.
\end{proof}

\begin{lemma}\label{lm:z_not_in_chhat}
    For any $j \in [k]$,
    \begin{align*}
        \Pr{(X, Z^j) \notin \cH_j}
        \le
        \sum_{\ell \le j} \Pr{(X, Z^{\ell-1}) \notin \cC_\ell}
        + \sum_{\ell \le j} \hat{\delta}_\ell.
    \end{align*}
\end{lemma}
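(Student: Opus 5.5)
The plan is to decompose the event $(X, Z^j) \notin \cH_j$ according to the \emph{first} index $\ell \le j$ at which the defining conditions of $\cH_j$ fail, and then take a union bound over $\ell$. By definition, $(x, z^j) \in \cH_j$ iff for every $\ell \le j$ we have $(x, z^{\ell-1}) \in \cC_\ell$ and $(x, z^\ell) \in \cE_\ell$. So if $(X, Z^j) \notin \cH_j$, let $\ell$ be the smallest index at which one of these two conditions fails. Exactly one of the following then holds.

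\textbf{Case (a):} $(X, Z^{\ell-1}) \notin \cC_\ell$.

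\textbf{Case (b):} $(X, Z^{\ell-1}) \in \cC_\ell$, but $(X, Z^\ell) \notin \cE_\ell$.

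In case (b), minimality of $\ell$ gives, for all $\ell' < \ell$, that $(X, Z^{\ell'-1}) \in \cC_{\ell'}$ and $(X, Z^{\ell'}) \in \cE_{\ell'}$. Together with $(X, Z^{\ell-1}) \in \cC_\ell$, this is exactly the statement $(X, Z^{\ell-1}) \in \cHhat_\ell$ (recall $\cE_0$ is the whole space). Hence case (b) is contained in the event $\{(X, Z^{\ell-1}) \in \cHhat_\ell \text{ and } (X, Z^\ell) \notin \cE_\ell\}$.

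The union bound then gives
\begin{align*}
    \Pr{(X, Z^j) \notin \cH_j}
    \le \sum_{\ell \le j} \Pr{(X, Z^{\ell-1}) \notin \cC_\ell}
    + \sum_{\ell \le j} \Pr{(X, Z^{\ell-1}) \in \cHhat_\ell,\ (X, Z^\ell) \notin \cE_\ell}.
\end{align*}
The first sum already matches the claimed bound. For each term of the second sum, I condition on the value $(X, Z^{\ell-1}) = (x, z^{\ell-1})$. Under this conditioning, $Z_\ell$ is distributed as $\cA_\ell(x, z^{\ell-1})$, because the real process draws $Z_\ell$ from $\cA_\ell$ given $X$ and the prefix. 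For every $(x, z^{\ell-1}) \in \cHhat_\ell$, part (1) of \Cref{lm:h_hat_to_h} gives
\begin{align*}
    \Pr{(x, (z^{\ell-1}, Z_\ell)) \notin \cE_\ell} \le \hat{\delta}_\ell .
\end{align*}
Averaging over prefixes in $\cHhat_\ell$ bounds the $\ell$-th term by $\Pr{(X, Z^{\ell-1}) \in \cHhat_\ell}\,\hat{\delta}_\ell \le \hat{\delta}_\ell$. Summing over $\ell$ yields the claim.

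No step is a real obstacle. The only point needing care is the bookkeeping in case (b): one must check that minimality of $\ell$ together with the condition $\cC_\ell$ reproduces membership in $\cHhat_\ell$ exactly, including the $\ell = 1$ convention $\cE_0 = \cX^n \times \{\perp\}$. One must also note that conditioning on the real prefix is legitimate, since \Cref{lm:h_hat_to_h} holds pointwise for every prefix in $\cHhat_\ell$. The adaptive dependence of $Z^{\ell-1}$ on $X$ is harmless here for that reason.
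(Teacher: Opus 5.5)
Your proof is correct and follows essentially the same route as the paper. You decompose $\{(X,Z^j)\notin\cH_j\}$ by the first step $\ell$ at which either the $\cC_\ell$ condition fails (type-1) or, on a prefix in $\cHhat_\ell$, the $\cE_\ell$ condition fails (type-2); you bound the type-2 event by $\hat{\delta}_\ell$ via part (1) of \Cref{lm:h_hat_to_h} after conditioning on the real prefix, and you union bound over $\ell$. Your explicit check that minimality together with $\cC_\ell$ reproduces membership in $\cHhat_\ell$ (including the $\cE_0$ convention) fills in a step the paper leaves implicit.
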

\begin{proof}
    Consider the sequential process generating $Z_1, \ldots, Z_j$.
    At step $\ell$: first check $(X, Z^{\ell-1}) \in \cC_\ell$; if not, we call this
    a \emph{type-1 failure}.
    Otherwise, if no prior failure occurred, $(X, Z^{\ell-1}) \in \cHhat_\ell$,
    so by \Cref{lm:h_hat_to_h},
    $\Pr{(X, Z^\ell) \notin \cE_\ell,\; (X, Z^{\ell-1}) \in \cHhat_\ell} \le \hat{\delta}_\ell$;
    we call this a \emph{type-2 failure}.
    The event $(X, Z^j) \notin \cH_j$ implies a type-1 or type-2 failure at some step
    $\ell \le j$; a union bound gives the result.
\end{proof}

Combined with \Cref{lm:h_to_g,lm:h_to_g_tilde,lm:z_C_i,lm:tilde_z_C_i,lm:h_hat_to_h}, this gives us the following.
\begin{lemma}\label{lm:z_in_G_i}
    $\Pr{(X, Z^k) \notin \cG_k}
    \le \delta^{(k)}
    = k\delta' + \sum_{\ell \le k} \hat{\delta}_\ell
    + \sum_{\ell \le k} e^{\eps^{(\ell-1)}} \gamma_\ell$.
\end{lemma}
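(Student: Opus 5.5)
The plan is to prove the bound by induction on $j$, establishing the stronger family of statements
\begin{align*}
    \Pr{(X, Z^j) \notin \cH_j} \le (j-1)\delta' + \sum_{\ell \le j} \hat{\delta}_\ell + \sum_{\ell \le j} e^{\eps^{(\ell-1)}} \gamma_\ell
    \quad\text{and}\quad
    \Pr{(X, Z^j) \notin \cG_j} \le \delta^{(j)}
\end{align*}
simultaneously. The final statement is then the case $j = k$ of the second bound. We start from the decomposition
\begin{align*}
    \Pr{(X, Z^j) \notin \cG_j} \le \Pr{(X, Z^j) \notin \cG_j,\ (X, Z^j) \in \cH_j} + \Pr{(X, Z^j) \notin \cH_j}.
\end{align*}
By \Cref{lm:h_to_g}, the first term is at most $\delta'$. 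So it suffices to show
\begin{align*}
    \Pr{(X, Z^j) \notin \cH_j} \le (j-1)\delta' + \sum_{\ell \le j} \hat{\delta}_\ell + \sum_{\ell \le j} e^{\eps^{(\ell-1)}} \gamma_\ell,
\end{align*}
which, added to $\delta'$, gives exactly $\delta^{(j)}$.

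To bound $\Pr{(X, Z^j) \notin \cH_j}$ we use \Cref{lm:z_not_in_chhat}. It reduces the problem to bounding $\sum_{\ell \le j} \Pr{(X, Z^{\ell-1}) \notin \cC_\ell}$ plus $\sum_{\ell\le j}\hat{\delta}_\ell$. The crude sum of type-1 failure probabilities is wasteful, though: each term would need its own $\cG_{\ell-1}$ failure bound, and summing $\delta^{(\ell-1)}$ over $\ell$ would give a quadratic accumulation of $\delta'$. I would therefore redo the sequential-failure argument of \Cref{lm:z_not_in_chhat} with a \emph{first-failure} decomposition. Call step $\ell$ the first failure if $(X,Z^{\ell-1}) \in \cH_{\ell-1}$ and either (a) $(X,Z^{\ell-1})\notin\cC_\ell$ or (b) $(X,Z^{\ell-1})\in\cHhat_\ell$ but $(X,Z^\ell)\notin\cE_\ell$. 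Case (b) contributes at most $\hat{\delta}_\ell$ by \Cref{lm:h_hat_to_h}.

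For case (a) we split further on whether $(X,Z^{\ell-1})\in\cG_{\ell-1}$.
\begin{itemize}
    \item If it is, \Cref{lm:z_C_i} bounds the probability by $e^{\eps^{(\ell-1)}}\gamma_\ell$.
    \item If it is not, then we are on the event $(X,Z^{\ell-1})\in\cH_{\ell-1}\setminus\cG_{\ell-1}$, which has probability at most $\delta'$ by \Cref{lm:h_to_g}; this case is vacuous for $\ell=1$.
\end{itemize}
Since the first-failure events are disjoint, summing over $\ell\le j$ gives at most $(j-1)\delta'+\sum_{\ell \le j}\hat{\delta}_\ell+\sum_{\ell \le j} e^{\eps^{(\ell-1)}}\gamma_\ell$. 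Adding the final $\delta'$ yields $j\delta'$, matching $\delta^{(j)}$.

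The main obstacle I expect is the bookkeeping. Each $\delta'$ must be charged at most once per index. This requires that the events $\{(X,Z^{\ell-1})\in\cH_{\ell-1}\setminus\cG_{\ell-1}\}$ be used only on the first-failure event, rather than paying a full $\delta^{(\ell-1)}$ for each $\ell$. I also need to confirm that \Cref{lm:h_to_g} applies as stated for the prefix index $\ell-1$. Finally, I need the nesting $\cH_\ell \subseteq \cH_{\ell-1}$ (in the prefix sense), so that failing $\cH_j$ implies a well-defined first failing index.
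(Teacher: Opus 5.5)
Your argument is correct and is essentially the paper's proof. The paper also uses a first-failure (stopping-time) decomposition that charges each step at most $e^{\eps^{(\ell-1)}}\gamma_\ell + \hat{\delta}_\ell + \delta'$ via \Cref{lm:z_C_i}, \Cref{lm:h_hat_to_h} and \Cref{lm:h_to_g}; it treats the Azuma event $\{(X,Z^j)\in\cH_j\setminus\cG_j\}$ as a third stopping condition, whereas you fold the $\cG_{\ell-1}$ failure into the PG-failure case at step $\ell$ and add one final $\delta'$, which gives the same total (and your induction framing is unnecessary, since the first-failure argument never uses the inductive hypothesis).
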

\begin{proof}
    Define $\tau$ as the first step $j \in [k]$ at which one of the following
    three events occurs (checked in order):
    \begin{enumerate}
        \item \emph{Type-1 (PG failure):} $(X, Z^{j-1}) \notin \cC_j$.
        \item \emph{Type-2 (log-ratio failure):} $(X, Z^j) \notin \cE_j$.
        \item \emph{Type-3 (Azuma failure):} $(X, Z^j) \notin \cG_j$ but $(X, Z^j) \in \cH_j$.
    \end{enumerate}
    If $\tau > k$, then $(X, Z^k) \in \cG_k$.
    Therefore $\Pr{(X, Z^k) \notin \cG_k} \le \sum_{j=1}^k \Pr{\tau = j}$.

    When $\tau = j$, steps $1, \ldots, j-1$ all passed, so
    $(X, Z^{j-1}) \in \cG_{j-1} \cap \cH_{j-1}$.
    We bound each failure type:
    \begin{itemize}
        \item Type-1 at $j$:
            $\Pr{(X, Z^{j-1}) \notin \cC_j,\; (X, Z^{j-1}) \in \cG_{j-1}}
            \le e^{\eps^{(j-1)}} \gamma_j$
            by \Cref{lm:z_C_i}.
        \item Type-2 at $j$:
            Since no type-1 failure occurred, $(X, Z^{j-1}) \in \cHhat_j$, so
            $\Pr{(X, Z^j) \notin \cE_j,\; (X, Z^{j-1}) \in \cHhat_j}
            \le \hat{\delta}_j$
            by \Cref{lm:h_hat_to_h}.
        \item Type-3 at $j$:
            Since neither type-1 nor type-2 occurred at step $j$,
            we have $(X, Z^{j-1}) \in \cC_j$ and $(X, Z^j) \in \cE_j$;
            combined with $(X, Z^{j-1}) \in \cH_{j-1}$, this gives $(X, Z^j) \in \cH_j$.
            Therefore
            $\Pr{(X, Z^j) \notin \cG_j,\; (X, Z^j) \in \cH_j}
            \le \delta'$
            by \Cref{lm:h_to_g}.
    \end{itemize}
    Summing over $j$:
    $\Pr{(X, Z^k) \notin \cG_k}
    \le \sum_{j=1}^k \rbr{e^{\eps^{(j-1)}} \gamma_j + \hat{\delta}_j + \delta'}
    = \delta^{(k)}$.
\end{proof}

Similarly, the following holds for the oracle transcript.
\begin{lemma}\label{lm:tilde_z_in_G_i}
    $\Pr{(X, \wtilde{Z}^k) \notin \cG'_k}
    \le k\delta' + \sum_{\ell \le k} \hat{\delta}_\ell
    + \sum_{\ell \le k} \gamma_\ell
    \le \delta^{(k)}$.
\end{lemma}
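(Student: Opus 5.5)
I will prove this by copying the stopping-time argument of \Cref{lm:z_in_G_i}, applied to the oracle transcript $\wtilde{Z}$ with $\cG'_j$ in place of $\cG_j$. The oracle case is simpler because no change of measure is needed to control the PG failures. Define $\tau$ as the first step $j \in [k]$ at which one of three events occurs, checked in order. A \emph{type-1} failure is $(X, \wtilde{Z}^{j-1}) \notin \cC_j$. A \emph{type-2} failure is $(X, \wtilde{Z}^j) \notin \cE_j$. A \emph{type-3} failure is $(X, \wtilde{Z}^j) \notin \cG'_j$ while $(X, \wtilde{Z}^j) \in \cH_j$. If $\tau > k$ then $(X, \wtilde{Z}^k) \in \cG'_k$, so $\Pr{(X, \wtilde{Z}^k) \notin \cG'_k} \le \sum_{j} \Pr{\tau = j}$.

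I will bound each type of failure at step $j$ separately.
\begin{enumerate}
\item For a type-1 failure, \Cref{lm:tilde_z_C_i} gives $\Pr{(X, \wtilde{Z}^{j-1}) \notin \cC_j} \le \gamma_j$ directly. This uses only the independence of $X$ and $\wtilde{Z}^{j-1}$, so no factor $e^{\eps^{(j-1)}}$ appears, unlike in \Cref{lm:z_C_i}.
\item For a type-2 failure, steps $1,\dots,j-1$ passed and step $j$ has no type-1 failure, so $(X, \wtilde{Z}^{j-1}) \in \cHhat_j$. Part (2) of \Cref{lm:h_hat_to_h} then holds pointwise for each such prefix. Averaging over prefixes, with $\wtilde{Z}_j \sim \cW_j(\wtilde{Z}^{j-1})$ conditionally on $(X, \wtilde{Z}^{j-1})$, gives probability at most $\hat{\delta}_j$.
\item For a type-3 failure, the absence of type-1 and type-2 failures together with $(X,\wtilde{Z}^{j-1}) \in \cH_{j-1}$ gives $(X, \wtilde{Z}^j) \in \cH_j$. \Cref{lm:h_to_g_tilde} then bounds the probability by $\delta'$.
\end{enumerate}

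Summing over $j$ gives $k\delta' + \sum_\ell \hat{\delta}_\ell + \sum_\ell \gamma_\ell$. Since $\eps^{(\ell-1)} \ge 0$, we have $\gamma_\ell \le e^{\eps^{(\ell-1)}}\gamma_\ell$, so this total is at most $\delta^{(k)}$.

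The only delicate point is the conditional independence in the type-2 step. Given $(X, \wtilde{Z}^{j-1}) = (x, z^{j-1})$, the next output $\wtilde{Z}_j$ must be distributed as $\cW_j(z^{j-1})$ and must not depend on $x$. This holds because the oracle transcript is generated independently of $X$. Once it is checked, the rest of the argument follows directly.
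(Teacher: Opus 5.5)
Your proposal is correct and matches the paper's proof. The paper likewise reruns the stopping-time argument of \Cref{lm:z_in_G_i} on $\wtilde{Z}$ with $\cG'_j$ and \Cref{lm:h_to_g_tilde}, bounds type-1 failures by $\gamma_j$ directly via \Cref{lm:tilde_z_C_i} using $X \perp \wtilde{Z}^{j-1}$, and concludes with $\gamma_j \le e^{\eps^{(j-1)}}\gamma_j$. Your explicit check that $\wtilde{Z}_j$ given $(X,\wtilde{Z}^{j-1})=(x,z^{j-1})$ is distributed as $\cW_j(z^{j-1})$ is the one detail the paper leaves implicit, and you handle it correctly.
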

\begin{proof}
    The same stopping-time argument applies, with $Z$ replaced by $\wtilde{Z}$,
    $\cG_j$ replaced by $\cG'_j$, and \Cref{lm:h_to_g} replaced by \Cref{lm:h_to_g_tilde}.
    The only difference is in the type-1 bound:
    since $X \perp \wtilde{Z}^{j-1}$,
    $\Pr{(X, \wtilde{Z}^{j-1}) \notin \cC_j} \le \gamma_j$ directly by \Cref{lm:tilde_z_C_i}
    (no need to go through $\cG_{j-1}$).
    Summing gives
    $\sum_{j=1}^k (\gamma_j + \hat{\delta}_j + \delta')
    \le \delta^{(k)}$,
    where the inequality uses $\gamma_j \le e^{\eps^{(j-1)}} \gamma_j$.
\end{proof}

We are now able to prove \Cref{lm:pg_compose_het}.
\begin{proof}[Proof of \Cref{lm:pg_compose_het}]
    By \Cref{lm:z_in_G_i} and \Cref{lm:tilde_z_in_G_i},
    \begin{align*}
        \Pr{(X, Z^k) \notin \cG_k},\;
        \Pr{(X, \wtilde{Z}^k) \notin \cG'_k}
        \le \delta^{(k)}.
    \end{align*}
    This is exactly \Cref{eq:to_show_lem_gen_pg}.
    By \Cref{lm:G_G_prime_implies_joint},
    $(X, Z^k) \approx_{\eps^{(k)}, \delta^{(k)}} (X, \wtilde{Z}^k)$.
    By \Cref{lm:joint_implies_cond},
    the adaptive composition $\cAcomp$ is $(\gamma^*, \eps^*, \delta^*)$-PG with
    $\eps^* = 3\eps^{(k)}$ and $\delta^* = \gamma^* = 5\sqrt{\delta^{(k)}/\min\cbr{\eps^{(k)}, 1}}$.
\end{proof}

\subsubsection{Proof of \Cref{lm:pg_compose}}
\label{sec:proof_compose_easier}

\begin{proof}
    We invoke \Cref{lm:pg_compose_het} and bound the values there.
    We first note that the assumption $\frac{\delta_i}{\eps_i^2} \le c_1$ also implies the following bounds:
    \begin{align}
        \label{fi}
        \frac{\delta_i}{\eps_i} &\le O(\eps_i), \\
        \label{seceq}
        \frac{\delta_i^2}{\eps_i^2} &\le O(\delta_i), \\
        \label{th}
        \delta_i &\le O(\eps_i^2).
    \end{align}
    Observe that, since $\eps_i \le c_1$ for small enough $c_1$,
    we have
    $e^{-\eps_i} \le 1 -\frac{\eps_i}{2}$ which implies
    $1-e^{-\eps_i} \ge \frac{\eps_i}{2}$ and therefore
    \begin{align*}
        \hat{\delta}_i = 
        \frac{2\delta_i}{1-e^{-\eps_i}}
        \le O(\frac{\delta_i}{\eps_i}).
    \end{align*}
    Plugging into 
    \eqref{fi}, this further implies
    \begin{align}
        \label{hatdelta}
        \hat{\delta_i} \le O(\eps_i).
    \end{align}
    Next, we observe that
    since $\eps_i \le c_1$, we have
    $\delta_i(2e^{\eps_i} + 1) \le O(\delta_i)$.
    Additionally, since $e^x - 1 \ge x$ for all $x$,
    we have
    $\frac{2e^{2\eps_i}}{e^{\eps_i} - 1} + 1
    \le
    O(\frac{1}{\eps_i})
    .
    $
    It follows that
    \begin{align*}
        \psi_i(\delta_i) \le O(\delta_i + \frac{\delta_i^2}{\eps_i^2})
        .
    \end{align*}
    Plugging in \eqref{seceq} and \eqref{th}, we obtain:
    \begin{align} \label{psi}
    \psi_i(\delta_i) \le O(\delta_i) \le O(\eps_i^2).
    \end{align}
    
    We proceed to bound $\eps^*$. 
    We first note that for small values of $x$, we can approximate $1 - x \approx e^{-x}$.
    More specifically, for $x > 0$ smaller than some constant we have
    $1-x \ge e^{-2x}$ and $e^{x} -1 \le 2x$.
    Since $\hat{\delta}_i = O(\epsilon_i)$ and $\epsilon_i \le c_1$ where $c_1$ is an arbitrary small constant, this implies
    \begin{align*}
        \frac{e^{2 \eps_i}}{1 - \hat{\delta}_i} - 1 \le e^{2 \eps_i + 2\hat{\delta}_i} - 1 \le 4\rbr{\eps_i + \hat{\delta_i}} = O(\eps_i).
    \end{align*}

    Combined with \Cref{psi}, this yields:
    \begin{align*}
        \Big(2 \eps_i \big( e^{2 \eps_i} / (1-\hat{\delta}_i) - 1 \big) + \psi_i(\delta_i) \Big) = O(\eps_i^2).
    \end{align*}
    Plugging in the definition of $\eps^{*}$ from \Cref{lm:pg_compose_het} we obtain
    \begin{align*}
        \eps^{*} 
        = 3\eps^{(k)}
        &= 
        6\sqrt{2\ln(1/\delta') (\sum_i \eps_i^2)}
        + 
        3
        \sum_{i}
        \rbr{
        \psi_i(\delta_i)
        +
        2\eps_i\rbr{\frac{e^{2\eps_i}}{1-\hat{\delta}_i} - 1}
        }
        \\&\le 
        O\rbr{\sqrt{\ln(1/\delta') \sum_{i}\eps_i^2} + \sum_{i} \eps_i^2}
    \end{align*}
    as claimed.
    
    We proceed to bound $\delta^*$ and $\gamma^*$.
    Since $\gamma_i = \delta_i$ in \Cref{lm:pg_compose}, the formula for $\delta^{(k)}$
    from \Cref{lm:pg_compose_het} gives
    \begin{align*}
        \delta^{(k)}
        = k\delta' + \sum_{i} \hat{\delta}_i + \sum_{i} e^{\eps^{(i-1)}} \delta_i.
    \end{align*}
    Since $\eps^* \le c_3$ and $\eps^{(i-1)} \le \eps^{(k)} = \eps^*/3$,
    we have $e^{\eps^{(i-1)}} = O(1)$.
    Together with $\hat{\delta}_i \le O(\delta_i/\eps_i)$, this gives
    \begin{align*}
        \delta^{(k)}
        \le O\rbr{
            k\delta' + \sum_{i} \delta_i/\eps_i
        }.
    \end{align*}
    Since $\eps^{(k)} \le \eps^{*} \le c_3$,
    we have
    $\min\cbr{\eps^{(k)}, 1} \in \Theta(\eps^*)$.
    Plugging the definition of $\gamma^*$ we obtain
    \begin{align*}
        \gamma^* = O\rbr{ 
            \sqrt{\frac{\delta^{(k)}}{\eps^{(k)}}}
        }
        \le
        O\rbr{
            \sqrt{\frac{k\delta' + \sum_{i} \delta_i/\eps_i}{\eps^*}}
        }
        .
    \end{align*}
    Similarly,
    \begin{align*}
        \delta^* = \gamma^*
        \le
        O\rbr{
            \sqrt{\frac{k\delta' + \sum_{i} \delta_i/\eps_i}{\eps^*}}
        }
        .
    \end{align*}
    Therefore, the lemma follows.
    
\end{proof}

\section{Lower Bound for Adaptive Composition}
\label{sec:adaptive_lb}

We demonstrate that while nonadaptive composition of replicable algorithms can be achieved with nearly linear sample complexity, extending this guarantee to an adaptive adversary necessarily incurs a quadratic cost in the number of rounds.

\begin{definition}[Adaptive Composition against a Random Adversary]\label{def:adaptive_game}
Let $\cT$ be a domain of statistical problems.
An adaptive game of $k$ rounds between a random adversary and an algorithm $\cA$ proceeds as follows.
The adversary possesses internal random bits $\wtilde{r}$.
For each round $i \in [k]$:
\begin{enumerate}
    \item The adversary announces a statistical problem $\cT_i = \operatorname{Adv}(y_1, \dots, y_{i-1}; \wtilde{r})$,
        potentially depending on previous outputs.
        The adversary also selects a distribution $\cD_i$ with $(\cD_i, G_{\cD_i}) \in \cT_i$;
        this choice is not revealed to $\cA$.
    \item The algorithm $\cA$, using fixed internal coins $r$, observes $\cT_i$,
        draws $m$ independent samples $S_i \sim \cD_i^m$,
        and produces an output $y_i = \cA(\cT_i, S_1, \dots, S_i, y_1, \dots, y_{i-1}; r)$.
        A valid output satisfies $y_i \in G_{\cD_i}$.
\end{enumerate}
The algorithm $\cA$ is $\rho$-replicable in the adaptive setting if, upon resampling all datasets $S = (S_1, \dots, S_k)$ and $S' = (S'_1, \dots, S'_k)$ while keeping the coins $r$ and $\wtilde{r}$ fixed, the sequence of outputs matches with high probability:
\begin{align*}
    \Pru{r, \wtilde{r}, S, S'}{(y_1, \dots, y_k) = (y'_1, \dots, y'_k)} \ge 1 - \rho.
\end{align*}
\end{definition}

To establish the lower bound, we rely on the following lemma concerning threshold estimation, demonstrating that forcing high accuracy over a sharp decision boundary induces an unavoidable replicability error dictated by the sample noise floor.

The lemma is analogous to Lemma~7.2 in \cite{impagliazzo2022reproducibility}, but differs in two important respects tailored to our adaptive setting.
First, the lemma is stated for a \emph{fixed} choice of internal coins $r$: for any fixed $r$, either the algorithm's correctness or its replicability (with that same $r$) must take a hit.
Second, the hard instance $\theta$ is drawn from a \emph{fixed known distribution} $\cP$ rather than being an arbitrary worst-case parameter.

These two features work together in the proof of \Cref{thm:adaptive_lb}.
Because $r$ is fixed, the algorithm's behavior on each round is a deterministic function of the samples alone, which ensures that the errors across rounds are not correlated with each other---without fixing $r$, the algorithm could in principle correlate its randomness across rounds to avoid simultaneous failures.
Because the hard instance is drawn fresh from the known distribution $\cP$ at each round (independently of history), the algorithm cannot ``learn'' anything useful from earlier rounds: the second problem instance is statistically no easier than the first, even after observing the samples and outcome of round one.
Together, these properties allow us to re-use the single-round hardness argument across all $k$ rounds of the adaptive game.

\begin{lemma}[Replicability Lower Bound for Threshold Estimation] \label{lem:threshold_estimation}
    Let $m \in \mathbb{N}$ and $\tau \in (0, 1/4)$.
    Let $\Theta = [1/2 - \tau, 1/2 + \tau]$ and define an adversary distribution $\cP$ over $\Theta$ such that $\theta = 1/2 - \tau$ with probability $1/3$, $\theta = 1/2 + \tau$ with probability $1/3$, and $\theta$ is drawn uniformly from $(1/2 - \tau, 1/2 + \tau)$ with probability $1/3$.

    Define the set of valid outputs for each $\theta \in \Theta$ as
    \begin{align*}
        G_\theta = \begin{cases} \{-\} & \theta = 1/2 - \tau, \\ \{+\} & \theta = 1/2 + \tau, \\ \{+, -\} & \theta \in (1/2-\tau, 1/2+\tau). \end{cases}
    \end{align*}
    Let $\cA$ be any randomized algorithm mapping $m$ samples $S \sim \operatorname{Bernoulli}(\theta)^m$ and internal coins $r$ to a sign in $\{+, -\}$.
    For fixed internal coins $r$, if $\cA$ satisfies
    \begin{align*}
        \Exu{\theta \sim \cP}{\Pru{S \sim \theta^m}{\cA(S; r) \in G_\theta}} \ge 1 - \frac{1}{16},
    \end{align*}
    then, with $r$ still fixed,
    \begin{align}
        \Exu{\theta \sim \cP}{\Pru{S, S' \sim \theta^m}{\cA(S; r) \neq \cA(S'; r)}} \ge \Omega\rbr{\frac{1}{\tau\sqrt{m}}}.
        \label{eq:lower_bound_lme_threshold}
    \end{align}
\end{lemma}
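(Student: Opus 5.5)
With $r$ fixed, $\cA(\cdot;r)$ is a deterministic map $\{0,1\}^m\to\{+,-\}$. Define $p(\theta)=\Pru{S\sim\theta^m}{\cA(S;r)=+}$. Since $p$ is a polynomial in $\theta$, it is continuous and differentiable. The plan is to work entirely with $p$. For a fixed $\theta$, two independent samples disagree with probability $2p(\theta)(1-p(\theta))$. The goal is therefore to show that $\Exu{\theta\sim\cP}{p(\theta)(1-p(\theta))}$ is $\Omega(1/(\tau\sqrt m))$. Only the uniform third of $\cP$ will be used for this; the two atoms serve only to pin down the endpoint values of $p$.

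\textbf{Step 1 (endpoints).} The correctness hypothesis gives $\frac13(1-p(1/2+\tau))+\frac13 p(1/2-\tau)\le 1/16$. Hence $p(1/2+\tau)\ge 13/16$ and $p(1/2-\tau)\le 3/16$. In particular $p$ increases by at least $5/8$ across $\Theta$.

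\textbf{Step 2 (derivative bound).} For Bernoulli samples,
\begin{align*}
p'(\theta)=\frac{1}{\theta(1-\theta)}\,\mathrm{Cov}_\theta\bigl(\ind{\cA(S;r)=+},\,\textstyle\sum_j S_j\bigr).
\end{align*}
By Cauchy--Schwarz, $|p'(\theta)|\le \sqrt{p(1-p)}\,\sqrt{m\theta(1-\theta)}/(\theta(1-\theta))=O(\sqrt m\sqrt{p(1-p)})$, using $\theta\in[1/4,3/4]$ since $\tau<1/4$. This is the key step. It says $p$ can move quickly only where $p(1-p)$ is non-negligible, i.e.\ only where replicability is poor.

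\textbf{Step 3 (integrate).} Consider the set $I=\{\theta\in\Theta: p(\theta)\in[1/4,3/4]\}$. By Step 1 and continuity, $p$ must climb from at most $3/16$ to at least $13/16$, so it crosses the interval $[1/4,3/4]$, a rise of at least $1/2$, entirely while $\theta\in I$. On $I$, Step 2 gives $|p'|=O(\sqrt m)$. Therefore $\frac12\le\int_I|p'|\le O(\sqrt m)\,|I|$, so the Lebesgue measure of $I$ is $\Omega(1/\sqrt m)$. On $I$ we also have $p(1-p)\ge 3/16$. The uniform component of $\cP$ has density $\frac{1}{3}\cdot\frac{1}{2\tau}$ on $\Theta$, so
\begin{align*}
\Exu{\theta\sim\cP}{2p(1-p)}\ge \frac{1}{6\tau}\cdot\frac{3}{8}\,|I|=\Omega\!\left(\frac{1}{\tau\sqrt m}\right),
\end{align*}
which is \Cref{eq:lower_bound_lme_threshold}.

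If $|I|$ would exceed $2\tau$, the bound is trivially capped, but then $1/(\tau\sqrt m)$ is already $O(1)$ and the claim still holds after adjusting constants. The main obstacle is Step 2, specifically the clean covariance/Fisher-information bound on $p'$. An alternative route to the same conclusion is the Hellinger/TV bound $d_{TV}(\theta^m,\theta'^m)=O(\sqrt m|\theta-\theta'|)$, which shows $p$ is $O(\sqrt m)$-Lipschitz and suffices for the same integration argument.
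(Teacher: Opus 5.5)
Your proof is correct and follows essentially the same route as the paper. You pin down $p(1/2\pm\tau)$ from the correctness hypothesis and bound $|p'(\theta)| = O(\sqrt m)$ through the binomial polynomial. Your covariance/Cauchy--Schwarz bound is a sharper form of the paper's mean-absolute-deviation bound, though you only use its crude version. You then find a set of length $\Omega(1/\sqrt m)$ inside $\Theta$ where $p$ stays bounded away from $0$ and $1$, weighted by the uniform component's density $1/(6\tau)$.

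Your crossing argument (made precise by taking the last point with $p\le 1/4$ before $p$ first reaches $3/4$) is a slightly cleaner version of the paper's ``interval around the IVT point $q$''. It keeps the interval inside $\Theta$ automatically, which also makes your closing remark about capping $|I|$ at $2\tau$ unnecessary.
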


\begin{proof}
    Fix the internal coins $r$.
    Let $p_+(\theta) = \Pru{S \sim \theta^m}{\cA(S; r) = +}$ denote the probability that $\cA$ outputs $+$, with $r$ held fixed.

    Since $G_\theta = \{+,-\}$ for interior $\theta$, the hypothesis gives expected failure probability at most $1/16$:
    \begin{align*}
        \frac{1}{3}p_+(1/2-\tau) + \frac{1}{3}\rbr{1 - p_+(1/2+\tau)} \le \frac{1}{16}.
    \end{align*}
    Since both terms are non-negative, each is individually at most $1/16$, yielding
    $p_+(1/2 - \tau) \le 3/16$ and $p_+(1/2 + \tau) \ge 13/16$.

    For a fixed target $\theta$, given that exactly $j$ out of $m$ samples are ones, the sequence is uniformly distributed among all Boolean vectors of Hamming weight $j$.
    Let $a_j \in [0, 1]$ denote the fraction of sample sequences of weight $j$ for which $\cA(\cdot; r)$ outputs $+$.
    We express the probability of outputting $+$ as a polynomial:
    \begin{align*}
        p_+(\theta) = \sum_{j=0}^m a_j \binom{m}{j} \theta^j (1 - \theta)^{m - j}.
    \end{align*}
    By the Intermediate Value Theorem, since $p_+(1/2 - \tau) < 1/2$ and $p_+(1/2 + \tau) > 1/2$, there is a point $q \in (1/2 - \tau, 1/2 + \tau)$ where $p_+(q) = 1/2$.
    Taking the derivative yields:
    \begin{align*}
        p_+'(\theta) = \sum_{j=0}^m a_j \binom{m}{j} \theta^j (1 - \theta)^{m-j} \frac{j - m\theta}{\theta(1 - \theta)}.
    \end{align*}
    For $\tau < 1/4$, we have $\theta \in (1/4, 3/4)$, meaning the denominator $\theta(1 - \theta) > 3/16$.
    Since $0 \le a_j \le 1$, the absolute derivative is bounded by $6\, \Ex{\abs{X - \Ex{X}}}$ where $X \sim \operatorname{Bin}(m, \theta)$.
    As the expected absolute deviation is bounded by the standard deviation $\sqrt{m\theta(1 - \theta)}$, we have $\abs{p_+'(\theta)} = O(\sqrt{m})$.

    Because $\abs{p_+'(\theta)} = O(\sqrt{m})$ and $p_+(q) = 1/2$, there exists an interval $I$ around $q$ of length $\Omega(1/\sqrt{m})$ wherein $1/3 < p_+(\theta) < 2/3$.
    The adversary distribution $\cP$ places $1/3$ uniform probability mass on $(1/2 - \tau, 1/2 + \tau)$.
    The probability that a randomly drawn $\theta$ lands in $I$ is at least $\frac{1}{3} \cdot \frac{\abs{I}}{2\tau} = \Omega\rbr{\frac{1}{\tau\sqrt{m}}}$.
    For any $\theta \in I$, the disagreement probability $\Pru{S, S' \sim \theta^m}{\cA(S; r) \neq \cA(S'; r)} = 2p_+(\theta)(1-p_+(\theta)) > 4/9$ (with $r$ fixed), so
    \begin{align*}
        \Exu{\theta \sim \cP}{\Pru{S, S' \sim \theta^m}{\cA(S; r) \neq \cA(S'; r)}}
        \ge \frac{4}{9} \cdot \Pru{\theta \sim \cP}{\theta \in I}
        = \Omega\rbr{\frac{1}{\tau\sqrt{m}}},
    \end{align*}
    as required.
\end{proof}

Assuming \Cref{lem:threshold_estimation}, we establish the main lower bound for adaptive composition.

\begin{theorem}[Adaptive Composition Lower Bound] \label{thm:adaptive_lb}
    Consider the adaptive game where at each round $i \in [k]$, the adversary
    presents the threshold estimation problem $\cT$ (with parameter $\tau$ as in \Cref{lem:threshold_estimation})
    and independently draws $\theta_i \sim \cP$ without revealing it.
    Any algorithm $\cA$ that is $\rho$-replicable in this game and answers all $k$ rounds
    correctly with probability $\ge 1 - \beta$, for a sufficiently small constant $\beta$,
    requires sample complexity $m = \Omega(nk^2)$, where $n = \Theta(1/\tau^2)$ is the
    single-round sample complexity of threshold estimation.
\end{theorem}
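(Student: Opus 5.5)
Fix the adversary's bits (the adversary is oblivious: $\theta_1,\dots,\theta_k$ are i.i.d.\ from $\cP$) and fix the algorithm's coins $r$. Averaging over $r$ at the end, it suffices to prove the following \emph{fixed-$r$ claim}. Suppose that, for a fixed $r$, the algorithm answers all $k$ rounds correctly with probability at least $1-2\beta$ over $(\theta,S)$, and that its disagreement probability over $(\theta,S,S')$ is at most $2\rho$. Then
\[
\rho = \Omega\bigl(\min\{1,\, k/(\tau\sqrt m)\}\bigr).
\]
By Markov's inequality, at least half of the $r$'s satisfy both conditions simultaneously, up to constants. With $\rho$ a small constant, the claim gives $k/(\tau\sqrt m)=O(1)$, i.e.\ $m=\Omega(k^2/\tau^2)=\Omega(nk^2)$, where $n=\Theta(1/\tau^2)$ is the single-round complexity.

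\textbf{Proving the fixed-$r$ claim.} With $r$ fixed, round $i$'s output is a deterministic function $y_i=g_i(h_{i-1},S_i)$, where $h_{i-1}=(S_1,\dots,S_{i-1})$ is the history. The key point is that $\theta_i$ is independent of $h_{i-1}$. Hence, conditional on any history $h$, the map $S_i\mapsto g_i(h,S_i)$ is a fixed algorithm facing a fresh $\theta_i\sim\cP$, and \Cref{lem:threshold_estimation} applies to it.

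Call a history $h$ \emph{good} for round $i$ if this conditional algorithm errs with probability at most $1/16$ over $(\theta_i,S_i)$. Let $e_i$ be the error probability of round $i$, averaged over histories. Since the probability of some error is at most $2\beta$ and is at least $e_i$, each $e_i\le 2\beta$, so a good history occurs with probability at least $1-32\beta$. On good histories the lemma gives
\[
\Pr_{\theta_i,S_i,S_i''}\bigl[g_i(h,S_i)\ne g_i(h,S_i'')\bigr]\ \ge\ p:=c/(\tau\sqrt m).
\]

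\textbf{Phantom run.} Compare run~1 $(S_1,\dots)$ and run~2 $(S'_1,\dots)$. Let $A_i$ be the event that the two runs agree on outputs $1,\dots,i-1$. On round $i$ the histories differ, so I introduce a phantom sample $S''_i$ drawn fresh and fed into run~2's history: $y''_i=g_i(h'_{i-1},S''_i)$. Then $(h'_{i-1},S''_i)$ has the same joint law as $(h'_{i-1},S'_i)$, and both are independent of run~1 given $\theta_i$. On $A_i$, whenever $y'_i\ne y''_i$, at least one of $y'_i$, $y''_i$ differs from $y_i$. The two events $\{y_i\ne y'_i\}$ and $\{y_i\ne y''_i\}$ have equal probability by exchangeability, even after intersecting with $A_i$, because $A_i$ depends only on rounds before $i$. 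Therefore
\[
\Pr[A_i,\ y_i\ne y'_i]\ \ge\ \tfrac12\,\Pr[A_i,\ h'_{i-1}\text{ good},\ y'_i\ne y''_i].
\]
Conditioning on the past and using independence of $(\theta_i,S'_i,S''_i)$ from it, the right-hand side is at least $\tfrac12 p\,\Pr[A_i,\ h'_{i-1}\text{ good}]$, which is at least $\tfrac12 p\,(\Pr[A_i]-32\beta)$.

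\textbf{Summing over rounds.} The disagreement events $\{A_i,\ y_i\ne y'_i\}$ are disjoint across $i$, since each marks the first round of disagreement. Hence
\[
2\rho\ \ge\ \sum_i \Pr[A_i,\ y_i\ne y'_i].
\]
Every $A_i$ contains the event of full agreement, which has probability at least $1-2\rho$. This gives $2\rho\ge \tfrac12 kp\,(1-2\rho-32\beta)$. For small constants $\rho,\beta$, this forces $kp=O(1)$, which is the claim.

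\textbf{Main obstacle.} The delicate step is the phantom-run exchangeability inequality. One must check carefully that $\{y_i\ne y''_i\}\cap A_i$ has the same probability as $\{y_i\ne y'_i\}\cap A_i$. This holds because swapping $S'_i$ and $S''_i$ preserves the joint law and leaves $A_i$ unchanged. A second point to handle carefully is that the "good history" restriction must be applied to run~2's history, not run~1's, since the lemma is invoked on the pair $(y'_i,y''_i)$, which shares run~2's history.
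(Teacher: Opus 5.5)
Your proof is correct and follows essentially the same route as the paper. You fix $r$, use that $\theta_i$ is independent of the history so the single-round lemma applies to $S_i \mapsto g_i(h,S_i)$, feed a phantom sample into run~2's history with the triangle inequality, and sum the first-disagreement probabilities.

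The only real difference is how accuracy is handled. You use Markov to select an $r$ with both $\beta(r)$ and $\rho(r)$ small, then bound bad histories in each round by $16\beta(r)$. The paper instead splits on whether $\mathbb{E}[Y]\ge k/2$ and rules out the inaccurate case by a separate compounding argument showing $\beta(r)=\Omega(1)$. Your version is slightly shorter and reaches the same $kp=O(1)$ conclusion.
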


\begin{proof}
    The proof proceeds in two steps: we first establish a lower bound for any fixed $r$,
    then derive the theorem by averaging over $r$.

    \begin{claim}[Fixed-$r$ lower bound] \label{clm:fixed_r_lb}
        The adversary uses $\wtilde{r}$ to draw $\theta_1, \dots, \theta_k \sim \cP$ independently,
        presenting $\cT$ each round but withholding $\theta_i$.
        Let $H_{i-1} = (S_1, y_1, \dots, S_{i-1}, y_{i-1})$ denote the full history before round $i$
        (past samples and outputs).
        Note that on the event $\{M_{i-1} = 1\}$ (runs 1 and 2 agreed on all previous outputs),
        the two histories $H_{i-1}$ and $H'_{i-1}$ share the same output sequence
        but differ in the samples, so in general $H_{i-1} \ne H'_{i-1}$.
        For any fixed $r$, define round $i$ to be \emph{accurate} if
        \begin{align*}
            X_i = \ind{
                \Exu{\theta \sim \cP}{\Pru{S \sim \theta^m}{\cA(\cT, S, H_{i-1}; r) \in G_\theta}}
                \ge 1 - \frac{1}{16}
            } = 1,
        \end{align*}
        and let $Y = \sum_{i=1}^k X_i$, $Z = k - Y$.
        Define the mismatch and failure probabilities for fixed $r$ as
        \begin{align*}
            \rho(r) &= \Pru{\wtilde{r}, S, S'}{\exists\, i \in [k] \colon y_i \ne y'_i \;\Big|\; r}, \\
            \beta(r) &= \Pru{\wtilde{r}, S}{\exists\, i \in [k] \colon y_i \notin G_{\theta_i} \;\Big|\; r}.
        \end{align*}
        Let $p = \Omega\rbr{1/(\tau\sqrt{m})}$ be the value from \Cref{eq:lower_bound_lme_threshold}.
        Then for any fixed $r$, letting $\Ex{\cdot}$ denote expectation over $\wtilde{r}$ and samples with $r$ fixed:
        \begin{align*}
            \Ex{Y} \ge k/2 &\implies \rho(r) \ge \frac{pk/4}{1+pk/2}, \\
            \Ex{Y} < k/2  &\implies \beta(r) \ge \frac{k/32}{1+k/16}.
        \end{align*}
    \end{claim}

    \begin{proof}[Proof of \Cref{clm:fixed_r_lb}]
        The key observation is that $\theta_i$ is drawn independently of $H_{i-1}$
        (the adversary ignores history when sampling $\theta_i$).
        Hence, for fixed $r$ and fixed $H_{i-1}$, the function $S \mapsto \cA(\cT, S, H_{i-1}; r)$
        is a fixed-$r$ single-round algorithm to which \Cref{lem:threshold_estimation} applies,
        with a fresh $\theta_i \sim \cP$ independent of $H_{i-1}$.
        This independence is what decouples the disagreement events across rounds, allowing us to show
        that the errors (either correctness or replicability) compound.

        We consider two cases.

        \medskip\noindent\textbf{Case 1: Many accurate rounds} ($\Ex{Y} \ge k/2$).
        Let $E$ be the event that both runs agree on all outputs:
        $(y_1, \dots, y_k) = (y'_1, \dots, y'_k)$.
        We bound $\Pru{\wtilde{r}, S, S'}{E \mid r}$ from above.

        A first attempt is to apply \Cref{lem:threshold_estimation} directly at each round:
        since $\theta_i$ is independent of $H_{i-1}$, the lemma applied to $\cA(\cT, \cdot, H_{i-1}; r)$
        gives expected disagreement $\ge p$ whenever round $i$ is accurate,
        which would yield agreement probability at most $1 - pX_i$.
        The difficulty is that \Cref{lem:threshold_estimation} bounds the disagreement of two draws
        using the \emph{same} history, whereas on $\cbr{M_{i-1} = 1}$ the two histories
        $H_{i-1}$ and $H'_{i-1}$ share the same outputs but differ in samples,
        so $H_{i-1} \ne H'_{i-1}$ in general.
        It is therefore unclear how to bound $\Pr{y_i \ne y'_i}$ directly from the lemma.

        To recover a bound, we introduce a third draw.
        For each round $i$ (on $\cbr{M_{i-1}=1}$), let $y''_i = \cA(\cT, S''_i, H'_{i-1}; r)$
        where $S''_i \sim \theta_i^m$ is an independent fresh sample.
        Since $y'_i$ and $y''_i$ both use \emph{run 2's history} $H'_{i-1}$ with fresh independent samples,
        \Cref{lem:threshold_estimation} applied to $\cA(\cT, \cdot, H'_{i-1}; r)$ gives
        $\Exu{\theta_i \sim \cP}{\Pru{S'_i, S''_i}{y'_i \ne y''_i}} \ge p X'_i$
        where $X'_i = X_i(r, H'_{i-1})$.
        Since $y'_i$ and $y''_i$ are identically distributed (same history, fresh samples),
        $\Pr{y_i \ne y'_i} = \Pr{y_i \ne y''_i}$, and the triangle inequality gives
        $\Pr{y_i \ne y'_i} \ge \frac{1}{2}\Pr{y'_i \ne y''_i} \ge \frac{p}{2} X'_i$.
        These per-round bounds, formalized below via conditional expectations,
        yield $\Ex{M_i \mid \cF_{i-1}, r} \le M_{i-1}(1 - \tfrac{p}{2}X'_i)$ for each $i$,
        from which additive telescoping gives $\rho(r) \ge \Omega(pk)$ when $\Ex{Y'} \ge k/2$,
        where $Y' = \sum_i X'_i$.
        Since runs 1 and 2 are symmetric, $Y'$ is identically distributed to $Y$, so $\Ex{Y'} = \Ex{Y} \ge k/2$.

        More formally, let $\cF_{i-1}$ be the $\sigma$-algebra generated by
        $\theta_1, \dots, \theta_{i-1}$ and the samples $S_1, S'_1, \dots, S_{i-1}, S'_{i-1}$.
        Then $H_{i-1}$, $H'_{i-1}$, $M_{i-1}$, $X_i = X_i(r, H_{i-1})$, and $X'_i = X_i(r, H'_{i-1})$
        are all $\cF_{i-1}$-measurable.
        Introduce an auxiliary sample $S''_i \sim \theta_i^m$ independent of $\cF_{i-1}$ given $\theta_i$,
        and let $y''_i = \cA(\cT, S''_i, H'_{i-1}; r)$.
        On $\cbr{M_{i-1} = 1}$, conditioning on $\cF_{i-1}$ and $\theta_i$:
        \begin{itemize}
            \item $y'_i$ and $y''_i$ have the same conditional distribution (both use $H'_{i-1}$ with fresh samples), so $\Pr{y_i = y'_i \mid \cF_{i-1}, r, \theta_i} = \Pr{y_i = y''_i \mid \cF_{i-1}, r, \theta_i}$.
            \item By the triangle inequality: $\Pr{y'_i \ne y''_i \mid \cF_{i-1}, r, \theta_i} \le 2\Pr{y_i \ne y'_i \mid \cF_{i-1}, r, \theta_i}$.
            \item \Cref{lem:threshold_estimation} applied to $\cA(\cT, \cdot, H'_{i-1}; r)$ gives $\Exu{\theta_i \sim \cP}{\Pru{S'_i, S''_i \sim \theta_i^m}{y'_i \ne y''_i \mid \cF_{i-1}, r, \theta_i}} \ge p X'_i$.
        \end{itemize}
        Hence $\Exu{\theta_i \sim \cP}{\Pr{y_i \ne y'_i \mid \cF_{i-1}, r, \theta_i}} \ge \frac{p}{2} X'_i$,
        so $\Ex{\ind{y_i = y'_i} \mid \cF_{i-1}, r} \le 1 - \frac{p}{2} X'_i$.
        Therefore $\Ex{M_i \mid \cF_{i-1}, r} \le M_{i-1}(1 - \frac{p}{2}X'_i)$ always
        (trivially on $\cbr{M_{i-1}=0}$).
        Since $M$ is non-increasing ($M_k \le M_{i-1}$ pointwise for all $i \le k$), we proceed via additive telescoping.
        Taking expectations of $\Ex{M_i \mid \cF_{i-1}, r} \le M_{i-1}(1-\frac{p}{2}X'_i)$ and using $M_{i-1} \ge M_k$:
        \begin{align*}
            \Ex{M_{i-1} - M_i \mid r} \ge \tfrac{p}{2}\Ex{M_{i-1}X'_i \mid r} \ge \tfrac{p}{2}\Ex{M_k X'_i \mid r}.
        \end{align*}
        Summing from $i=1$ to $k$ and using $M_0 = 1$:
        \begin{align*}
            1 - \Exu{\wtilde{r},S,S'}{M_k \mid r} \ge \tfrac{p}{2}\,\Exu{\wtilde{r},S,S'}{M_k Y' \mid r}.
        \end{align*}
        Since $Y'$ is identically distributed to $Y$, we have $\Ex{Y' \mid r} \ge k/2$.
        Using $Y' \le k$ and $1-M_k \ge 0$:
        \begin{align*}
            \Ex{M_k Y' \mid r} = \Ex{Y' \mid r} - \Ex{(1-M_k)Y' \mid r} \ge \tfrac{k}{2} - k\rbr{1 - \Ex{M_k \mid r}}.
        \end{align*}
        Substituting the bound on $\Ex{M_k Y' \mid r}$ into $1 - \Ex{M_k \mid r} \ge \frac{p}{2}\Ex{M_k Y' \mid r}$:
        \begin{align*}
            1 - \Ex{M_k \mid r}
            \ge \tfrac{p}{2}\Bigl(\tfrac{k}{2} - k\rbr{1 - \Ex{M_k \mid r}}\Bigr)
            = \tfrac{pk}{4} - \tfrac{pk}{2}\rbr{1 - \Ex{M_k \mid r}}.
        \end{align*}
        Collecting the $\rbr{1 - \Ex{M_k \mid r}}$ terms:
        \begin{align*}
            \rbr{1 - \Ex{M_k \mid r}}\rbr{1 + \tfrac{pk}{2}} \ge \tfrac{pk}{4},
            \quad\text{so}\quad
            \rho(r) = 1 - \Exu{\wtilde{r},S,S'}{M_k \mid r} \ge \tfrac{pk/4}{1+pk/2}.
        \end{align*}

        \medskip\noindent\textbf{Case 2: Many inaccurate rounds} ($\Ex{Y} < k/2$).
        Here $\Ex{Z} \ge k/2$, where $Z = \sum_{i=1}^k (1-X_i)$ counts the inaccurate rounds.
        Let $N_i = \prod_{j=1}^i \ind{y_j \in G_{\theta_j}}$ be the indicator that all rounds $1, \dots, i$ are answered correctly, with $N_0 = 1$.
        Since $X_i$ and $N_{i-1}$ are both $\cF_{i-1}$-measurable, and an inaccurate round satisfies
        $\Exu{\theta_i \sim \cP}{\Pru{S_i \sim \theta_i^m}{\cA(\cT, S_i, H_{i-1}; r) \in G_{\theta_i}}} < 15/16$,
        we have $\Ex{N_i \mid \cF_{i-1}, r} \le N_{i-1}\rbr{1 - \tfrac{1}{16}(1-X_i)}$, and hence:
        \begin{align*}
            \Ex{N_{i-1} - N_i \mid \cF_{i-1}, r} \ge \tfrac{1}{16} N_{i-1}(1-X_i) \ge \tfrac{1}{16} N_k(1-X_i).
        \end{align*}
        Taking expectations and summing from $i = 1$ to $k$:
        \begin{align*}
            1 - \Ex{N_k \mid r} \ge \tfrac{1}{16}\,\Ex{N_k Z \mid r}.
        \end{align*}
        Using $Z \le k$ and $1 - N_k \ge 0$:
        \begin{align*}
            \Ex{N_k Z \mid r} = \Ex{Z \mid r} - \Ex{(1-N_k)Z \mid r} \ge \tfrac{k}{2} - k\rbr{1 - \Ex{N_k \mid r}}.
        \end{align*}
        Substituting and rearranging:
        \begin{align*}
            \rbr{1 - \Ex{N_k \mid r}}\rbr{1 + \tfrac{k}{16}} \ge \tfrac{k}{32},
            \quad\text{so}\quad
            \beta(r) = 1 - \Pru{\wtilde{r},S}{\text{all correct} \mid r} \ge \tfrac{k/32}{1+k/16}.
        \end{align*}
    \end{proof}

    \noindent We now derive the theorem from \Cref{clm:fixed_r_lb}.
    Let $R_1 = \cbr{r : \Ex{Y} \ge k/2}$ and $R_2 = \cbr{r : \Ex{Y} < k/2}$ be a partition of the support of $r$.
    Since $\Pru{r}{R_1} + \Pru{r}{R_2} = 1$ (where $\Pr{\cdot}$ is over the randomness of $r$),
    at least one of $\Pru{r}{R_1} \ge 1/2$ or $\Pru{r}{R_2} \ge 1/2$ must hold.

    If $\Pr{R_2} \ge 1/2$: averaging $\beta(r)$ over $r \in R_2$ gives
    $\beta \ge \frac{1}{2} \cdot \frac{k/32}{1+k/16}$,
    which exceeds $\beta$ for large enough $k$---a contradiction.
    Hence $\Pr{R_1} \ge 1/2$, and averaging $\rho(r)$ over $r \in R_1$:
    \begin{align*}
        \rho \ge \frac{1}{2} \cdot \frac{pk/4}{1+pk/2} = \frac{pk/8}{1+pk/2}.
    \end{align*}
    Since $x \mapsto x/(1+x/2)$ is increasing and positive, keeping $\rho$ small (a sufficiently small constant) requires $pk = O(1)$, i.e., $p = O(1/k)$.
    Combined with $p = \Omega\rbr{1/(\tau\sqrt{m})}$, this gives $m = \Omega(k^2/\tau^2) = \Omega(nk^2)$.
\end{proof}

\subsection{Replicable Algorithms and Sufficient Statistics}
\label{sec:suff_stat}

A classical principle in statistics is that a sufficient statistic captures all the information a sample carries about the underlying distribution.
In this section, we show that this principle extends to replicability:
any replicable algorithm for a problem over $\cF$ can be converted into an equally accurate algorithm that operates only on a sufficient statistic of its input, with replicability preserved for distributions in $\cF$.

We first recall the definition of a sufficient statistic.

\begin{definition}[Sufficient Statistic]
    Let $\cF = \{D_\theta : \theta \in \Theta\}$ be a parametric family of distributions over $\cX$.
    A function $f : \cX^n \to \cZ$ is a \emph{sufficient statistic} for $\cF$ if the conditional distribution of $S \sim D_\theta^n$ given $f(S) = t$ does not depend on $\theta$, for every $t$ in the range of $f$.
\end{definition}

The key algorithmic consequence of sufficiency is that one can resample a fresh dataset $S'$ with the same marginal distribution as $S$, using only $f(S)$ and fresh randomness, without knowing the underlying parameter $\theta$.
Concretely, for any $t$ in the range of $f$, let $\mu_t$ denote the conditional distribution of $S$ given $f(S) = t$;
this is well-defined and $\theta$-free by sufficiency.
Given $f(S) = t$ and fresh randomness $r'$, one can draw $S' \sim \mu_t$ using $r'$; we write $S' = \mathrm{Resample}(f(S), r')$.

\begin{lemma}\label{lm:suff_marginal}
    Let $f$ be a sufficient statistic for $\cF$, and let $S \sim D_\theta^n$ for some $\theta \in \Theta$.
    Let $r'$ be independent of $S$ and let $S' = \mathrm{Resample}(f(S), r')$.
    Then $(f(S), S') \overset{d}{=} (f(S), S)$,
    and in particular $S' \sim D_\theta^n$.
\end{lemma}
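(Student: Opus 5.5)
The plan is to compute the joint law of $(f(S), S')$ directly by conditioning on the value of the statistic, and to compare it with the joint law of $(f(S), S)$. Write $T = f(S)$. By construction, $S' = \mathrm{Resample}(T, r')$ with $r'$ independent of $S$. Hence, conditional on $T = t$, the variable $S'$ has law $\mu_t$. Here the independence of $r'$ from $S$ is exactly what guarantees that conditioning on $T$ does not change the law of $r'$.

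On the other side, by the definition of $\mu_t$ as the conditional distribution of $S \sim D_\theta^n$ given $f(S) = t$, we know that $S \mid T = t$ also has law $\mu_t$. Sufficiency is used precisely here: it guarantees that this conditional law is the same object $\mu_t$ for every $\theta$. So the kernel used by $\mathrm{Resample}$, which was defined without knowledge of $\theta$, agrees with the true conditional law under $D_\theta$.

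Both pairs therefore share the same marginal law of $T$ (the pushforward of $D_\theta^n$ under $f$) and the same conditional kernel $t \mapsto \mu_t$. Consequently, for measurable $A \subseteq \cZ$ and $B \subseteq \cX^n$,
\[
\Pr{T \in A, S' \in B} = \Exu{T}{\ind{T\in A}\,\mu_T(B)} = \Pr{T \in A, S \in B}.
\]
Equality on rectangles extends to equality of the joint laws by a $\pi$–$\lambda$ argument. Taking $A = \cZ$ gives $S' \sim D_\theta^n$.

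The only delicate point is measure-theoretic: one must ensure that the regular conditional distribution $\mu_t$ exists and is well-defined, and that "given $f(S)=t$" is interpreted up to null sets in general spaces. I would either assume $\cX$ is finite or countable, which matches the paper's discrete usage, or invoke the existence of regular conditional probabilities on standard Borel spaces. With that in place, the argument is just the tower property.
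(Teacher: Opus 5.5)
Your proof is correct and follows essentially the same route as the paper. Both arguments condition on $T=f(S)$ and note that $S'$ and $S$ have the same conditional law $\mu_t$ given $T=t$ — for $S'$ by construction of $\mathrm{Resample}$ with independent $r'$, and for $S$ by the definition of $\mu_t$, with sufficiency making it $\theta$-free. Both then conclude that the joint laws agree and marginalize to get $S'\sim D_\theta^n$. The paper verifies this pointwise for discrete $\cX$ and only asserts that the continuous case is analogous; your rectangle-plus-$\pi$--$\lambda$ formulation with regular conditional distributions covers the general case more carefully.
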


\begin{proof}
    We give the argument for the case when $\cX$ is discrete; the continuous case is analogous
    with probability mass functions replaced by densities.
    For any $t$ and $s$,
    \begin{align*}
        \Pr{f(S) = t,\; S' = s}
        &= \Pr{f(S) = t} \cdot \Pr{S' = s \mid f(S) = t} \\
        &= \Pr{f(S) = t} \cdot \mu_t(s) \\
        &= \Pr{f(S) = t} \cdot \Pr{S = s \mid f(S) = t} \\
        &= \Pr{f(S) = t,\; S = s},
    \end{align*}
    where the second equality uses that $S' \sim \mu_t$ by definition of $\mathrm{Resample}$,
    and the third uses the definition of $\mu_t$.
    Hence $(f(S), S') \overset{d}{=} (f(S), S)$.
    Marginalizing over $t$ gives $S' \overset{d}{=} S \sim D_\theta^n$.
\end{proof}

We can now state and prove the main result of this section.

\begin{theorem}\label{thm:suff_stat}
    Let $\cA : \cX^n \to \cY$ be a $\rho$-replicable algorithm for a statistical problem $\cT$ over the family $\cF = \{D_\theta : \theta \in \Theta\}$, with failure probability $\beta$.
    Let $f : \cX^n \to \cZ$ be a sufficient statistic for $\cF$.
    Then there exists an algorithm $\cB : \cX^n \to \cY$ for $\cT$ with the same sample complexity and failure probability $\beta$, such that $\cB(S; r, r') = \cA(\mathrm{Resample}(f(S), r'); r)$ for all $S$, and for every $\theta \in \Theta$ and independent $S_1, S_2 \sim D_\theta^n$:
    \begin{align*}
        \Pru{r, r'}{\cB(S_1; r, r') = \cB(S_2; r, r')} \ge 1 - 2\rho.
    \end{align*}
    In particular, the output of $\cB$ depends on $S$ only through $f(S)$.
\end{theorem}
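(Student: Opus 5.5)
The construction of $\cB$ is already fixed by the statement: set $\cB(S; r, r') = \cA(\mathrm{Resample}(f(S), r'); r)$. The dependence-only-through-$f(S)$ claim and the unchanged sample complexity are then immediate. What remains is to check accuracy and replicability, and the main tool for both is \Cref{lm:suff_marginal}.

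\emph{Accuracy.} Fix $\theta$ and let $S \sim D_\theta^n$. By \Cref{lm:suff_marginal}, the resampled dataset $\tilde S = \mathrm{Resample}(f(S), r')$ is distributed as $D_\theta^n$ and is independent of $r$. Hence $\cB(S; r, r')$ has the same distribution as $\cA(\tilde S; r)$ with $\tilde S \sim D_\theta^n$. This means the failure probability of $\cB$ on $D_\theta$ is exactly that of $\cA$, which is at most $\beta$.

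\emph{Replicability.} This is where the phantom-run idea from the proof of \Cref{thm:adaptive_lb} enters. Take independent $S_1, S_2 \sim D_\theta^n$ and shared randomness $(r, r')$, and set $\tilde S_j = \mathrm{Resample}(f(S_j), r')$.

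The difficulty is that $\tilde S_1$ and $\tilde S_2$ are generated with the same $r'$, so they are not obviously independent. In a concrete implementation, the shared seed could correlate them. I would therefore not try to show that $(\tilde S_1, \tilde S_2)$ is i.i.d. Instead I would compare each $\tilde S_j$ to the original sample through a triangle inequality:
\[
\Pr{\cB(S_1) \ne \cB(S_2)} \le \Pr{\cA(\tilde S_1; r) \ne \cA(S_2'; r)} + \Pr{\cA(S_2'; r) \ne \cA(\tilde S_2; r)},
\]
where $S_2'$ is a phantom sample drawn from $D_\theta^n$ independently of everything else.

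Each term should be at most $\rho$, and I would argue this as follows. For fixed $r'$, the map $S_1 \mapsto \tilde S_1$ pushes $D_\theta^n$ forward to some distribution. Averaged over $r'$, \Cref{lm:suff_marginal} says this distribution is $D_\theta^n$. The phantom $S_2'$ is independent of $(S_1, r')$ and of $r$. So for each fixed $r'$, the pair $(\tilde S_1, S_2')$ consists of two independent samples, but $\tilde S_1$ is then not exactly $D_\theta^n$-distributed, only its $r'$-average is. Replicability of $\cA$ is stated only for i.i.d. samples from one common distribution, so this is the subtle point and the main obstacle of the proof.

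To get around it, I would assume without loss of generality that $\mathrm{Resample}$ uses a fresh, independent piece of $r'$ for each input. Concretely, $r' = (r'_1, r'_2)$, and run $j$ uses $r'_j$; since the two runs only differ in which half they read, one can split $r'$ into independent streams indexed by the run. Under this convention, $\tilde S_1$ and $\tilde S_2$ are genuinely independent, each distributed as $D_\theta^n$, and independent of $r$. Then $\Pr{\cA(\tilde S_1; r) \ne \cA(\tilde S_2; r)} \le \rho$ directly.

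If that convention is not allowed, i.e.\ $r'$ must be used identically on both inputs, I fall back on the triangle inequality with a phantom $S_3 \sim D_\theta^n$ used directly without resampling. Conditioned on $r'$, the pairs $(\tilde S_1, S_3)$ and $(S_3, \tilde S_2)$ each consist of independent samples, and by the marginal lemma each $\tilde S_j$ averages over $r'$ to $D_\theta^n$. The remaining task is to check that the joint law of $(\tilde S_1, S_3, r)$ coincides with that of $(S_1, S_3, r)$. This holds because $\tilde S_1$ depends only on $(S_1, r')$, is independent of $(S_3, r)$, and has marginal $D_\theta^n$. Each term is therefore exactly the disagreement probability of $\cA$ on two i.i.d. samples, bounded by $\rho$, and the two terms together give the $2\rho$ bound.
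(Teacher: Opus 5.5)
Your fallback argument is correct and is exactly the paper's proof: introduce a phantom $S_3 \sim D_\theta^n$ independent of everything, apply a union bound through $\cA(S_3; r)$, and observe that $(\tilde S_j, S_3, r)$ has the same joint law as two i.i.d.\ samples together with an independent $r$, so each term is at most $\rho$. You should discard the ``without loss of generality'' split of $r'$ into per-run streams, for two reasons. First, $\cB$ cannot know which run it is in, and the theorem's guarantee is for the same $r'$ in both runs, so that route bounds a different experiment. Second, the obstacle motivating it is illusory: you never need to condition on $r'$, because $r'$ enters only through $\tilde S_j$, which is independent of $(S_3, r)$.
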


\begin{proof}
    Define $\cB(S; r, r') = \cA(S'; r)$, where $S' = \mathrm{Resample}(f(S), r')$ and $(r, r')$ are the shared random bits.

    \paragraph{Correctness.}
    By \Cref{lm:suff_marginal}, $S' \overset{d}{=} S \sim D_\theta^n$.
    Therefore the output distribution of $\cB$ is identical to that of $\cA$ run on a fresh sample from $D_\theta^n$, and $\cB$ has failure probability $\beta$.

    \paragraph{Replicability.}
    Let $S_1, S_2 \overset{\text{iid}}{\sim} D_\theta^n$ be two independent samples and let $(r, r')$ be shared random bits drawn independently of $S_1, S_2$, with $r$ and $r'$ independent of each other.
    Set $S_i' = \mathrm{Resample}(f(S_i), r')$ for $i = 1, 2$.
    We want to show $\Pr{\cB(S_1; r, r') = \cB(S_2; r, r')} \ge 1 - 2\rho$, i.e., $\Pr{\cA(S_1'; r) = \cA(S_2'; r)} \ge 1 - 2\rho$.

    Introduce an auxiliary sample $S^* \sim D_\theta^n$ drawn independently of $(S_1, S_2, r, r')$.
    By a union bound,
    \begin{align*}
        \Pr{\cA(S_1'; r) \neq \cA(S_2'; r)}
        \le \Pr{\cA(S_1'; r) \neq \cA(S^*; r)} + \Pr{\cA(S^*; r) \neq \cA(S_2'; r)}.
    \end{align*}
    We bound each term using the $\rho$-replicability of $\cA$.
    \begin{itemize}
        \item $S_1' = \mathrm{Resample}(f(S_1), r')$ depends only on $(S_1, r')$, both of which are independent of $S^*$.
            By \Cref{lm:suff_marginal}, $S_1' \sim D_\theta^n$.
            Hence $(S_1', S^*)$ are i.i.d.\ draws from $D_\theta^n$, independent of each other and of $r$,
            so $\Pr{\cA(S_1'; r) \neq \cA(S^*; r)} \le \rho$.
        \item Symmetrically, $S_2'$ depends only on $(S_2, r')$, independent of $S^*$, and $S_2' \sim D_\theta^n$,
            so $\Pr{\cA(S^*; r) \neq \cA(S_2'; r)} \le \rho$.
    \end{itemize}
    Therefore $\Pr{\cA(S_1'; r) = \cA(S_2'; r)} \ge 1 - 2\rho$, establishing the replicability guarantee for $D_\theta \in \cF$.

    \paragraph{Dependence on $f(S)$ only.}
    By construction, $\cB(S; r, r') = \cA(\mathrm{Resample}(f(S), r'); r)$, which depends on $S$ only through $f(S)$.
\end{proof}

\begin{remark}\label{remark:rho_2}
    The factor-of-$2$ loss in the replicability parameter is unavoidable in the above argument
    because it relies on an intermediate auxiliary sample $S^*$ to decouple $S_1'$ and $S_2'$,
    which share the random bits $r'$.
    In many applications this loss is negligible, since one can simply start
    with a $(\rho/2)$-replicable algorithm and apply \Cref{thm:suff_stat} to
    obtain a $\rho$-replicable algorithm at no asymptotic cost.
    Concretely, given the replicability parameter boosting result of \cite{impagliazzo2022reproducibility}, the dependence of sample complexity on $\rho$ is always $1/\rho^2$ in the worst case and as such the drawback is fairly mild.
\end{remark}

\begin{remark}
    \Cref{thm:suff_stat} gives a useful design principle: when designing a replicable algorithm for a problem over a parametric family $\cF$, it is enough to design one that operates on $f(S)$ rather than the full sample $S$, since replicability (for distributions in $\cF$) is preserved under the transformation.
    For example, for problems over Bernoulli distributions, the sufficient statistic is $\sum_{i} S_i$, and it suffices to design replicable algorithms based on the empirical sum rather than the full sample.
\end{remark}

\paragraph{Comparison with prior work.}
A closely related argument result is proved by
\cite{aamand2025structure} in their 
lower bound proof for replicable Gaussian mean testing.
Using the fact that the empirical mean $\bar{X}$ is a sufficient statistic for $\mu$ in
$\mathcal{N}(\mu, I)$, they reduce, without loss of generality, to algorithms that depend only on
$\bar{X}$.
Their reduction guarantees stability only when both datasets are drawn from the same member of the
parametric family $\cF$; similarly, \Cref{thm:suff_stat} guarantees replicability for distributions
within $\cF$.
\Cref{thm:suff_stat} is stated as a general standalone transformation that applies to any problem and any sufficient statistic, rather than being specific to hypothesis testing.
It should be noted however that the approach of
\cite{aamand2025structure} preserves the replicability parameter exactly (no factor-of-$2$ loss).
\Cref{thm:suff_stat} incurs a factor-of-$2$ loss due to the union bound over the phantom sample
but this is a negligible loss given that one can generally replace $\rho$ with $\rho/2$ with no asymptotic cost (see \Cref{remark:rho_2})

\subsection{Special Case: Order-Invariant Algorithms}
\label{sec:suff_order_inv}

We instantiate \Cref{thm:suff_stat} explicitly for the case of order invariance,
where the sufficient statistic is the unordered multiset of samples.
This serves both as a sanity check for the general proof and as a self-contained result
that is easy to state and verify directly.

\begin{definition}[Order-Invariant Algorithm]
    An algorithm $\cA$ is \emph{(pointwise) order-invariant} if
    for every sample $S = (S_1, \ldots, S_n)$, every permutation $\sigma \in \mathfrak{S}_n$,
    and every fixed randomness $r$:
    \begin{math}
        \cA(S; r) = \cA(\sigma(S); r).
    \end{math}
\end{definition}

\begin{corollary}\label{cor:order_inv}
    Let $\cA : \cX^n \to \cY$ be a $\rho$-replicable algorithm for a statistical problem $\cT$
    over an i.i.d.\ family with failure probability $\beta$.
    Assume $\cX$ carries a total order.
    Define $\cB(S;\, r, r') = \cA(S';\, r)$, where $S' = \sigma(\mathrm{sort}(S))$,
    $\mathrm{sort}(S)$ is the sample sorted in non-decreasing order,
    and $\sigma \sim \mathrm{Uniform}(\mathfrak{S}_n)$ is drawn using $r'$.
    Then $\cB$ is $2\rho$-replicable, (pointwise) order-invariant, solves $\cT$ with failure probability $\beta$,
    and uses the same number of samples as $\cA$.
\end{corollary}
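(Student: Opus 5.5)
The plan is to obtain \Cref{cor:order_inv} as a direct instance of \Cref{thm:suff_stat}, with the sufficient statistic $f(S) = \mathrm{sort}(S)$, which encodes the unordered multiset of samples. The only new ingredient is to check that the concrete construction $S' = \sigma(\mathrm{sort}(S))$ with $\sigma \sim \mathrm{Uniform}(\mathfrak{S}_n)$ is a valid implementation of $\mathrm{Resample}(f(S), r')$. Once this is done, correctness, the sample complexity, and $2\rho$-replicability all follow from \Cref{thm:suff_stat}, and order invariance is read off from the formula for $\cB$.

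\emph{Step 1: $\mathrm{sort}$ is sufficient, and $\sigma(\mathrm{sort}(S))$ samples from $\mu_t$.} Fix any distribution $D$ on $\cX$ (here the family $\cF$ is all i.i.d.\ product distributions) and take $S \sim D^n$. Because $D^n$ is exchangeable, for every $\pi \in \mathfrak{S}_n$ we have $\pi(S) \overset{d}{=} S$. Since $\mathrm{sort}(\pi(S)) = \mathrm{sort}(S)$, this strengthens to $(\mathrm{sort}(S), \pi(S)) \overset{d}{=} (\mathrm{sort}(S), S)$. Now let $\sigma$ be uniform and independent of $S$. Then $\sigma(S)$ given $\mathrm{sort}(S)=t$ is a uniformly random ordering of $t$, whose law does not depend on $D$. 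It also equals the conditional law of $S$ given $\mathrm{sort}(S)=t$, which shows sufficiency. Finally $\sigma(S) = \sigma'(\mathrm{sort}(S))$ for some permutation $\sigma'$ that is again uniform, since composing a uniform permutation with a fixed one preserves uniformity. Hence $\sigma(\mathrm{sort}(S))$ has exactly the law $\mu_t$.

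The main care point is ties. When $S$ has repeated values, the law $\mu_t$ is uniform over \emph{distinct} orderings of the multiset $t$, while $\sigma(t)$ is uniform over index permutations. Each distinct ordering arises from the same number of index permutations (the product of the factorials of the multiplicities), so the two laws agree. I would handle this with the discrete argument and note that the continuous case is identical, as in \Cref{lm:suff_marginal}. The total order on $\cX$ serves only to make $\mathrm{sort}$ a well-defined deterministic canonical representative.

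\emph{Step 2: apply \Cref{thm:suff_stat}.} With $\mathrm{Resample}(t, r') = \sigma(t)$, we get $\cB(S; r, r') = \cA(\mathrm{Resample}(f(S), r'); r)$. \Cref{thm:suff_stat} then yields failure probability $\beta$, the same sample complexity, and $\Pr{\cB(S_1)=\cB(S_2)} \ge 1-2\rho$ for every $D$. Since $\cF$ contains every i.i.d.\ distribution, this is full $2\rho$-replicability in the sense of \Cref{def:replic}. If one prefers a self-contained argument, the phantom-sample proof of \Cref{thm:suff_stat} can be repeated verbatim: introduce $S^* \sim D^n$ and apply the union bound through $\cA(S^*; r)$.

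\emph{Step 3: pointwise order invariance.} For any $\pi \in \mathfrak{S}_n$ we have $\mathrm{sort}(\pi(S)) = \mathrm{sort}(S)$. Hence for fixed $(r, r')$, which fixes $\sigma$, $\cB(\pi(S); r, r') = \cA(\sigma(\mathrm{sort}(S)); r) = \cB(S; r, r')$.
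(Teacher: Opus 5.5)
Your proposal is correct and follows the paper's proof essentially step for step. Like the paper, you instantiate \Cref{thm:suff_stat} with the sorted sample (equivalently, the unordered multiset) as the sufficient statistic, check that $\sigma(\mathrm{sort}(S))$ realizes $\mathrm{Resample}$, and read off pointwise order invariance from $\mathrm{sort}(\pi(S)) = \mathrm{sort}(S)$; your exchangeability argument and explicit handling of ties are just a more careful version of the paper's one-line sufficiency check, which speaks of ``all $n!$ orderings'' counted with multiplicity.
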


\begin{proof}
    This is \Cref{thm:suff_stat} instantiated with $f(S) = \{S_1, \ldots, S_n\}$, the unordered multiset.
    We verify the three conditions.

    \emph{Sufficient statistic.}
    For any i.i.d.\ family, the conditional distribution of $(S_1, \ldots, S_n)$ given the multiset
    $\{S_1, \ldots, S_n\} = M$ is the uniform distribution over all $n!$ orderings of $M$,
    independently of $D$.
    Hence the multiset is a sufficient statistic.

    \emph{Correctness.}
    Since $\sigma$ is a uniformly random permutation independent of $S$,
    $S' = \sigma(\mathrm{sort}(S))$ is a uniformly random ordering of the multiset $\{S_1,\ldots,S_n\}$,
    i.e., $S' = \mathrm{Resample}(f(S), r')$ with $f(S) = \{S_1,\ldots,S_n\}$.
    By \Cref{lm:suff_marginal}, $S' \overset{d}{=} S \sim D^n$,
    so $\cB$ inherits failure probability $\beta$.

    \emph{Replicability.}
    Let $S_1, S_2 \overset{\text{iid}}{\sim} D^n$ and $(r, r')$ shared, with $\sigma$ determined by $r'$.
    Set $S_i' = \sigma(\mathrm{sort}(S_i))$ for $i = 1, 2$.
    Introduce $S^* \sim D^n$ independent of $(S_1, S_2, r, r')$.
    Each $S_i'$ is a function of $(S_i, r')$ and hence independent of $S^*$;
    by \Cref{lm:suff_marginal}, $S_i' \sim D^n$.
    Hence $(S_1', S^*)$ and $(S_2', S^*)$ are each i.i.d.\ pairs from $D^n$, independent of $r$,
    giving $\Pr{\cA(S_1'; r) \neq \cA(S^*; r)} \le \rho$ and $\Pr{\cA(S^*; r) \neq \cA(S_2'; r)} \le \rho$.
    By a union bound, $\Pr{\cB(S_1; r, r') \neq \cB(S_2; r, r')} \le 2\rho$.

    \emph{Order invariance.}
    For any permutation $\tau \in \mathfrak{S}_n$ and any fixed $(r, r')$:
    $\mathrm{sort}(\tau(S)) = \mathrm{sort}(S)$, so $S' = \sigma(\mathrm{sort}(\tau(S))) = \sigma(\mathrm{sort}(S))$.
    Hence $\cB(\tau(S);\, r, r') = \cA(S';\, r) = \cB(S;\, r, r')$ as values, not merely in distribution.
\end{proof}

\paragraph{Comparison with prior work.}
Theorem~1.2 of \cite{aamand2025structure} establishes an analogous order-invariance result:
any $\rho$-replicable algorithm for testing a symmetric property of discrete distributions
can be converted to a $\rho$-replicable, order-invariant algorithm of the same sample complexity.
The two results differ in a number of ways.
\Cref{cor:order_inv} applies to any statistical problem and any (possibly non-binary) output space.
\cite{aamand2025structure} restrict to binary hypothesis testers.
In addition, we do not require the underlying problem to be symmetric
\footnote{If the problem \emph{is} symemtric, \cite{aamand2025structure} further show that their algorithm is label order invariant,
whereas \Cref{cor:order_inv} does not have any such implications.
We discuss this notion and how our techniques yield such a result in the general case in \Cref{sec:label_inv}}
It should be noted however that the approach of
\cite{aamand2025structure} preserves the replicability parameter exactly (no factor-of-$2$ loss).
\Cref{cor:order_inv} incurs a factor-of-$2$ loss due to the union bound over the phantom sample
but this loss is negligible as one can generally replace $\rho$ with $\rho/2$ with no asymptotic cost (see \Cref{remark:rho_2}).

\subsection{Label Invariance for Symmetric Problems}
\label{sec:label_inv}

We show that for symmetric problems, any replicable algorithm can be converted to a label-invariant
one with no loss in the replicability parameter. We first define the notion of a label-invariant algorithm.
\begin{definition}[Label-Invariant Algorithm]
    \label{def:label_inv}
    An algorithm $\cA : \cX^n \to \cY$ is \emph{distributionally label-invariant} if for every sample
    $S = (S_1, \ldots, S_n) \in \cX^n$ and every permutation $\pi : \cX \to \cX$,
    $\cA(S)$ and $\cA(\pi(S))$ have the same output distribution (over all internal randomness of $\cA$),
    where $\pi(S) = (\pi(S_1), \ldots, \pi(S_n))$.
    We say $\cA$ is \emph{pointwise label-invariant} if
    $\cA(S; r) = \cA(\pi(S); r)$ for shared randomness $r$.
\end{definition}

Throughout this section, $\pi$ denotes a permutation of the \emph{domain} $\cX$
(an element of $\mathfrak{S}_{|\cX|}$), in contrast to the permutations of sample indices
in $\mathfrak{S}_n$ used in \Cref{sec:suff_order_inv}.

\begin{definition}[Symmetric Problem]
    \label{def:symmetric_problem}
    A statistical problem $\cT$ over a discrete domain $\cX$ is \emph{symmetric} if for every
    distribution $D$ over $\cX$, every valid output $y \in G_D$, and every permutation $\pi : \cX \to \cX$,
    the output $y$ is also valid for the pushforward distribution $\pi_*(D)$,
    where $\pi_*(D)(x) = D(\pi^{-1}(x))$.
\end{definition}

Informally, a problem is symmetric if the correct answer depends only on the shape of the distribution,
not on which domain element has which probability.
Uniformity testing is the canonical example: whether a distribution is $\varepsilon$-close to uniform
is preserved under any relabeling of $\cX$.

\begin{corollary}\label{cor:label_inv}
    Let $\cA : \cX^n \to \cY$ be a $\rho$-replicable algorithm for a symmetric statistical problem $\cT$
    with failure probability $\beta$ where $\cX$ is finite.
    Define $\cB(S;\, r, r') = \cA(\pi(S);\, r)$, where $\pi \sim \mathrm{Uniform}(\mathfrak{S}_{|\cX|})$
    is drawn using $r'$.
    Then $\cB$ is $\rho$-replicable, distributionally label-invariant, solves $\cT$ with failure probability $\beta$,
    and uses the same number of samples as $\cA$.
\end{corollary}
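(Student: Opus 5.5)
We verify four properties of $\cB(S; r, r') = \cA(\pi(S); r)$ directly. Throughout, $\pi$ is drawn from $r'$ independently of $r$ and of the samples. In each step we condition on $\pi$ and then use a known property of $\cA$ for the pushforward distribution $\pi_*(D)$.

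\emph{Replicability.} Fix a distribution $D$ and condition on $\pi$. If $S_1, S_2 \sim D^n$ are independent, then $\pi(S_1), \pi(S_2)$ are independent samples from $(\pi_*(D))^n$, and they are independent of $r$. Since $\cA$ is $\rho$-replicable for every distribution over $\cX$, in particular for $\pi_*(D)$, we get
\[
\Pr{\cA(\pi(S_1); r) \neq \cA(\pi(S_2); r) \mid \pi} \le \rho .
\]
Averaging over $\pi$ gives $\rho$-replicability of $\cB$ with no loss. In contrast to \Cref{thm:suff_stat}, no phantom sample is needed: the shared permutation maps both samples to i.i.d.\ draws from the \emph{same} distribution.

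\emph{Correctness.} Again condition on $\pi$. Since $\pi(S) \sim (\pi_*(D))^n$, with probability at least $1-\beta$ the output $y = \cA(\pi(S); r)$ lies in $G_{\pi_*(D)}$. Applying \Cref{def:symmetric_problem} to the distribution $\pi_*(D)$ and the permutation $\pi^{-1}$ shows that $y$ is also valid for $(\pi^{-1})_*(\pi_*(D)) = D$, so $y \in G_D$. Averaging over $\pi$ gives failure probability at most $\beta$.

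\emph{Sample complexity.} $\cB$ relabels the same $n$ points and makes one call to $\cA$, so it uses exactly $n$ samples.

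\emph{Distributional label invariance.} Fix $S$ and a permutation $\sigma$ of $\cX$. Then $\cB(\sigma(S)) = \cA((\pi \circ \sigma)(S); r)$. If $\pi$ is uniform on $\mathfrak{S}_{|\cX|}$, then so is $\pi \circ \sigma$, by right-invariance of the uniform measure on a finite group; this is where finiteness of $\cX$ is used. Since $\pi$ is independent of $r$, the pair $(\pi\circ\sigma, r)$ has the same joint law as $(\pi, r)$. Hence $\cB(\sigma(S))$ and $\cB(S)$ have the same output distribution.

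The only delicate point is the symmetry step in the correctness argument. \Cref{def:symmetric_problem} as written transfers validity from $D$ to $\pi_*(D)$, whereas we need the reverse direction. We handle this by applying the definition to the distribution $\pi_*(D)$ with the permutation $\pi^{-1}$, as above. All other steps are direct checks.
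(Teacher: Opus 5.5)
Your proof is correct and follows the paper's argument essentially step for step. For replicability and correctness you condition on the shared permutation $\pi$, so that $\pi(S_1),\pi(S_2)$ are i.i.d.\ from $\pi_*(D)$, and for distributional label invariance you use that the uniform measure on $\mathfrak{S}_{|\cX|}$ is invariant under $\pi\mapsto\pi\circ\sigma$. Your explicit application of the symmetry definition to $\pi_*(D)$ with $\pi^{-1}$ also justifies the direction ``valid for $\pi_*(D)$ implies valid for $D$,'' which the paper uses without comment.
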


\begin{proof}
    \emph{Correctness.}
    For any distribution $D$ and fixed $\pi$, the sample $\pi(S)$ is i.i.d.\ from $\pi_*(D)$.
    Since $\cT$ is symmetric, any output valid for $\pi_*(D)$ is also valid for $D$.
    Since $\cA$ has failure probability $\beta$ on every distribution including $\pi_*(D)$,
    so does $\cB$ on $D$.

    \emph{Replicability.}
    Let $S_1, S_2 \overset{\mathrm{iid}}{\sim} D^n$ and let $(r, r')$ be shared random bits,
    with $r$ and $r'$ independent of each other and of $S_1, S_2$.
    Fix $\pi$ determined by $r'$.
    Since $S_1, S_2$ are i.i.d.\ and $\pi$ is shared,
    $\pi(S_1)$ and $\pi(S_2)$ are i.i.d.\ from $\pi_*(D)$.
    By $\rho$-replicability of $\cA$ applied to $\pi_*(D)$:
    \begin{align*}
        \Pr{\cA(\pi(S_1); r) = \cA(\pi(S_2); r)} \ge 1 - \rho.
    \end{align*}
    This holds for every fixed $\pi$, hence after averaging over $r'$:
    $\Pr{\cB(S_1; r, r') = \cB(S_2; r, r')} \ge 1 - \rho$.

    \emph{Distributional label invariance.}
    Fix any sample $S$ and any domain permutation $\sigma$.
    For any fixed $r$, we have $\cB(S; r, r') = \cA(\pi(S); r)$ and
    $\cB(\sigma(S); r, r') = \cA((\pi \circ \sigma)(S); r)$, where $\pi$ is determined by $r'$.
    As $r'$ varies, $\pi$ is uniform over $\mathfrak{S}_{|\cX|}$;
    since left-composition by the fixed $\sigma$ is a bijection on $\mathfrak{S}_{|\cX|}$,
    $\pi \circ \sigma$ has the same distribution over $r'$.
    Hence for every fixed $r$, $\cB(S; r, r')$ and $\cB(\sigma(S); r, r')$ have the same distribution over $r'$.
    Since this holds for every $r$, the two have the same distribution over $(r, r')$.
\end{proof}

\begin{remark}
    Unlike \Cref{thm:suff_stat}, \Cref{cor:label_inv} incurs no loss in the replicability parameter.
    The reason is structural: the permutation $\pi$ is shared across both runs, so $\pi(S_1)$ and $\pi(S_2)$
    are i.i.d.\ from the same distribution $\pi_*(D)$, and replicability of $\cA$ applies in one step
    without a phantom sample argument.
\end{remark}

\subsection{Pointwise Label Invariance via Correlated Sampling}
\label{sec:pointwise_label_inv}

\Cref{cor:label_inv} produces an algorithm that is distributionally label-invariant
(\Cref{def:label_inv}): for any fixed sample $S$, the output distribution is unchanged under domain relabeling.
We now show this can be strengthened to \emph{pointwise} label invariance---the notion used by
\cite{liu2024replicable} (their Definition~1.5)---where for every fixed seed $r$, the output
is identical under any relabeling.

The key tool is \emph{correlated sampling}, formalized below following~\cite{bun2023stability}.

\begin{definition}[Correlated Sampling {\cite{Broder97}}]
\label{def:correlated_sampling}
    A \emph{correlated sampling strategy} for a finite set $\cY$ with error function
    $\varepsilon : [0,1] \to [0,1]$ is a deterministic function $\mathrm{CS}$
    that takes a distribution $P \in \Delta(\cY)$ and a random string $r$ and outputs an element of $\cY$,
    satisfying:
    \begin{itemize}
        \item \emph{(Marginal correctness)} For all $P \in \Delta(\cY)$ and $y \in \cY$:
            $\Pr{\mathrm{CS}(P, r) = y} = P(y)$.
        \item \emph{(Error guarantee)} For all $P, Q \in \Delta(\cY)$:
            $\Pr{\mathrm{CS}(P, r) \neq \mathrm{CS}(Q, r)} \le \varepsilon\!\rbr{\mathrm{TV}(P, Q)}$,
            where $r$ is the same random string in both calls.
    \end{itemize}
\end{definition}

In particular, $\varepsilon(0) = 0$, so $\mathrm{CS}(P, r) = \mathrm{CS}(Q, r)$ with probability $1$ whenever $P = Q$.
\cite{Broder97} construct a correlated sampling strategy achieving
$\varepsilon(\delta) \le \frac{2\delta}{1+\delta}$.

\begin{corollary}\label{cor:pointwise_label_inv}
    Let $\cA : \cX^n \to \cY$ be a $\rho$-replicable algorithm for a symmetric statistical
    problem $\cT$ with failure probability $\beta$, where $\cX$ and $\cY$ are finite.
    Let $\cA'$ be the algorithm from \Cref{cor:label_inv}, and let $P_S \in \Delta(\cY)$ denote
    the output distribution of $\cA'(S)$ over its internal randomness.
    Define $\cB(S; r) = \mathrm{CS}(P_S, r)$ using the correlated sampling strategy of \cite{Broder97},
    which satisfies $\varepsilon(\delta) \le \frac{2\delta}{1+\delta}$.
    Then $\cB$ is $2\rho$-replicable, satisfies $\cB(S; r) = \cB(\sigma(S); r)$ for all domain
    permutations $\sigma$ and all fixed $r$, solves $\cT$ with failure probability $\beta$,
    and uses the same number of samples as $\cA$.
\end{corollary}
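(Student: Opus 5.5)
The plan is to verify the four claimed properties of $\cB(S;r)=\mathrm{CS}(P_S,r)$ one at a time, using \Cref{cor:label_inv} for $\cA'$ and the two defining properties of correlated sampling from \Cref{def:correlated_sampling}. Here $P_S$ is the output distribution of $\cA'(S;r_0,r_0')$ over all of its internal randomness $(r_0,r_0')$. Since $\cX$ and $\cY$ are finite, $P_S$ is a well-defined element of $\Delta(\cY)$ for each fixed $S$. Computing it is a deterministic function of $S$ that uses no samples beyond $S$, so the sample complexity is unchanged.

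\emph{Correctness.} By marginal correctness, for each fixed $S$ the output $\mathrm{CS}(P_S,r)$ is distributed exactly as $P_S$ over $r$. Hence $\cB(S)$ has the same distribution as $\cA'(S)$ for every $S$, and averaging over $S\sim D^n$ shows that $\cB$ inherits failure probability $\beta$ from $\cA'$, which has failure probability $\beta$ by \Cref{cor:label_inv}.

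\emph{Pointwise label invariance.} \Cref{cor:label_inv} states that $\cA'$ is distributionally label-invariant, so $P_S=P_{\sigma(S)}$ as distributions for every domain permutation $\sigma$. Since $\mathrm{CS}$ is a deterministic function of $(P,r)$, we get $\cB(\sigma(S);r)=\mathrm{CS}(P_{\sigma(S)},r)=\mathrm{CS}(P_S,r)=\cB(S;r)$ for every fixed $r$. Equivalently, this is the fact $\varepsilon(0)=0$ applied pointwise, and it is exactly where the deterministic nature of correlated sampling is used.

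\emph{Replicability.} This is the main step. For independent $S_1,S_2\sim D^n$, the error guarantee gives
\[
\Pr{\cB(S_1;r)\neq\cB(S_2;r)} \le \Ex{\varepsilon(\mathrm{TV}(P_{S_1},P_{S_2}))} \le 2\,\Ex{\mathrm{TV}(P_{S_1},P_{S_2})},
\]
using $\varepsilon(\delta)\le 2\delta/(1+\delta)\le 2\delta$. It remains to show $\Ex{\mathrm{TV}(P_{S_1},P_{S_2})}\le\rho$ using the $\rho$-replicability of $\cA'$. The standard coupling argument does this: for fixed $S_1,S_2$, running $\cA'$ with shared internal randomness gives a coupling of $P_{S_1}$ and $P_{S_2}$, so $\mathrm{TV}(P_{S_1},P_{S_2})\le \Pru{r_0,r_0'}{\cA'(S_1;r_0,r_0')\neq\cA'(S_2;r_0,r_0')}$. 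Taking expectation over $S_1,S_2$ gives at most $\rho$, and therefore $\cB$ is $2\rho$-replicable. The step that needs the most care is confirming that $P_S$ averages over \emph{all} of $\cA'$'s randomness, including $r'$ (the permutation $\pi$), so that label invariance holds at the level of $P_S$. The coupling argument is valid with this joint randomness, because \Cref{cor:label_inv} guarantees replicability with $(r,r')$ shared.
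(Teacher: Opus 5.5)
Your proposal is correct and follows the paper's proof essentially step for step. It obtains pointwise invariance from $P_S = P_{\sigma(S)}$ together with the determinism of $\mathrm{CS}$, replicability from $\varepsilon(\delta)\le 2\delta$ and the shared-randomness coupling bound $\mathrm{TV}(P_{S_1},P_{S_2})\le \Pr{\cA'(S_1;\tilde r)\neq \cA'(S_2;\tilde r)}$, and correctness from the marginal correctness of $\mathrm{CS}$. One small quibble: your aside calling this ``equivalently'' the fact $\varepsilon(0)=0$ is not quite right, since $\varepsilon(0)=0$ only gives agreement for almost every $r$, whereas the determinism argument you actually use gives it for every fixed $r$.
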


\begin{proof}
    \emph{Pointwise label invariance.}
    By \Cref{cor:label_inv}, $\cA'$ is distributionally label-invariant, so $P_S = P_{\sigma(S)}$
    for every $\sigma$ and every $S$.
    Since $\mathrm{CS}$ is a deterministic function of the distribution and the seed,
    $\cB(S; r) = \mathrm{CS}(P_S, r) = \mathrm{CS}(P_{\sigma(S)}, r) = \cB(\sigma(S); r)$
    for every fixed $r$, where the middle equality uses $P_S = P_{\sigma(S)}$ and the fact that
    $\mathrm{CS}$ is a deterministic function (so equal inputs give equal outputs).

    \emph{Replicability.}
    Let $S_1, S_2 \overset{\mathrm{iid}}{\sim} D^n$ and let $r$ be the shared CS seed, independent of $S_1, S_2$.
    Since $P_{S_1}$ and $P_{S_2}$ are fixed distributions once $S_1, S_2$ are fixed,
    the error guarantee of $\mathrm{CS}$ gives, for any fixed $S_1, S_2$:
    \begin{align*}
        \Pru{r}{\cB(S_1; r) \neq \cB(S_2; r)}
        \le \varepsilon\!\rbr{\mathrm{TV}(P_{S_1}, P_{S_2})}
        \le \frac{2\,\mathrm{TV}(P_{S_1}, P_{S_2})}{1 + \mathrm{TV}(P_{S_1}, P_{S_2})}
        \le 2\,\mathrm{TV}(P_{S_1}, P_{S_2}).
    \end{align*}
    It remains to bound $\Exu{S_1, S_2}{\mathrm{TV}(P_{S_1}, P_{S_2})}$.
    Let $\tilde{r}$ be an independent copy of the shared randomness of $\cA'$, independent of $r$ and $S_1, S_2$.
    Running $\cA'(S_1; \tilde{r})$ and $\cA'(S_2; \tilde{r})$ with the same $\tilde{r}$ induces a coupling
    of $P_{S_1}$ and $P_{S_2}$, so by the definition of TV distance:
    \begin{align*}
        \mathrm{TV}(P_{S_1}, P_{S_2})
        \le \Pru{\tilde{r}}{\cA'(S_1; \tilde{r}) \neq \cA'(S_2; \tilde{r})}.
    \end{align*}
    Taking expectation over $S_1, S_2$ and combining:
    \begin{align*}
        \Pru{S_1, S_2, r}{\cB(S_1; r) \neq \cB(S_2; r)}
        &\le 2\,\Exu{S_1, S_2}{\mathrm{TV}(P_{S_1}, P_{S_2})} \\
        &\le 2\,\Pru{S_1, S_2, \tilde{r}}{\cA'(S_1; \tilde{r}) \neq \cA'(S_2; \tilde{r})}
        \le 2\rho,
    \end{align*}
    where the last inequality is $\rho$-replicability of $\cA'$.

    \emph{Correctness.}
    By marginal correctness of $\mathrm{CS}$, $\cB(S; r) \sim P_S$ marginally over $r$.
    Since $P_S$ is the output distribution of $\cA'(S)$, and $\cA'$ inherits failure probability $\beta$
    from $\cA$ by \Cref{cor:label_inv}, so does $\cB$.
\end{proof}

\begin{remark}
    \Cref{cor:pointwise_label_inv} is a structural result: computing $P_S$ exactly
    requires marginalizing over all internal randomness of $\cA'$, which may be computationally
    expensive. The result shows existence of a pointwise label-invariant algorithm, not
    efficient constructibility.
\end{remark}

\paragraph{Comparison with \cite{aamand2025structure}.}
Theorem~1.2 of \cite{aamand2025structure} establishes label invariance for $\rho$-replicable
binary hypothesis testers of symmetric properties, with no loss in $\rho$.
\Cref{cor:label_inv} gives the same guarantee for \emph{any} output space and \emph{any} symmetric
statistical problem.
Their construction represents $\cA$ via a scalar acceptance probability $f(S) \in [0,1]$
and symmetrizes by averaging over all $|\cX|!$ domain permutations; this is specific to binary output,
where a single scalar fully characterizes the output distribution.
Our construction applies a single random domain permutation from the shared randomness,
giving a black-box transformation that requires no structural access to $\cA$.

Regarding the \emph{strength} of the guarantee: their definition of label invariance
(Definition~4.2) is distributional, matching the definition of distributional label invariance in \Cref{def:label_inv},
but their construction---being a deterministic symmetric function of $S$---achieves
the stronger pointwise notion as a byproduct.
\Cref{cor:label_inv} achieves only the distributional guarantee;
the pointwise version requires the additional correlated sampling step of \Cref{cor:pointwise_label_inv},
at the cost of a factor of $2$ in $\rho$.
In general this loss is negligible as one can replace $\rho$ with $\rho/2$ with no asymptotic cost (see \Cref{remark:rho_2}).

\section{Boosting Success Probability}
\label{sec:boost}
In this section, we show that for a broad class of statistical problems, a replicable algorithm's success probability can be boosted at relatively small cost in sample complexity.
Our results hold for problems parameterized by an\enquote{accuracy} parameter, which we will denote with $\alpha$, with smaller values corresponding to more accurate solutions.
The two properties we require from the problem are as follows:
\begin{enumerate}
    \item The dependence of the replicable sample complexity on the accuracy parameter is relatively mild (e.g., polynomial). More concretely, doubling the accuracy should not cause a dramatic increase in sample complexity.
    \item 
    The accuracy should be, with high probability, approximately testable with a relatively small sample complexity. Note the test does not need to be replicable.
\end{enumerate}
For these problems, we show that the dependence of the sample complexity on the failure probability is no worse than the dependence of either the tester (which need not be replicable) or a non-replicable algorithm for the original problem.
For most problems, designing a tester is no more difficult than designing a non-replicable algorithm as such, the dependence on $\beta$ is the same as the non-replicable counter-part. 
Formally, we prove the following theorem.
\begin{theorem}\label{thm:boost_beta}
    Let $\cT_{\alpha}$ be a class of statistical problems
    parameterized by $\alpha$ with sample complexity $n_{\text{replic}}(\rho, \alpha, \beta)$, where $\rho$ and $\beta$ denote, respectively, the replicability parameter and the failure probability.
    Assume that for any $\alpha, \beta > 0$,
    there exists a tester algorithm that has sample complexity $n_{\text{tester}}(\alpha, \beta)$ and
    satisfies the following guarantees:
    \begin{itemize}
        \item For any valid solution for $\cT_{\alpha}$, the tester outputs \emph{accept} with probability at least $1-\beta$.
        \item For any invalid solution for $\cT_{2\alpha}$, the tester outputs \emph{reject} with probability at least $1-\beta$.
    \end{itemize}
    Let $n_{\text{non-replic}}(\alpha, \beta)$ denote the sample complexity of the non-replicable version of the problem.
    Then there exists a $\rho$-replicable algorithm for $\cT_{\alpha}$ with sample complexity
    \begin{align*}
        n_{\textnormal{new}}(\rho, \alpha, \beta) = 
        n_{\textnormal{replic}}\left(\frac{\rho}{4}, \frac{\alpha}{2}, \frac{\rho}{4}\right)
        + n_{\textnormal{tester}}\left(\frac{\alpha}{2}, \frac{\min\{\rho,\beta\}}{4}\right)
        + n_{\textnormal{non-replic}}\left(\alpha, \frac{\beta}{2}\right).
    \end{align*}

\end{theorem}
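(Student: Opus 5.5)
The algorithm is the ``replicable candidate, then test, then fall back'' scheme from \Cref{sec:intro_boost}, run on three disjoint parts of the input. Split the sample into $S_1, S_2, S_3$ of sizes $n_{\textnormal{replic}}(\rho/4,\alpha/2,\rho/4)$, $n_{\textnormal{tester}}(\alpha/2,\min\{\rho,\beta\}/4)$ and $n_{\textnormal{non-replic}}(\alpha,\beta/2)$. The three steps are as follows.

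\begin{enumerate}
\item Run the $\rho/4$-replicable algorithm for $\cT_{\alpha/2}$ on $S_1$, using shared randomness $r$, to obtain a candidate $y$ whose failure probability is $\rho/4$.
\item Run the tester for parameter $\alpha/2$ on $(y, S_2)$ with confidence $\min\{\rho,\beta\}/4$. It accepts valid solutions of $\cT_{\alpha/2}$ and rejects invalid solutions of $\cT_{\alpha}$.
\item If the tester accepts, output $y$. Otherwise output the result of a non-replicable algorithm for $\cT_\alpha$ run on $S_3$ with failure probability $\beta/2$.
\end{enumerate}

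The sample complexity is then exactly the claimed sum.

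For correctness, the output is invalid for $\cT_\alpha$ only in two cases. In the first, the tester accepts a $y$ that is invalid for $\cT_{\alpha}=\cT_{2\cdot(\alpha/2)}$; for any fixed $y$ this has probability at most $\min\{\rho,\beta\}/4\le\beta/4$, and since $S_2$ is independent of $y$ we can condition on $y$. In the second, the tester rejects and the fallback fails, which has probability at most $\beta/2$. The total is at most $3\beta/4\le\beta$. Crucially, the candidate's own failure probability of $\rho/4$ never enters here, because every candidate is checked.

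For replicability, take two runs with shared randomness and independent samples, producing candidates $y,y'$. Both runs output the same value whenever $y=y'$, the candidate is valid for $\cT_{\alpha/2}$, and both testers accept. The probability that $y\ne y'$ is at most $\rho/4$. The probability that $y$ is invalid for $\cT_{\alpha/2}$ is at most $\rho/4$. Given that $y$ is valid, each tester rejects with probability at most $\rho/4$, again by conditioning on $y$ and using independence of $S_2$ and $S_2'$. A union bound therefore gives a mismatch probability of at most $\rho/4+\rho/4+2\cdot\rho/4=\rho$. The tester's internal coins are drawn from the shared randomness, but this costs nothing because its guarantee holds for each run separately.

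The main obstacle is bookkeeping rather than any deep step: the two parameter scales must be kept straight. The candidate must solve the stronger problem $\cT_{\alpha/2}$ so that the tester instantiated at $\alpha/2$ accepts it with high probability, while the tester's rejection guarantee only concerns $\cT_{\alpha}$, which is exactly the target. The argument also has to condition on the candidate before applying the tester's guarantees, which is valid because the tester's samples are disjoint from those used to produce the candidate.
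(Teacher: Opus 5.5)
Your proposal is correct and follows essentially the same route as the paper: the same candidate/tester/fallback algorithm with identical parameter choices, and the same union bound $\rho/4+\rho/4+2\cdot\rho/4$ for replicability. The only difference is in the correctness bound of $3\beta/4$: you union-bound the two failure modes directly, whereas the paper conditions on the candidate and splits on whether it is valid for $\cT_{\alpha}$; the two arguments are equivalent.
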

\begin{proof}
    We first run the original replicable algorithm with parameters $\rho/4, \alpha/2, \rho/4$.
    Let $u$ denote the output.
    We give $u$ to the tester with parameters $\alpha/2$ and $\min\{\rho, \beta\}/4$. If the tester outputs accept, we output $u$ otherwise, we run a non-replicable algorithm with parameters $\alpha, \beta/2$ and give its output as our output.
    A formal pseudocode is provided in \Cref{alg:hybrid}.

    We first analyze the replicability of our algorithm.
    Consider two different runs of the algorithm with shared random bits. With probability $1-\rho/4$, they will produce the same output $u$. 
    With probability $1-\rho/4$, this output is a valid solution to $\cT_{\alpha/2}$. Taking union bound, with probability $1-\rho/2$ both algorithms have the same output and this output is a valid solution to $\cT_{\alpha/2}$. Condition on this event. By definition of the tester, with probability $1-\min\{\beta, \rho\}/2$, the algorithm will output accept for both runs. Therefore, both algorithms will share the same output with probability at least $1-\rho$.

    We next analyze correctness. 
    We will show that, conditioned on the value of $u$, the output is correct with probability at least $1-\beta$. By total expectation, this implies that output is correct with probability at least $1-\beta$ overall.
    Let $u'$ denote the output of the non-replicable algorithm; we assume for the sake of analysis that this output is always computed, even if $u$ is chosen as output.
    We consider two cases, depending on whether or not $u$ is a valid solution to $\cT_{\alpha}$.
    \begin{itemize}
        \item If $u$ is a valid solution then, with probability $1-\beta/2$, the value $u'$ is also a valid solution. Since the output is either $u$ or $u'$, in this case the claim holds.
        \item If $u$ is not a valid solution, then with probability $1- \min\{\rho, \beta\}/4 \ge 1-\beta/4$, the tester outputs reject which means $u'$ is outputted. With probability $1-\beta/2$, the value $u'$ is a valid solution. Taking union bound, it follows that with probability at least $1-3\beta/4$, the output is a valid solution in this case as well.
    \end{itemize}
    Therefore, in both cases the algorithm has a correct output with probability at least $1-\beta$ and we are done.
\end{proof}

\begin{algorithm}[H]
    \caption{Construction of $\cA_{\textnormal{hybrid}}$ for $\cT_\alpha$}
    \label{alg:hybrid}
    \KwIn{
      $\cA_{\textnormal{replic}}$, $\cA_{\textnormal{tester}}$, $\cA_{\textnormal{non\textnormal{-}replic}}$; parameters $\rho,\alpha,\beta$
    }
    \KwOut{Solution $y$ for $\cT_\alpha$}
    
    Set $(\rho_{\textnormal{rep}},\alpha_{\textnormal{rep}},\beta_{\textnormal{rep}})\leftarrow(\rho/4,\alpha/2,\rho/4)$\;
    Set $(\alpha_{\textnormal{test}},\beta_{\textnormal{test}})\leftarrow(\alpha/2,\min\{\rho,\beta\}/4)$\;
    Set $\beta_{\textnormal{nonrep}}\leftarrow \beta/2$\;

    \tcc{Run the replicable solver and test its output}
    $u \leftarrow \cA_{\textnormal{replic}}(\rho_{\textnormal{rep}},\alpha_{\textnormal{rep}},\beta_{\textnormal{rep}})$\;
    $t \leftarrow \cA_{\textnormal{tester}}(u;\alpha_{\textnormal{test}},\beta_{\textnormal{test}})$ \tcp*{$t\in\{\textnormal{accept},\textnormal{reject}\}$}
    
    \If{$t=\textnormal{accept}$}{
        \Return $u$\;
    }
    
    \tcc{Fallback to a non-replicable solver}
    $u' \leftarrow \cA_{\textnormal{non\textnormal{-}replic}}(\alpha,\beta_{\textnormal{nonrep}})$\;
    \Return $u'$\;
\end{algorithm}

\begin{remark}\label{rem:boost}
    In many cases, such as the applications we discuss below, the sample complexity depends polynomially on the $\rho, \alpha, \beta$. 
    Additionally, we may assume $\beta < \rho$ as otherwise one can just fallback on using $\cA_{\textnormal{replic}}$ directly. 
    In these cases, up to constant factors, the sample complexity of the new algorithm can be written as
    \begin{align*}
        n_{\text{replic}}(\rho, \alpha, \rho)
        + n_{\text{tester}}(\alpha, \beta)
        + n_{\text{non-replic}}(\alpha, \beta)
        .
    \end{align*}
    In other words, the dependence on failure probability is essentially separated from replicability. Moreover, in many cases designing the tester is no more difficult than designing a non-replicable algorithm and the sample complexity becomes
    $O(n_{\text{replic}}(\rho, \alpha, \rho) + n_{\text{non-replic}}(\alpha, \beta))$. 
\end{remark}

\subsection{Application: Statistical Queries}
\label{sec:app_sq_boost}
As a concrete example, consider the statistical query problem: given a function $\phi: \cX \to [0, 1]$, we aim to estimate $\Exu{X \sim D}{\phi(X)}$ with additive error $\alpha$.
The problem has both of the aforementioned properties.
The first property holds because the dependence on $\alpha$ is polynomial; specifically the sample complexity scales with $\alpha^{-2}$. Consequently, replacing $\alpha$ with $\alpha/2$ causes the sample complexity to stay the same up to constant factors.
The second property holds because testing whether or not a value $\hat{\mu}$ is at most $\alpha$ away from the mean or it is at least $2\alpha$ away from the mean requires at most $O(\frac{\log(1/\beta)}{\alpha^2})$ samples where $\beta$ denotes the failure probability. Concretely, we can first (non-replicably) estimate the mean with additive error $\alpha/4$. If the value $\hat{\mu}$ is at most $\alpha$ away from the mean, then, by the triangle inequality, it will be at most $5\alpha/4$ away from our estimate.
Conversely, if it is at least $2\alpha$ away from the mean then it will be at least $7\alpha/4$ away from our estimate. 
As such, we can invoke \Cref{thm:boost_beta} to obtain an improved sample complexity bound.
Formally, the following result holds.
\begin{theorem}
    For any $\alpha, \rho, \beta > 0$, there exists a $\rho$-replicable statistical oracle with error $\alpha$ and sample complexity 
    \begin{align*}
        n(\alpha, \beta, \rho) \le 
        O\rbr{\frac{\log(1/\rho)}{\alpha^2 \rho^2}
        + \frac{\log(1/\beta)}{\alpha^2}
        }
    \end{align*}
\end{theorem}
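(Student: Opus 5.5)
This follows by applying \Cref{thm:boost_beta} (in the simplified form of \Cref{rem:boost}) with three ingredients: a replicable base algorithm, a tester, and a non-replicable estimator. We may assume $\beta < \rho$. Otherwise we run the replicable algorithm of \cite{impagliazzo2022reproducibility} with failure probability $\rho \le \beta$, which uses $O(\log(1/\rho)/(\alpha^2\rho^2))$ samples and already meets the claimed bound.

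\emph{Replicable part.} We use the replicable statistical query algorithm of \cite{impagliazzo2022reproducibility} (Theorem 2.3 there). Its sample complexity is $O(\log(1/\beta')/(\alpha'^2\rho'^2))$ for $\beta' \ge \rho'$. We invoke it with $(\rho',\alpha',\beta') = (\rho/4, \alpha/2, \rho/4)$. Since the dependence on all parameters is polynomial, this costs $O(\log(1/\rho)/(\alpha^2\rho^2))$.

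\emph{Tester.} Given a candidate $\hat\mu$, draw $O(\log(1/\beta'')/\alpha^2)$ fresh samples and compute the empirical mean $\bar\mu$ of $\phi$. By Hoeffding, with probability $1-\beta''$ we have $|\bar\mu-\mu| \le \alpha/8$, where $\mu = \Exu{X\sim D}{\phi(X)}$. Accept iff $|\hat\mu-\bar\mu| \le 3\alpha/4$, a threshold strictly between the two cases we must separate at the tester's accuracy level $\alpha/2$. By the triangle inequality:
\begin{itemize}
\item if $|\hat\mu-\mu|\le\alpha/2$, then $|\hat\mu-\bar\mu|\le 5\alpha/8$, so we accept;
\item if $|\hat\mu-\mu|>\alpha$, then $|\hat\mu-\bar\mu|>7\alpha/8$, so we reject.
\end{itemize}
With $\beta''=\min\{\rho,\beta\}/4$ this uses $O(\log(1/\beta)/\alpha^2)$ samples, since $\beta<\rho$.

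\emph{Non-replicable part.} The empirical mean on $O(\log(1/\beta)/\alpha^2)$ fresh samples is within $\alpha$ of $\mu$ with probability $1-\beta/2$, again by Hoeffding.

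Summing the three sample counts as in \Cref{thm:boost_beta} gives the claimed bound, and replicability and correctness follow directly from that theorem. The only step needing care is checking that the accuracy scaling matches the theorem's convention. In \Cref{thm:boost_beta} the tester is applied at accuracy $\alpha/2$, so it must accept solutions valid for $\cT_{\alpha/2}$ and reject those invalid for $\cT_\alpha$. This is exactly what the thresholds above ensure, so no real obstacle remains.
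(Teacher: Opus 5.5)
Your proof is correct and follows the same route as the paper: you feed the replicable SQ algorithm of \cite{impagliazzo2022reproducibility}, a Hoeffding-based tester, and the empirical mean into \Cref{thm:boost_beta}, and your tester is precisely the paper's tester (error $\alpha/4$, threshold $3\alpha/2$) instantiated at accuracy $\alpha/2$. One small correction: Theorem 2.3 of \cite{impagliazzo2022reproducibility} holds for failure probability below half the replicability parameter, not for $\beta' \ge \rho'$. So for the base call with $\beta' = \rho' = \rho/4$, and in your $\beta \ge \rho$ branch, run it with a smaller failure probability such as $\rho/16$ and use monotonicity in $\beta$, as the paper does; this still costs $O(\log(1/\rho)/(\alpha^2\rho^2))$.
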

\begin{proof}
    We assume without loss of generality that $\beta < \rho/4$. 
    If $\beta$ is larger, we can simply invoke the theorem with $\beta = \rho/4$ to get the same sample complexity bound.
    
    As shown by \cite{impagliazzo2022reproducibility} (see Theorem 2.3 in their paper), for $\beta < \rho/2$, there exists a replicable algorithm with sample complexity
    \begin{math}
        = O(\frac{\log(1/\beta)}{\alpha^2 (\rho - 2\beta)^2})
        .
    \end{math}
    Since $2\beta < \rho/2$, this can be written as
    $O(\frac{\log(1/\beta)}{\alpha^2 \rho^2})$. 
    While this bound is for $\beta < \rho/4$, it is clear that
    an algorithm with failure probability $\beta$ is also an algorithm with failure probability $\beta'$ for any $\beta' > \beta$. Therefore,
    \begin{align*}
        n_{\textnormal{replic}}(\rho, \alpha, \beta)
        = O(\frac{\log(1/\min\{\rho, \beta\})}{\alpha^2 \rho^2})
        .
    \end{align*}
    
    A tester for the problem can be implemented as mentioned above.
    Formally, to distinguish values that are at most $\alpha$ far from the true mean from those that are at least $2\alpha$ far, we can estimate the mean up to additive error $\alpha/4$ and failure probability $\beta$ 
    with sample complexity
    \begin{align*}
        n_{\textnormal{tester}}(\alpha, \beta) = 
        O(\frac{\log(1/\beta)}{\alpha^2}).
    \end{align*}
    This follows from Hoeffding's inequality which ensures that, using $n$ samples, the probability that the empirical mean is more than $\alpha$ away from the true mean is at most
    $2e^{-2n\alpha^2}$.
    Letting $\hat{\mu}$ denote the estimate,
    we can then accept values that are closer than $3\alpha/2$ to $\hat{\mu}$ and reject those that are not.

    Finally, by Hoeffding's inequality, the sample complexity of the non-replicable version of the problem is bounded by
    \begin{align*}
        n_{\textnormal{non-replic}}(\alpha, \beta) = 
        O(\frac{\log(1/\beta)}{\alpha^2}).
    \end{align*}
    
    Plugging in \Cref{thm:boost_beta}, It follows that the total sample complexity is at most
    \begin{align*}
        n(\alpha, \beta, \rho) \le 
        O\rbr{\frac{\log(1/\rho)}{\alpha^2 \rho^2}
        + \frac{\log(1/\beta)}{\alpha^2}
        }
        .
    \end{align*}
    as claimed.
\end{proof}

\subsection{Application: PAC Learning}
\label{sec:app_boot_pac}

We now discuss the implementation of a tester for a PAC learner.
For simplicity, we focus only on realizable PAC learning as  there are already multiple results from prior work for which our framework applies.

We start by implementing a tester as required by \Cref{thm:boost_beta}.
While the implementation is fairly standard, for completeness we provide a proof.
\begin{lemma}\label{lm:tester}
    Let $X$ be a Bernoulli random variable with a unknown mean $\mu \in [0, 1]$.
    Let $\alpha \in (0, 1)$ be an arbitrary parameter.
    There exists a tester with sample complexity $O(\log(1/\beta)/\alpha)$
    with the following property:
    \begin{itemize}
        \item If $\mu \ge 2\alpha$, the algorithm outputs accept
        \item If $\mu \le \alpha$ the algorithm outputs reject.
    \end{itemize}
\end{lemma}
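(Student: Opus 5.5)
Note first that the statement as written omits the failure probability: both guarantees should hold with probability at least $1-\beta$. I also read ``accept'' as attached to the case $\mu \le \alpha$ and ``reject'' to $\mu \ge 2\alpha$. That is the reading needed in \Cref{thm:boost_beta}, where $\mu$ is the error of a hypothesis and valid hypotheses (error at most $\alpha$) must be accepted. The statement as literally written has the labels swapped, but the argument is identical up to relabeling.

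The tester takes $n = \lceil C\log(1/\beta)/\alpha \rceil$ samples $X_1,\dots,X_n$ for a suitable absolute constant $C$. It computes the empirical mean $\hat\mu = \frac1n\sum_j X_j$ and compares it with the midpoint threshold $\frac{3\alpha}{2}$. It accepts if $\hat\mu \le \frac{3\alpha}{2}$ and rejects otherwise.

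The analysis uses multiplicative Chernoff bounds. Additive Hoeffding would only give $O(\log(1/\beta)/\alpha^2)$, so the point is to exploit the fact that the variance is at most $\mu$ when $\mu$ is small. Let $T=\sum_j X_j$.

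\textbf{Case $\mu \le \alpha$.} We want $\Pr{T > \frac{3}{2}\alpha n} \le \beta$. By monotonicity, $T$ is stochastically dominated by a binomial with mean exactly $\alpha n$; alternatively, use the Chernoff form $\Pr{T \ge (1+\eta)\nu} \le e^{-\eta^2\nu/(2+\eta)}$ with an upper bound $\nu = \alpha n \ge \Ex{T}$, which remains valid. Taking $\eta = 1/2$ gives a bound of $e^{-\alpha n/10}$, which is at most $\beta$ once $C \ge 10$.

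\textbf{Case $\mu \ge 2\alpha$.} The lower-tail bound $\Pr{T \le (1-\eta)\Ex{T}} \le e^{-\eta^2\Ex{T}/2}$ is applied with the threshold $\frac{3}{2}\alpha n \le \frac{3}{4}\Ex{T}$, so $\eta = 1/4$. This gives a bound of $e^{-\Ex{T}/32} \le e^{-\alpha n/16} \le \beta$ for $C \ge 16$.

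The only step needing care is the upper tail when $\mu$ is much smaller than $\alpha$, since the multiplicative deviation $\eta$ relative to the true mean then becomes large. I would handle it by the dominance (coupling) argument: raise $\mu$ to $\alpha$, which only increases $\Pr{T > \frac{3}{2}\alpha n}$. The remaining steps are routine.
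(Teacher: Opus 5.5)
Your proposal is correct and follows essentially the same route as the paper: threshold the empirical mean at $\frac{3}{2}\alpha$, bound each tail with a multiplicative Chernoff bound, and reduce to the boundary cases $\mu=\alpha$ and $\mu=2\alpha$ by monotonicity (the paper obtains $e^{-n\alpha/12}$ and $e^{-n\alpha/24}$ and takes $n \ge 24\log(1/\beta)/\alpha$). The only difference is that the paper's proof keeps the accept/reject labels exactly as stated (accept iff $\hat\mu \ge \frac{3}{2}\alpha$), whereas you swap them to match how the tester is used in \Cref{thm:boost_beta}; your observation is right, and the swap does not change the argument.
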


\begin{proof}
    Draw $n$ i.i.d.\ samples $X_1, \dots, X_n \sim \mathrm{Bern}\rbr{\mu}$ and
    let $\hat{\mu} = \frac{1}{n}\sum_{i=1}^n X_i$.
    Set the threshold $\tau = \frac{3}{2}\alpha$ and define the tester to
    \emph{accept} if $\hat{\mu} \ge \tau$ and \emph{reject} otherwise.
    
    We use the following variant of the Chernoff bound: for any $\delta > 0$,
    \begin{align*}
        \Pr{\abs{\hat{\mu} - \mu} \ge \delta \mu}
        \le 
        \exp\rbr{-\frac{\delta^2 n \mu}{3}} .
    \end{align*}
    
    We first consider the case $\mu \ge 2\alpha$.
    We bound $\Pr{\hat{\mu} < \tau}$, a lower-tail deviation.
    It is clear that the probability is maximized for $\mu = 2\alpha$ so we assume, without loss of generality, that
    $\mu = 2\alpha.$
    It follows that $\tau = (1 - \delta)\mu$, where
    $\delta = 1/4$.
    By Chernoff,
    \begin{align*}
        \Pr{\hat{\mu} < \tau}
        = \Pr{\hat{\mu} \le (1 - \delta)\mu}
        \le \exp\rbr{-\frac{\delta^2 n \mu}{3}}
        \le \exp\rbr{-\frac{n \alpha}{24}} .
    \end{align*}
    Hence if $n \ge 24 \log(1/\beta)/\alpha$, we have $\Pr{\text{reject}} \le \beta$.
    
    We next consider the case $\mu \le \alpha$.
    We now bound $\Pr{\hat{\mu} \ge \tau}$, an upper-tail deviation.
    Similar to before, the probability is maximized for
    $\mu = \alpha$ so we assume without loss of generality that this is the case.
    Write $\tau = (1 + \delta)\mu$, where
    $\delta = 1/2$.
    Then,
    \begin{align*}
        \Pr{\hat{\mu} \ge \tau}
        = \Pr{\hat{\mu} \ge (1 + \delta)\mu}
        \le \exp\rbr{-\frac{\delta^2 n \mu}{3}}
        \le \exp\rbr{-\frac{n \alpha}{12}} .
    \end{align*}
    Thus, if $n \ge 12 \log(1/\beta)/\alpha$, we have $\Pr{\text{accept}} \le \beta$.
    
    Thus, taking $n \ge 24 \log(1/\beta)/\alpha$ ensures both guarantees:
    \begin{itemize}
        \item If $\mu \ge 2\alpha$, the tester accepts with probability at least $1-\beta$.
        \item If $\mu \le \alpha$, the tester rejects with probability at least $1-\beta$.
    \end{itemize}
    Hence the desired tester exists with sample complexity $O\rbr{\log(1/\beta)/\alpha}$.
\end{proof}

The above lemma provides a tester which we can use for PAC learning. Concretely, for any hypothesis $h \in \cH$, we can use $O(\log(1/\beta)/\alpha)$ samples to test whether it is $\alpha$-accurate. 
Since a non-replicable PAC learning algorithm exists with sample complexity nearly linear in the VC dimension, we obtain the following result.

\begin{lemma}\label{lm:boost_pac}
    Let $\cH$ be a binary concept class with VC dimension $d$.
    Assume that for any $\rho, \alpha,  \beta \in (0, 1/2)$, there exists a $\rho$-replicable algorithm with sample complexity $n_{\textnormal{replic}}(\rho, \alpha, \beta)$.
    For $\rho, \alpha, \beta \in (0, 1/2)$, there exists a $\rho$-replicable algorithm for solving the problem with sample complexity
    \begin{align*}
        n_{\textnormal{replic}}(\rho/4, \alpha/2, \rho/4)
        + O\rbr{
        \frac{d\log(1/\alpha) + \log(1/\beta)}{\alpha}
        }
        .
    \end{align*}
    \begin{proof}
        The theorem follows directly from \Cref{thm:boost_beta} by using
        \Cref{lm:tester} for $\cA_{\textnormal{tester}}$ and
        using a standard non-replicable learner for 
        $\cA_{\textnormal{non-replic}}$.
        Specifically, it is well-known that
        we have
        \begin{math}
            n_{\textnormal{non-replic}}(\alpha, \beta) \le 
            O\rbr{
            \frac{d\log(1/\alpha) + \log(1/\beta)}{\alpha}
            }.
        \end{math}
        By \Cref{lm:tester}, we have
        \begin{math}
            n_{\textnormal{tester}}(\alpha, \beta) \le 
            O\rbr{\frac{\log(1/\beta)}{\alpha}}
            .
        \end{math}
        Plugging these bounds in \Cref{thm:boost_beta} we obtain the desired bound.
    \end{proof}
\end{lemma}

As an example application, one can immediately obtain the following corollary for replicable learning of classes with finite little stone dimension.
\begin{corollary}
    Any class $\cH$ with Littlestone dimension $d$
    can be learned with sample complexity
    \begin{align*}
        n(\rho, \alpha, \beta) \le 
        \wtilde{O}\rbr{
            \frac{d^{12}}{\alpha^2 \rho^2} + 
            \frac{\log(1/\beta)}{\alpha}
        }
        .
    \end{align*}
\end{corollary}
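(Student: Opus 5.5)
The plan is to apply \Cref{lm:boost_pac} to the replicable learner for Littlestone classes of \cite{bun2023stability}. That learner has sample complexity $\wtilde{O}\rbr{\frac{d^{12}\log^3(1/\beta)}{\alpha^2\rho^2}}$ for replicability $\rho$, accuracy $\alpha$ and failure probability $\beta$; this is the bound listed in \Cref{table:only}, and I take it as given for all $\rho,\alpha,\beta\in(0,1/2)$. The corollary then follows by substitution, with two checks: that the three terms produced by \Cref{lm:boost_pac} collapse to the claimed two, and that the VC-dimension term is absorbed.

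\textbf{Step 1 (reduce the range of $\beta$).} Assume first that $\beta<\rho$. If $\beta\ge\rho$, run the same argument with $\beta$ replaced by $\rho/4$; a smaller failure probability only strengthens the guarantee. The resulting $\log(1/\rho)$ factors are polylogarithmic and are absorbed into $\wtilde{O}$.

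\textbf{Step 2 (replicable term).} Plugging the parameters $(\rho/4,\alpha/2,\rho/4)$ into the \cite{bun2023stability} bound gives
\[
n_{\textnormal{replic}}(\rho/4,\alpha/2,\rho/4)=\wtilde{O}\rbr{\frac{d^{12}\log^3(4/\rho)}{\alpha^2\rho^2}}=\wtilde{O}\rbr{\frac{d^{12}}{\alpha^2\rho^2}}.
\]
Here the constant-factor rescalings of $\alpha$ and $\rho$ are harmless because the dependence on them is polynomial, and the $\log^3(1/\rho)$ factor is hidden in $\wtilde{O}$. This is the point where $\beta$ disappears from the $\rho$-dependent term.

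\textbf{Step 3 (non-replicable and tester terms).} \Cref{lm:boost_pac} also contributes $O\rbr{\frac{d_{VC}\log(1/\alpha)+\log(1/\beta)}{\alpha}}$, where $d_{VC}$ is the VC dimension of $\cH$. Since the VC dimension is at most the Littlestone dimension, $d_{VC}\le d$. The piece $\frac{d\log(1/\alpha)}{\alpha}$ is then dominated by $\frac{d^{12}}{\alpha^2\rho^2}$, because $\alpha,\rho<1/2$ and $\log(1/\alpha)\le 1/\alpha$. What remains is exactly $\frac{\log(1/\beta)}{\alpha}$. Summing Steps 2 and 3 gives the stated bound.

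The only step requiring care is the hypothesis of \Cref{lm:boost_pac}. It needs a replicable learner defined for arbitrary $(\rho,\alpha,\beta)$, and we only evaluate it at $\beta=\rho/4$. So we need the \cite{bun2023stability} guarantee at failure probability $\Theta(\rho)$, and need it to carry only polylogarithmic dependence on $1/\rho$ through its $\beta$ term. I expect this to hold directly from their stated bound. If their theorem is instead stated only for constant $\beta$, I would first standard-boost it to failure probability $\rho/4$ at a polylogarithmic cost before applying \Cref{lm:boost_pac}.
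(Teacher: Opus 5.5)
Your proposal is correct and follows the paper's proof: evaluate the \cite{bun2023stability} bound at $(\rho/4,\alpha/2,\rho/4)$, bound the VC dimension by the Littlestone dimension, and apply \Cref{lm:boost_pac}. Your extra checks are details the paper leaves implicit: absorbing $d\log(1/\alpha)/\alpha$ and the tester's $\log(1/\rho)/\alpha$ contribution into the replicable term, and handling $\beta\ge\rho$.
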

\begin{proof}
    As shown by \cite{bun2023stability}, we have
    \begin{align*}
        n_{\textnormal{replic}}(\rho, \alpha, \beta)
        \le 
        \wtilde{O}(\frac{d^{12}\log^{3}(1/\beta)}{\alpha^2 \rho^2}
        ),
    \end{align*}
    which in turn implies
    \begin{math}
        n_{\textnormal{replic}}(\rho/4, \alpha/2, \rho/4)
        \le 
        \wtilde{O}(\frac{d^{12}}{\alpha^2 \rho^2}
        )
        .
    \end{math}
    It is well-known that the Littlestone dimension is larger or equal to the VC dimension.
    Therefore, the VC dimension is at most $d$.
    Applying \Cref{lm:boost_pac} finishes the proof.
\end{proof}
We additionally obtain the following result for learning finite classes.
\begin{corollary}
    For any finite class $\cH$, there exists
    a $\rho$-replicable algorithm for outputting an $\alpha$-accurate learner with failure probability $\beta$ that has
    sample complexity
    \begin{align*}
        O\rbr{
        \frac{\log^2|\cH| + \log(1/\rho)}{\alpha \rho^2}\log^{3}(1/\rho) + \frac{\log(1/\beta)}{\alpha}
        }
        .
    \end{align*}
\end{corollary}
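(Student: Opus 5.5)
The plan is to apply \Cref{lm:boost_pac} directly, using as the base replicable learner the finite-class algorithm of \cite{bun2023stability} listed in \Cref{table:only}. Its sample complexity is
\begin{align*}
    n_{\textnormal{replic}}(\rho, \alpha, \beta) = O\rbr{\frac{(\log^2|\cH| + \log(1/(\rho\beta)))\log^3(1/\rho)}{\alpha\rho^2}}.
\end{align*}
\Cref{lm:boost_pac} invokes this learner at parameters $(\rho/4, \alpha/2, \rho/4)$. With these parameters, $\log(1/(\rho\beta))$ becomes $\log(16/\rho^2) = O(\log(1/\rho))$. The factor $\log^3(4/\rho)$ is $O(\log^3(1/\rho))$ because $\rho < 1/2$. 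Finally, halving $\alpha$ and quartering $\rho$ change the leading factor $1/(\alpha\rho^2)$ only by a constant. So the replicable term is
\begin{align*}
    O\rbr{\frac{\log^2|\cH| + \log(1/\rho)}{\alpha\rho^2}\log^3(1/\rho)}.
\end{align*}

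Next I handle the tester and non-replicable terms. The tester of \Cref{lm:tester} costs $O(\log(1/\beta)/\alpha)$. For the non-replicable learner I will not use the VC-dimension bound written in \Cref{lm:boost_pac}. Instead I use the standard bound for finite classes: in the realizable setting, empirical risk minimization needs $O((\log|\cH| + \log(1/\beta))/\alpha)$ samples. This is the one place where I depart from the black-box statement of \Cref{lm:boost_pac}. I will therefore apply \Cref{thm:boost_beta} itself, plugging in this finite-class learner, rather than the lemma. This also avoids the $d\log(1/\alpha)$ term, which does not appear in the target bound.

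The last step is absorbing the $\log|\cH|/\alpha$ term from the non-replicable learner into the replicable term. This works because $\rho < 1/2$, so $\log|\cH|/\alpha \le \log^2|\cH|\log^3(1/\rho)/(\alpha\rho^2)$ whenever $|\cH| \ge 3$. When $|\cH| \le 2$, we have $\log|\cH| = O(1)$, and the term is dominated by the $\log(1/\rho)$ summand instead. The remaining terms are exactly $O(\log(1/\beta)/\alpha)$.

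The step I expect to be the main obstacle is verifying the preconditions of the cited bound from \cite{bun2023stability} at failure probability $\rho/4$. In particular, it is stated for $\beta$ in a restricted range; if so, I will use monotonicity in $\beta$, as done in the statistical query application.
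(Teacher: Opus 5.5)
Your proposal is correct and follows essentially the same route as the paper: you plug the finite-class learner of \cite{bun2023stability} at failure probability $\Theta(\rho)$, the Chernoff tester, and the finite-class ERM bound $O((\log|\cH| + \log(1/\beta))/\alpha)$ into \Cref{thm:boost_beta}, then absorb lower-order terms. The paper likewise swaps out the VC-based bound of \Cref{lm:boost_pac} for this sharper one. One nit: the tester in \Cref{thm:boost_beta} runs at failure probability $\min\{\rho,\beta\}/4$, so it costs $O(\log(1/\min\{\rho,\beta\})/\alpha)$ rather than $O(\log(1/\beta)/\alpha)$; the extra $\log(1/\rho)/\alpha$ is absorbed into the replicable term exactly as you absorb $\log|\cH|/\alpha$.
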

\begin{proof}
    As shown by \cite{bun2023stability} we have
    \begin{align*}
        n_{\textnormal{replic}}(\rho, \alpha, \rho)
        \le 
        O\rbr{
        \frac{\log^2|\cH| + \log(1/\rho)}{\alpha \rho^2}\log^{3}(1/\rho).
        }
    \end{align*}
    Additionally, the VC dimension of a finite class is at most $\log|\cH|$.
    Furthermore, for finite $H$, we can apply the sharper
    bound
    \begin{align*}
        n_{\textnormal{non-replic}}
        (\rho, \alpha, \beta)
        \le 
        \frac{\log |\cH| + \log(1/\beta)}{\alpha}
        ,
    \end{align*}
    removing a $\log(1/\alpha)$ factor.
    Other than the $\log(1/\beta)/\alpha$, the rest of the bound is absorbed by the larger terms already present in $n_{\textnormal{replic}}$ and the result follows.
\end{proof}

\begin{corollary}
    For any $\rho, \alpha, \beta > 0$, there exists a $\rho$-replicable algorithm for PAC learning threshold functions over the finite domain ${0, 1, \dots, d}$ that achieves error at most $\alpha$ and failure probability at most $\beta$, with sample complexity
    \begin{align*}
        n(\rho, \alpha, \beta) \le 
        \wtilde{O}\rbr{
            \frac{(\log^*d)^3}{\alpha^2 \rho^2}
            + \frac{\log(1/\beta)}{\alpha}
        }
        .
    \end{align*}
\end{corollary}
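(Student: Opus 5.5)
The plan is to obtain this as a direct instance of \Cref{lm:boost_pac}, exactly as in the two preceding corollaries. The class of thresholds over $\{0,1,\dots,d\}$ has VC dimension $1$, so \Cref{lm:boost_pac} gives a $\rho$-replicable learner with sample complexity
\begin{align*}
    n_{\textnormal{replic}}(\rho/4, \alpha/2, \rho/4) + O\rbr{\frac{\log(1/\alpha) + \log(1/\beta)}{\alpha}}.
\end{align*}
The standard ERM learner for thresholds is the non-replicable algorithm, and \Cref{lm:tester} is the tester.

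The first step is to import the replicable threshold learner of \cite{bun2023stability}, which has sample complexity
\begin{align*}
    n_{\textnormal{replic}}(\rho,\alpha,\beta) \le \wtilde{O}\rbr{\frac{(\log^* d)^3 \log^2(1/\beta)}{\alpha^2\rho^2}}.
\end{align*}
This is the bound listed in \Cref{table:only}. We then substitute $(\rho/4,\alpha/2,\rho/4)$. The constant-factor changes in $\rho$ and $\alpha$ only affect constants. The $\log^2(1/\beta)$ factor becomes $\log^2(4/\rho)$, which is polylogarithmic in $1/\rho$ and is absorbed by $\wtilde{O}$. This yields $\wtilde{O}((\log^* d)^3/(\alpha^2\rho^2))$ for the replicable part.

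The second step handles the additive terms. The $\log(1/\beta)/\alpha$ term is kept as is. The $\log(1/\alpha)/\alpha$ term is dominated by $(\log^*d)^3/(\alpha^2\rho^2)$ up to logarithmic factors, since $\rho<1$, so it is absorbed into $\wtilde{O}$. We may assume $\rho,\alpha,\beta<1/2$, as \Cref{lm:boost_pac} requires. Larger values can be reduced to this range by shrinking them to a constant, which only improves the guarantee and costs only constants.

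The only step that needs care is the first one: confirming that the cited threshold bound holds for every $\beta$, in particular $\beta=\rho/4$. We also need that its $\beta$-dependence is genuinely polylogarithmic, so that setting $\beta=\rho/4$ leaves only factors hidden in $\wtilde{O}$. If the cited result were stated only for $\beta\ge\rho$ or a similar regime, we would instead use the fact that failure probability $\beta$ implies failure probability $\beta'$ for every $\beta'\ge\beta$, as in the statistical query application. Everything else is routine substitution.
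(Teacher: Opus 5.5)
Your proposal is correct and matches the paper's proof: the paper likewise imports the threshold learner of \cite{bun2023stability}, evaluates it at failure probability on the order of $\rho$ so that the $\log^2(1/\beta)$ factor becomes a $\log^2(1/\rho)$ factor absorbed into $\wtilde{O}$, and applies \Cref{lm:boost_pac} with VC dimension $1$. Your explicit treatment of the $\log(1/\alpha)/\alpha$ term and of the range restriction $\rho,\alpha,\beta<1/2$ simply spells out steps the paper leaves implicit.
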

\begin{proof}
    As shown by \cite{bun2023stability},
    \begin{align*}
        n_{\textnormal{replic}}(\rho, \alpha, \rho)
        \le 
        \wtilde{O}\rbr{
            \frac{(\log^*d)^3 \log^{2}(1/\beta)}{\alpha^2 \rho^2}
        }
        .
    \end{align*}
    Since the VC dimension of threshold functions is at most $1$, the bound follows.
\end{proof}
\subsection{Application: Heavy-hitters}
\label{sec:heavy}
We next discuss the approximate heavy-hitters problem. In this problem, we are given a distribution $\cD$ and two parameters $\nu, \epsilon$. The goal is to output a list $L$ such that:
\begin{itemize}
    \item For any $x$ such that $\Pru{X \sim \cD}{X=x} \ge \nu$ we have $x  \in L$, and
    \item For any $x$ such that $\Pru{X \sim \cD}{X=x} \le \nu - \epsilon$ we have $x  \notin L$.
\end{itemize}

For this problem, we prove the following theorem.
\begin{theorem}
    For any $\rho, \nu \in (0, 1)$, $\beta < \rho$ and $\epsilon \in (0, \nu)$, there exists a $\rho$-replicable algorithm for solving the heavy-hitters problem with failure probability $\beta$ and sample complexity
    \begin{align*}
        \wtilde{O}(
        \frac{1}{\nu\epsilon^2\rho^2}
        + \frac{\log(1/\beta)}{\epsilon^2}
        )
        .
    \end{align*}
\end{theorem}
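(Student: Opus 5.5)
The proof applies \Cref{thm:boost_beta}, modified for the two-parameter problem as in \Cref{sec:intro_boost}. We first replicably solve the stronger instance $\cT_{\nu',\epsilon'}$ with $\nu' = \nu - \epsilon/4$ and $\epsilon' = \epsilon/2$. These choices give $\nu' < \nu$ and $\nu'-\epsilon' = \nu - 3\epsilon/4 > \nu-\epsilon$, so every valid solution of $\cT_{\nu',\epsilon'}$ is valid for $\cT_{\nu,\epsilon}$. Since $\epsilon<\nu$, we have $\nu'=\Theta(\nu)$ and $\epsilon'=\Theta(\epsilon)$. The replicable heavy-hitters algorithm of \cite{esfandiari2024replicable}, with sample complexity $\wtilde{O}(\log(1/\beta)/(\nu\epsilon^2\rho^2))$, is run with replicability parameter $\rho/4$ and failure probability $\rho/4$. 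Its cost is then $\wtilde{O}(1/(\nu\epsilon^2\rho^2))$, because $\log(1/\rho)$ is absorbed into $\wtilde{O}$.

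The tester and the non-replicable fallback both come from a single empirical histogram. Draw $m = O(\log(1/(\nu\beta'))/\epsilon^2)$ fresh samples with $\beta'=\min\{\rho,\beta\}/4$, and compute $\hat p(x)$ for every $x$. The key step is a uniform deviation bound $|\hat p(x)-p(x)|\le \epsilon/8$ for all $x$ simultaneously, which cannot come from a union bound over a possibly infinite domain. We handle this as follows.
\begin{itemize}
\item Elements with $p(x)\ge \nu/4$ are at most $4/\nu$ in number, so Hoeffding plus a union bound over them costs $\log(1/(\nu\beta'))/\epsilon^2$ samples.
\item For the remaining elements, the event to rule out is that some $x$ with $p(x) < \nu/4$ has $\hat p(x) \ge \nu - \epsilon/2$ (note $\nu-\epsilon/2 > \nu/2$). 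We bucket the light elements into groups of total mass about $\nu/4$; there are $O(1/\nu)$ groups. A group's empirical mass exceeding $\nu/2$ has probability $e^{-\Omega(m\nu)}$, and $m\nu \ge m\epsilon^2$ suffices.
\end{itemize}
This is the step I expect to need the most care. If the uniform bound proves hard, the fallback is a DKW-type bound for the empirical measure on singletons, or the VC bound for the class of singletons, which has VC dimension $1$.

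On the good event, the tester accepts a list $L$ iff every $x$ with $\hat p(x)\ge \nu-\epsilon/8$ lies in $L$ and every $x\in L$ has $\hat p(x) > \nu - 7\epsilon/8$. A solution of $\cT_{\nu',\epsilon'}$ contains all $x$ with $p\ge \nu-\epsilon/4$ and excludes those with $p\le \nu-3\epsilon/4$, so it passes both checks. Conversely, if $L$ passes, then by the $\epsilon/8$ accuracy it contains every $x$ with $p\ge\nu$ and excludes every $x$ with $p\le\nu-\epsilon$, so it is valid for $\cT_{\nu,\epsilon}$. The tester therefore has exactly the accept/reject semantics that \Cref{thm:boost_beta} requires, with ``$\cT_\alpha$'' read as $\cT_{\nu',\epsilon'}$ and ``$\cT_{2\alpha}$'' as $\cT_{\nu,\epsilon}$. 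The proof of that theorem only uses these two semantics, so it carries over verbatim. The non-replicable algorithm outputs $\{x:\hat p(x)\ge \nu-\epsilon/2\}$ computed from an independent batch with failure probability $\beta/2$, and it is correct on the same good event.

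Summing the three terms, the total sample complexity is $\wtilde{O}(1/(\nu\epsilon^2\rho^2)) + O(\log(1/(\nu\beta))/\epsilon^2)$. The $\log(1/\nu)/\epsilon^2$ part is dominated by the first term, since $\rho<1$ and $\nu<1$. This leaves the claimed $\wtilde{O}\big(1/(\nu\epsilon^2\rho^2) + \log(1/\beta)/\epsilon^2\big)$. Replicability and correctness follow from the argument of \Cref{thm:boost_beta}.
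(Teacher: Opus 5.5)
Your skeleton matches the paper's: apply \Cref{thm:boost_beta} with $\nu'=\nu-\epsilon/4$ and $\epsilon'=\epsilon/2$, and run the algorithm of \cite{esfandiari2024replicable} at $(\rho/4,\rho/4)$. The tester and fallback differ, and your primary (bucketing) argument for them has a hole. Bucketing does not give the uniform $\epsilon/8$ accuracy you later invoke. It only gives $|\hat p(x)-p(x)|\le\epsilon/8$ when $p(x)\ge\nu/4$, and $\hat p(x)\le\nu/2$ for lighter $x$. That is enough for the fallback, since its threshold $\nu-\epsilon/2$ exceeds $\nu/2$. It is also enough for the tester's completeness. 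It is not enough for soundness. To reject an $L$ containing a light $x$ with $p(x)\le\nu-\epsilon$, you need $\hat p(x)\le\nu-7\epsilon/8$, but $\nu-7\epsilon/8<\nu/2$ once $\epsilon>4\nu/7$. For example, take $\epsilon=0.9\nu$ and an element with $p(x)=0.1\nu$ and $\hat p(x)=0.22\nu$. This is consistent with your good event, yet it passes the check $\hat p(x)>\nu-7\epsilon/8=0.2125\nu$, so an invalid list is accepted. A smaller point: to get the $e^{-\Omega(m\nu)}$ bound, each group's true mass must stay bounded away below $\nu/2$, e.g.\ greedy groups of mass in $[\nu/8,3\nu/8)$.

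The repair is easy, and any of three routes works.
\begin{itemize}
\item[(i)] First reduce to $\epsilon\le\nu/2$, as the paper does; the harder instance costs the same up to constants. Then $\nu-7\epsilon/8\ge 9\nu/16>\nu/2$, so light elements can never pass the check.
\item[(ii)] Use your DKW fallback. Enumerating the (countable) domain gives $|\hat p(x)-p(x)|\le 2\sup_t|\hat F(t)-F(t)|$. So $O(\log(1/\beta)/\epsilon^2)$ samples yield genuine uniform $\epsilon/8$ accuracy, and all your checks then go through for every $\epsilon<\nu$; the $\log(1/\nu)$ term even disappears.
\item[(iii)] The tester's sample is independent of $L$, so a Hoeffding bound on one offending $x\in L$ already gives soundness.
\end{itemize}

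The paper's tester avoids any uniform bound over the domain. It rejects if $|L|>4/\nu$ and estimates $p(x)$ only for $x\in L$. It then draws a discovery sample $A$ of $\Theta(\nu^{-1}\log(1/(\nu\beta)))$ points to catch every $x$ with $p(x)\ge\nu'$, and union-bounds over the explicitly finite set $L\cup A$. Its fallback is the same discover-then-estimate procedure. That route is fully elementary (Hoeffding plus union bounds) but two-stage. Your single-histogram version with DKW is shorter and slightly sharper, at the cost of invoking a uniform-convergence theorem.
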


We assume throughout that $\epsilon \le \nu/2$. If not, we can solve the problem with $\epsilon= \nu/2$. While this is clearly a harder problem, and any solution is a valid solution for the original problem, the claimed sample complexity bound is the same.

Given parameters $\nu, \epsilon$, define
    $\nu', \epsilon'$ as 
    $\nu' = \nu - \epsilon/4$ and $\epsilon' = \epsilon/2$ which implies $\nu' - \epsilon' = \nu - \epsilon + \epsilon/4$. 
    
We begin by implementing a tester for this problem. 
\begin{lemma}
    There exists an algorithm with sample complexity
    at most
    $O\rbr{
            \epsilon^{-2}\log(\nu^{-1}\beta^{-1})
        }$
        that accepts any valid solutions to $\cT_{\nu', \epsilon'}$ and rejects any invalid solutions for $\cT_{\nu, \epsilon}$ with failure probability at least $1-\beta$.
\end{lemma}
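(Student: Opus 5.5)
Draw $m = C\epsilon^{-2}\log(1/(\nu\beta))$ fresh samples and form the empirical frequencies $\hat p(x)$. The tester is given a candidate list $L$. It accepts if and only if both of the following hold: every $x\in L$ has $\hat p(x) > \nu - \tfrac{5\epsilon}{8}$, and every $x \notin L$ with $\hat p(x) \ge \nu - \tfrac{\epsilon}{8}$ is absent. Equivalently, it rejects if some listed element looks light or some unlisted element looks heavy. The thresholds are chosen to sit between the relevant cutoffs. In $\cT_{\nu',\epsilon'}$ these are $\nu' = \nu-\epsilon/4$ and $\nu'-\epsilon' = \nu - 3\epsilon/4$. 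In $\cT_{\nu,\epsilon}$ they are $\nu$ and $\nu-\epsilon$. Each threshold is separated from the nearby cutoffs by at least $\epsilon/8$.

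The analysis runs on the good event that $|\hat p(x) - p(x)| < \epsilon/8$ for every $x$ in a certain set $R$. This set contains all $x$ with $p(x) \ge \nu/4$, of which there are at most $4/\nu$. On this event, suppose $L$ is valid for $\cT_{\nu',\epsilon'}$. Then every listed $x$ has $p(x) > \nu - 3\epsilon/4$, so $\hat p(x) > \nu-7\epsilon/8$. This is the first place where care is needed. I would instead set the "light" threshold at $\nu - \tfrac{7\epsilon}{8}$ and the "heavy" threshold at $\nu-\tfrac{\epsilon}{8}$, keeping a margin of $\epsilon/8$ from each side.

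The acceptance case then goes through. A valid listed $x$ has $p>\nu-3\epsilon/4$, hence $\hat p>\nu-7\epsilon/8$. A valid unlisted $x$ has $p<\nu-\epsilon/4$, hence $\hat p<\nu-\epsilon/8$.

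For rejection, suppose $L$ is invalid for $\cT_{\nu,\epsilon}$. Then either some $x\in L$ has $p(x)\le\nu-\epsilon$, which gives $\hat p(x) < \nu-7\epsilon/8$ and triggers rejection. Or some $x\notin L$ has $p(x)\ge\nu$, which gives $\hat p(x)>\nu-\epsilon/8$ and also triggers rejection.

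The main obstacle is the uniform concentration over a possibly infinite domain with only $\log(1/\nu)$ cost. Elements with $p(x)\ge \nu/4$ are handled by Hoeffding and a union bound over at most $4/\nu$ elements, costing $O(\epsilon^{-2}\log(1/(\nu\beta)))$.

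Light elements need a separate argument, since they must not appear heavy and listed light elements must test as light. If $p(x) < \nu/4$, then $p(x)\le \nu - 7\epsilon/8 - \epsilon/8$ because $\epsilon\le\nu/2$. So a listed light element is wrongly kept only if $\hat p(x) > \nu-7\epsilon/8 \ge \nu/2$, which is well above $p(x)$. To control all light elements simultaneously, I would group the domain into buckets of mass about $\nu/4$, giving $O(1/\nu)$ buckets. By the multiplicative Chernoff bound, with the stated sample size no bucket has empirical mass above $\nu/2$ except with probability $\beta/2$. Hence no single light element reaches $\hat p \ge \nu/2$. This is the standard heavy-hitter argument.

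Combining the two events by a union bound gives failure probability at most $\beta$, and the sample count is $O(\epsilon^{-2}\log(\nu^{-1}\beta^{-1}))$.
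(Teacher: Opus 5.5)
Your tester is correct with the corrected thresholds ($\nu-7\epsilon/8$ for listed elements, $\nu-\epsilon/8$ for unlisted ones), but it takes a different route from the paper.

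The paper never controls empirical frequencies over the whole domain. It rejects any $L$ with $|L|>4/\nu$ and estimates only the listed elements, to error $\epsilon/32$. It then draws a discovery sample $A$ of size $O(\nu^{-1}\log(\nu^{-1}\beta^{-1}))$, which with high probability contains every element of mass at least $\nu'$, and estimates only the elements of $A$ on fresh samples. Every concentration step is therefore a per-element Hoeffding bound with a union bound over $O(1/\nu)$ or $|A|$ candidates.

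You instead use a single sample and one good event that does not depend on $L$. It combines Hoeffding on the fixed set of at most $4/\nu$ elements of mass at least $\nu/4$ with a bucketing argument showing that no light element reaches empirical frequency $\nu/2\le\nu-7\epsilon/8$. This gives a tester that is simultaneously correct for every candidate list and needs no size check or discovery phase. The price is a uniform argument for light elements, which the paper's restriction to a finite candidate set avoids entirely.

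In that argument, make sure each bucket has mass at most $\nu/4$, not merely about $\nu/4$. A rule that closes a bucket once its mass reaches $\nu/4$ can leave buckets of mass near $\nu/2$, and such a bucket exceeds $\nu/2$ empirically with constant probability. Greedy next-fit packing with capacity $\nu/4$ works. Every light element fits in a bucket, and any two consecutive buckets have combined mass above $\nu/4$, so there are $O(1/\nu)$ buckets. The multiplicative Chernoff bound then gives failure probability $e^{-\Omega(m\nu)}$ per bucket. This suffices because $\nu\ge 2\epsilon$ makes $m\nu=\Omega(\log(1/(\nu\beta)))$ with a large constant.
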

\begin{proof}
    We will show a tester that accepts all valid solutions to $\cT_{\nu', \epsilon'}$ and rejects all invalid solutions for $\cT_{\nu, \epsilon}$ with high probability.

    To implement this, we start by checking the size of $L$. If the size of $L$
    exceeds $4/\nu$, then we reject. Note that any valid solution to $\cT_{\nu',
    \epsilon'}$ cannot have any elements with probability less than $\nu/4$. As
    such, its size cannot exceed $4/\nu$.

    Next, for all the elements in $L$, we estimate their probability up to error $\epsilon/32$ and failure probability $\frac{\beta}{8|L|}$. 
    This can be done with sample complexity 
    $O(\log(4|L|/\beta)/\epsilon^2)$.
    Note that the same sample set can be used to estimate all probabilities.
    By union bound, 
    it follows that with probability at least $1-\beta/4$, all of the probability estimates are correct up to the desired error.
    Now, if any of the probability estimates are below $\nu' - \epsilon' - \epsilon/16$, we reject. We observe that a valid solution to $\cT_{\nu', \epsilon'}$ cannot be rejected at this stage because all probability estimates are going to be at least $\nu' - \epsilon' - \epsilon/32$.

    We then take a new sample set $A$ consisting of
    $n_1 = \Theta(\nu^{-1}\log(\nu^{-1}\beta^{-1}))$ samples.
    With probability $1-\beta/16$, the set contains all $x$ such that
    $\Pru{D}{x} \ge \nu'$.
    We then estimate the probability of all elements in $A$ up to additive error $\epsilon/32$ and failure probability $\frac{\beta}{8|A|}$. This takes
    $O(\log(4n_1/\beta)/\epsilon^2)$ samples in total. By union bound, all estimates are correct with probability $1 - \beta/8$. If there is some $x \in A$ for which the estimate is at least $\nu - \epsilon/16$ but $x \notin L$, then we reject.
    We again note that a valid solution to $\cT_{\nu', \epsilon'}$ cannot be rejected; if it's rejected because of some $x \in A \backslash L$, then that $x$ must have probability at least $\nu - \epsilon/16 - \epsilon/32 > \nu'$, which means $L$ was not a valid solution after all.

    We next claim that any invalid solution to $\cT_{\nu, \epsilon}$ will be
    rejected assuming the failure events mentioned above do not occur. Consider
    some list $L$. If $L$ contains some $x$ with probability less than $\nu -
    \epsilon$, then the estimate for the probability will be at most $\nu -
    \epsilon + \epsilon/32$ which means it will get rejected. Let us assume
    that there is some element $x$ with probability more than $\nu$ that does
    not appear in $L$. The probability will appear in the set $A$ if the
    failure events do not occur. Its probability estimate will be at least $\nu
    - \epsilon/32$ which is larger than $\nu - \epsilon/16$. Therefore, since
    $x \notin L$, the set $A$ will get rejected.

    Since we can assume without loss of generality that $|L| \le \Theta(\nu^{-1})$, the total sample complexity of the algorithm is 
    \begin{align*}
        O\rbr{
            \log(4|L|/\beta)/\epsilon^2 + 
            \nu^{-1}\log(\nu^{-1}\beta^{-1})
            + 
            \log(\nu^{-1}\beta^{-1})/\epsilon^2
        }
        = 
        O\rbr{
            (\log(\nu^{-1}) + \log(\beta^{-1}))
            (\epsilon^{-2} + \nu^{-1})
        }
        .
    \end{align*}
    This implies the bound in the lemma statement since $\epsilon \le \nu$.
    The failure probability is also at most $\beta$ by union bound.
\end{proof}
We next consider a non-replicable algorithm for the problem. We start by
sampling a set $A$ consisting of $n_1 = \Theta(\nu^{-1} \log(\nu^{-1}
\beta^{-1}))$ elements. We claim that, with probability at least $1-\beta/8$, the
set contains all items with probability at least $\nu$. This is because for any
such $x$, the probability that it does not appear in $A$ is at most
\begin{align*}
    (1-\Pru{\cD}{x})^{n_1} \le e^{-n_1\nu} \le 
    \nu\beta/16.
\end{align*}
Since the number of such elements is a most $\nu^{-1}$, a union bound implies the claim. 
Next, for all elements in $A$, we estimate the probability of each value with additive error $\epsilon/32$.
To achieve this, we take $n_2$ samples where
$n_2 = O(\log(n_1/\beta)/\epsilon^2)$. This implies that each estimate is correct with probability at least $1- \frac{\beta}{8n_1}$. By union bound, all estimates are correct with probability at least $1- \frac{\beta}{8}$. For any element, if the estimate is above $\nu - \epsilon/16$, we put in the output list $L$ and if the estimate is below this threshold we do not.
The total sample complexity of the algorithm is at most
\begin{align*}
    n_1 + n_2 \le 
    O\rbr{
        \frac{\log(\nu^{-1}\beta^{-1})}{\nu}
        +
        \frac{\log(\nu^{-1}\beta^{-1})}{\epsilon^2}
    }
    \le O\rbr{
        \frac{\log(\nu^{-1}\beta^{-1})}{\epsilon^2}
    }
\end{align*}

As shown by \cite{esfandiari2024replicable} however, we have
\begin{align*}
    n_{\textnormal{replic}}(\rho, \nu, \epsilon, \beta)
    \le 
    \wtilde{O}(
    \frac{\log(1/\beta)}{\nu\epsilon^2\rho^2}
    ).
\end{align*}
Applying \Cref{thm:boost_beta}, the theorem follows.

\subsection{Application: Best Arm Problem}
\label{sec:app_best_arm_boost}
We next consider the \emph{best arm problem}.
Here, we are given a set of arms $A$. Each arm $a \in A$ has a corresponding distribution over $[0, 1]$ with mean $\mu_a$. The goal is to find an arm $a$ satisfying $\mu_a \ge \max_{a'}\mu_{a'} - \alpha$.
As shown by \cite{hopkins2025generative}, the problem can be solved with
\begin{math}
    n_{\textnormal{replic}}(\rho, \alpha, \beta) = 
    O(\frac{1}{\rho^2 \alpha^2} \log^3(|A|/\beta))
\end{math}
samples per arm (see Theorem 3.3 in their paper). 

We next implement a tester for this problem. 
To achieve this, we first estimate the mean of each arm with additive error $\alpha/16$ and failure probability $\frac{\beta}{16|A|}$. This can be done with $O(\log(|A|/\beta)/\alpha^2)$ samples per arm by Hoeffding's inequality. 
Let $\hat{\mu}_{a}$ denote the obtained estimates.
A union bound implies that, with probability at least $\beta/8$ we have
$\abs{\mu_{a} - \hat{\mu}_{a}} \le \alpha/16$ for all $a$.
To check if an arm $a$ is indeed a correct solution, we check if it satisfies
$\hat{\mu}_{a} \ge \max_{a'} \hat{\mu}_{a'} - \frac{3}{2}\alpha$.
If the solution is indeed valid for $\cT_{\alpha}$, then we have
$\mu_{a} \ge \max_{a'} \mu_{a'} - \alpha$. Given the bound on $\abs{\mu_{a} - \hat{\mu}_{a}}$, this implies that the tester will accept. Conversely, if a solution $a$ is invalid for $\cT_{2\alpha}$, then for some arm $a'$ we have
$\mu_{a'} \ge \mu_{a} + 2\alpha$. Given the same bound on differences, we conclude that
$\hat{\mu}_{a'} \ge \hat{\mu}_{a} + \frac{3}{2}\alpha$ and as such the tester will reject.

The sample complexity of the non-replicable version of the problem can be similarly bounded. We estimate each $\mu_{a}$ with additive error $\alpha/16$ and failure probability $\frac{\beta}{16|A|}$ using 
$O(\log(|A|/\beta)/\alpha^2)$ samples per arm.
We then output the arm with maximum estimated mean.
A union bound implies that with probability at least $\beta/8$, all estimates are correct up to the desired error correct. Therefore, the chosen arm is a solution to $\cT_{\alpha}$ as required. 

Combining the above arguments and invoking \Cref{thm:boost_beta}, we obtain the following result.
\begin{theorem}
    Let $\beta \le \rho \le 1/2$. There exists a $\rho$-replicable algorithm for solving the best-arm problem over a set $A$ with additive error $\alpha$ and failure probability $\beta$, using
    \begin{align*}
        n_{\textnormal{new}}(\rho, \alpha, \beta)
         = 
         O\rbr{
         \frac{1}{\rho^2 \alpha^2} \log^3(\frac{|A|}{\rho})
         + 
         \frac{\log(\frac{|A|}{\beta})}{\alpha^2}
         }
    \end{align*}
    samples per arm.
\end{theorem}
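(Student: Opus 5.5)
The plan is to invoke \Cref{thm:boost_beta} with the three ingredients assembled just above, and then simplify the resulting bound.

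\textbf{Replicable component.} We instantiate $\cA_{\textnormal{replic}}$ with the algorithm of \cite{hopkins2025generative}, using parameters $(\rho/4, \alpha/2, \rho/4)$. Its sample complexity per arm is
\[
O\bigl(\tfrac{16\cdot 4}{\rho^2\alpha^2}\log^3(4|A|/\rho)\bigr) = O\bigl(\tfrac{1}{\rho^2\alpha^2}\log^3(|A|/\rho)\bigr),
\]
where we absorb the constant inside the logarithm using $|A| \ge 1$ and $\rho \le 1/2$.

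\textbf{Tester.} We use the tester described above with accuracy $\alpha/2$ and failure probability $\min\{\rho,\beta\}/4 = \beta/4$, which holds because $\beta \le \rho$. Its cost is $O(\log(|A|/\beta)/\alpha^2)$ per arm. Two points need to be re-checked for this instantiation.

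First, the union bound in that discussion should read ``with probability at least $1-\beta/8$''; the text says ``$\beta/8$'', which is a typo.

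Second, the thresholds must actually separate the two cases when the tester is run with parameter $\alpha/2$, i.e.\ estimation error $\alpha/32$ and threshold $\tfrac34\alpha$:
\begin{itemize}
\item If $a$ is valid for $\cT_{\alpha/2}$, then $\hat\mu_a \ge \max_{a'}\hat\mu_{a'} - \alpha/2 - \alpha/16$. This exceeds $\max_{a'}\hat\mu_{a'} - \tfrac34\alpha$, so the tester accepts.
\item If $a$ is invalid for $\cT_{\alpha}$, some $a'$ has $\hat\mu_{a'} \ge \hat\mu_a + \alpha - \alpha/16 > \hat\mu_a + \tfrac34\alpha$, so the tester rejects.
\end{itemize}
The tester therefore meets the hypotheses of \Cref{thm:boost_beta}.

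\textbf{Non-replicable component.} We use the estimate-then-argmax procedure with failure probability $\beta/2$, which costs $O(\log(|A|/\beta)/\alpha^2)$ per arm.

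\textbf{Combining.} Summing the three terms as in \Cref{thm:boost_beta} gives the stated bound. The log terms in $\beta$ merge directly.

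\textbf{Main subtlety.} The bound is measured per arm rather than as a total sample count. Because \Cref{thm:boost_beta} adds the sample complexities of its three phases, the per-arm counts simply add as well. Each phase draws fresh samples from every arm, so the additive accounting carries over unchanged to the per-arm setting. No further obstacle is expected beyond these constant-factor checks.
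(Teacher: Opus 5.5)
Your proposal is correct and follows essentially the same route as the paper: you invoke \Cref{thm:boost_beta} with the replicable best-arm algorithm of \cite{hopkins2025generative}, the estimate-and-compare tester, and the estimate-then-argmax non-replicable learner, then simplify using $\beta \le \rho \le 1/2$. Your re-check of the tester at accuracy $\alpha/2$ (estimation error $\alpha/32$, threshold $\tfrac34\alpha$) and your reading of the ``$\beta/8$'' as a typo for $1-\beta/8$ are both accurate; they make explicit what the paper leaves implicit.
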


\section{Omitted Proofs}
\label{sec:omitted}
\subsection{Analysis of Naive Composition}
\label{app:naive_is_tight}
In this section we demonstrate that the standard union bound analysis for naive composition is, up to constants, tight.
Consider an algorithm $\cA$ for some problem $\cT$.
The choice of problem is arbitrary and all we require is that $\cA$ is not $\rho$-replicable with $\rho=0$; e.g., one can consider the mean estimation problem. 
Let $\rho$ be the \enquote{optimal} replicability parameter for $\cA$; i.e.,
\begin{align*}
    \rho = \inf\cbr{\rho': \text{$\cA$ is $\rho'$-replicable}}.
\end{align*}
Assume that $\rho > 0$. 
By definition of the infimum, the algorithm $\cA$ is $(2\rho)$-replicable.
However, there exist some distribution $D$ such that for two independent samples $S^{(1)}, S^{(2)}$ from $D$ we have
\begin{align*}
    \Pr{\cA(S^{(1)}; r) \ne \cA(S^{(2)};r)}
    \ge \rho/2.
\end{align*}

Let $\cDcomp$ denote the product distribution of $k$ independent copies from $D$. 
Let $\cT_i$ denote the problem $\cT$ over the $i$-th coordinate of the input and define
$\cTcomp = (\cT_1, \dots, \cT_k)$. Let $\cA_i$ denote the algorithm that runs $\cA$ on the $i$-th coordinate of the input samples. Define $\cAcomp = (\cA_1, \dots, \cA_k)$ to be the composition of $\cA_i$.
Given the guarantees on $\cA$, it is clear that $\cA_i$ is $(2\rho)$-replicable.
We will show however that the replicability parameter of $\cAcomp$ is at least $\Omega(\min\{\rho k, 1\})$.

Define the event $\err_i$ as $\err_i = \cbr{\cA_i(S^{(1)}; r) \ne \cA_i(S^{(2)};r)}$ 
Given the guarantees on $\cA$, for any two samples $S^{(1)}, S^{(2)}$ from $\cDcomp$ we have
\begin{align*}
    \Pr{\err_i}
    \ge \rho/2.
\end{align*}
Observe that $\err_i$ depends only on the coordinate $i$ in the samples and the random bits used by $\cA_i$. Since $\cDcomp$ was assumed to be a product distributions, this implies that
the events $\cbr{\err_i}_{i=1}^{k}$ are independent.
It is easy to show however that given $k$ independent events with probability at least $p$, the probability of their union is at least $\min\{kp, 1\}/2$:
\begin{align*}
    \Pr{\text{Union event}}
    \ge
    1 - (1-p)^{k}
    \ge 1-e^{-kp}
    \ge 1 - \max\{1/2, 1 - kp/2\},
\end{align*}
where the last inequality follows from
$e^{-x} \le \max\{1/2, 1 - x/2\}$ which holds for all $x > 0$.
It follows that
\begin{align*}
    \Pr{\cAcomp(S^{(1)}; r) \ne \cAcomp(S^{(2)};r)}
    &=
    \Pr{\cup_{i=1}^k \err_i}
    \ge 
    \frac{1}{2}\min(1, k\rho/2)
    .
\end{align*}
\subsection{Tightness of Composition Bounds}
\label{sec:tight}
A natural question is whether the $\wtilde{O}(nk)$ bound obtained above is tight or one can improve it in terms of the dependence on either $n$ or $k$.
The following theorem shows that the result is indeed tight; in particular, the theorem implies that if we fix either of the parameters $n$ or $k$, we cannot obtain a composition that is truly sublinear in the other parameter.
\begin{theorem}\label{thm:hardness}
    Suppose there exists a constant $c > 0$
    and a mechanism with the following property.
    For any $\beta > 0$ and any collection of $k$ algorithms
    $\cA_1, \dots, \cA_k$ that are each $c$-replicable with sample
    complexity $n$ and failure probability $\beta$, the mechanism produces a
    $0.0001$-replicable algorithm $\cAcomp$ solving the composed problem with
    failure probability as high as $O(\poly(k)\poly(\beta))$ and sample complexity
    \begin{math} \wtilde{O}(n^{\chi}k^{\psi} \polylog(1/\beta)) \end{math} for some constants
        $\chi, \psi$. Then $\chi \ge 1$ and $\psi \ge 1$.
\end{theorem}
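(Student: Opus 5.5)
We prove both inequalities by contradiction, applying the hypothetical mechanism to the $k$-coin problem (equivalently, $k$ statistical queries). We compare the resulting sample complexity against the known lower bound for replicable mean estimation of a single coin. That lower bound, due to \cite{impagliazzo2022reproducibility}, states that any $\rho$-replicable algorithm estimating a Bernoulli mean to accuracy $\tau$ with small constant failure probability needs $\Omega(1/(\tau^2\rho^2))$ samples. It is the same hard instance underlying \Cref{lem:threshold_estimation}: there $\rho = \Omega(1/(\tau\sqrt m))$ for accurate algorithms, i.e.\ $m=\Omega(1/(\tau^2\rho^2))$.

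\emph{Showing $\chi \ge 1$.} Fix $k$ to be a constant, say $k=1$, or $k$ large but fixed so that $k^\psi$ is a constant. For a single coin with accuracy $\tau$, the $c$-replicable algorithm of \cite{impagliazzo2022reproducibility} uses $n = \wtilde{O}(\log(1/\beta)/\tau^2)$ samples. Feeding it to the mechanism gives a $0.0001$-replicable algorithm with failure probability $O(\poly(\beta))$, which is a small constant for suitable $\beta$. Its sample complexity is $\wtilde{O}(n^\chi)=\wtilde{O}(\tau^{-2\chi})$. The lower bound requires $\Omega(\tau^{-2})$ samples at constant $\rho$, so letting $\tau\to 0$ forces $\chi\ge 1$.

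\emph{Showing $\psi\ge 1$.} Here the base problem itself is an amplification of replicability. Take $k$ copies of the same single-coin problem, each run by the same $c$-replicable algorithm $\cA$ with fixed sample complexity $n$ (constant accuracy $\tau$), and compose them. The composed output is a vector $(y_1,\dots,y_k)$ of $k$ estimates of one coin, and we need to extract from it a single estimate that is far more replicable than $0.0001$. It is more natural to use distinct subproblems.

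\emph{Main obstacle.} Extracting $\psi\ge1$ cleanly is the step I expect to be hardest. Three routes suggest themselves.

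The first is the $k$-coin lower bound of \cite{hopkins2024replicability}: in the vector-sampling model, replicably estimating $k$ coins to $\ell_\infty$ error $\tau$ needs $\Omega(k/(\tau^2\rho^2))$ samples. With constant $\tau$ and $n=O(\log(1/\beta))$, this directly gives $k^\psi\,\polylog \ge \Omega(k)$ as $k\to\infty$, hence $\psi\ge1$. If citing that is acceptable, the proof is immediate. I would choose $\beta = 1/\poly(k)$ so that the composed failure probability $\poly(k)\poly(\beta)$ stays a small constant; the $\polylog(1/\beta)$ factor is then only $\polylog k$ and does not affect the exponent.

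The second route, which is self-contained, reduces single-coin replicability boosting to composition. Given an accurate $c$-replicable coin estimator, compose $k$ copies, each on its own block of samples, then apply a deterministic rounding to get a $0.0001$-replicable single-coin estimator using $\wtilde{O}(n k^\psi)$ samples. This does not yield a contradiction by itself, so the third route rescales instead.

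The third route rescales accuracy. Set the subproblems to a threshold-style instance whose per-instance $c$-replicable cost is $n$, and compare against the composite lower bound obtained by a product-distribution argument in the style of \Cref{app:naive_is_tight} and \Cref{thm:adaptive_lb}. Independent coins force, for fixed randomness, independent disagreement events across coordinates unless each coordinate has sample size $\Omega(k\cdot n)$-ish.

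My first attempt would be the first route, the known $\Omega(k)$ $k$-coin bound, with parameter bookkeeping as above.
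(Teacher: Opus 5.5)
Your proof is correct once you commit to the first route for $\psi \ge 1$. Drop the identical-copies attempt, since $k$ estimates of one coin are no harder than one estimate, and drop routes two and three. The proof differs from the paper's in that it uses two separate hard instances where the paper uses one.

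\textbf{The paper's route.} The paper takes one-way marginals in dimension $nk$ and splits the coordinates into $k$ blocks of dimension $n$ (\Cref{lm:hard_instance_exists}). It certifies through its own \Cref{thm:multi_dim_mean} that each block has a $c$-replicable solver with $\wtilde{O}(n\polylog(1/\beta))$ samples. It then applies the $\wtilde{\Omega}(d)$ bound of \Cref{lm:marginal_neg} to the composed problem, with the same choice $\beta = 1/\poly(k)$ that you make. This gives $n^{\chi}k^{\psi}\polylog k \ge \wtilde{\Omega}(nk)$ for all $n,k$ simultaneously.

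\textbf{Your $\psi$ argument.} This is precisely the block-size-one case of the paper's construction. The vector-sampling $k$-coin problem is the one-way marginals problem. So the bound you cite from \cite{hopkins2024replicability}, at constant accuracy and replicability, is the same $\wtilde{\Omega}(k)$ bound the paper takes from \cite{bun2023stability}. Your base solvers are just the single-query algorithm of \cite{impagliazzo2022reproducibility}, with $n = O(\log k)$ absorbed into the polylog factors.

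\textbf{Your $\chi$ argument.} This is genuinely different and more elementary. You scale accuracy instead of dimension, and with $k=1$ you need only the non-replicable $\Omega(1/\tau^2)$ bound for one coin. In the paper, by contrast, the growth in $n$ is forced by replicability itself: non-replicably, constant-accuracy one-way marginals in dimension $n$ need only $O(\log n)$ samples.

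\textbf{Trade-off.} The paper's version gets both exponents from a single instance family and a single cited lower bound, all in the constant-accuracy regime. Yours gets the $\chi$ bound without any replicability lower bound and without appealing to the paper's multi-query upper bound.
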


In the above statement, since $\beta < 1$, the term $\poly(\beta)$ refers to any function of the form $\beta^{r}$ for some $r > 0$ (e.g., $\sqrt{\beta}$). The theorem shows that one cannot, for instance, obtain a better than linear dependence on $n$ and $k$ by allowing the failure probability of the composition to be $k^{10}\beta^{1/10}$.

The proof of the above theorem follows from the hardness of the \emph{one-way} marginals problem, which is a special case of the multi-dimensional mean estimation problem considered in \Cref{thm:multi_dim_mean}.

\begin{definition}[One-way marginals]\label{def:one-way}
    Let $\cX = \{0, 1\}^{d}$ and set $D$
    to be the product of Bernoulli distributions with parameters $p=(p_1, \dots, p_d)$ on $\cX$;
    i.e.,
    $\Pru{X\sim D}{X=x} = \prod_{i=1}^{d} p_i^{x_i} (1-p_i)^{1-x_i}$.
    We say $v \in \R^d$ is an $\alpha$-accurate solution for the one-way marginal problem if
    $\norm{p - v}_{\infty} \le \alpha$.
\end{definition}
The following lemma lower bounds the sample complexity of this problem.

\begin{lemma}[Theorem 5.3 in \cite{bun2023stability}]\label{lm:marginal_neg}
    There exist constants $C > 0$ with the following property. For any $d >
    C$ any $0.0001$-replicable algorithm that outputs
    an $0.01$-accurate solution to the one-way marginals problem with failure
    probability at most $0.001$ must have sample complexity at least
    $\wtilde{\Omega}(d)$. \end{lemma}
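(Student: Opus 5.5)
We would establish \Cref{lm:marginal_neg} by a fixed-randomness geometric argument, in the spirit of the proof of \Cref{lem:threshold_estimation}, rather than through a reduction to differential privacy. The DP route looks too lossy here: the fingerprinting lower bound for private one-way marginals is only $\Omega(\sqrt d)$, and the replicability-to-DP conversion loses at least a square root, which would leave us far short of $\wtilde{\Omega}(d)$.

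\textbf{Step 1: a random prior and the noise model.} Draw $p$ uniformly from the cube $[1/4,3/4]^d$. By Markov's inequality over $r$, with constant probability over $r$ the fixed-$r$ algorithm $\cA(\cdot;r)$ is accurate with probability $\ge 0.99$ over $(p,S)$. It is also replicable with parameter $O(10^{-4})$ in expectation over $p$. Fix such an $r$ and write $S\sim D_p^n$. Under $D_p$ the sufficient statistic is the vector of empirical means $\hat p$, and $\hat p$ is approximately $p+\xi$ with $\xi\sim N(0,\sigma^2 I)$ and $\sigma=\Theta(1/\sqrt n)$. By \Cref{thm:suff_stat} (losing a factor $2$ in $\rho$) we may assume $\cA(\cdot;r)$ depends only on $\hat p$. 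After this reduction the fixed-$r$ algorithm is a map $g:[0,1]^d\to\cY$, with the two runs seeing $g(p+\xi)$ and $g(p+\xi')$ for independent $\xi,\xi'$. We would make the Gaussian approximation rigorous by a standard coupling or smoothing step, at a cost of only constant and polylogarithmic factors; since the target bound hides polylogs, we can afford to truncate noise coordinates at $O(\sigma\sqrt{\log d})$.

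\textbf{Step 2: the level sets of $g$ must be small.} Let $P_y=g^{-1}(y)$. Accuracy forces every $y$ output with non-negligible probability near $p$ to lie within $0.01$ of $p$ in $\ell_\infty$, and hence most of the mass of the cell $P_y$ to sit inside an $\ell_\infty$ box of side $0.04$. Consequently every cell has Lebesgue (and smoothed Gaussian) measure at most $(0.04/0.5)^d=e^{-\Omega(d)}$ relative to the prior cube.

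\textbf{Step 3 (main obstacle): isoperimetry.} Replicability says that $\Pr{g(p+\xi)\ne g(p+\xi')}$ is a small constant. This is exactly the noise sensitivity, at noise level $\sigma\sqrt2$, of the partition $\{P_y\}$ under the smoothed uniform measure. For any set of relative measure $\mu\le e^{-\Omega(d)}$, Gaussian and cube isoperimetry (via Borell's noise-stability inequality for the half-space extremizer) give that a Gaussian step of size $\sigma$ exits the set with probability at least $\Omega(\sigma\sqrt{\log(1/\mu)})=\Omega(\sigma\sqrt d)$. This probability is computed for a point drawn from the set itself. Summing over cells weighted by their mass, the total disagreement probability is then $\Omega(\min\{1,\sigma\sqrt d\})$.

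The delicate parts are twofold. First, we must handle the boundary of the prior cube, which we would do by restricting attention to the interior sub-cube of side $0.4$. Second, the cells are not literally contained in boxes, only mostly; we would handle this by discarding the inaccurate mass, which is at most $0.01$.

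\textbf{Step 4: conclude.} Requiring $\sigma\sqrt d\le O(10^{-4})$ with $\sigma=\Theta(1/\sqrt n)$ yields $n\ge\Omega(d)$, up to the polylogarithmic losses from Step 1.
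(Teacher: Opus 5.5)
The paper does not prove this lemma; it imports it as Theorem~5.3 of \cite{bun2023stability}. Your plan is a genuinely different route: fix the coins, pass to the sufficient statistic, then lower-bound the disagreement of the induced partition by isoperimetry, in the spirit of \cite{hopkins2024replicability}. It can likely be completed, but the three steps you treat as routine are exactly where it fails as written.

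A side remark on your motivation: your reason for dropping DP is backwards. In \cite{bun2023stability}, replicability converts to DP with near-linear overhead, and the quadratic loss is in the DP-to-replicability direction. A plain DP lower bound cannot exceed $\widetilde{O}(\sqrt d)$ simply because private one-way marginals needs only $\widetilde{O}(\sqrt d)$ samples.

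\textbf{The Gaussian approximation.} The only regime that matters is $n\le d/\polylog(d)$, and there no ``standard coupling or smoothing'' transfers your argument.
\begin{itemize}
\item The law of $\hat p$ is a lattice law. Even after smoothing, each coordinate is at squared Hellinger distance $\Theta(1/n)$ from $\mathcal{N}(p_i,\sigma^2)$, coming from skewness and within-cell variation. Over $d$ coordinates these add to $\Theta(d/n)\gg 1$, so the total-variation distance is $1-o(1)$.
\item Coordinatewise KMT/quantile couplings do give $\|\hat p-(p+\xi)\|_\infty=O(\log d/n)$. However, the coupling map depends on $p$, so $g$ composed with it is no longer a fixed partition observed through $p+\xi$ and $p+\xi'$, which is what Step~3 needs.
\end{itemize}
The repair is an exact reduction in the right direction. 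If $X\sim\mathcal{N}(\mu,I_d)$ with $\mu\in[-C,C]^d$, the bits $\ind{X_i>0}$ are independent $\mathrm{Bernoulli}(\Phi(\mu_i))$. So a replicable one-way-marginals algorithm yields a replicable $\ell_\infty$-estimator of $\mu$ with the same $n$ and $\rho$, by inverting $\Phi$, which is bi-Lipschitz on the box. For Gaussian data, $\bar X=\mu+n^{-1/2}Z$ holds exactly.

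\textbf{Boundary.} Restricting to the central sub-cube of side $0.4$ keeps only a $(0.8)^d$ fraction of the prior, so it cannot be the fix. With $\sigma\approx n^{-1/2}$, a typical point lies within $O(\sigma)$ of a face in $\Theta(\sigma d)\gg 1$ coordinates, so boundary effects are everywhere. In those coordinates the conditional law of $x'$ given $x$ is not $x+\sqrt2\sigma Z$, and cells can use the faces as free boundary. You must show this costs only a constant factor, e.g.\ by reflecting each cell across the faces it touches, or use a prior without boundary.

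\textbf{The isoperimetric inequality.} Borell's inequality concerns Gaussian measure and $\rho$-correlated Gaussian pairs, with half-spaces as extremizers. Your pair is (smoothed) Lebesgue measure plus an additive Gaussian step. There, by Riesz rearrangement, the extremizers of a given volume are Euclidean balls. The bound $\Omega(\min\{1,\sigma\sqrt d\})$ still holds for a ball of volume $(0.04)^d$, but that is the inequality you need. Alternatively, put a Gaussian prior on $\mu$. Then the two observations are exactly $\rho$-correlated with $1-\rho\asymp\sigma^2$, Borell applies, and there is no boundary. The price is tolerating inaccuracy on the few coordinates where $|\mu_i|$ is large.

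\textbf{The discarded mass.} A step from the accurate part of $g^{-1}(y)$ into its inaccurate part is not a disagreement. So the discarded mass ($\approx 0.01$) must be \emph{subtracted} from the exit probability, not just set aside. This makes the small-noise regime of your Step~4 vacuous, since there the exit probability is only $\Theta(\sigma\sqrt d)\ll 0.01$. The contradiction has to be derived for $n\le cd$, where $\sigma\sqrt d\ge c^{-1/2}$ and the exit probability is close to $1$.
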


Combining the above lemma with the positive result from \Cref{thm:multi_dim_mean}, we obtain the following lemma which we will then use to prove \Cref{thm:hardness}.

\begin{lemma}\label{lm:hard_instance_exists}
    For any $n, k > 0$, there exists
    statistical problems $\cT_1, \dots, \cT_k$ such that
    the following holds.
    \begin{itemize}
        \item For any $\rho, \beta > 0$, each $\cT_i$ can be solved with a $\rho$-replicable algorithm with failure probability $\beta$ and complexity $\wtilde{O}(n \rho^{-2}\polylog(1/\beta))$.
        \item Solving the composed problem $\cTcomp = (\cT_1, \dots, \cT_k)$
            with a $0.0001$-replicable algorithm and failure probability $0.001$
            requires sample complexity at least $\wtilde{\Omega}(nk)$.
    \end{itemize}
\end{lemma}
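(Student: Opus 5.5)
The plan is to build the $\cT_i$ from the one-way marginals problem of \Cref{def:one-way}, with the dimension scaled so that each $\cT_i$ has difficulty about $n$ and the composed problem has difficulty about $nk$. Set $d = nk$ (up to polylogarithmic slack, and assuming $nk > C$ so that \Cref{lm:marginal_neg} applies; small cases are absorbed by the $\wtilde{O}$ and $\wtilde{\Omega}$ notation). Take the domain $\cX = \{0,1\}^{d}$ and split the $d$ coordinates into $k$ disjoint blocks $B_1, \dots, B_k$, each of size $n$. Define $\cT_i$ as the problem of outputting $v_i \in \R^{n}$ with $\norm{v_i - (p_j)_{j \in B_i}}_\infty \le 0.01$, which is a one-way marginals problem restricted to block $B_i$. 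For distributions that are not product Bernoulli, take the marginal means $\mu_j = \Exu{X \sim D}{X_j}$ as the target, so that every $\cT_i$ is an instance of estimating $n$ statistical queries $\phi_j(x) = x_j$.

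The upper bound for the first bullet follows directly from \Cref{thm:multi_dim_mean} with $k$ replaced by $n$ queries and accuracy $\alpha = 0.01$. That theorem gives a $\rho$-replicable algorithm with failure probability $\beta$ and sample complexity $\wtilde{O}(n\,\polylog(1/\beta)/\rho^2)$. It is stated for $\beta < \rho$, so for larger $\beta$ I would run it with $\beta$ replaced by $\min\{\beta, \rho/2\}$, which only affects polylogarithmic factors.

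For the second bullet, the composed output $(v_1, \dots, v_k)$ satisfies $\norm{(v_1,\dots,v_k) - p}_\infty \le 0.01$ exactly when every block is $0.01$-accurate. So any algorithm for $\cTcomp$ is, after concatenating its outputs, an algorithm for one-way marginals in dimension $d = nk$. Concatenation is deterministic, so it preserves both replicability and failure probability. \Cref{lm:marginal_neg} then forces sample complexity $\wtilde{\Omega}(d) = \wtilde{\Omega}(nk)$.

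The main point needing care is to line up the parameters of \Cref{lm:marginal_neg} with the composed problem. The accuracy must be exactly $0.01$, the replicability $0.0001$, and the failure probability $0.001$, which I would match by the choice of $\alpha$ above. I also need $d$ to exceed the constant $C$. Finally, the $\wtilde{\Omega}$ in the lemma hides only logarithmic factors in $d$, so it still yields $\wtilde{\Omega}(nk)$.
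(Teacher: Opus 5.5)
Your proposal is correct and follows essentially the same route as the paper. The paper also takes $\cX=\{0,1\}^{nk}$ and lets $\cT_i$ be $0.01$-accurate one-way marginals on the $i$-th block of $n$ coordinates; it gets the upper bound from \Cref{thm:multi_dim_mean}, and the lower bound by concatenating the outputs and applying \Cref{lm:marginal_neg} in dimension $nk$. Your extra handling of the $\beta<\rho$ precondition and of non-product distributions fills in details the paper leaves implicit.
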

\begin{proof}
    Set $d=n$, $\dcomp = nk$, and $\alpha=0.01$.
    Define $\cX = \{0, 1\}^{\dcomp}$ and
    let $D$ be a product of Bernoulli distributions on $\cX$ with parameters $p=(p_1, \dots, p_{\dcomp})$. 
    For any $i\in [k]$, define
    $p^{(i)} = (p_{(i-1)d + 1},\dots, p_{id})$ and let $\cT_i$ be the problem of finding a vector $v \in [0, 1]^{d}$ satisfying $\norm{v - p^{(i)}}_{\infty} \le \alpha$.
    It is clear that each $\cT_i$ is the problem of obtaining an $\alpha$-accurate solution to the one-way marginal problem which has sample complexity $\wtilde{O}(d\rho^{-2}\log(\beta^{-1}))$ by \Cref{thm:multi_dim_mean}.
    
    The composition $\cTcomp$ is
    the problem of finding a set of vectors $v^{(1)}, \dots, v^{(i)}$ such that $\norm{v^{(i)} - p^{(i)}}_{\infty} \le \alpha$ for all $i$.
    If we glue these vectors $v^{(i)}$ into a single vector however, it is clear that this problem is equivalent to finding a vector $v$ satisfying
    $\norm{v - p}_{\infty} \le \alpha$. This is again an instance of the one-way marginal problem with $\dcomp$ instead of $d$ and, by \Cref{lm:marginal_neg}, requires
    sample complexity $\wtilde{\Omega}(\dcomp)$.
    The lemma now follows from the definition of $d$ and $\dcomp$.
\end{proof}

We now use the above lemma to prove \Cref{thm:hardness}.

\begin{proof}[Proof of \Cref{thm:hardness}]
    Assume that such a mechanism exists and that
    if $\cA_i$ have failure probability $\beta$, then
    the output algorithm has failure probability $\Theta(k^{r}\beta^{\ell})$ for some constants $r, \ell > 0$.
    Let $n, k > 0$ be arbitrary values
    and consider the instances $\cT_1, \dots, \cT_k$
    from \Cref{lm:hard_instance_exists}.
    
    Let $\beta = \Theta(\frac{1}{k^{r/\ell}})$ for small enough constant hidden
    under $\Theta(.)$.
    By assumption, the mechanism produces an algorithm for $\cT_i$ with failure probability
    $\Theta(k^{r} \beta^{\ell}) = \Theta(1)$ which, picking the constant for $\beta$ small enough, is at most $0.001$.
    Since $r, \ell$ are constants, we have $\log(1/\beta) = O(\log k)$.
    Therefore, the sample complexity of the composed problem is bounded by
    \begin{align*}
        \wtilde{O}(n^{\chi}k^{\psi} \polylog(1/\beta)) = \wtilde{O}(n^{\chi}k^{\psi} \polylog(k)).
    \end{align*}
    Since the algorithm $\cAcomp$ is $0.0001$-replicable and has failure probability $0.01$ however,
    by the assumption in \Cref{lm:hard_instance_exists}, its sample complexity is at least
    $\wtilde{\Omega}(nk)$. 
    Therefore,
    $\wtilde{O}(n^{\chi} k^{\psi} \polylog(k)) \ge \wtilde{\Omega}(nk)$.
    Since this holds for all $n, k$ we must have $\chi \ge 1$ and $\psi \ge 1$ as claimed.
\end{proof}

\subsection{Proof of \Cref{lm:replic_to_pg}}
\label{app:replic_to_pg}
We use the following two lemmas \cite{bun2023stability}; the first lemma relates replicability to the concept of \emph{sample perfectly generalizing} algorithms, while the second shows that any sample perfectly generalizing is also perfectly generalizing with slightly worse parameters.
\begin{lemma}[Theorem 3.19 in \cite{bun2023stability}]\label{lm:replic_to_sample_pg}
    Fix sufficiently small $\delta, \beta> 0$ and $\eps \in (0, 1]$.
      Every $(0.01, 0.01)$-replicable algorithm $\cA$ with sample complexity $n$ and failure probability $\beta^2$
      can be converted into a $(2\delta, \eps, 2\delta)$-sample perfectly generalizing algorithm $\cA'$ with failure probability
      $O(\delta) + \beta \log(1/\delta)$ and sample complexity
      \begin{math}
        \frac{n \log(1/\eps)}{\eps^2}\polylog(1/\delta)
      \end{math}
      .
\end{lemma}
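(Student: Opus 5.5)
The approach is to build $\cA'$ from many independent runs of $\cA$ on disjoint chunks of the input, and then to select an output with an exponential-mechanism-style rule whose scores are empirical frequencies. Concentration of these frequencies across two independent samples $S,S'$ will give the $(\eps,\delta)$-closeness that sample perfect generalization asks for, namely $\cA'(S)\approx_{\eps,2\delta}\cA'(S')$ except with probability $2\delta$ over $(S,S')$.

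\emph{Construction.} The algorithm $\cA'$ draws $T=O(\log(1/\delta))$ random strings $r_1,\dots,r_T$ for $\cA$ from its own internal randomness. Its input has size $m n$, with $m=\Theta(\log(T/\delta)/\eps^2)$ up to $\log(1/\eps)$ factors, and is split into $m$ chunks $S^{(1)},\dots,S^{(m)}$ of size $n$. For each pair $(j,y)$ it computes the empirical frequency
\[
\hat p_j(y)=\tfrac1m\,\bigl|\{i: \cA(S^{(i)};r_j)=y\}\bigr|.
\]
It keeps only candidate pairs with $\hat p_j(y)\ge 1/2$. There is at most one such $y$ per $j$, so at most $T$ candidates in total. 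Among these candidates it samples with probability proportional to $\exp\bigl(\lambda\, \hat p_j(y)\bigr)$, where $\lambda\approx \eps\sqrt{m/\log(T/\delta)}$, and it outputs a fixed default value if there are no candidates.

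\emph{Key steps.}
\begin{enumerate}
\item Let $p_j(y)=\Pr_S[\cA(S;r_j)=y]$. By $(0.01,0.01)$-replicability, with probability at least $0.99$ over $r_j$ there is a canonical $y_j$ with $p_j(y_j)\ge 0.99$. With $T$ strings, some good $r_j$ exists except with probability $\delta$, so the candidate set is nonempty with high probability.
\item By Hoeffding and a union bound over the at most $T$ relevant pairs, $|\hat p_j(y)-p_j(y)|\le \sqrt{\log(T/\delta)/m}$ simultaneously under both $S$ and $S'$, except with probability $O(\delta)$. On this event the scores under $S$ and $S'$ differ by $O(\sqrt{\log(T/\delta)/m})$. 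Hence the exponential-mechanism probabilities of every common candidate differ by a factor $e^{O(\eps)}$.
\item Correctness. Since $\cA$ fails with probability $\beta^2$, Markov's inequality shows that only a $\beta$-fraction of strings $r$ make $\cA(\cdot;r)$ fail with probability $\ge\beta$. Every candidate has true frequency about $\ge 1/2 > \beta$, so it is correct unless its string is bad. A union bound over $T$ strings gives the $\beta\log(1/\delta)$ term, and the concentration failures contribute the $O(\delta)$ term.
\end{enumerate}

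\emph{Main obstacle.} The hard part is the hard threshold $\hat p_j(y)\ge 1/2$. A pair whose true frequency is near $1/2$ can be a candidate under $S$ but not under $S'$, and this breaks the multiplicative bound. The first fix I would try is to replace the hard cutoff by a soft one. Concretely, give weight $\exp(\lambda\hat p_j(y))$ only to pairs above a threshold $\theta$ drawn uniformly from $[0.4,0.6]$ using shared internal randomness. The probability that some pair lies within the concentration width of $\theta$ is then $O(T\sqrt{\log(T/\delta)/m})$. This has to be absorbed into $\delta$, which is what forces the extra $\polylog(1/\delta)$ and $\log(1/\eps)$ factors in $m$.

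Alternatively, because a pair with frequency near $1/2$ has tiny weight relative to a canonical pair with frequency $0.99$, one can show that its mass is at most $e^{-\Omega(\lambda)}\le\delta$. With this choice of $\lambda$ the discrepancy then goes into the additive $\delta$ term instead of the multiplicative one. Checking that this choice of $\lambda$ makes both the multiplicative and additive errors work simultaneously is the delicate step.
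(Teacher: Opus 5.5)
\textbf{Verdict: right route, but the proof does not close as written.} The paper does not reprove this lemma; it imports it from \cite{bun2023stability}. Your construction (independent strings $r_1,\dots,r_T$, disjoint chunks, an exponential mechanism scored by empirical frequencies) is the route that argument takes, and your Steps 1 and 3 are fine. The gap is exactly the step you flag.

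\textbf{Why the stated parameters and the first fix fail.} With $m=\Theta(\log(T/\delta)/\eps^2)$ and $\lambda\approx\eps\sqrt{m/\log(T/\delta)}$ you get $\lambda=\Theta(1)$. The mechanism is then essentially uniform over candidates. A borderline pair, whose true frequency is within $w=\sqrt{\log(T/\delta)/m}$ of $1/2$, can be a candidate under $S$ but not under $S'$ while carrying constant mass. Your first fix does not rescue the claimed bound. With $\theta$ uniform on $[0.4,0.6]$, the bad values of $\theta$ have measure $\Theta(Tw)$. Since $\theta$ is internal randomness, this lands in the additive term, so you need $Tw\le\delta$, i.e.\ $m=\Omega(T^2\log(T/\delta)/\delta^2)$. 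That is polynomial in $1/\delta$, not polylogarithmic, so the sentence saying this ``forces the extra $\polylog(1/\delta)$ factors'' is incorrect.

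\textbf{How to close it with your second idea.} Your second idea is the right one, and the two constraints are not in tension once you stop tying $\lambda$ to $\eps\sqrt m$. Take $\lambda=C\log(T/\delta)$ and $m=\Theta(\lambda^2\log(T/\delta)/\eps^2)=\Theta(\log^3(1/\delta)/\eps^2)$, which is within the claimed $n\log(1/\eps)\polylog(1/\delta)/\eps^2$ budget.
\begin{itemize}
\item Since $\lambda w\le\eps/4$, common candidates have probability ratio $e^{O(\eps)}$, apart from the normalizer.
\item For $w<1/6$ there are at most two $y$ per $j$ with $p_j(y)>1/3$. So there are at most $2T$ borderline pairs, each with weight at most $e^{-0.4\lambda}$ times that of the canonical pair, which is a candidate under both samples. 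Their total mass is therefore $O(\delta)$.
\item The normalizer mismatch they cause can also be moved into the additive term. The hard threshold at $1/2$ is then harmless, and no randomized threshold is needed.
\end{itemize}

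\textbf{Two further details to make explicit.}
\begin{itemize}
\item \emph{Step 2's union bound.} The candidate pairs depend on the data, so the union bound must run over a fixed family. Use, for each $j$, the at most two $y$ with $p_j(y)>1/3$. Separately show that no $y$ with $p_j(y)\le 1/3$ reaches empirical frequency $1/2$, e.g.\ by grouping low-mass outputs into $O(1)$ bins of mass $<3/8$ and applying Hoeffding per bin, or by uniform convergence over singletons.
\item \emph{Splitting into $\gamma$ and the additive term.} Because $r_1,\dots,r_T$ are internal coins, the events ``no good string'' and ``concentration fails for this $r$'' must be split explicitly. For example, run concentration at level $\delta^2$ and apply Markov over $(S,S')$. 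This yields the $\gamma=2\delta$ part and the additive $2\delta$ part of the sample-PG guarantee.
\end{itemize}
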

\begin{lemma}[Lemma 3.6 in \cite{bun2023stability}]\label{lm:sample_pg_is_pg}
  Any $(\gamma, \eps, \delta)$-sample perfectly generalizing algorithm is also
  $(\sqrt{\gamma}, \eps, \delta + \sqrt{\gamma})$-perfectly generalizing.
\end{lemma}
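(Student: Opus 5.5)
The plan is to build the simulator $\Sim_D$ as the output distribution of $\cA$ on a suitably chosen \emph{reference sample}, with the reference found by an averaging (Markov) argument. I take \emph{sample perfect generalization} in the sense of \cite{bun2023stability}: for every distribution $D$, with probability at least $1-\gamma$ over independent $S_1, S_2 \sim D^n$,
\begin{align*}
    \cA(S_1) \approx_{\eps,\delta} \cA(S_2).
\end{align*}

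Fix $D$. For each dataset $s$, define
\begin{align*}
    p(s) = \Pru{S_1 \sim D^n}{\cA(S_1) \not\approx_{\eps,\delta} \cA(s)}.
\end{align*}
By independence of $S_1, S_2$ and Fubini, $\Exu{S_2}{p(S_2)} \le \gamma$. Markov's inequality then gives
\begin{align*}
    \Pru{S_2}{p(S_2) > \sqrt{\gamma}} \le \sqrt{\gamma}.
\end{align*}
So, as long as $\sqrt{\gamma} < 1$, there is some $s^*$ with $p(s^*) \le \sqrt{\gamma}$. (Indeed, some $s^*$ with $p(s^*) \le \gamma$ exists, but the $\sqrt{\gamma}$ version suffices and is robust.) Set $\Sim_D$ to be the law of $\cA(s^*)$, taken over the internal randomness of $\cA$ only. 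If $\gamma \ge 1$ the statement is trivial, since then $\sqrt{\gamma} \ge 1$.

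By the choice of $s^*$, with probability at least $1-\sqrt{\gamma}$ over $S \sim D^n$ we have $\cA(S) \approx_{\eps,\delta} \Sim_D$. This is exactly the two-sided inequality in \Cref{def:pg}, with parameters $(\sqrt{\gamma}, \eps, \delta)$. Since weakening the additive term from $\delta$ to $\delta + \sqrt{\gamma}$ only enlarges both allowed intervals, $\cA$ is $(\sqrt{\gamma}, \eps, \delta+\sqrt{\gamma})$-PG.

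The main obstacle is measure-theoretic and definitional rather than combinatorial, and it has two parts.
\begin{itemize}
    \item \emph{Measurability.} The event $\{\cA(S_1) \approx_{\eps,\delta} \cA(S_2)\}$ quantifies over all measurable $O$, so one must check that $p$ is measurable. For finite or countable $\cY$ the supremum over $O$ reduces to countably many conditions. In general I would restrict to the setting of \cite{bun2023stability}.
    \item \emph{A weaker definition.} If their sample-PG definition is the weaker one that asks only for an averaged or mixture closeness, I would instead take $\Sim_D$ to be the law of $\cA(S_2)$ with $S_2$ drawn fresh, and split any event $O$ according to whether $p(S_2) \le \sqrt{\gamma}$. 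On the good part the $(\eps,\delta)$ bound holds pointwise; the bad part has mass at most $\sqrt{\gamma}$ and is absorbed into the additive term. This is precisely where the extra $+\sqrt{\gamma}$ in $\delta$ arises.
\end{itemize}
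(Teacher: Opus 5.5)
Your main argument is correct, and it proves more than the lemma states. Since $\Exu{S_2}{p(S_2)} \le \gamma$, some $s^*$ has $p(s^*) \le \gamma$. Taking $\Sim_D$ to be the law of $\cA(s^*)$ therefore already gives $(\gamma, \eps, \delta)$-PG, and the stated $(\sqrt{\gamma}, \eps, \delta + \sqrt{\gamma})$ follows by monotonicity in the parameters.

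The paper gives no proof of its own here; it imports the lemma from \cite{bun2023stability}. The $+\sqrt{\gamma}$ in the additive term suggests the route taken there (inferred from the parameters, since the paper does not reproduce it): the simulator is the canonical mixture, namely the law of $\cA(S')$ for a fresh $S' \sim D^n$. Markov's inequality gives, with probability at least $1-\sqrt{\gamma}$ over $S$, that $\Pru{S'}{\cA(S) \not\approx_{\eps,\delta} \cA(S')} \le \sqrt{\gamma}$. For such $S$ one averages the two-sided inequality between $\cA(S)$ and $\cA(S')$ over $S'$, and the bad $S'$ contribute at most $\sqrt{\gamma}$ additively. That is exactly the extra $+\sqrt{\gamma}$.

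The two routes trade off as follows:
\begin{itemize}
\item The mixture gives a choice-free simulator: the marginal output law of $\cA$ under $D$, which can be sampled given access to $D$. It pays for this with the square root and the additive loss.
\item Your reduction to a single reference dataset gives sharper parameters, but $\Sim_D$ is purely existential.
\end{itemize}
Everything downstream in this paper (\Cref{def:pg}, \Cref{lm:pg_compose}, \Cref{lm:pg_to_replic}) uses only the existence of $\Sim_D$, so your version would serve just as well.

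Your second bullet is unnecessary, because the definition in \cite{bun2023stability} is the pairwise one you assumed. As written, its split is also wrong. Knowing $p(S') \le \sqrt{\gamma}$ does not make $\cA(S')$ close to the particular $\cA(S)$ being compared. The correct split fixes $S$ with $p(S) \le \sqrt{\gamma}$ and partitions over $S'$ according to whether $\cA(S) \approx_{\eps,\delta} \cA(S')$.
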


\Cref{lm:replic_to_sample_pg} considers the notion of $(\eta, \nu)$-replicable algorithms which is slightly different from standard $\rho$-replicability. As shown by \cite{impagliazzo2022reproducibility} however (see Appendix A and definition A.1 in their paper), any $\rho$-replicable algorithm is also $(\rho/\nu, \nu)$-replicable for any $\nu \in [\rho, 1)$. In particular, this means that a $0.0001$-replicable algorithm is
also $(0.01, 0.01)$-replicable.

\begin{proof}[Proof of \Cref{lm:replic_to_pg}]
  Set $\delta' = \delta^2/8$ and $\beta'=\sqrt{\beta}$. 
  We invoke
  \Cref{lm:replic_to_sample_pg} with parameters $\delta',  \eps$ to transform the replicable algorithm
 into a 
$(\delta^2/4, \eps, \delta^2/4)$-sample PG algorithm with failure probability $O(\delta^2) + \sqrt{\beta}\log(8/\delta^2)$ and sample complexity
  $\frac{n\log(1/\eps)}{\eps^2} \polylog(1/\delta)$.
  Note that the preconditions of the lemma hold; the $0.0001$-replicable algorithm is $(0.01, 0.01)$-replicable  as explained above and, if $\delta, \beta$ are sufficiently small, then $\delta', \beta'$ can be made sufficiently small as well. 
  By \Cref{lm:sample_pg_is_pg},
  the algorithm is $(\sqrt{\delta^2/4}, \eps, \delta^2/4 + \sqrt{\delta^2/4})$-PG which means it is $(\delta, \eps, \delta)$-PG, finishing the proof.
\end{proof}

\newpage

\bibliographystyle{alpha}
\bibliography{ref}

\newpage
\appendix
\section{Challenge of adaptivity in perfect generalization}
\subsection{Issues in the proof of adaptive composition for PG}
\label{sec:pg_error_explain}
The fact that one can obtain an adaptive composition theorem for perfect generalization
(Theorem~4.8 in \cite{bassily2016typical} and our extension in \Cref{lm:pg_compose_het} to the heterogeneous case)
is at first surprising given that the related notion
of Max-information under product distributions does not compose as shown by \cite{RRST16}.

We briefly recall the relevant definitions.
Given jointly distributed random variables $X$ and $Z$,
we write $X \otimes Z$ for an independent copy, i.e., a pair drawn from the product of the marginals of $X$ and $Z$.
The \emph{$\beta$-approximate max-information} between $X$ and $Z$ is defined as
$
I_\infty^{\beta}(X ; Z) = \log \sup_{\cO} \frac{\Pr{(X, Z) \in \cO} - \beta}{\Pr{X \otimes Z \in \cO}}
$
where the supremum is over all events $\cO$ with $\Pr{(X,Z) \in \cO} > \beta$.
An algorithm $\cA: \cX^n \to \cY$ has \emph{$\beta$-approximate max-information of $k$ under product distributions}
if $I_\infty^{\beta}(X ; \cA(X)) \le k$ for every product distribution $\cP$ over $\cX$ when $X \sim \cP^n$.

\paragraph{Why max-information fails to compose.}
Theorem~4.1 in \cite{RRST16} shows that max-information under product distributions does not compose.
They construct algorithms
$\cA: \cX^n \to \{0, 1\}^r$ with $r \in \Theta(\log(n))$ and $\cB: \cX^n \times \{0, 1\}^r \to (\cX \cup \{\perp\})$,
where $\cX = \{0,1\}$ and $\perp$ is a special error token, such that the following holds.
Let $X$ be sampled uniformly from $\cX^n$.
The algorithm $\cA$ has small max-information under product distributions and,
for any fixed $z \in \{0, 1\}^r$,
$\cB(\cdot, z)$ also has small max-information.
However, $\cB(X, \cA(X))$ recovers $X$ with high probability,
meaning the composition has very high max-information.
The key reason max-information under product distribution does not compose is that once we condition on the output of
$\cA$, the distribution of $X$ is no longer a product distribution.
As such, even though $\cB(\cdot, z)$ has low max-information under product distributions for any fixed
$z$,
this does not yield a bound for the max-information of $\cB(X, \cA(X))$ for random $X$.

\paragraph{How perfect generalization sidesteps this issue.}
Intuitively, the same issue should arise for perfect generalization:
conditioned on $\cA(X)$, the distribution of $X$ is no longer i.i.d.
The key idea that makes composition work for PG is to define the oracle transcript
$\widetilde{Z}$ \emph{independently} of $X$, using the oracle's own prefix rather than the algorithm's.
To see why this helps, consider
the above example of the two algorithms $\cA$ and $\cB$.
Define $Z=\cA(X)$ and let $\widetilde{Z}$ denote a sample from the oracle corresponding to
$\cA(\cdot)$.
The key problem is that we cannot immediately claim $\cB(X, Z) \approx W(Z)$ because,
conditioned on $Z$, the distribution of $X$ is no longer i.i.d.
However we can still show $\cB(X, Z) \approx W(Z)$ using a more involved argument.
Concretely, observe that $X$ and $\widetilde{Z}$ are independent.
As such, $\cB(X, \widetilde{Z})$ has the desired perfect generalization guarantees:
with high probability over the randomness of $X$, the distribution of $\cB(X, \widetilde{Z})$ is close
to some fixed $W(\widetilde{Z})$.
Letting $\hat{C}$ denote the set of all pairs $(x, z)$ such that
$\cB(x, z) \approx W(z)$, this means that
$(X, \widetilde{Z}) \in \hat{C}$ with high probability.
Since $\cA(\cdot)$ is PG, this implies that
$(X, Z) \in \hat{C}$ with high probability as well.
Therefore,
$\cB(X, Z) \approx W(Z)$ with high probability.
The proof of \Cref{lm:pg_compose_het} follows this approach; we refer to the high-level sketch
after the statement of \Cref{lm:pg_compose_het} for a more detailed explanation.

\paragraph{An issue in the existing proof and its fix.}
The proof of \cite{bassily2016typical} as written has a small issue related to the above discussion: it implicitly assumes
that $\cA(X)$ and $X$ are independent.
For simplicity, we discuss the proof of Theorem~4.2 in their paper, which handles
the composition of pure typically stable algorithms (i.e., $\delta=0$); the same issue exists for the main result.
The key issue is in the definition of $W_i(P)$, which is defined
as the oracle corresponding to $\cA_i(\cdot, Z^{i-1})$ (see the discussion after Theorem~4.2 on page~10).
The random variables $\widetilde{Z}_i$ are then sampled from the distribution $W_i(P)$.
Since $W_i(P)$ is defined in terms of $Z^{i-1}$, which itself depends on the input $X$,
the variables $\widetilde{Z}_i$ depend on $X$ as well. This is problematic because at multiple points (e.g., right
after Equation~14, as well as right before the last equation on page~13) the paper assumes that $X$ and $\widetilde{Z}^i$ are independent.

The fix is simple: define the oracle transcript recursively using its own prefix rather than the algorithm's.
Concretely, it suffices to define $W_i(P)$ as the oracle corresponding to
$\cA_i(\cdot, \widetilde{Z}^{i-1})$ where $\widetilde{Z}_i \sim W_i(P)$. In other words,
we first define $W_1(P)$ as the oracle corresponding to $\cA_1(\cdot)$. Note that,
by definition, we have
$\cA_1(X) \approx_\eps W_1(P)$ with probability at least $1-\nu$,
where we write $U \approx_{\eps} V$ to denote
$\abs{\ln\rbr{\frac{\Pr{U=x}}{\Pr{V=x}}}} \le \eps$ for all $x$.
We then sample $\widetilde{Z}_1$ from $W_1(P)$ and define
$W_2(P)$ as
the oracle corresponding to $\cA_2(\cdot, \widetilde{Z}^{1})$, and repeat.
Since each $\widetilde{Z}_i$ is defined only in terms of $\widetilde{Z}^{i-1}$ (and not $X$), the variables $\widetilde{Z}^k$ are independent of $X$.
We now verify that the remaining steps of their proof go through unchanged.

Lemma~4.3 (in their paper) is unaffected since it does not involve $\widetilde{Z}$.
Equation~15 holds as well: while the equation follows from applying Lemma~4.3 to
$W^j(P)$, the lemma does not at any point use the fact that $\cA_j(\cdot, \widetilde{Z}^{j-1}) \approx W_j(P)$.
Indeed, the way in which closeness of the distributions is leveraged is by
bounding $\Pr{(X, \widetilde{Z}^j) \notin \widetilde{G}_j}$ under the assumption that $(X, \widetilde{Z}^{j-1}) \in \widehat{C}_j$;
the latter assumption is what ensures the closeness of distributions.
This assumption holds for $(X, \widetilde{Z}^{j-1})$ because of the following.
For any fixed $z^{j-1}$, we have
$\cA_j(X, z^{j-1}) \approx_{\eps} W_j(P)$ with probability at least
$1-\nu$.
While $\widetilde{Z}^{j-1}$ is random, since $X$ is independent of $\widetilde{Z}^{j-1}$, we have
$\cA_j(X, \widetilde{Z}^{j-1}) \approx_{\eps} W_j(P)$ with probability at least $1-\nu$ as well.
It follows that, for any fixed $j$, $\Pr{(X, \widetilde{Z}^{j-1}) \notin C_{j}} \le \nu$, which by a union bound
implies $\Pr{(X, \widetilde{Z}^{j-1}) \notin \widehat{C}_{j}} \le j\nu$.

\end{document}